\def\eqref#1{equation~\ref{#1}}
\def\1{\bm{1}}
\DeclareMathAlphabet{\mathsfit}{\encodingdefault}{\sfdefault}{m}{sl}
\SetMathAlphabet{\mathsfit}{bold}{\encodingdefault}{\sfdefault}{bx}{n}
\DeclareMathOperator*{\argmax}{arg\,max}
\DeclareMathOperator*{\argmin}{arg\,min}
\pgfplotsset{compat=1.14}
\newtheorem{definition}{Definition}
\newtheorem{lemma}{Lemma}
\newtheorem{proposition}{Proposition}
\newtheorem{theorem}{Theorem}
\newtheorem{remark}{Remark}
\newtheorem{assumption}{Assumption}
\title{Predictable Reinforcement Learning Dynamics \\
through Entropy Rate Minimization}
\author{\name Daniel Jarne Ornia\thanks{Equal contribution.} \thanks{Work done while at Delft University of Technology.} \email daniel.jarneornia@ox.cs.ac.uk \\
    \addr University of Oxford \\
    \AND
    \name Giannis Delimpaltadakis\footnotemark[1] \\
    \addr Eindhoven University of Technology \\
    \AND
    \name Jens Kober \\
    \addr Delft University of Technology \\
    \AND
    \name Javier Alonso-Mora \\
    \addr Delft University of Technology \\
}
\begin{document}

\maketitle

\begin{abstract}
In Reinforcement Learning (RL), agents have no incentive to exhibit predictable trajectories, and are often pushed (through e.g. policy entropy regularisation) to randomise their actions in favor of exploration. This lack of predictability awareness often makes it challenging for other agents and humans to predict an agent's trajectories, possibly triggering unsafe scenarios (e.g. in human-robot interaction). We propose a novel method to induce predictable trajectories in RL agents, termed \emph{Predictability-Aware RL} (PARL), employing the agent's trajectory \emph{entropy rate} to quantify predictability. Our method maximizes a linear combination of a standard discounted reward and the negative entropy rate, thus trading off optimality with predictability. We show how the entropy rate can be formally cast as an average reward, how entropy-rate value functions can be estimated from a learned model and incorporate this in policy-gradient algorithms, and demonstrate how this approach produces predictable (near-optimal) policies in tasks inspired by human-robot use-cases.
\end{abstract}

\section{Introduction}
As Reinforcement Learning (RL) \citep{sutton2018reinforcement} agents are deployed to interact with humans, it becomes crucial to ensure that their behaviours\footnote{We use agent behaviour to refer to, throughout this work, the state trajectories agents exhibit (and observers may perceive). We consider 'agent behaviour' or 'agent trajectory' interchangeably, but note that we mainly focus on state predictability. We make the case that an agent is \emph{predictable} if their next state (for a fixed policy) is easy to predict given their past state(s) for standard inference algorithms.} are predictable. A robot trained under general RL algorithms operating in a human environment has no incentive to follow trajectories that are easy to predict. This makes it challenging for other robots or humans to forecast the robot's behaviour, affecting coordination and interactions, and possibly triggering unsafe scenarios. RL algorithms are oblivious to the predictability of behaviours they induce in agents: one aims to maximize an expected reward, regardless of how unpredictable trajectories taken by the agents may be. In fact, many algorithms propose some form of regularisation in action complexity \citep{schulman2017proximal,NEURIPS2021_d7b76edf} or value functions \citep{pmlr-v119-pitis20a,pmlr-v97-zhao19d,kim2023accelerating} for better exploration, inducing higher aleatoric uncertainty in agents' trajectories. 

We quantify predictability of an RL agent's trajectories by employing the notion of \emph{entropy rate}: the infinite-horizon time-average entropy of the agent's trajectories, which measures the complexity of the trajectory distributions induced in RL agents. Higher entropy rate implies more complex and less predictable trajectories, and vice-versa. Similar information-theoretic metrics have been widely used to quantify (un)predictability of stochastic processes \cite{shannon1948mathematical,savas2021entropy, biondi_maximizing_2014,duan2019markov,stefansson2021computing}. 
\begin{wrapfigure}{r}{0.4\textwidth}
  \begin{center}
\includegraphics[width=.9\linewidth]{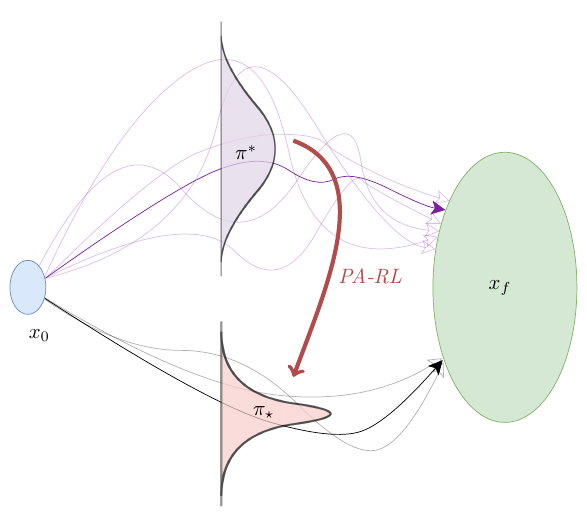}
   \caption{Qualitative representation of PARL. Trajectories are represented symbolically as connecting an initial with a final set of states. PARL shifts the policies towards smaller trajectory entropy.}
   \label{fig:PARL}
   \end{center}
   \vspace{-30pt}
\end{wrapfigure}
\paragraph{Motivation} In general, RL agents are oblivious to the information theoretic loads they generate with their behaviour. In a world where agents do not exist in a vacuum (even if we train RL agents in single-agent settings, these agents will rarely be deployed in isolation), one could argue there is an advantage to inducing a \emph{complexity awareness} in RL agents; \emph{If an agent can solve a task generating lower information rates, it should do so}. Lower information rates correspond (intuitively and formally, through forms of entropy) with lower uncertainty. However, we do not argue that this is a necessary feature in all agents (or even always desireable). We simply argue that it is an interesting feature to consider for general RL agents that can benefit the deployment of RL agents, propose a formal approach to target this, and evaluate how this impacts such agents in different settings.

\paragraph{Entropy and Predictability} The established notion for predictability in information theory is entropy: a lower-entropy stochastic process is easier to predict by standard inference algorithms. For example, assuming the trajectory distribution is Gaussian, lower entropy is equivalent to lower variance. Then it is clear that (inside the family of normal distributions) inference algorithms would be able to predict more accurately a lower variance distribution. The same applies to other distribution families, including discrete distributions. In the limit, minimum entropy implies that the agent follows the same deterministic trajectory over and over, which means it can be predicted with no prediction error from inference on past data (since it is deterministic).

\paragraph{Contributions} We propose a novel approach to model-based RL that induces more predictable behaviour in RL agents, termed \emph{Predictability-Aware RL} (PARL). We maximize the linear combination of a standard discounted cumulative reward and the negative entropy rate, thus trading-off optimality with predictability. Towards this, we cast entropy-rate minimization as an expected average reward minimization problem, with a policy-dependent reward function, called \emph{local entropy}. To circumvent local entropy's policy-dependency and enable the use of on- and off-policy RL algorithms, we introduce a state-action-dependent surrogate entropy, and show that deterministic policies minimizing the average surrogate entropy exist and \emph{also minimize the entropy rate}. Further, we show how, employing a learned model and the surrogate reward, we can estimate entropy-rate value functions, and incorporate this in policy-gradient schemes. Finally, we showcase how PARL produces much more predictable agents while achieving near-optimal rewards in several robotics and autonomous driving tasks\footnote{See the project repository \href{https://github.com/tud-amr/parl}{https://github.com/tud-amr/parl} for details.}.
\subsection{Related work} 
The idea of introducing some form of entropy objectives in policy gradient algorithms has been extensively explored  \cite{williams1991function,fox2015taming,peters2010relative,zimin2013online,neu2017unified, tiapkin2023fast}. In most instances, these regularization terms are designed to either help policy randomization and exploration \cite{mutti2021task,mutti2022importance}, or to stabilize RL algorithms. However, these works focus on policy (state-action) entropy maximization, and do not focus on trajectory entropy and how it affects predictability of RL agents. 

Instead, \citet{10160923,10197571,biondi_maximizing_2014,george2018markov, duan2019markov,stefansson2021computing,savas_entropy_2020,savas2021entropy} consider entropy (rate) maximization in (PO)MDPs, to yield unpredictable behaviours. However, these works require full knowledge of the model, and entropy (rate) maximization is cast as a non-linear program. Instead, in our work, the model is not known, and we resort to learning entropy-rate value functions.

Recent work \cite{lu2020dynamics,eysenbach2021robust,park2023predictable} has tackled robustness and generalization via introducing Information-Theoretic penalty terms in the reward function. In particular, Eysenbach et al. \citeyear{eysenbach2021robust} makes the explicit connection from such information-theoretic penalties to the emergence of predictable behaviour in RL agents, and uses mutual information penalties to restrict the bits of information that the agents are allowed to use, resulting in simpler, less complex policies. We address directly this predictability problem by the  minimisation of entropy rates in RL agents' stochastic dynamics. This allows us to provide theoretical results regarding existence and convergence of optimal (predictable) policies towards \emph{minimum entropy-rate} agents, and make our scheme generalizeable to \emph{any RL algorithm} (on and off policy).
In this line, \cite{bersethsmirl} propose \emph{Surprise Minimizing Reinforcement Learning}, where an estimate of the state stationary distribution of visited states is kept, and the agents are penalised for visiting states with low probabilities in this stationary distribution. While having a similar flavour, our work addresses \emph{state trajectory} entropy, contrary to stationary distribution entropy, since we aim to address the predictability of the agent's data generating process (and not simply avoid unknown and changing environment regions)\footnote{A simple counterexample can show that an agent can have (almost) \emph{zero trajectory entropy} and have high stationary distribution entropy; think of two connected states $A\leftrightarrow B$ where $A$ has one self loop with near to zero probability.}. Finally, our work is tangentially related to alignment and interpretability in RL \cite{pmlr-v97-shah19a,NEURIPS2019_f5b1b89d}, where human-agent interaction requires exhibiting human-interpretable behaviour. Further, work on legibility of robot motion \cite{6483603, liu2023improvement, busch2017learning} shares our motivation; to make robotic systems behaviour more legible by humans. 

\section{Background}
We, first, introduce preliminary concepts employed throughout this work. For more detail on Markov chains and decision processes, the reader is referred to \citet{puterman2014markov}. Given a set $\mathcal{A}$, $\Delta(\mathcal{A})$ denotes the probability simplex over $\mathcal{A}$, and $\mathcal{A}^k$ denotes the $k$-times Cartesian product $\mathcal{A}\times\mathcal{A}\times\dots\times\mathcal{A}$. If $\mathcal{A}$ is finite, $|\mathcal{A}|$ denotes its cardinality. 
Given two probability distributions $\mu,\nu$, we use $D_{TV}(\mu\|\nu)$ as the total variation distance between two distributions. We use $\operatorname{supp}(\mu)$ to denote the support of $\mu$. 
Given two vectors $\xi,\eta$, we write $\xi\succeq\eta$, if each $i$-th entry of $\xi$ is bigger or equal to the $i$-th entry of $\eta$.
\subsection{Markov processes and Rewards}\label{sec:mdp}
A Markov Chain (MC) is a tuple $\mathcal{C}=(\mathcal{X},P,\mu_0)$ where $\mathcal{X}$ is a finite set of states, $P:\mathcal{X}\times\mathcal{X}\to [0,1]$ is a transition probability measure and $\mu_0\in\Delta(\mathcal{X})$ is a probability distribution over initial states. Specifically, $P(x,y)$ is the probability of transitioning from state $x$ to state $y$. $P^t(x,y)$ is the probability of landing in $y$ after $t$ time-steps, starting from $x$. The limit transition function is $P^*:=\lim_{t\to\infty} P^t$.
We use uppercase $X_t$ to refer to the random variable that is the state of the random process governed by the MC at time $t$, and lower case to indicate specific states, \emph{e.g.} $x\in \mathcal{X}$. Similarly, a trajectory or a path is a sequence of states $\mathbf{x}_{k} = \{x_0,x_1,...,x_{k}\}$, where $x_i\in\mathcal{X}$, and $\mathcal{P}_k$ denotes the set of all $(k+1)$-length paths. We also denote $X_{t:k}=\{X_t,X_{t+1},...,X_k\}$.
Further, we define $p:\mathcal{P}_{\infty}\to [0,1]$ as a probability measure over the Borel $\sigma$-algebra $\mathcal{B}(X_{0:\infty})$ of infinite-length paths of a MC\footnote{This measure is well defined by the Ionescu-Tulcea Theorem, see \emph{e.g.} \cite{dudley2018real}.} conditioned to initial distribution $\mu_0$. 
 For example, $p(X_0=x)\equiv \mu_0(x)$ is the probability of the initial state being $x$; $p(X_{0:3}= \mathbf{x}_3)$ is the probability that the MC's state follows the path $\mathbf{x}_3$.
A Markov Decision Process (MDP) is a tuple $\mathcal{M}=(\mathcal{X},\mathcal{U},P,R,\mu_0)$ where $\mathcal{X}$ is a finite set of states, $\mathcal{U}$ is a finite set of actions, $P:  \mathcal{X}\times\mathcal{U}\times\mathcal{X}\to [0,1]$ is a probability measure of the transitions between states given an action, $R:\mathcal{X}\times \mathcal{U}\times \mathcal{X}\to\mathbb{R}$ is a bounded reward function and $\mu_0\in\Delta(\mathcal{X})$ is the probability distribution of initial states.
 A stationary Markov policy is a stochastic kernel $\pi: \mathcal{X} \to \Delta(\mathcal{U})$. With abuse of notation, we use $\pi(u\mid x)$ as the probability of taking action $u$ at state $x$, under policy $\pi$. Let $\Pi$ be the set of all stationary Markov policies, and $\Pi^D\subseteq\Pi$ the set of deterministic policies. The composition of an MDP $\mathcal{M}$ and a policy $\pi\in\Pi$ generates a MC with transition probabilities $P_{\pi}(x,y) := \sum_{u\in \mathcal{U}}\pi(u\mid x)P(x,u,y)$. If said MC admits a unique stationary distribution, we denote it by $\mu^\pi$, where $\mu^\pi:\mathcal{X}\to[0,1]$. We, also, use the shorthand $R_t^{\pi}\equiv \mathbb{E}_{u\sim \pi(x)}[R(X_t,u,X_{t+1})]$. We will assume any fixed policy $\pi$ in MDP $\mathcal{M}$ induces an aperiodic and irreducible MC.
\paragraph{Discounted Reward MDPs}
In discounted cumulative reward problems the goal is to find a policy $\pi'$ that maximizes the discounted sum of rewards for discount factor $\gamma\in[0,1)$: i.e. 
$\pi' \in\argmax_{\pi\in\Pi} \mathbb{E}[\sum_{t=0}^{\infty}\gamma^t R_t^{\pi}\mid X_0=x]$, for all $x\in\mathcal{X}$. In this case \cite{puterman2014markov} given a policy $\pi$, the \emph{value function} under $\pi$, $V^{\pi}:\mathcal{X}\mapsto\mathbb{R}$, is  $V^{\pi}(x) := \mathbb{E}[\sum_{t=0}^{\infty}\gamma^t R_t^{\pi} \mid X_0=x]$. The \emph{action-value function} (or $Q$-function) under $\pi$ is given by $Q^{\pi}(x,u):=\sum_{y\in \mathcal{X}}P(x,u,y)(R(x,u,y)+\gamma V^{\pi}(y)).$

\paragraph{Average Reward MDPs}
In average reward maximization problems, we aim at maximizing the \emph{reward rate} (or gain) $g^{\pi}(x)$, defined as, together with the \emph{bias}\footnote{The bias can also be written in vector form as $b=(I-P+P^*)^{-1}(I-P^*)R^{\pi}$ where $R^{\pi}\in\mathbb{R}^{|\mathcal{X}|}$ is the vector of state rewards, $R_x^{\pi}=\mathbb{E}_{u\sim\pi} [R(x,u,y)]$. See Chapter 8 in \cite{puterman2014markov}.}
\begin{equation}\begin{aligned}\label{eq:gain}
    g^{\pi}(x) := \mathbb{E}\left[\lim_{T \to \infty}\frac{1}{T} \sum_{t=0}^{T}R_t^{\pi}\mid X_0=x\right],\,\,\, b^{\pi}(x) := \mathbb{E}\left[\lim_{T \to \infty} \sum_{t=0}^{T}\left(R_t^{\pi}-g(X_t)\right)\mid X_0=x\right].
    \end{aligned}
\end{equation}
Note that the bias is the expected difference between the stationary rate and the rewards obtained by initialising the system at a given state. For $\pi\in\Pi$, the average-reward (action) value-functions\footnote{Observe that, for ergodic MDPs, $b^{\pi}(x) = V^\pi_{\mathrm{avg}}{\pi}(x)$.} $V^\pi_{\mathrm{avg}}:\mathcal{X}\to\mathbb{R}$ is defined by \cite{abounadi_learning_2001} $V^{\pi}_{\mathrm{avg}}(x) := \mathbb{E}_{u\sim\pi,y\sim P(x,u,\cdot)}[R(x,u,y)-g^{\pi}+V_{\mathrm{avg}}^{\pi}(y)]$ and $Q^{\pi}_{\mathrm{avg}}(x,u) := \mathbb{E}_{y\sim P(x,u,\cdot)}\left[R(x,u,y)-g^{\pi}+V^{\pi}_{\mathrm{avg}}(y)\right]$.
The optimal (action) value functions (which exists for ergodic MDPs) satisfy $V^{*}_{\mathrm{avg}}(x) = \max_{u\in \mathcal{U}}\mathbb{E}_{y\sim P(x,u,\cdot)}\left[R(x,u,y)-g^{*}+V^*_{\mathrm{avg}}(y)\right]$ and $Q^{*}_{\mathrm{avg}}(x,u) = \mathbb{E}_{y\sim P(x,u,\cdot)}\left[R(x,u,y)-g^{*}+V^{*}_{\mathrm{avg}}(y)\right]$
where $g^*$ is the optimal reward rate.
\subsection{Shannon Entropy and MDPs}
For a discrete random variable $A$ with finite support $\mathcal{A}$, Shannon entropy \cite{shannon1948mathematical} is a measure of uncertainty induced by its distribution, and it is defined as $h(A) := -\sum_{a\in \mathcal{A}}\Pr\left(A=a\right)\log (\Pr\left(A=a\right))$.
Shannon entropy measures the amount of information encoded in a random variable: a uniform distribution maximizes entropy (minimal information), and a Dirac distribution minimizes it (maximal information). Recall $p:\mathcal{P}_{\infty}\to [0,1]$ is a probability measure over the Borel $\sigma$-algebra of infinite-length paths of a MC. Let us denote the dependency of $p$ on a fixed policy $\pi$ as $p^\pi$ (a fixed policy in a MDP induces a MC For an MDP under policy $\pi$, recall Section \ref{sec:mdp}). Then we define the conditional entropy \citep{cover1999elements,biondi_maximizing_2014} of $X_{T+1}$ given $X_{0:T}$ as:
\begin{equation*}\begin{aligned}
    h^{\pi}(X_{T+1}\mid X_{0:T}) :=-\hspace{-5mm}\sum_{y\in \mathcal{X},\mathbf{x}_T\in \mathcal{X}^T}\hspace{-2mm}p^{\pi}\left(X_{T+1}=y,X_{0:T}=\mathbf{x}_T\right)\log (p^{\pi}\left(X_{T+1}=y\mid X_{0:T}=\mathbf{x}_T\right)),
\end{aligned}
\end{equation*}
and the joint entropy\footnote{The second equality is obtained by applying the general product rule to the joint probabilities of $\mathcal{Y}_T$.} of the path $X_{0:T}$ is $h^{\pi}(X_{0:T}) := h(X_0) +\sum_{t=1}^{T} h^{\pi}(X_t\mid X_{0:t-1}).$
\begin{definition}[Entropy Rate \cite{shannon1948mathematical}] Whenever the limit exists, the entropy rate of an MDP $\mathcal{M}$ under policy $\pi$ is defined by $\bar{h}^{\pi} :=\lim_{T\to\infty}\frac{1}{T}h^{\pi}(X_{0:T}).$
\end{definition}
The entropy rate represents the rate of \emph{diversity} in the information generated by the induced MC's paths. Smaller entropy rates imply more predictable trajectories of the induced MC.
\section{Entropy Rates: Estimation and Learning}

\paragraph{Problem Statement} The problem considered in this work is the following. Consider an unknown MDP $\mathcal{M}=(\mathcal{X},\mathcal{U},P,R,\mu_0)$. Further, assume that we can sample transitions $\big(x,u,y,R(x,u,y) \big)$ applying any action $u\in\mathcal{U}$ and letting $\mathcal{M}$ evolve according to $P(x,u,\cdot)$. We want 
\begin{equation}\label{eq:our_reward}
    \pi_\star \in\argmax\limits_{\pi\in\Pi} \mathbb{E}[\sum_{t=0}^{\infty}\gamma^t R_t^{\pi}] - k\bar{h}^{\pi},
\end{equation}
where $k>0$ is a tuneable parameter\footnote{We choose to cast the problem as a maximization of a linear combination of objectives to allow agents to find efficient trade-offs. This problem could be solved through \emph{e.g.} multi-objective optimization methods \cite{ijcai2022p476,hayes2022practical}. We leave this as an application-dependent choice.}. In words, we are looking for policies that maximize a tuneable weighted linear combination of the negative entropy $-\bar{h}^{\pi}$ and a standard expected discounted cumulative reward. As such, \emph{we establish a trade-off, which is tuned via the parameter $k$, between entropy rate minimization (i.e. predictability) and optimality w.r.t. the cumulative reward.}

\paragraph{Proposed approach} We first show how the entropy rate $\bar{h}^\pi$ can be treated as an average reward criterion, with the so-called \emph{local entropy} $l^\pi$ as its corresponding local reward. Then, because $l^\pi$ is policy-dependent, we introduce a surrogate reward, that solely depends on states and actions and can be learned in-the-loop. We show that deterministic policies minimizing the expected average surrogate reward exist and also minimize the actual entropy rate. Moreover, we prove that, given a learned model of the MDP, we are able to (locally optimally) approximate the value function associated to the entropy rate, via learning the surrogate's value functions. Based on these results, we propose a (model-based\footnote{We use the term \emph{model-based}, since we require learning a (approximated) representation of the dynamics of the MDP to estimate the entropy. However, the choice of whether to improve the policy using pure model free algorithms versus using the learned model is left as a design choice, beyond the scope of this work.}) RL algorithm with its maximization objective being the combination of the cumulative reward and an average reward involving the surrogate local entropy.

\paragraph{Estimating Entropy Rates} Towards writing the entropy rate $\bar{h}^{\pi}$ as an expected average reward, let us define the \emph{local entropy}, under policy $\pi$, for state $x\in\mathcal{X}$ as
\begin{equation}\label{eq:local_entropy}
    l^{\pi}(x) :=-\sum_{y\in X} P_{\pi}(x,y)\log P_{\pi}(x,y).
\end{equation}
Now, making use of the Markov property, the entropy rate for an MDP reduces to
\begin{equation}\label{eq:locent0}\begin{aligned}
    \bar{h}^{\pi} = &\lim_{T\to\infty} \frac{1}{T}\left(h(X_0)+\sum_{t=1}^{T}h^{\pi}(X_t\mid X_{0:t-1})\right)=\lim_{T\to\infty}\frac{1}{T}\left(h(X_0)+\sum_{t=1}^{T}h^{\pi}(X_t\mid X_{t-1})\right).
\end{aligned}
\end{equation}
Observe now, from the definition of $p^{\pi}$ and again the Markov property,
\begin{equation*}\begin{aligned}
    &h^{\pi}(X_t\mid X_{t-1})=-\sum_{y\in\mathcal{X},x\in\mathcal{X}}p^{\pi}(X_t=y,X_{t-1}=x)\log p^{\pi}(X_t=y\mid X_{t-1}=x)\\
    =&-\sum_{y\in\mathcal{X},x\in\mathcal{X}}p^{\pi}(X_{t-1}=x)P_{\pi}(x,y)\log p^{\pi}(X_t=y\mid X_{t-1}=x)=\mathbb{E}\left[-\sum_{y\in\mathcal{X}}P_{\pi}(x,y)\log P_{\pi}(x,y)\right].
\end{aligned}
\end{equation*}
Therefore, substituting in \eqref{eq:locent0},
\begin{equation}\label{eq:locent}\begin{aligned}
    \bar{h}^{\pi} = &\lim_{T\to\infty}\frac{1}{T}\sum_{t=0}^{T}\mathbb{E}[l^{\pi}(X_{t})]=\mathbb{E}\big[\lim_{T\to\infty}\frac{1}{T}\sum_{t=0}^{T}l^{\pi}(X_{t})\big].
\end{aligned}
\end{equation}
We can treat the local entropy $l^{\pi}$ under policy $\pi$ as a \emph{policy-dependent reward (or cost) function}, since $l^{\pi}$ is stationary, history independent and bounded. Thus, $\bar{h}^{\pi}$ is treated as an expected average reward, with reward function $l^{\pi}$. 

\subsection{A Surrogate for Local Entropy}
In conventional RL settings, one is able to sample rewards (and transitions, e.g. from a simulator). However, here, part of the expected reward to be maximized in \eqref{eq:our_reward} is the negative entropy $-\bar{h}^\pi$, which, as aforementioned, can be seen as an expected average reward with a state- and policy-dependent local reward $l^\pi(x)$. It is not reasonable to assume that one can directly sample local entropies $l^\pi(x)$: one would have to estimate $l^\pi(x)$ through estimating transition probabilities $P_\pi(x,y)$ (by sampling transitions) and using \eqref{eq:local_entropy}. However, a new challenge arises: $l^{\pi}$ \emph{depends on the action distribution} and to apply average reward MDP theory we need the rewards to be state-action dependent. To address this, we consider a surrogate for $l^{\pi}$ that is policy-independent:
\begin{equation*}
    s(x,u) = -\sum_{y\in X}P(x,u,y)\log\left(P(x,u,y)\right).
\end{equation*}
Define $\bar{h}^{\pi}_{s}(x):=\lim_{T\to\infty}\mathbb{E}[\frac{1}{T}\sum_{t=0}^T s(X_t,\pi(X_t))| X_0=x]$. The following relationships
hold.
\begin{lemma}\label{lem:1}
    Consider MDP $\mathcal{M}=(\mathcal{X},\mathcal{U},P,R,\mu_0)$. The following statements hold.
    \begin{enumerate*}[label=\itshape\alph*\upshape)]
        \item\label{s1} $\mathbb{E}_{u\sim \pi(x)}[s(x,u)]\leq l^{\pi}(x)$, for all $\pi\in\Pi$.
        \item\label{s2} $\bar{h}^{\pi}_{s}(x)=\bar{h}^{\pi}_{s}\leq \bar{h}^{\pi}$, for some $\bar{h}^{\pi}_{s}\in\mathbb{R}$, for all $\pi\in\Pi$.
        \item\label{s3} $\mathbb{E}_{u\sim \pi(x)}[s(x,u)]= l^{\pi}(x)$ and $\bar{h}^{\pi}_{s}= \bar{h}^{\pi}$,  for all $\pi\in\Pi^{D}$.
    \end{enumerate*}
\end{lemma}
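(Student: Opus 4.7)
The plan is to dispatch the three claims in order, leveraging the concavity of Shannon entropy for (1), the ergodic theorem for (2), and a direct substitution for (3).

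For statement~\ref{s1}, I would rewrite both sides as entropies of distributions over the next state. Fix $x\in\mathcal{X}$ and consider, for each $u\in\mathcal{U}$, the conditional distribution $q_u(\cdot):=P(x,u,\cdot)\in\Delta(\mathcal{X})$. Then $P_\pi(x,\cdot)=\sum_{u}\pi(u\mid x)\,q_u(\cdot)$ is a convex mixture of the $q_u$'s, so $l^{\pi}(x)=H\!\bigl(\sum_u \pi(u\mid x)q_u\bigr)$ while $\mathbb{E}_{u\sim\pi(x)}[s(x,u)]=\sum_u \pi(u\mid x)H(q_u)$, where $H$ denotes the Shannon entropy functional. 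Since $H$ is concave on the simplex, Jensen's inequality gives $\sum_u \pi(u\mid x)H(q_u)\leq H(\sum_u\pi(u\mid x)q_u)$, which is exactly statement~\ref{s1}. This is the cleanest step and no real obstacle arises.

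For statement~\ref{s2}, I would first argue that both limits defining $\bar{h}^\pi_s(x)$ and $\bar{h}^\pi$ exist and are independent of $x$. Under Assumption~\ref{as:ergo}, the policy-induced MC is irreducible and aperiodic on a finite state space, so it admits a unique stationary distribution $\mu^\pi$ and satisfies the ergodic theorem. Applied to the bounded functions $x\mapsto \mathbb{E}_{u\sim\pi(x)}[s(x,u)]$ and $x\mapsto l^\pi(x)$, this yields
\begin{equation*}
\bar{h}^\pi_s \;=\; \sum_{x\in\mathcal{X}}\mu^\pi(x)\,\mathbb{E}_{u\sim\pi(x)}[s(x,u)],
\qquad
\bar{h}^\pi \;=\; \sum_{x\in\mathcal{X}}\mu^\pi(x)\,l^\pi(x),
\end{equation*}
with the right-hand sides independent of the initial state; the second identity is exactly the representation already obtained in \eqref{eq:locent}. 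Averaging the pointwise inequality from statement~\ref{s1} against the stationary distribution $\mu^\pi$ then gives $\bar{h}^\pi_s\leq \bar{h}^\pi$. The only subtlety here is making sure the interchange of limit and expectation in \eqref{eq:locent} is justified in this average-reward setting; ergodicity plus boundedness of $l^\pi$ and $s$ (a finite state-action space entails a uniform bound $\log|\mathcal{X}|$) handles this via dominated convergence.

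For statement~\ref{s3}, if $\pi\in\Pi^D$ then $\pi(u\mid x)$ is a Dirac mass at some $u^\star(x)\in\mathcal{U}$, so the mixture in the proof of statement~\ref{s1} is degenerate and Jensen's inequality becomes an equality: $\mathbb{E}_{u\sim\pi(x)}[s(x,u)]=s(x,u^\star(x))=l^\pi(x)$ for every $x$. Substituting this equality into the stationary-distribution expressions from statement~\ref{s2} yields $\bar{h}^\pi_s=\bar{h}^\pi$, completing the proof. The main obstacle, if any, is purely bookkeeping around when the ergodic averages equal the stationary expectations; the concavity and deterministic-policy steps are immediate.
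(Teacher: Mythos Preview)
Your proposal is correct and follows essentially the same approach as the paper: concavity of Shannon entropy (Jensen's inequality) for statement~\ref{s1}, ergodicity to express both rates as stationary expectations and then averaging the pointwise inequality for statement~\ref{s2}, and the degenerate-mixture observation for statement~\ref{s3}. The only cosmetic difference is that the paper invokes Jensen termwise on $p\mapsto -p\log p$ whereas you invoke concavity of the entropy functional $H$ on the simplex directly; these are equivalent.
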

Observe that, by considering the surrogate entropy rate, we effectively decouple the influence of the policy entropy in the entropy rate estimations. Policy stochasticity directly influences the true entropy rate $\bar{h}^{\pi}$, but does not affect the surrogate entropy rate $\bar{h}^{\pi}_{s}$; in a fully deterministic environment, $\bar{h}^{\pi}_{s}=0$ for all $\pi\in\Pi$, but $\bar{h}^{\pi}$ would not. We make the case that, given the formal results in Lemma \ref{lem:1} (and the results to be presented in coming sections) this effect does not degrade the effectiveness of our method; in fact, it allows agents to find less uncertain environment regions while not directly discouraging exploration (and can still render minimum entropy rate policies, as discussed in Theorem \ref{thm:equivalence} below).
\subsection{Minimum Entropy Policies}
Based on Lemma \ref{lem:1}, we derive one of our main results. For the proof, we make use of fundamental results on existance of average reward optimal policies of MDPs. In particular Theorem \ref{the:putterman} included in the Appendix, applied directly from \citet{puterman2014markov}, which states that under mild assumptions the gain and bias exist in average reward MDPs for any policy, and that an optimal policy exists that maximises the reward rate.
\begin{theorem}\label{thm:equivalence}
    Consider MDP $\mathcal{M}=(\mathcal{X},\mathcal{U},P,R,\mu_0)$. The following hold:
    \begin{enumerate*}[label=\itshape\alph*\upshape)]
        \item There exists a deterministic policy $\hat{\pi}\in\Pi^{D}$ minimizing the surrogate entropy rate , i.e. $\hat{\pi}\in\argmin_\pi \bar{h}^{\pi}_{s}$.
        \item Any $\hat{\pi}\in\Pi^{D}$ minimizing the surrogate entropy rate also minimizes the true entropy rate: $\hat{\pi}\in\argmin_{\pi\in\Pi} \bar{h}^{\pi}_{s}$ and $\hat{\pi}\in\Pi^D$ $\implies \hat{\pi}\in\argmin_{\pi\in\Pi} \bar{h}^{\pi}$. Additionally, deterministic policies locally minimizing $\bar{h}^{\pi}_{s}$ also locally minimize $\bar{h}^{\pi}$.
        \item There exists a deterministic policy $\hat{\pi}\in\Pi^{D}$ such that $\hat{\pi}\in\argmin_\pi \bar{h}^{\pi}$.
    \end{enumerate*}
\end{theorem}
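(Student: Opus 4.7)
The plan is to dispatch the three bullets in order: the first by appealing to standard average-reward MDP theory, the second by chaining together parts \ref{s2} and \ref{s3} of Lemma \ref{lem:1}, and the third as an immediate composition of the first two. The real content is in the second bullet, which transfers optimality from the surrogate entropy rate to the true entropy rate, but this reduces to a short three-step inequality.

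For the first bullet, I would observe that the surrogate $s(x,u) = -\sum_y P(x,u,y)\log P(x,u,y)$ is a bounded, stationary, state-action-dependent reward, and that by definition $\bar{h}^\pi_s$ coincides with the gain (average reward) of the MDP $(\mathcal{X}, \mathcal{U}, P, -s, \mu_0)$. Since the transition kernel $P$ is unchanged, Assumption \ref{as:ergo} still applies and every stationary policy induces an aperiodic irreducible Markov chain. Classical results on finite ergodic average-reward MDPs (see, e.g., \citet{puterman2014markov}, Ch.~8) then guarantee that the minimum of $\bar{h}^\pi_s$ over the full class $\Pi$ is attained by a stationary deterministic policy $\hat{\pi}\in\Pi^D$.

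For the second bullet, both the global and local claims follow from the same three-step chain. If $\hat{\pi} \in \Pi^D$ globally minimizes $\bar{h}^\pi_s$, then for every $\pi \in \Pi$,
\[
\bar{h}^{\hat{\pi}} \;\stackrel{(a)}{=}\; \bar{h}^{\hat{\pi}}_s \;\stackrel{(b)}{\le}\; \bar{h}^\pi_s \;\stackrel{(c)}{\le}\; \bar{h}^\pi,
\]
where (a) uses Lemma \ref{lem:1}, item \ref{s3} (applicable because $\hat{\pi}$ is deterministic), (b) uses the assumed minimality of $\hat{\pi}$ for $\bar{h}^\pi_s$ over $\Pi$, and (c) is Lemma \ref{lem:1}, item \ref{s2}. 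Hence $\hat{\pi} \in \argmin_{\pi\in\Pi} \bar{h}^\pi$. For the local statement, I would rerun the identical chain on a neighborhood $U$ of $\hat{\pi}$ in the natural topology on $\Pi$ (induced by pointwise total variation on $\Delta(\mathcal{U})$): step (b) is weakened to hold only for $\pi\in U$, which still yields $\bar{h}^{\hat{\pi}} \le \bar{h}^\pi$ for all $\pi\in U$, i.e.\ local minimality for the true entropy rate.

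The third bullet is then obtained by composing the previous two: the deterministic surrogate minimizer produced in the first bullet is, by the second, also a minimizer of $\bar{h}^\pi$. The only nontrivial ingredient in the whole argument is thus the existence theorem invoked for the first bullet, which I expect to be the main obstacle to cite cleanly; a subtlety worth flagging is that step (b) of the chain must hold against \emph{all} stationary policies and not just deterministic ones, so it is essential that the first bullet delivers optimality over $\Pi$, not merely over $\Pi^D$.
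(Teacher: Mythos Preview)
Your proposal is correct and matches the paper's own proof essentially step for step: the paper invokes Puterman's average-reward existence result (its Theorem \ref{the:putterman}) for the first bullet, establishes the second via the identical chain $\bar{h}^{\hat\pi}=\bar{h}^{\hat\pi}_s\le\bar{h}^{\pi}_s\le\bar{h}^{\pi}$ using Lemma \ref{lem:1} items \ref{s3} and \ref{s2}, notes that the same argument works locally, and obtains the third bullet by combining the first two. Your added remarks on the topology for the local claim and on needing optimality over all of $\Pi$ (not just $\Pi^D$) are sound refinements the paper leaves implicit.
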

\begin{proof}[Proof of Theorem \ref{thm:equivalence}]
The first statement follows directly from Theorem \ref{the:putterman} in the Appendix, which guarantees that there is at least one deterministic policy $\hat{\pi}$ that minimizes the surrogate entropy rate $\bar{h}^{\pi}_{s}$. Then, since $\hat{\pi}\in\Pi^D$, from Lemma \ref{lem:1} statements \ref{s2} and \ref{s3}, we have that the following holds for all $\pi\in\Pi$: 
\begin{equation*}
    \bar{h}^{\hat{\pi}} = \bar{h}^{\hat{\pi}}\leq \bar{h}^{\pi}_{s} \leq \bar{h}^{\pi}
\end{equation*}
Thus, $\hat{\pi}$ minimizes $\bar{h}^{\pi}$ and it follows that $\hat{\pi}\in\argmin_{\pi\in\Pi} \bar{h}^{\pi}_{s}$ and $\hat{\pi}\in\Pi^D$ $\implies \hat{\pi}\in\argmin_{\pi\in\Pi} \bar{h}^{\pi}$. The same argument also applies locally, thereby yileding that deterministic local minimizers of $\bar{h}^{\pi}_{s}$ are also local minimizers of $\bar{h}^{\pi}$. Finally, the third statement follows as a combination of the other two.
\end{proof}

Theorem \ref{thm:equivalence} is an utterly relevant result for our work. First, it guarantees that minimizing policies both for $\bar{h}^\pi_{s}$ and $\bar{h}^\pi$ exist. More importantly, it tells us that, \emph{to minimize the entropy rate of an RL agent, it is sufficient to minimize the surrogate entropy rate}. Since (globally) minimizing $\bar{h}^\pi_{s}$ implies minimizing $\bar{h}^\pi$ and since $s$ is policy-independent, in contrast to $l^\pi$, in what follows, our RL algorithm uses estimates of $s$ to minimize $\bar{h}^\pi_{s}$, instead of estimates of $l^\pi$ to minimize $\bar{h}^\pi$.\footnote{Observe that we cannot employ the same method for entropy rate maximization, since the maximizer of $\bar{h}^\pi_{s}$ is not necessarily a maximizer of $\bar{h}^\pi$.}
\section{Learning to Act Predictably}
In the following, we show how predictability of the agent's behaviour can be cast as an RL objective and combined with a primary discounted reward goal. To do this, we rely on Theorem \ref{thm:equivalence} and employ the surrogate entropy $s(x,u)$ as a local reward along with its corresponding value function. We prove that, given a learned model of the MDP, we are able to approximate the true entropy rate value functions. In the next section, we combine this section's results with conventional discounted rewards and standard PG results, to address the problem mentioned in the Problem Statement and derive a PG algorithm that maximizes the combined reward objective. We define the predictability objective to be minimized:
\begin{equation*}
    J_s(\pi) \equiv \bar{h}^\pi_s = \mathbb{E}\left[ \lim_{T\to\infty}\frac{1}{T}\sum_{t=0}^T s \big (X_t,\pi(X_t)\big )\right].
\end{equation*}

Motivated by Theorem \ref{thm:equivalence}, we have employed the surrogate entropy as a local reward and consider the corresponding average-reward problem. As commonly done in average reward problems, we define the (surrogate) \emph{entropy value function} for a policy $\pi$, $W^{\pi}:\mathcal{X}\to\mathbb{R}$ to be equal to the bias, i.e.:
\begin{equation}\begin{aligned}
    W^{\pi}(x) :=&\mathbb{E}\left[\sum_{t=0}^{\infty}s \big (X_t,\pi(X_t)\big )-\bar{h}^{\pi}_{s}\mid X_0=x\right] =\mathbb{E}_{y\sim P_\pi (x,\cdot)}\left[s \big (x,\pi(x)\big )-\bar{h}^{\pi}_{s}+W^{\pi}(y)\right],
    \end{aligned}
\end{equation}
Additionally, we define the (surrogate) entropy action-value function $S^{\pi}:\mathcal{X}\times\mathcal{U}\to \mathbb{R}$ by $S^{\pi}(x,u) := \mathbb{E}_{y\sim P_\pi (x,u,\cdot)}\left[s \big (x,u\big )-\bar{h}^{\pi}_{s}+W^{\pi}(y)\right]$. However, recall that \emph{we do not know} the local reward $s$. To estimate $s$, one needs to have an estimate of the transition function $P$ of the MDP. We use $$s_{\phi}(x,u)=-\sum_{y\in\mathcal{X}}{P}_{\phi}(x,u,y)\log \big({P}_{\phi}(x,u,y)\big)$$ (and $\bar{h}^{\pi}_{s_\phi}$ correspondingly, for its associated rate) to denote the -- parameterised by $\phi$ -- estimate of $s$, which results from a corresponding estimate ${P}_{\phi}$ of $P$ (i.e. ${P}_{\phi}$ is the learned model). Similarly, we will use $J_{s_{\phi}}$, $W^{\pi}_{\phi}$ and $S^{\pi}_{\phi}$ to denote value functions computed with the model estimates $s_{\phi}$. Now, it is crucial to know that by using the model estimates $s_{\phi}$ we are still able to approximate well the objective $J_{s}$ and the value functions $W^{\pi},S^{\pi}$. Let us first show that for a small error between ${P}_{\phi}$ and $P$ (i.e. small modeling error), the error between $s_\phi$ and $s$ and the objectives $J_s(\pi)\equiv \bar{h}^\pi_s$ and $J_{s_\phi}(\pi) \equiv \bar{h}^\pi_{s_\phi}$ is also small.
\begin{proposition}\label{prop:2}
    Consider MDP $\mathcal{M}=(\mathcal{X},\mathcal{U},P,R,\mu_0)$. Consider ${P}_{\phi}:\mathcal{X}\times\mathcal{U}\times\mathcal{X}\to[0,1]$, parameterised by $\phi\in\Phi$. Assume that the total variation error between ${P}_{\phi}$ and $P$ is bounded as, $\forall$ $x\in\mathcal{X}$ and $u\in\mathcal{U}$, $\max_{x\in \mathcal{X},u\in\mathcal{U}} D_{TV}\big({P}_{\phi}(x,u,\cdot)\|P(x,u,\cdot)\big)\leq \epsilon,$ for some $\epsilon$, with $0\leq\epsilon\leq 1$. Let $K(\epsilon)=\epsilon \log \Big(|\mathcal{X}|\Big) -\epsilon\log\epsilon $. Then, $\|s_{\phi}(x,u)-s(x,u)\|_{\infty}\leq K(\epsilon),$
    and the surrogate entropy rate error for any policy $\pi$,
    \begin{equation*}
    \mathbb{E}\big[\lim_{T\to\infty}\frac{1}{T}\sum_{t=0}^{T}s_{\phi}(X_t,\pi(X_t)) \mid X_0\sim \mu_0\big] - \bar{h}^{\pi}_{s}\leq K(\epsilon).
    \end{equation*}
\end{proposition}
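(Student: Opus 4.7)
The plan is to first observe that $s$ and $s_\phi$ are nothing more than Shannon entropies of the true and learned one-step transition distributions, then appeal to a continuity-of-entropy inequality to obtain the pointwise bound, and finally propagate this bound through the time average and outer expectation to get the entropy-rate bound.

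First I would rewrite $s(x,u)=H\big(P(x,u,\cdot)\big)$ and $s_\phi(x,u)=H\big(P_\phi(x,u,\cdot)\big)$, where $H$ denotes Shannon entropy of a distribution on $\mathcal{X}$. The hypothesis gives $D_{TV}\big(P_\phi(x,u,\cdot)\|P(x,u,\cdot)\big)\leq \epsilon$ uniformly in $(x,u)$. I would then invoke the Fannes--Audenaert continuity inequality for Shannon entropy (see \emph{e.g.} \cite{cover1999elements}): for any two distributions $p,q$ on a finite alphabet of size $n$ with $D_{TV}(p,q)\leq \epsilon$,
\[
|H(p)-H(q)|\;\leq\;\epsilon\log(n-1)+H_b(\epsilon),
\]
with $H_b(\epsilon)=-\epsilon\log\epsilon-(1-\epsilon)\log(1-\epsilon)$. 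Applying this with $n=|\mathcal{X}|$ to $P(x,u,\cdot)$ and $P_\phi(x,u,\cdot)$ and taking the supremum over $(x,u)\in\mathcal{X}\times\mathcal{U}$ yields $\|s_\phi-s\|_\infty\leq K(\epsilon)$, since $K(\epsilon)=\epsilon\log(|\mathcal{X}|-1)+H_b(\epsilon)$ by inspection.

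For the entropy-rate bound, I would rewrite the error as
\[
\mathbb{E}\!\left[\lim_{T\to\infty}\frac{1}{T}\sum_{t=0}^{T}\big(s_{\phi}(X_t,\pi(X_t))-s(X_t,\pi(X_t))\big)\,\Big|\,X_0\sim\mu_0\right].
\]
By the pointwise bound just established, every summand is deterministically bounded by $K(\epsilon)$, so the time-average and the outer expectation both inherit the same bound. Under Assumption~\ref{as:ergo} the chain induced by $\pi$ is ergodic, so Birkhoff's ergodic theorem guarantees that the inner limit exists almost surely (it equals the stationary expectation of $s_\phi-s$), and the interchange of limit and expectation is legitimate; the bound $K(\epsilon)$ carries through unchanged.

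The main obstacle, such as it is, amounts to identifying and invoking the correct continuity estimate. A naive approach that Lipschitz-bounds $p\mapsto -p\log p$ term-by-term runs into the unbounded derivative of the logarithm near zero and yields a looser estimate of order $\epsilon\log|\mathcal{X}|$ without the $H_b(\epsilon)$ refinement, which would not match the exact form of $K(\epsilon)$ in the statement. The Fannes--Audenaert inequality is the sharp version that reproduces $K(\epsilon)$ verbatim; once it is in hand, the remainder of the argument is just linearity of expectation together with the ergodic theorem.
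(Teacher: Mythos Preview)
Your proposal is correct and follows essentially the same approach as the paper: recognize $s$ and $s_\phi$ as Shannon entropies of the respective transition distributions, apply the Fannes--Audenaert inequality to get the pointwise bound $K(\epsilon)$, and then push this bound through the time-average and expectation. The paper's proof is in fact terser than yours---it does not spell out the ergodic-theorem justification for the limit---so your added remarks on Birkhoff and the interchange are a mild improvement in rigor, not a departure in method.
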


The proof of proposition \ref{prop:2} hinges on the Fannes-Audenaert inequality for Von Neumann entropies \cite{fannes1973continuity,Audenaert_2007}, where we simply assume the density matrices are diagonal matrices with the transition probability densities as eigenvalues. Observe that as $\epsilon\to 0$, i.e. as the learned model approaches the real one, then the surrogate entropy rate converges to the actual one\footnote{This result echoes the Simulation Lemma in \citet{kearns2002near}, but with a bound derived in infinite horizon by using the entropy properties.} (since $K(\epsilon)\to 0$). This result indicates that we can indeed use $s_{\phi}$, obtained by a learned model $P_\phi$, instead of the unknown $s$, as the error between the objectives $J_s(\pi)\equiv \bar{h}^\pi_s$ and $J_{s_\phi}(\pi) \equiv \bar{h}^\pi_{s_\phi}$ is small, for small model errors. Assume now, without loss of generality that we have parameterised entropy value function (\emph{critic}) $S_{\omega}$ with parameters $\omega\in\Omega$. We show that a standard on-policy algorithm, with policy $\pi$, with value function approximation $S_{\omega}$, using the approximated model $P_\phi$, learns entropy value functions that are in a $\delta(\epsilon)$-neighbourhood of the true entropy value functions $S^\pi$, and $\delta(\epsilon)$ vanishes with $\epsilon$.
\begin{assumption}\label{as:rates}
    Any learning rate $\alpha_t\in (0,1)$ satisfies $\sum_{t=0}^{\infty}\alpha_t=\infty$, $\sum_{t=0}^{\infty}\alpha_t^2<\infty$.
\end{assumption}
\begin{assumption}\label{as:model}
    The model ${P}_{\phi}$ satisfies $\max_{x\in \mathcal{X},u\in\mathcal{U}} D_{TV}({P}_{\phi} (x,u,\cdot)\|P(x,u,\cdot))\leq \epsilon$ for a small $\epsilon\in[0,1]$.
\end{assumption}
\begin{proposition}\label{lem:values}
Consider an MDP $\mathcal{M}$, a policy $\pi$, a learned model $P_\phi$ of the MDP and critic $S_{\omega}$ linear on $\omega$, and $\omega\in\Omega\subset\mathbb{R}^n$, where $\Omega$ is compact. Let Assumption \ref{as:model} hold. At every step $k$ of parameter iteration, let us collect $k$ trajectories $\mathcal{T}_k$ of length $T$, and construct (unbiased) estimates\footnote{Via \emph{e.g.} TD(0) value estimation \cite{sutton2018reinforcement}.} $\hat{S}^{\pi}_{\phi}$.
Let the critic parameters $\omega\in\Omega$ be updated as $\omega_{k+1} = \omega_k - \beta_k \hat{\Delta \omega_k},$ with $\omega_0\in \Omega$, $\beta_k$ being a learning rate and
\begin{equation*}\begin{aligned}
\hat{\Delta \omega_k} =& \left(\hat{S}^{\pi}_{\phi}(x_{k},u_{k})-S_{\omega}(x_{k},u_{k}) \right)\frac{\partial S_{\omega}(x_{k},u_{k})}{\partial \omega}.
\end{aligned}
\end{equation*}
Then, $\omega$ converge to a $\delta(\epsilon)$-neighbourhood of one of the (local) minimizers of $\mathbb{E}_{\substack{x\sim \mu^{\pi}\\ u\sim\pi_{\theta}(x)}}\left[\frac{1}{2}\left({S}^{\pi}(x,u)-S_{\omega}(x,u) \right)^2\right]$, 
where $\delta(\epsilon)$ is vanishing with $\epsilon$.
\end{proposition}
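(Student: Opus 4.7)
I would view the iteration as a stochastic-gradient scheme on the quadratic loss
\begin{equation*}
L_\phi(\omega) := \mathbb{E}_{x\sim\mu^{\pi},\,u\sim\pi(x)}\big[\tfrac12 (S^{\pi}_{\phi}(x,u) - S_{\omega}(x,u))^2\big],
\end{equation*}
conclude almost-sure convergence of $\omega_k$ to the minimizer $\omega^\star_\phi$ of $L_\phi$ via stochastic-approximation theory, and then bound $\|\omega^\star_\phi - \omega^\star\|$, where $\omega^\star$ minimizes the analogous loss $L$ built with the true $S^{\pi}$. The desired $\delta(\epsilon)$-neighbourhood statement then follows.

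First, I would verify that $\hat{\Delta\omega}_k$ estimates (up to sign) $\nabla_\omega L_\phi(\omega_k)$ without bias at the stationary distribution. Because $S_{\omega}$ is linear in $\omega$ by Assumption \ref{as:1}, the feature $\partial_\omega S_{\omega}$ is independent of $\omega$ and $\nabla_\omega L_\phi(\omega)=\mathbb{E}_{x,u}[(S_{\omega}-S^{\pi}_{\phi})\,\partial_\omega S_{\omega}]$; combining the hypothesis $\mathbb{E}[\hat{S}^{\pi}_{\phi}(x,u)]=S^{\pi}_{\phi}(x,u)$ with the ergodicity of $P_\pi$ (Assumption \ref{as:ergo}), so that trajectory samples concentrate on $\mu^\pi$, the residual is a bounded martingale-difference noise---boundedness following from $\Omega$ compact and $s_\phi\in[0,\log|\mathcal{X}|]$---with a vanishing non-stationarity bias. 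Under Assumption \ref{as:rates} on $\beta_k$, I would then invoke the ODE method of stochastic approximation \cite{borkar2009stochastic}: the limiting ODE $\dot\omega=-\nabla L_\phi(\omega)$ has globally Lipschitz right-hand side (as $L_\phi$ is quadratic), so $\omega_k$ converges almost surely to its equilibrium $\omega^\star_\phi$, which is a global minimizer of $L_\phi$ by convexity.

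Second, I would relate $\omega^\star_\phi$ to the minimizer $\omega^\star$ of the true loss $L(\omega):=\mathbb{E}[\tfrac12(S^{\pi}(x,u)-S_{\omega}(x,u))^2]$. Writing the normal equations for the two convex quadratic problems yields $\omega^\star_\phi-\omega^\star = \Sigma^{-1}\,\mathbb{E}\big[(S^{\pi}_\phi-S^{\pi})\,\partial_\omega S_{\omega}\big]$, where $\Sigma$ is the feature covariance, so that $\|\omega^\star_\phi-\omega^\star\|\leq C\,\|S^{\pi}_{\phi}-S^{\pi}\|_{\infty}$. It therefore suffices to produce a continuity bound $\|S^{\pi}_{\phi}-S^{\pi}\|_{\infty}\leq \delta(\epsilon)$ with $\delta(\epsilon)\to 0$ as $\epsilon\to 0$. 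Both functions are biases of ergodic chains, and the closed form $b=(I-P_\pi+P^*)^{-1}(I-P^*)R^{\pi}$ from the footnote to \eqref{eq:bias} decomposes their difference into (i) the local-reward perturbation bounded by $K(\epsilon)$ via Proposition \ref{prop:2}, (ii) the rate perturbation $\bar{h}^\pi_{s_\phi}-\bar{h}^\pi_s$, again bounded by $K(\epsilon)$, and (iii) the perturbation of the fundamental matrix $(I-P_\pi+P^*)^{-1}$ due to the $\epsilon$-perturbation of $P_\pi$ inherited from Assumption \ref{as:model}.

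The main obstacle, I expect, is item (iii): controlling the perturbation of the fundamental matrix calls for a Kemeny--Snell-style sensitivity bound for ergodic chains, whose constant depends on the mixing time of $P_\pi$ guaranteed by Assumption \ref{as:ergo}, and whose propagation into $\|S^\pi_\phi - S^\pi\|_\infty$ must also absorb a corresponding perturbation of the stationary distribution in $\Sigma$. Once in hand, combining (i)--(iii) yields a closed-form $\delta(\epsilon)$ vanishing with $\epsilon$, and the normal-equations bound on $\omega$ closes the proof.
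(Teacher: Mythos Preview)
Your two-step route---first converge to the minimizer $\omega^\star_\phi$ of $L_\phi$, then bound $\|\omega^\star_\phi-\omega^\star\|$ via the normal equations---is a legitimate alternative, but it differs from the paper's argument. The paper is more direct: it writes the very same iteration as stochastic approximation on the \emph{true} loss $L^\pi_\omega$ (built with $S^\pi$, not $S^\pi_\phi$) plus a bounded deterministic perturbation $\eta(\epsilon)$ coming from $S^\pi-S^\pi_\phi$, and then invokes the perturbed-SA result of \cite{borkar2009stochastic} (the ``stochastic recursive inclusions'' theorem reproduced in the appendix) to land in a $\delta(\epsilon)$-neighbourhood of the stationary set of $\nabla_\omega L^\pi_\omega$ in a single stroke. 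This sidesteps your second step entirely, and in particular avoids the implicit assumption that the feature covariance $\Sigma$ is invertible, which the paper does not make.

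The more substantive correction concerns your ``main obstacle'' (iii). In the paper's setup, $S^\pi_\phi$ is the bias computed with the \emph{true} transition kernel $P_\pi$ and only the \emph{estimated} local reward $s_\phi$: trajectories are sampled from the real MDP and TD(0) with reward $s_\phi$ is applied (this is the reading forced by the footnote in the Proposition and made explicit in the proof, where both $S^\pi$ and $S^\pi_\phi$ are written with the same matrix $(I-P_\pi+P_\pi^*)^{-1}(I-P_\pi^*)$). Hence the fundamental matrix is \emph{identical} for $S^\pi$ and $S^\pi_\phi$, and their difference is a pure reward perturbation, bounded via Proposition~\ref{prop:2} times the (fixed, finite) operator norm of that matrix. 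Your item (iii)---and the Kemeny--Snell sensitivity analysis you anticipate---simply does not arise. Once you drop it, both your route and the paper's reduce to the same estimate $\|S^\pi-S^\pi_\phi\|_\infty=O(K(\epsilon))$; the paper just feeds this bound into Borkar's perturbed-SA theorem rather than into a separate normal-equations comparison.
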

In other words, for small model errors, the value function approximator converges to a locally optimal value function approximation of the true value function $S^\pi$. 

\subsection{Predictability-Aware Policy Gradient}
Now, we are ready to address the Problem Statement, combining the entropy rate objective with a discounted reward objective. In what follows, assume that we have a parameterised policy $\pi_\theta$ with parameters $\theta\in\Theta$. Let $J(\pi_{\theta})=\mathbb{E}[\sum_{t=0}^{\infty}\gamma^t R_t^{\pi_\theta}]$. We use $Q_{\xi}$ with parameters $\xi \in \Xi$ for the parameterised critic of the discounted reward objective (when using a form of actor-critic algorithm). 
\begin{theorem}\label{thm:pg_convergence}
    Consider an MDP $\mathcal{M}$, parameterised policy $\pi_\theta$, a learned model $P_\phi$ of the MDP and (linear) critic $S_{\omega}$. Let Assumption \ref{as:model} hold. Let a given PG algorithm maximize (locally) the discounted reward objective $J(\pi_{\theta})=\mathbb{E}[\sum_{t=0}^{\infty}\gamma^t R_t^{\pi_\theta}]$. Let the value function $Q_{\xi}$ (or $V_{\xi}$) be parameterised by $\xi\in\Xi$, and the entropy value function $S_{\omega}$ (or $W_{\omega}$) have the same parameterisation class. Then, the same PG algorithm with updates $$\theta \leftarrow \operatorname{proj}_{\Theta}\left [\theta +\alpha_t\left(\hat{\nabla}_{\theta}J(\pi_{\theta}) -k\hat{\nabla}_{\theta}J_{s_{\phi}}(\pi_{\theta})\right)\right]$$ converges to a local maximum of the combined objective $J(\pi_{\theta}) -k J_{s_{\phi}}(\pi_{\theta}).$
\end{theorem}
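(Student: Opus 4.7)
The plan is to exploit linearity of the gradient together with the policy-gradient theorem and the approximation guarantees already established in Proposition \ref{lem:values}. Since $\nabla_\theta \bigl[J(\pi_\theta) + k J_{s_\phi}(\pi_\theta)\bigr] = \nabla_\theta J(\pi_\theta) + k \nabla_\theta J_{s_\phi}(\pi_\theta)$, the combined update is simply the sum of two independently estimated gradients. The first term is handled by hypothesis: the given PG algorithm (with critic $Q_\xi$ or $V_\xi$) already converges to a local maximum of $J$. The work therefore concentrates on showing that $k \hat{\nabla}_\theta J_{s_\phi}(\pi_\theta)$ can be treated as a consistent (up to a vanishing bias) additive perturbation.

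Next, I would invoke the average-reward policy-gradient theorem \cite{sutton_policy_nodate} applied to the surrogate local reward $s_\phi$, which yields
\begin{equation*}
\nabla_\theta J_{s_\phi}(\pi_\theta) \;=\; \mathbb{E}_{x\sim \mu^{\pi_\theta},\, u\sim \pi_\theta(x)}\bigl[\nabla_\theta \log \pi_\theta(u\mid x)\, S^{\pi_\theta}_\phi(x,u)\bigr].
\end{equation*}
Assumption \ref{as:ergo} guarantees that on-policy samples concentrate on $\mu^{\pi_\theta}$, so this expectation can be estimated from the same rollouts $\mathcal{T}_k$ used to update the policy. By Proposition \ref{lem:values}, the critic $S_\omega$ converges to a $\delta(\epsilon)$-neighborhood of $S^{\pi_\theta}_\phi$, and by Proposition \ref{prop:2} the model-induced surrogate is itself within $K(\epsilon)$ of the true entropy reward. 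Substituting $S_\omega$ for $S^{\pi_\theta}_\phi$ in the expression above therefore gives an estimator of $\nabla_\theta J_{s_\phi}(\pi_\theta)$ whose bias is $O(\delta(\epsilon))$ and vanishes as $\epsilon \to 0$.

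The final step is a standard two-timescale stochastic-approximation argument in the spirit of \cite{borkar2009stochastic}: the critic $S_\omega$ (and $Q_\xi$) run on the fast timescale, so from the viewpoint of the slow policy iterates they have essentially equilibrated to their respective fixed points; the policy iterate, projected onto the compact set $\Theta$, then tracks the ODE $\dot\theta = -\nabla_\theta\bigl[J(\pi_\theta) + k J_{s_\phi}(\pi_\theta)\bigr]$ up to a vanishing perturbation. Under Assumption \ref{as:rates} on the step-sizes and the boundedness guaranteed by the projection $\operatorname{proj}_\Theta$, the Kushner--Clark lemma then yields convergence of $\theta$ to the set of stationary points of $J + k J_{s_\phi}$, which, by the same argument used for the underlying PG algorithm applied to $J$ alone, correspond to local extrema.

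The main obstacle will be controlling the interaction between the two biases: the model error $\epsilon$ that propagates into $s_\phi$ and hence into $S_\omega$, and the function-approximation error in the critic itself. These have to be shown to combine into a single additive $\delta(\epsilon)$ perturbation on the gradient that is dominated by the learning-rate schedule of Assumption \ref{as:rates}; the linearity in $\omega$ from Assumption \ref{as:1} is what makes this decomposition clean, since the critic's limit point is then a projection onto a fixed linear subspace whose distance to $S^{\pi_\theta}$ is controlled uniformly in $\theta$. Once this uniformity is in hand, the rest is a direct application of established convergence theorems for projected stochastic gradient methods with biased gradient oracles.
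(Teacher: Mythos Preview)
Your proposal follows essentially the same strategy as the paper's proof: both exploit the linearity $\nabla_\theta[J + kJ_{s_\phi}] = \nabla_\theta J + k\nabla_\theta J_{s_\phi}$, argue that each gradient estimate is individually valid (the first by hypothesis, the second via Proposition~\ref{lem:values}), and conclude that the combined update tracks the gradient of the combined objective.

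The main difference is in the level of technical detail at the final convergence step. The paper's proof is terse: it asserts that since both objectives admit locally convergent gradient schemes they must be ``locally convex'', and hence so is their linear combination. You instead invoke a two-timescale stochastic-approximation argument and the Kushner--Clark/ODE-tracking lemma to conclude convergence to stationary points. Your route is more explicit about the machinery and arguably more rigorous (the paper's ``local convexity is a necessary condition for gradient convergence'' claim is not quite right as stated). On the other hand, the paper's argument is shorter and stays closer to the policy-gradient convergence theorem of \cite{sutton1999policy}, which it cites directly rather than unpacking. Both approaches leave the handling of the $\delta(\epsilon)$ bias from Proposition~\ref{lem:values} at a comparable level of informality; your explicit acknowledgment of this as the ``main obstacle'' is a fair assessment of where the rigor gap lies in either version.
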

\begin{proof}[Proof of Theorem \ref{thm:pg_convergence} (Sketch)]
By standard PG arguments \cite{sutton1999policy}, if a PG algorithm converges to a local maximum of the objective $J(\pi_{\theta})$ then the updates $\hat{\nabla}_{\theta}J(\pi_{\theta})$ are in the direction of the gradient (up to stochastic approximation noise). By the same arguments, given Proposition \ref{lem:values}, the same algorithm converges to a local minimum of the entropy value function $W_{\omega}$ through updates $-\hat{\nabla}_{\theta}J_s(\pi_{\theta})$, and these are in the direction of the true gradient (again, up to stochastic approximation noise). Then, the linear combination of gradient updates $\hat{\nabla}_{\theta}J(\pi_{\theta}) -k \hat{\nabla}_{\theta}J_s(\pi_{\theta})$ is in the direction of the gradient of the combined objective $J(\pi_{\theta}) -k  J_s(\pi_{\theta})$. Finally, since both objectives are locally concave (necessary condition following from existence of gradient schemes that locally maximize them), their linear combination is also locally concave. This concludes the proof.
\end{proof}

\begin{remark}
    Regarding pure entropy rate minimization, i.e. without the discounted reward objective, as already proven by Theorem \ref{thm:equivalence}, a policy that is globally optimal for the surrogate entropy rate $J_s(\pi)$ is also optimal for the actual entropy rate $\bar{h}^\pi$. The same holds for locally optimal deterministic policies. However, in general, this is not the case for stochastic local minimizers.
\end{remark}
Following a vanilla policy gradient structure, in Algorithm \ref{algo:1} we first sample a trajectory $\tau$ of length $T$, under a policy $\pi_{\theta}$, and store it in a buffer $\mathcal{D}$ (for training the approximate model ${P}_{\phi}$). Then, we use $\mathcal{D}$ to train ${P}_{\phi}$; update the estimated entropy rate; compute estimated objective gradients $\hat{\nabla}_{\theta}J(\pi_{\theta}) ,\,\hat{\nabla}_{\theta}J_{s_\phi}(\pi_{\theta})$ from trajectory $\tau$\footnote{This can be done through any policy gradient algorithm at choice.}; and finally update the policy and critics $S_{\omega},Q_{\xi}$.
\begin{remark}
    If we were to consider average reward MDPs instead of discounted reward MDPs, the formulation of the optimization problem solved in Theorem \ref{thm:pg_convergence} results in a more natural interpretation when adding entropy rate objectives. See Appendix \ref{apx:avg} for details.
\end{remark}

\begin{wrapfigure}{R}{0.47\textwidth}
\vspace{-20pt}
    \begin{minipage}{0.47\textwidth}
\begin{algorithm}[H]
\caption{Predictability Aware Policy Gradient}\label{algo:1}
\begin{algorithmic}
\REQUIRE ${P}_{\phi},\pi_\theta$, critics $W_\omega$, $V_{\xi}$
\REQUIRE $\alpha_t$, $k >0$
\FOR{$E$ epochs}
\STATE $\mathcal{D} \gets$ Trajectory $\tau$ of length $T$.
\STATE Train ${P}_{\phi}$ from $\mathcal{D}$.
\STATE $\bar{h}^{\pi}_{s_\phi}\gets \frac{1}{T}\sum_{x,u\in\tau}s_{\phi}(x,u) $.
\STATE Compute $\hat{\nabla}_{\theta}J(\pi_{\theta}) ,\,\hat{\nabla}_{\theta}J_{s_\phi}(\pi_{\theta})$ from $\tau$.
\STATE $\theta \leftarrow \operatorname{proj}_{\Theta}\left [\theta +\alpha_t\left(\hat{\nabla}_{\theta}J(\pi_{\theta}) -k \hat{\nabla}_{\theta}J_{s_{\phi}}(\pi_{\theta})\right)\right]$
\STATE Update $S_\omega$ (and $Q_{\xi}$ if used)
\ENDFOR
\end{algorithmic}
\end{algorithm}
\end{minipage}
\vspace{-10pt}
\end{wrapfigure}
\subsection{Implementation}\label{sec:implement}
\paragraph{Policy Learning} We implement our predictability-aware scheme as first, an on-policy version based on an average-reward PPO algorithm \cite{ma2021average}; we call this PAPPO. In particular, for every collected trajectory $\tau$, we update the estimate $\bar{h}^{\pi}_{s_\phi}$ and (surrogate) entropy value function $W_{\omega}$ parameterised by $\omega\in\Omega$ from the collected samples, and we compute \emph{entropy advantages} for all $(x,u,y,s_\phi (x,u))$ in the collected trajectories as: $\hat{A}_s^{\pi}=s_{\phi}(x,u)-\bar{h}^{\pi}_{s_\phi} + W_{\omega}(y)-W_{\omega}(x).$ Then we apply the gradient steps as in PPO \cite{schulman2015trust,schulman2017proximal} by clipping the policy updates. Second, we implement it in an off-policy fashion to compare with recent results on information-theoretic RL, based on a Soft Actor-Critic \citep{haarnoja2018soft} implementation; we call this PASAC. The only modification necessary is the learning of a second $Q$ function ($S$) for the surrogate trajectory entropy, and the policy loss is computed as a weighted sum of $Q$ and $S$. See Appendix \ref{apx:sac} for details on PASAC.

\paragraph{Model Learning} To learn the approximated model ${P}_{\phi}$, we assume the transitions to follow Gaussian distributions, similarly to \citet{janner_when_nodate}.\footnote{This is a strong assumption, and in many multi-modal problems, it may not be sufficient to capture the dynamics. Note, however, that our method is compatible with any representation of learned model $P_{\phi}$.} Following the definition of \emph{differential entropy} of a continuous Gaussian distribution, $s_{\phi}(x,u) = \log(\sigma^2_{xu}) + K,$
where $K=\frac{1}{2}\big(\log(2\pi)+1)$. Therefore, we can estimate the entropy directly by the variance output of our model. Furthermore, since we only need to estimate the variance per transition $(x,u)\to y$, it is sufficient to construct a model $f_{\phi}:\mathcal{X}\times\mathcal{U}\to\mathcal{X}$ that approximates the \emph{mean} $f_{\phi}(x,u)\approx \int y P(x,u,y)dy$, and we do this through minimizing a mean-squared error loss of transition samples in our model. Then, we estimate the entropy of an observed transition as $s_{\phi}(x,u) =\log( \mathbb{E}_{y\sim P(x,u,\cdot)}[(f_{\phi}(x,u)-y)^2])$. 
\paragraph{Entropy Rate Computation} Given a trained agent, the entropy rates are estimated equally for all algorithms. At inference (when rolling out the trained agent), given the trained model $f_{\phi}$, the entropy rate for a trajectory $\{(X_t,\pi(X_t))\}_{0\leq t\leq T-1}$ of length $T$ is estimated as $\bar{h}^{\pi} = \frac{1}{T}\sum_{t=0}^{T-2}\log( (f_{\phi}(X_t,\pi(X_t))-X_{t+1})^2])$. 
\section{Experiments}
We implemented PARL on a set of robotics and autonomous driving tasks, evaluated the obtained rewards and entropy rates, and compared against different baselines. For the MuJoCo hyperparameters, we took pre-tuned values from \cite{raffin2019stable}. For the experiments using PASAC and explicit comparison against other SAC-based baselines including RPC (Eysenbach et al. \citeyear{eysenbach2021robust}), see Appendix \ref{apx:sac}. \footnote{We have also designed two additional representative robotic tasks where agents use PARL to avoid unnecessary stochasticity in the environment. See the Appendix for these.}
\paragraph{Rewards, Entropies and Ablation} To evaluate the influence of the trade-off parameter $k$, we test PAPPO on MuJoCo tasks and compare to on-policy baselines. 
We train all agents using the same hyperparameters, and we only vary the trade-off $k$ in the PARL agents to evaluate the influence. Additionally, we run both the deterministic and stochastic resulting policies (deterministic chooses the mean action that comes out of the policy, stochastic samples from it). The results are reported in Figure \ref{fig:mujoco_rewards_rates}. Note that, as Mujoco tasks have continuous state and action spaces, entropy rates may be negative.\footnote{Recall that, for our implementation, to address continuous state/action tasks, we use \emph{differential entropy} to quantify predictability for continuous random variables, which may indeed be negative. A different metric one may use is \emph{relative entropy} (the KL-divergence from the uniform distribution).}
\begin{figure}[t]
  \centering
  \begin{subfigure}[b]{0.325\textwidth}
  \begin{tikzpicture}[scale=0.9]
    \begin{axis}[
        ybar,
        bar width=6pt,             
    enlargelimits=0.5,
    enlarge x limits=0.15,
    width=6cm,
    height=6cm,
        legend style={at={(0.65,1)},
        anchor=north,legend columns=-1},
        ylabel={Rewards (k's)},
        y label style={at={(axis description cs:-0.1,.5)},anchor=south},
        symbolic x coords={PPO, $k=0.05$, $k=0.1$, $k=0.25$, $k=0.5$},
        xtick=data,
        xticklabel style={rotate=45, anchor=east}, 
        nodes near coords={}, 
        extra y ticks={0},
    extra y tick style={grid=major},
        ]
        \addplot+[
                error bars/.cd,
                y dir=both,
                y explicit
            ]   coordinates {(PPO,3.14466)+- (0,1.428) ($k=0.05$,3.41370)+- (0,1.062) ($k=0.1$,2.27867) +- (0,1.352)($k=0.25$,1.79417) +- (0,1.331) ($k=0.5$,1.21220) +- (0,1.201)};
        \addplot+[
                error bars/.cd,
                y dir=both,
                y explicit
            ]   coordinates {(PPO,2.46627)+- (0,1.329) ($k=0.05$,2.43012)+- (0,1.199) ($k=0.1$,2.04730)+- (0,1.106) ($k=0.25$,1.31547)+- (0,0.910) ($k=0.5$,1.07288)+- (0,0.980)};
        \legend{Determ. , Stoch.}
    \end{axis}
  \end{tikzpicture}
  \begin{tikzpicture}[scale=0.9]
    \begin{axis}[
        ybar,
        bar width=6pt,             
    enlargelimits=0.5,
    enlarge x limits=0.15,
    width=6cm,
    height=6cm,
        legend style={at={(0.65,1)},
        anchor=north,legend columns=-1},
        ylabel={Entropy Rate},
        y label style={at={(axis description cs:-0.1,.5)},anchor=south},
        symbolic x coords={PPO, ${k=0.05}$, ${k=0.1}$, ${k=0.25}$, ${k=0.5}$},
        xtick=data,
        xticklabel style={rotate=45, anchor=east}, 
        nodes near coords={}, 
        extra y ticks={0},
    extra y tick style={grid=major},
        ]
        \addplot+[
                error bars/.cd,
                y dir=both,
                y explicit
            ]   coordinates {(PPO,0.74)+- (0,0.43) (${k=0.05}$,-0.16)+- (0,0.79) (${k=0.1}$,-0.20) +- (0,0.58)(${k=0.25}$,-1.51)+- (0,1.84) (${k=0.5}$,-1.98)+- (0,1.12)};
        \addplot+[
                error bars/.cd,
                y dir=both,
                y explicit
            ]   coordinates {(PPO,1.19)+- (0,0.40) (${k=0.05}$,0.36) +- (0,0.74) (${k=0.1}$,0.06) +- (0,0.61) (${k=0.25}$,-1.26) +- (0,1.79) (${k=0.5}$,-1.90)+- (0,1.16)};
        \legend{Determ. , Stoch.}
    \end{axis}
  \end{tikzpicture}
    \caption{Walker}
\end{subfigure}
  \begin{subfigure}[b]{0.325\textwidth}
  \begin{tikzpicture}[scale=0.9]
    \begin{axis}[
        ybar,
        bar width=6pt,             
    enlargelimits=0.5,
    enlarge x limits=0.15,
    width=6cm,
    height=6cm,
        legend style={at={(0.65,1)},
        anchor=north,legend columns=-1},
        ylabel={Rewards (k's)},
        y label style={at={(axis description cs:-0.1,.5)},anchor=south},
        symbolic x coords={PPO, ${k=0.05}$, ${k=0.1}$, ${k=0.25}$, ${k=0.5}$},
        xtick=data,
        xticklabel style={rotate=45, anchor=east}, 
        nodes near coords={}, 
        extra y ticks={0},
    extra y tick style={grid=major},
        ]
        \addplot+[
                error bars/.cd,
                y dir=both,
                y explicit
            ]  coordinates {(PPO,3.013) +- (0,1.524) (${k=0.05}$,3.185) +- (0,1.49) (${k=0.1}$,2.633) +- (0,1.336) (${k=0.25}$,1.611) +- (0,0.963) (${k=0.5}$,1.218) +- (0,1.129)};
        \addplot+[
                error bars/.cd,
                y dir=both,
                y explicit
            ]  coordinates {(PPO,2.459)+- (0,1.411) (${k=0.05}$,2.510) +- (0,1.504)(${k=0.1}$,2.107)+- (0,1.195) (${k=0.25}$,1.497) +- (0,0.854)(${k=0.5}$,1.204)+- (0,1.195)};
        \legend{Determ. , Stoch.}
    \end{axis}
  \end{tikzpicture}
  \begin{tikzpicture}[scale=0.9]
    \begin{axis}[
        ybar,
        bar width=6pt,             
    enlargelimits=0.5,
    enlarge x limits=0.15,
    width=6cm,
    height=6cm,
        legend style={at={(0.4,0.15)},
        anchor=north,legend columns=-1},
        ylabel={Entropy Rate},
        y label style={at={(axis description cs:-0.1,.5)},anchor=south},
        y label style={at={(axis description cs:-0.1,.5)},anchor=south},
        symbolic x coords={PPO, ${k=0.05}$, ${k=0.1}$, ${k=0.25}$, ${k=0.5}$},
        xtick=data,
        xticklabel style={rotate=45, anchor=east}, 
        nodes near coords={}, 
        extra y ticks={0},
    extra y tick style={grid=major},
        ]
        \addplot+[
                error bars/.cd,
                y dir=both,
                y explicit
            ]  coordinates {(PPO,0.52) +- (0,0.38) (${k=0.05}$,0.27) +- (0,0.54) (${k=0.1}$,-0.41) +- (0,2.17) (${k=0.25}$,-4.77) +- (0,3.83) (${k=0.5}$,-6.44) +- (0,2.53)};
        \addplot+[
                error bars/.cd,
                y dir=both,
                y explicit
            ]  coordinates {(PPO,0.80) +- (0,0.27) (${k=0.05}$,0.66) +- (0,0.24)(${k=0.1}$,-0.04)+- (0,1.64) (${k=0.25}$,-3.54) +- (0,3.04)(${k=0.5}$,-5.00)+- (0,1.98)};

        \legend{Determ. , Stoch.}
    \end{axis}
  \end{tikzpicture}
  \caption{Ant}
\end{subfigure}
  \begin{subfigure}[b]{0.325\textwidth}
  \begin{tikzpicture}[scale=0.9]
    \begin{axis}[
        ybar,
    bar width=6pt,             
    enlargelimits=0.5,
    enlarge x limits=0.15,
    width=6cm,
    height=6cm,
        legend style={at={(0.5,1)},
        anchor=north,legend columns=-1},
        ylabel={Rewards (k's)},
        y label style={at={(axis description cs:-0.1,.5)},anchor=south},
        symbolic x coords={PPO, ${k=0.05}$, ${k=0.1}$, ${k=0.25}$, ${k=0.5}$},
        xtick=data,
        xticklabel style={rotate=45, anchor=east}, 
        nodes near coords={}, 
        ]
        \addplot+[
                error bars/.cd,
                y dir=both,
                y explicit
            ]  coordinates {(PPO,5.192) +- (0,1.181) (${k=0.05}$,5.115) +- (0,1.178)(${k=0.1}$,5.342) +- (0,1.215) (${k=0.25}$,5.686) +- (0,1.096) (${k=0.5}$,4.462) +- (0,1.740)};
        \addplot+[
                error bars/.cd,
                y dir=both,
                y explicit
            ]  coordinates {(PPO,4.150) +- (0,1.645) (${k=0.05}$,3.575) +- (0,1.743)(${k=0.1}$,4.482) +- (0,1.679) (${k=0.25}$,4.427) +- (0,1.930)(${k=0.5}$,3.962) +- (0,1.822)};
        \legend{Determ. , Stoch.}
    \end{axis}
  \end{tikzpicture}
  \begin{tikzpicture}[scale=0.9]
    \begin{axis}[
        ybar,
        bar width=6pt,             
    enlargelimits=0.5,
    enlarge x limits=0.15,
    width=6cm,
    height=6cm,
        legend style={at={(0.6,1)},
        anchor=north,legend columns=-1},
        ylabel={Entropy Rate},
        y label style={at={(axis description cs:-0.1,.5)},anchor=south},
        symbolic x coords={PPO, ${k=0.05}$, ${k=0.1}$, ${k=0.25}$, ${k=0.5}$},
        xtick=data,
        xticklabel style={rotate=45, anchor=east}, 
        nodes near coords={}, 
        extra y ticks={0},
    extra y tick style={grid=major},
        ]
        \addplot+[
                error bars/.cd,
                y dir=both,
                y explicit
            ] coordinates {(PPO,0.82) +- (0,0.31) (${k=0.05}$,0.27) +- (0,0.54)(${k=0.1}$,0.11)+- (0,0.26) (${k=0.25}$,-0.37)+- (0,0.4) (${k=0.5}$,-0.97)+- (0,0.96)};
        \addplot+[
                error bars/.cd,
                y dir=both,
                y explicit
            ] coordinates {(PPO,1.01)+- (0,0.34) (${k=0.05}$,0.46)+- (0,0.45) (${k=0.1}$,0.31)+- (0,0.25) (${k=0.25}$,-0.27)+- (0,0.74) (${k=0.5}$,-0.54)+- (0,0.99)};
        \legend{Determ. , Stoch.}
    \end{axis}
  \end{tikzpicture}
 \caption{HalfCheetah}
\end{subfigure}
\caption{Rewards and Entropy rates for PAPPO as a function of $k$. Leftmost is PPO baseline.}
\label{fig:mujoco_rewards_rates}
\end{figure}

\begin{figure*}[h]
\centering
\begin{subfigure}{0.325\textwidth}
   \includegraphics[width=0.99\linewidth, trim={0.3cm 0.3cm 1.6cm 1.4cm},clip]{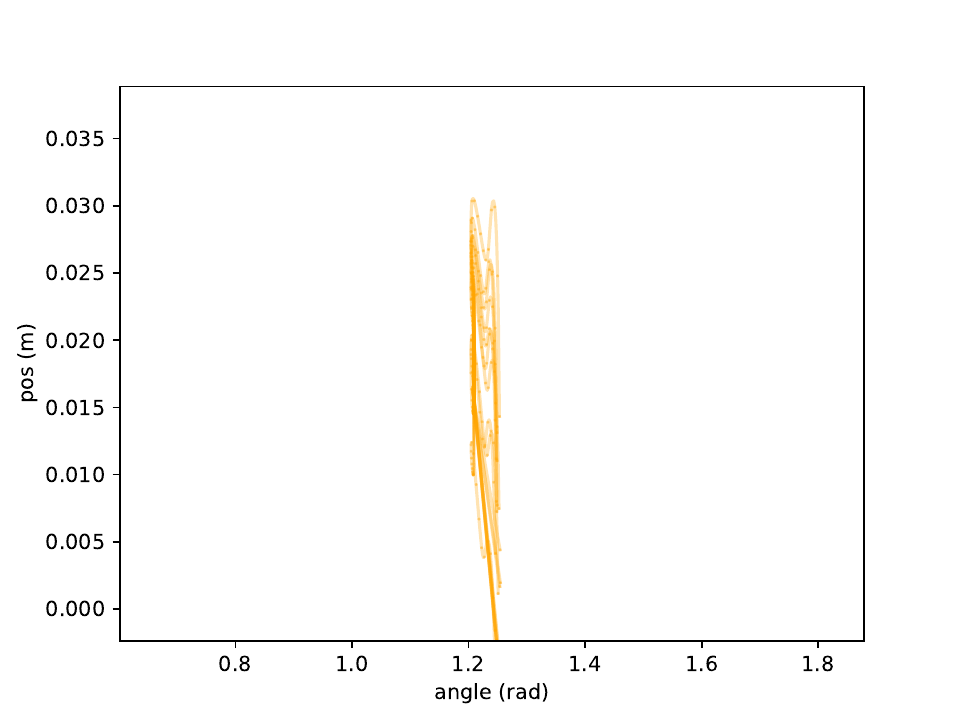}
   \includegraphics[width=0.99\linewidth, trim={0.3cm 0.3cm 1.6cm 1.4cm},clip]{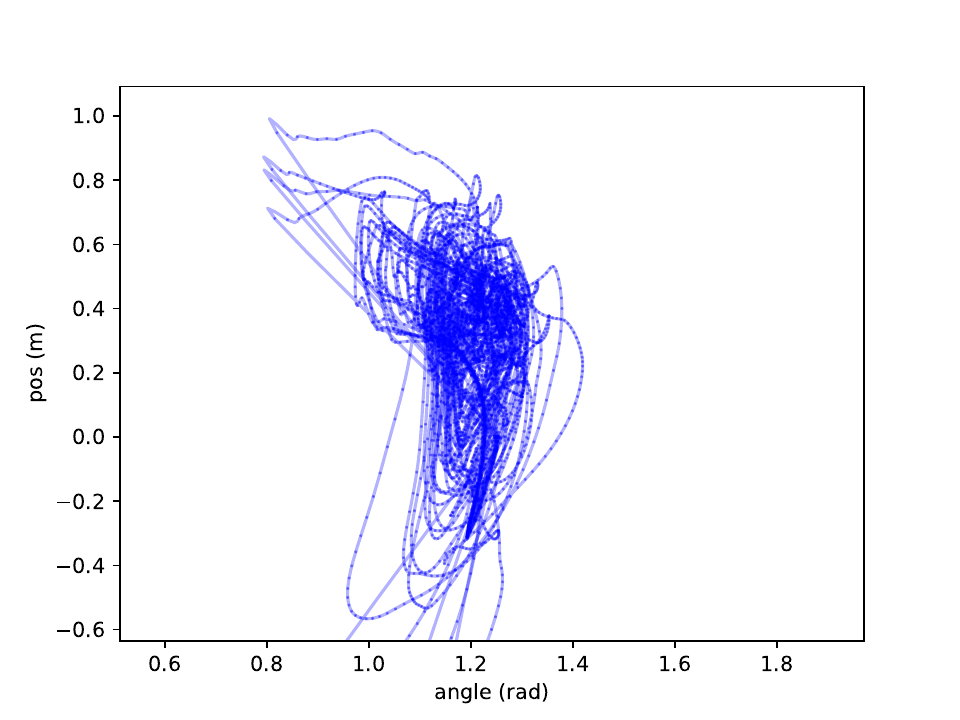}
    \caption{Walker}
\end{subfigure}
\begin{subfigure}{0.325\textwidth}
   \includegraphics[width=0.99\linewidth, trim={0.3cm 0.3cm 1.6cm 1.4cm},clip]{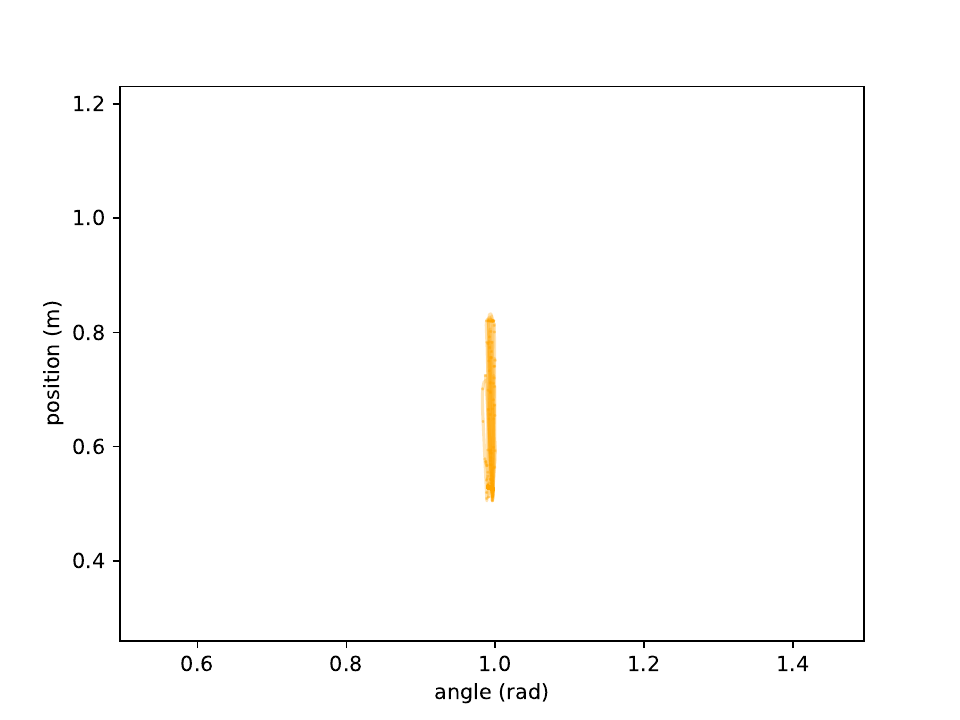}
   \includegraphics[width=0.99\linewidth, trim={0.3cm 0.3cm 1.6cm 1.4cm},clip]{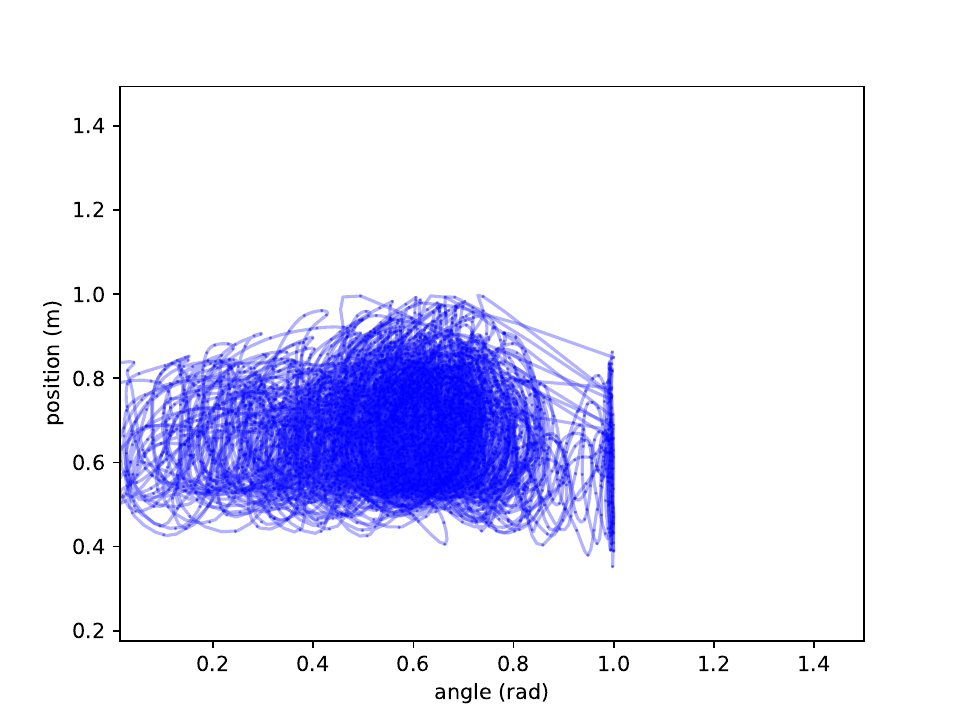}
    \caption{Ant}
\end{subfigure}
\begin{subfigure}{0.325\textwidth}
   \includegraphics[width=0.99\linewidth, trim={0.3cm 0.3cm 1.6cm 1.4cm},clip]{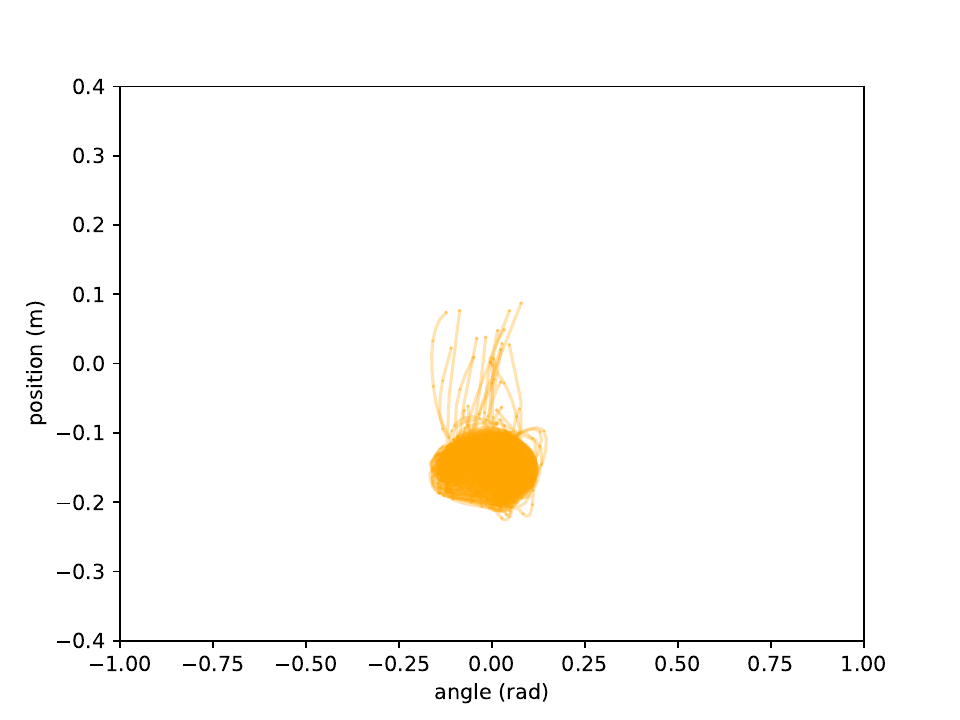}
   \includegraphics[width=0.99\linewidth, trim={0.3cm 0.3cm 1.6cm 1.4cm},clip]{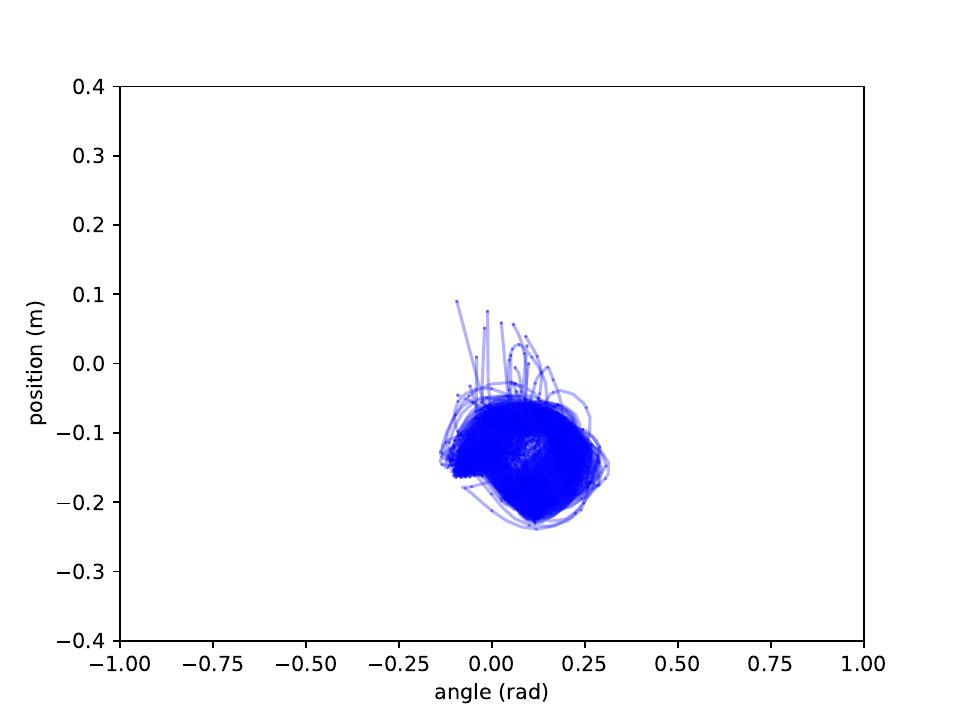}
    \caption{HalfCheetah}
\end{subfigure}
\caption{10 trajectory plots for agents with seed 0. Blue is PPO, orange is PAPPO with $k=0.5$.}
\label{fig:representations_mujoco}
\end{figure*}
\paragraph{Trajectory Complexity in MuJoCo} The effect of entropy-rate minimization and PAPPO on trajectory distributions is showcased in Figure \ref{fig:representations_mujoco}. We evaluated trained agents over 10 full episodes, and plotted the observed trajectories in task space to compare trajectory distributions. We plot as representative variables the $z$-position and angle of the front tip of the robot.
In both cases, we observe that PAPPO policies induce more regular, clustered trajectories, which suggests smaller distributional complexity. Generally, PAPPO trajectories have considerably smaller variance, and visit a smaller portion of the state-space. In the Walker and Ant environments this difference is very pronounced, as PAPPO trajectories are concentrated in a tiny region of the state-space, and especially in terms of the angles observed, are limited to a much smaller range.
\paragraph{Predictable Driving}
We test PAPPO in the Highway Environment \citep{highway-env}, where an agent learns to drive at a desired speed while navigating a crowded road with other autonomous agents. The agent gets rewarded for tracking the desired speed and penalised for collisions. We consider a \emph{highway} and a \emph{roundabout} scenario (see Figure \ref{fig:driving}). We compare against PPO \citep{schulman2017proximal} and DQN \cite{mnih2015human}) agents, and take the hyperparameters directly from \citet{highway-env}. The results are presented in Table \ref{tab:highway_results}.
\begin{table} \centering
\fontsize{8pt}{8pt}\selectfont
    \caption{Results for autonomous driving environments.}
{\begin{tabular}{lcccc}
        \toprule
        \cmidrule(lr){2-5}
        Highway & Rewards & Ep. Length & Avg. Speed & Entropy Rate \\
        \midrule
        DQN & 17.51 $\pm$ 7.73 & 20.92 $\pm$ 9.08 & \textbf{6.51 $\pm$ 0.34} & -0.37 $\pm$ 0.16 \\
        PPO & 18.88 $\pm$ 6.74 & 24.41 $\pm$ 8.49 & 5.62 $\pm$ 0.10 & -0.88 $\pm$ 0.05 \\
        $\text{PAPPO}_{k=0.1}$ &\textbf{ 21.05 $\pm$ 4.03} & 28.56 $\pm$ 5.27 & 5.11 $\pm$ 0.05 & -1.43 $\pm$ 0.09 \\
        $\text{PAPPO}_{k=0.5}$ & 21.03 $\pm$ 2.05 & 29.60 $\pm$ 2.64 & 5.06 $\pm$ 0.01 & -1.51 $\pm$ 0.09 \\
        $\text{PAPPO}_{k=1}$ & 20.89 $\pm$ 2.05 & 29.61 $\pm$ 2.58 & 5.05 $\pm$ 0.00 & \textbf{-1.51 $\pm$ 0.06} \\
        \bottomrule
    \end{tabular}}
{
    \begin{tabular}{lcccc}
        \toprule
        \cmidrule(lr){2-5}
        Roundabout & Rewards & Ep. Length & Avg. Speed & Entropy Rate \\
        \midrule
        DQN & 22.53 $\pm$ 11.93 & 26.92 $\pm$ 14.41 & 3.09 $\pm$ 0.47 & -0.55 $\pm$ 0.88 \\
        PPO & \textbf{29.26 $\pm$ 10.68} & 32.92 $\pm$ 11.67 & 2.80 $\pm$ 0.12 & -0.23 $\pm$ 0.27 \\
        $\text{PAPPO}_{k=0.1}$ & 28.86 $\pm$ 10.96 & 32.98 $\pm$ 12.23 & 2.65 $\pm$ 0.50 & -0.28 $\pm$ 0.77 \\
        $\text{PAPPO}_{k=0.5}$ & 17.99 $\pm$ 13.34 & 23.45 $\pm$ 17.60 & 3.75 $\pm$ 0.05 & -2.13 $\pm$ 0.12 \\
        $\text{PAPPO}_{k=1}$ & 16.83 $\pm$ 13.28 & 22.05 $\pm$ 17.67 & \textbf{3.79 $\pm$ 0.04} & \textbf{-2.21 $\pm$ 0.14} \\
        \bottomrule
    \end{tabular}}
    \label{tab:highway_results}
\end{table}
In the highway environment, agents slow down their speed and stop overtaking (arguably, a more predictable driving pattern). This results in longer episode lengths (and larger episodic rewards), but lower rewards per time-step (since reward is given for driving faster). In the roundabout scenario, a different behaviour emerges: Agents keep a constant high speed to traverse the roundabout as fast as possible, as the roundabout is the main source of complexity.
\section{Average Reward Formulation}\label{apx:avg}
Through the work we argue that the entropy rate can be efficiently cast and implemented as an average reward criterion, to be combined with other primary reward objectives. A natural question to ask is how does the formulation of our work adapt to the case where the primary objective is an average reward objective over the MDP rewards. Consider the case where we are interested in optimizing a linear combination of average reward and entropy rate objectives. Then, we can write the reward objective as $J(\pi)= \mathbb{E}\left[\lim_{T \to \infty}\frac{1}{T} \sum_{t=0}^{T}R_t^{\pi}\right]$, and since both the average rewards and the average entropies are taken in expectation over the same probability space, the trade-off objective in \eqref{eq:our_reward} becomes $ \argmax\limits_{\pi\in\Pi} J(\pi) - k J_s (\pi)= \mathbb{E}\left[\lim_{T \to \infty}\frac{1}{T} \sum_{t=0}^{T}R_t^{\pi}-k s_{t}^{\pi}\right],$
where we already included the surrogate entropy $s_{t}^{\pi}=s(X_t, \pi(X_t))$. Further, recall that $s(x,u)=\mathbb{E}_{y\sim P(x,u,\cdot)}[-\log P(x,u,y)]$. Then, assuming knowledge of $P$ (or an approximation of it), one can define an adjusted reward $\tilde{R}(x,u,y):=R(x,u,y)+k \log P(x,u,y)$ and a modified value function $\tilde{V}^{\pi}_{\mathrm{avg}}(x) := \mathbb{E}_{u\sim\pi,y\sim P(x,u,\cdot)}[R(x,u,y)+k \log P(x,u,y)-\tilde{g}^{\pi}+V_{\mathrm{avg}}^{\pi}(y)]$, where $\tilde{g}^{\pi}=\mathbb{E}\left[\lim_{T \to \infty}\frac{1}{T} \sum_{t=0}^{T}\tilde{R}_t^{\pi}\right]$ is the modified reward rate under policy $\pi$ (which is independent of the initial set of states for ergodic MDPs). Observe that, even though $\tilde{R}(x,u,y)\neq R(x,u,y)-ks(x,u)$, they are equal in expectation.
Then, for any fixed policy, the expected average reward can be computed through $\tilde{R}$, and a policy $\pi^*$ solving the optimization problem $\pi^{*}\in\argmax\limits_{\pi\in\Pi} \tilde{V}^{\pi}_{\mathrm{avg}}(x) \quad \forall x\in\mathcal{X}$
is guaranteed to solve the average reward objective. This allows for more compact formulation, and for the learning of a single value function (instead of two), which can be desirable in some cases. It also implies that transitions with low probabilities of being observed are penalized. Computationally, this is also advantageous since we do not need to compute the entropy of the learned model, but instead evaluate the likelihood of the observed transition and adjust the rewards accordingly.
\section{Discussion}
\paragraph{Summary of Results}
We proposed a novel method, namely PARL, that induces more predictable behaviour in RL agents by maximizing a tuneable linear combination of a standard expected reward and the negative entropy rate, thus trading-off \emph{optimality with predictability}. 
In the experimental results, we see how PARL greatly reduces the entropy rates of the RL agents while achieving near-optimal rewards, depending on the trade-off parameter. In the autonomous driving setting, agents learn to be more predictable while driving around stochastic agents. In the MuJoCo experiments, PARL obtains policies that yield more clustered, less complex trajectory distributions, allowing models to predict better the dynamics. This results in, for example, a smaller range of values of orientation angles as seen in the trajectory representations on Figure \ref{fig:representations_mujoco}. Additionally, following our method, the entropy rate can be directly interpreted as the average complexity necessary to correctly predict the trajectory of the agents, and if assuming Gaussian predictions, this is proportional to the log of the prediction variance observed by the agent's internal prediction model, which shows how lower entropy rates yield predictability.

\paragraph{Shortcomings}
Our scheme results in a setting where agents maximize a trade-off between two different objectives. This, combined with learning a dynamic model (which is notoriously difficult), introduces implementation challenges related to learning multiple coupled models simultaneously. We make an attempt at discussing a systematic way of addressing these in the Appendix.
Additionally, for many applications, avoiding high-entropy policies may restrict the ability of RL agents to learn optimal behaviours (see, e.g., results in Eysenbach et al. \citeyear{eysenbach2021robust}). And although in human-robot interaction and human-aligned AI predictability is intuitively beneficial, we cannot claim that minimizing entropy-rates is desirable for all RL applications. Additionally, a question remains regarding how would our method change in partially observable environments. The entropy rate of a POMDP (if measured on the state sequence, not on observations) is influenced by the policy given an observation, observation probability given a state, and the state transition, resulting in a much more difficult object to estimate (see e.g. \citet{savas2021entropy}). We believe the theoretical and practical considerations of minimising entropy rates in POMDPs (in model-free settings) will be a very interesting future direction.

\paragraph{Entropy and Exploration}
One might wonder if minimizing entropy rates of RL agents may hinder exploration. From a theoretical perspective, this is not a problem due to ergodicity. From a practical perspective, this undesired effect is highly mitigated by the fact that for the first (many) steps, the agent is still learning an adequate model of the dynamics, and therefore the entropy signal is very noisy which favors exploration.

\section*{Acknowledgements}
Authors want to thank Alvaro Serra, Lasse Peters, Khaled A. Mustafa and Elia Trevisan for the useful discussions on this topic. This research was supported by funding from the Dutch Research Council NWO-NWA, within the “Acting under uncertainty” (ACT) project (Grant No. NWA.1292.19.298). 
\bibliography{biblo}
\bibliographystyle{tmlr}
\clearpage
\appendix
\section{Auxiliary Results}
We include in this Appendix some existing results used throughout our work. 
The following Theorem is a combination of results presented by \cite{puterman2014markov} regarding the existence of optimal average reward policies.
\begin{theorem}[Average Reward Policies \cite{puterman2014markov}]\label{the:putterman}
Given an MDP $\mathcal{M}$ 
the following hold: $g^{\pi}(x)$ and $b^{\pi}(x)$ exist, $g^{\pi}(x)=g^\pi$ for all $x$ (i.e. $g^{\pi}(x)$ is constant), and there exists a deterministic, stationary policy $\hat{\pi}\in \operatorname{argmax}_{\pi\in \Pi}g^{\pi}$ that maximizes the expected average reward. Additionally, the same holds if $\mathcal{U}$ is compact and $R$ and $P$ are continuous functions of $\mathcal{U}$.
\end{theorem}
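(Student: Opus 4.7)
Since this theorem is quoted from Puterman, my proof would assemble three standard pieces: (i) existence and constancy of the gain $g^\pi$, (ii) existence of the bias $b^\pi$, and (iii) existence of a deterministic maximizer of $g^\pi$. The main vehicle is the ergodicity Assumption \ref{as:ergo}, which ensures that for any stationary $\pi$ the induced Markov chain with kernel $P_\pi$ is irreducible and aperiodic on the finite state set $\mathcal{X}$, hence possesses a unique stationary distribution $\mu^\pi$ and satisfies $P_\pi^t\to P_\pi^\ast = \mathbf{1}(\mu^\pi)^\top$ geometrically fast in any matrix norm. All three conclusions will be derived from this fact together with the Bellman optimality machinery for average-reward MDPs.

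\paragraph{Gain and bias} I would first show that $g^\pi(x) := \lim_T \tfrac{1}{T}\mathbb{E}[\sum_{t=0}^T R_t^\pi \mid X_0=x]$ exists, is independent of $x$, and equals $(\mu^\pi)^\top R^\pi$. This is a direct application of the Cesàro/ergodic theorem for finite irreducible aperiodic chains: $\tfrac{1}{T}\sum_{t=0}^T P_\pi^t \to P_\pi^\ast$, so $\tfrac{1}{T}\sum_{t=0}^T (P_\pi^t R^\pi)(x)\to (\mu^\pi)^\top R^\pi =: g^\pi$, a constant. For the bias, the geometric convergence $\|P_\pi^t - P_\pi^\ast\|\le c\rho^t$ for some $\rho<1$ makes $\sum_{t=0}^\infty (P_\pi^t - P_\pi^\ast) R^\pi$ absolutely convergent, which is exactly $b^\pi$; equivalently, the fundamental matrix $(I-P_\pi+P_\pi^\ast)^{-1}$ is well defined under Assumption \ref{as:ergo}, yielding the closed form $b^\pi=(I-P_\pi+P_\pi^\ast)^{-1}(I-P_\pi^\ast)R^\pi$ quoted in the paper.

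\paragraph{Existence of a deterministic maximizer} For the optimality part, I would invoke the average-cost optimality equation (ACOE). Under Assumption \ref{as:ergo}, standard dynamic programming (e.g., via relative value iteration or policy iteration on finite $\Pi^D$) produces a scalar $g^\ast$ and a bounded $h^\ast:\mathcal{X}\to\mathbb{R}$ satisfying
\begin{equation*}
h^\ast(x) + g^\ast \;=\; \max_{u\in\mathcal{U}}\Big\{R(x,u) + \sum_{y} P(x,u,y)\,h^\ast(y)\Big\}, \quad x\in\mathcal{X}.
\end{equation*}
Because $\mathcal{U}$ is finite, the argmax is attained pointwise; any selector $\hat\pi(x)\in\arg\max_u\{\cdot\}$ is a stationary deterministic policy. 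A one-line verification argument (multiplying the ACOE by $\mu^{\hat\pi}$ and summing) gives $g^{\hat\pi}=g^\ast\ge g^\pi$ for every $\pi\in\Pi$, so $\hat\pi\in\arg\max_{\pi\in\Pi}g^\pi$. Alternatively, since $\Pi^D$ is finite, existence of a maximizer over $\Pi^D$ is automatic, and the ACOE argument is then only used to ensure this maximum coincides with the supremum over all of $\Pi$.

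\paragraph{Compact continuous extension and main obstacle} In the extension where $\mathcal{U}$ is compact and $R,P$ are continuous in $u$, the Cesàro/ergodic step and the bias formula go through unchanged once Assumption \ref{as:ergo} is imposed. The only non-trivial point is the attainment of the argmax in the ACOE: by continuity of $u\mapsto R(x,u)+\sum_y P(x,u,y)h^\ast(y)$ on the compact $\mathcal{U}$, the max is attained for every $x$, and a measurable selector $\hat\pi$ exists by the Kuratowski--Ryll-Nardzewski measurable selection theorem (or Berge's maximum theorem, giving upper hemicontinuity of the argmax correspondence). This measurable-selection/ACOE-solvability step is where I expect the most care: one must verify that the ACOE indeed admits a bounded solution in the compact-continuous setting, which is the content of the classical results in Chapter 8 of \cite{puterman2014markov} and is what I would cite to close the proof.
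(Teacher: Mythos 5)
Your proposal is correct and matches the paper's treatment: the paper does not actually prove this theorem but imports it wholesale from Chapter 8 of \cite{puterman2014markov}, and your reconstruction --- gain constancy via the Ces\`aro/ergodic limit $P_\pi^t\to P_\pi^\ast$, the bias via the fundamental matrix $(I-P_\pi+P_\pi^\ast)^{-1}$, and a deterministic maximizer via a selector of the ACOE (with compactness/continuity handling the non-finite action case) --- is precisely the standard argument behind the cited results. The only cosmetic point is that measurable-selection machinery is unnecessary here, since $\mathcal{X}$ is finite and any pointwise selector of the argmax is automatically a valid stationary deterministic policy.
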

\begin{theorem}[Stochastic Recursive Inclusions \cite{borkar2009stochastic}]
    Let $x_n\in\mathbb{R}^d$ be a vector following a sequence:
    \begin{equation}
        x_{n+1} = x_n + a_n(h(x_n)+M_{n+1} +\eta_n),
    \end{equation}
    where $\sup_n \|x_n\|<\infty$ \emph{a.s.}, $a_n$ is a learning rate, $h:\mathbb{R}^d \to \mathbb{R}^d$ is a Lipschitz map, $M_n$ is a Martingale difference sequence with respect to the $\sigma-$algebra $\mathcal{F}_n :=\sigma(x_0,M_1, M_2,...,M_n)$ (and square integrable), and $\eta_n$ is an error term bounded by $\epsilon_0$. Define $H:=\{x\in\mathbb{R}^d:h(x)=0\}$. Then, for any $\delta>0$, there exists an $\epsilon>0$ such that $\forall \epsilon_0\in(0,\epsilon)$ the sequence $\{x_n\}$ converges \emph{almost surely} to a $\delta$-neighbourhood of $H$.
\end{theorem}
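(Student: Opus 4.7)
This is the classical ``ODE method'' for stochastic approximation with bounded perturbation (Borkar, Ch.~2 and Ch.~5), and the plan is to embed the discrete recursion into continuous time, show it is an $O(\epsilon_0)$-perturbed trajectory of the mean-field ODE $\dot x = h(x)$, and then invoke a stability/limit-set argument to conclude convergence to a neighborhood of $H$.

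\textbf{Step 1: Interpolation and noise suppression.} Set $t_0 = 0$ and $t_n = \sum_{k<n} a_k$; Assumption \ref{as:rates} forces $t_n \nearrow \infty$ while $a_n \to 0$. Piecewise-linearly interpolating the iterates gives $\bar x(\cdot)$ with $\bar x(t_n) = x_n$. The martingale contribution is asymptotically negligible: because $\{M_n\}$ is a square-integrable martingale-difference sequence and $\sum_n a_n^2 < \infty$, the process $Z_n := \sum_{k<n} a_k M_{k+1}$ is an $L^2$-bounded martingale and converges a.s.\ by the martingale convergence theorem; in particular $\sup_{m\ge n}\|Z_m - Z_n\| \to 0$ a.s., so its oscillation on any window $[s,s+T]$ vanishes as $s \to \infty$.

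\textbf{Step 2: ODE comparison.} Let $x^s(\cdot)$ solve $\dot x = h(x)$ on $[s, s+T]$ with $x^s(s) = \bar x(s)$. Because $\sup_n \|x_n\| < \infty$ a.s., the iterates lie in a compact set on which $h$ is uniformly Lipschitz. Decomposing $\bar x(t) - x^s(t)$ into the integrated martingale tail, the integrated perturbation (of magnitude $\le T\epsilon_0$), and the discretisation error, and applying Grönwall's inequality, one obtains
\begin{equation*}
\sup_{t\in[s,s+T]} \|\bar x(t) - x^s(t)\| \;\le\; C(T)\bigl(\epsilon_0 + r(s)\bigr),
\end{equation*}
with $r(s) \to 0$ a.s.\ by Step~1 and $\max_{k:\,t_k\in[s,s+T]} a_k \to 0$. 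Thus $\bar x(\cdot)$ is, up to vanishing error, an $O(\epsilon_0)$-perturbed asymptotic pseudotrajectory of $\dot x = h(x)$.

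\textbf{Step 3: Entering and staying in $H_\delta$, and the main obstacle.} On any window where $\bar x(s)$ is bounded away from $H$, compactness and continuity imply $\|h\|$ is bounded below, so the ODE trajectory $x^s$ makes definite progress toward $H$; by Step~2, $\bar x$ follows $x^s$ up to $C(T)\epsilon_0 + r(s)$. Choosing $\epsilon$ small enough that $C(T)\epsilon < \delta/2$ ensures the follow-error cannot undo this progress, so $\bar x$ eventually enters $H_\delta$ and, by the same bound, cannot leave it once inside. The delicate point---and the main obstacle---is exactly this last step: the theorem as stated does not assume $H$ is asymptotically stable or carries a Lyapunov function, and the full-generality Borkar result only pins down the limit set to an internally chain-transitive invariant set of the ODE. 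Making the $\delta$-neighborhood conclusion clean therefore needs either an (implicit) stability hypothesis on $H$, or invocation of the Bena\"im--Hofbauer--Sorin result on upper semicontinuity of chain-transitive invariant sets under vanishing perturbations, which yields precisely the ``for each $\delta$ there exists $\epsilon$'' quantification asserted in the statement.
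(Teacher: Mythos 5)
The paper does not prove this statement at all: it appears in the appendix under ``Auxiliary Results'' as a verbatim import from \citet{borkar2009stochastic}, so there is no in-paper argument to compare yours against. Your outline is the standard ODE-method proof and matches the strategy of the cited source: timescale interpolation, suppression of the martingale term via $L^2$-boundedness of $\sum_k a_k M_{k+1}$ under $\sum_n a_n^2<\infty$, a Gr\"onwall comparison on windows $[s,s+T]$ showing the interpolated iterate is an $O(\epsilon_0)$-perturbed pseudotrajectory of $\dot x = h(x)$, and then a limit-set argument. Steps 1 and 2 are correct as sketched.

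The obstacle you isolate in Step 3 is genuine, and it is a defect of the statement as the paper transcribes it rather than of your proof. Without a stability hypothesis on $H=\{x: h(x)=0\}$, the conclusion is false even with $M_n\equiv 0$ and $\eta_n\equiv 0$: take $d=2$ and $h$ the vector field of a planar system with an unstable focus at the origin surrounded by a stable limit cycle; then $H=\{0\}$, but the iterates track the ODE into the limit cycle and remain bounded away from $H$. Borkar's actual results conclude convergence to a neighbourhood of an internally chain-transitive invariant set of the ODE (or of a globally asymptotically stable attractor when one is assumed, typically via a Lyapunov function whose critical set is $H$), and the ``for each $\delta$ there is an $\epsilon$'' quantification comes precisely from the upper-semicontinuity of these limit sets under bounded perturbations, as you note. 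So your proof is as complete as the statement permits: to close it you must either add the hypothesis that $H$ is a globally asymptotically stable attractor of $\dot x = h(x)$ (which holds in the paper's application, where $h=-\nabla_\omega L^\pi_\omega$ is a gradient field of a convex quadratic in the linear-architecture setting of Assumption 3), or replace ``$\delta$-neighbourhood of $H$'' by ``$\delta$-neighbourhood of an internally chain-transitive invariant set.''
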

\begin{theorem}[Policy Gradient with Function Approximation \cite{sutton1999policy}]\label{thm:pg}
    Let $\pi_\theta$ be a parameterised policy and $f_w:\mathcal{X}\times\mathcal{U}\to\mathbb{R}$ be a parameterised (approximation of) action value function in an MDP $\mathcal{M}$. Let the parameterisation \emph{be compatible}, \emph{i.e.} satisfy:
    \begin{equation*}
        \frac{\partial f_w(x,u)}{\partial w}=\frac{\partial \pi_{\theta}(x,u)}{\partial \theta}\frac{1}{\pi_{\theta}(x,u)}.
    \end{equation*}
    Let the parameters $w$ and $\theta$ be updated at each step such that:
    \begin{equation*}\begin{aligned}
        w_k: \sum_x \mu^{\pi_\theta}(x)\sum_u \pi_{\theta} (x,u)\big(Q^{\pi_\theta}(x,u)-f_{w_k}(x,u)\big)\frac{\partial f_w(x,u)}{\partial {w_k}}=0,\\
        \theta_{k+1}=\theta_{k}+\alpha_t \sum_x \mu^{\pi_\theta}(x)\sum_u \frac{\partial \pi_{\theta}(x,u)}{\partial \theta}f_{w_k} (x,u).
        \end{aligned}
    \end{equation*}
    Then, $\lim_{k\to\infty}\frac{\partial \rho (\theta_k)}{\partial \theta}=0$, where $\rho(\theta)$ is either the discounted or average reward in the MDP.
\end{theorem}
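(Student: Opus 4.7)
The plan is to decompose the combined actor update into its discounted-reward and entropy-rate parts, apply Theorem \ref{thm:pg} (Sutton et al.) to each separately, and then glue them back together through the Stochastic Recursive Inclusions theorem. Linearity of gradients yields $\nabla_\theta(J + k J_{s_\phi}) = \nabla_\theta J + k \nabla_\theta J_{s_\phi}$, so the prescribed update is already a stochastic (projected) gradient step on the combined objective; it remains to verify that each summand of the estimated gradient tracks the true gradient with only martingale noise and a controllable bias.

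First, I would invoke Theorem \ref{thm:pg} on the discounted-reward term: since the critic $Q_\xi$ (or $V_\xi$) has the compatible parameterization assumed by the underlying PG algorithm, $\hat{\nabla}_\theta J(\pi_\theta)$ is an asymptotically unbiased estimator of $\nabla_\theta J(\pi_\theta)$. Next I would apply Theorem \ref{thm:pg} in its average-reward form to the surrogate objective $J_{s_\phi}(\pi_\theta) = \bar{h}^{\pi_\theta}_{s_\phi}$, using the critic $S_\omega$ (or $W_\omega$) which, by hypothesis, lies in the same compatible parameterization class. Here Proposition \ref{lem:values} is the key input: the critic $S_\omega$ converges to a $\delta(\epsilon)$-neighborhood of $S^{\pi}$, so the entropy-rate gradient estimate satisfies $\hat{\nabla}_\theta J_{s_\phi}(\pi_\theta) = \nabla_\theta J_{s_\phi}(\pi_\theta) + \eta^{(s)}_n$ with $\|\eta^{(s)}_n\|$ bounded by $O(\delta(\epsilon))$.

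Third, I would cast the actor recursion in the form required by the Stochastic Recursive Inclusions theorem, $\theta_{n+1} = \operatorname{proj}_\Theta\bigl[\theta_n - \alpha_n\bigl(h(\theta_n) + M_{n+1} + \eta_n\bigr)\bigr]$, where $h(\theta) = \nabla_\theta J(\pi_\theta) + k\,\nabla_\theta J_{s_\phi}(\pi_\theta)$, $M_{n+1}$ is the zero-mean noise induced by trajectory sampling and critic estimation (a square-integrable martingale difference with respect to the natural filtration $\mathcal{F}_n = \sigma(\theta_0,\dots,\theta_n)$), and $\eta_n = k\,\eta^{(s)}_n + \eta^{(Q)}_n$ is a bounded bias of size $O(k\delta(\epsilon))$. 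The projection onto the compact parameter set guarantees $\sup_n \|\theta_n\| < \infty$, Assumption \ref{as:rates} supplies the Robbins–Monro step-size conditions, and Lipschitzness of $h$ follows from the standard smoothness assumptions on $\pi_\theta$ together with Assumption \ref{as:1}. The theorem then yields almost-sure convergence of $\{\theta_n\}$ to an arbitrarily small neighborhood of the zero set $H = \{\theta : h(\theta) = 0\}$, i.e.\ of the stationary points of the combined objective $J + k J_{s_\phi}$, among which its local extrema lie.

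The main obstacle will be decoupling the three simultaneous stochastic approximations (actor $\theta$, the two critics $\xi,\omega$, and the model $\phi$) into a clean single-timescale inclusion for the actor. The standard remedy is a two-timescale argument in the spirit of \citet{borkar2009stochastic}: the critics and the learned model are updated on a faster timescale, so from the actor's viewpoint they have effectively converged (up to the $\delta(\epsilon)$-error quantified in Proposition \ref{lem:values}), legitimately pushing their residual imperfections into the bounded bias $\eta_n$. Verifying the martingale-difference structure of $M_{n+1}$ for the composite actor update — in particular, that the noise from the two critics is conditionally zero-mean given $\mathcal{F}_n$ — is the one technical point that requires careful bookkeeping but follows from the unbiasedness of the per-term estimators.
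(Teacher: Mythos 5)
Your proposal does not address the statement it is meant to prove. The statement is Theorem \ref{thm:pg}, the classical compatible-function-approximation policy-gradient theorem of \citet{sutton1999policy}: it concerns a single objective $\rho(\theta)$ (discounted or average reward), a critic $f_w$ satisfying the compatibility condition, and the claim that the exact-expectation actor update with the least-squares-fitted critic drives $\partial\rho/\partial\theta$ to zero. What you have written is instead a proof sketch for Theorem \ref{thm:pg_convergence} --- the paper's result on the combined objective $J(\pi_\theta)+kJ_{s_\phi}(\pi_\theta)$ --- and, critically, your first two steps \emph{invoke} Theorem \ref{thm:pg} on each summand. As a proof of Theorem \ref{thm:pg} the argument is therefore circular, and none of its actual content (the two-objective decomposition, Proposition \ref{lem:values}, the stochastic-recursive-inclusions machinery, the two-timescale treatment of the critics and the learned model $P_\phi$) has any counterpart in the statement under review, which mentions no entropy term, no surrogate reward, and no learned model.

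For the record, the paper offers no proof of Theorem \ref{thm:pg}: it is quoted as an auxiliary result from \citet{sutton1999policy}. A proof along the original lines would observe that the condition defining $w_k$ states that the approximation error $Q^{\pi_\theta}-f_{w_k}$ is orthogonal, in the inner product weighted by $\mu^{\pi_\theta}(x)\pi_\theta(x,u)$, to $\partial f_w/\partial w$, which by compatibility equals the score $\frac{\partial \pi_{\theta}(x,u)}{\partial \theta}\frac{1}{\pi_{\theta}(x,u)}$; hence substituting $f_{w_k}$ for $Q^{\pi_\theta}$ in the policy-gradient formula leaves the gradient of $\rho$ unchanged, the actor update is exact gradient ascent on $\rho$, and $\lim_{k\to\infty}\partial\rho(\theta_k)/\partial\theta=0$ follows from boundedness of the rewards and of the second derivatives of $\pi_\theta$ together with the step-size conditions. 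If your intended target was Theorem \ref{thm:pg_convergence}, your sketch is broadly consistent with the paper's own proof of that result (which likewise combines Theorem \ref{thm:pg}, Proposition \ref{lem:values}, and linearity of the gradient, though it is considerably terser about the martingale and timescale bookkeeping you flag); but that is not the statement you were asked to prove.
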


\section{Technical Proofs}\label{sec:proofs}
\begin{proof}[Proof of Lemma \ref{lem:1}]
    For statement \ref{s1}, observe $l^{\pi}(x)$ can be expressed as
    \begin{align*}
        l^{\pi}(x)=&-\mathbb{E}\left[P(x,u,y)|u\sim\pi(x)\right]\log\big(\mathbb{E}\left[P(x,u,y)|u\sim\pi(x)\right]\big),
    \end{align*}
    and recall 
    \begin{equation*}
    \begin{aligned}
    \mathbb{E}_{u\sim \pi(x)}[s(x,u)]=-\mathbb{E}_{u\sim \pi(x)}\big[\sum_{y\in X}P(x,u,y)\log\left(P(x,u,y)\right)\big].    
    \end{aligned}
    \end{equation*}
    Then, from Jensen's inequality, $\mathbb{E}_{u\sim \pi(x)}[s(x,u)]\leq l^{\pi}$. 
    
    In Statement \ref{s2}, the fact that $\bar{h}^\pi_{s}(x)$ is constant follows from Theorem \ref{the:putterman} by considering $R(x,u,y)\equiv s(x,u)$. Now, note that we can write $\bar{h}^\pi_{s}=\sum_{x\in\mathcal{X}}\mathbb{E}_{u\sim\pi(x)}[s(x,u)]\mu^\pi(x)$ and $\bar{h}^\pi=\sum_{x\in\mathcal{X}}l^\pi(x)\mu^\pi(x)$ (see \cite{puterman2014markov}). Thus:
        \begin{equation*}
    \begin{aligned}
    \bar{h}^\pi_{s}=\sum_{x\in\mathcal{X}}\mathbb{E}_{u\sim\pi(x)}[s(x,u)]\mu^\pi(x)\leq\sum_{x\in\mathcal{X}}l^\pi(x)\mu^\pi(x)=\bar{h}^\pi,
    \end{aligned}
    \end{equation*}
    where we employed Statement \ref{s1}. 
    
    For Statement \ref{s3}, take $\pi\in\Pi^D$. Then, $\pi(u'\mid x)=1$ and $\pi(u\mid x)=0$ for an action $u'\in\mathcal{U}$ and all $u\neq u'$. Then \begin{equation*}
    \begin{aligned}
        \mathbb{E}_{u\sim \pi(x)}[s(x,u)] =&  \sum_{y\in X}P(x,u',y)\log\left(P(x,u',y)\right)=l^{\pi}(x).
        \end{aligned}
    \end{equation*} 
    The fact that $\bar{h}^\pi_{s}=\bar{h}^\pi$ follows, then, trivially. 
\end{proof}
\begin{proof}[Proof of Proposition \ref{prop:2}]
Observe that $s(x,u)$ and $s_\phi(x,u)$ are the entropies of probability distributions $P(x,u,\cdot)$ and $P_\phi(x,u,\cdot)$, respectively. Thus, the Fannes–Audenaert inequality \cite{Audenaert_2007} for two probability distributions $p$ and $q$ states:
\begin{align*}
         |H(p)-H(q)|\leq
         2T\log (d) -2T\log(2T),
    \end{align*}
where $T=\frac{|p-q|_1}{2}$ and $d$ is the dimensions of the support of the distributions. Then, taking $P(x,u,\cdot)$ and $P_\phi (x,u,\cdot)$ as distributions, for any action $u$ we obtain:
\begin{align*}
         \|s_{\phi}(x,u)-s(x,u)\|_{\infty}\leq
         \epsilon \log \Big(|\mathcal{X}|\Big) -\epsilon\log\epsilon=:K(\epsilon),
    \end{align*}
Finally:
\begin{align*} 
    \mathbb{E}\big[\lim_{T\to\infty}\frac{1}{T}\sum_{t=0}^{T}s_{\phi}(X_t,\pi(X_t)) \mid X_0\sim \mu_0\big] - \bar{h}^{\pi}_{s}=&\\
    \mathbb{E}\big[\lim_{T\to\infty}\frac{1}{T}\sum_{t=0}^{T}s_{\phi}(X_t,\pi(X_t))-s(X_t,\pi(X_t))\big] \leq&\\
    \mathbb{E}\Big[\lim_{T\to\infty}\frac{1}{T}\sum_{t=0}^{T}\big|s_{\phi}(X_t,\pi(X_t))-s(X_t,\pi(X_t))\big|\Big]\leq&
    \mathbb{E}\big[\lim_{T\to\infty}\frac{1}{T}\sum_{t=0}^{T}K(\epsilon)\big]=K(\epsilon).
\end{align*}
\end{proof}
\begin{proof}[Proof of Proposition \ref{lem:values}]
Since $S_\omega$ is linear on $\omega\in\Omega$ and $\Omega$ is compact, 
there exists at least one minimizer $\omega^{*}$. Now, from \eqref{eq:gain} and Theorem 8.2.6 
\citep{puterman2014markov}, $S^{\pi}$ and $S^{\pi}_{\phi}$ can be written in vector form (over the states $\mathcal{X}$) as:
\begin{equation*}\begin{aligned}
    S^{\pi}=(I-P_{\pi}+P_{\pi}^*)^{-1}(I-P_{\pi}^{*})s^{\pi}, \\ S^{\pi}_{\phi}=(I-P_{\pi}+P_{\pi}^*)^{-1}(I-P_{\pi}^{*})s_{\phi}^{\pi},
    \end{aligned}
\end{equation*}
where $s^{\pi}$ is the vector representation of $s(\cdot,\pi(\cdot))$ (and analogously for $s^\pi_\phi$). Therefore, from Proposition \ref{prop:2}, $\|S^{\pi}(x)-S^{\pi}_{\phi}(x)\|_{\infty}\leq K(\epsilon)$. Then, we can write without loss of generality
\begin{equation*}
    {S}^{\pi}(x,u) = S^{\pi}_{\phi}(x,u) + \eta(\epsilon,x),
\end{equation*}
with $\eta(\epsilon,x)$ being $O(\epsilon)$ for all $x$ since it is bounded by a function $K(\epsilon$ which is $O(\epsilon)$. Finally, we can write the parameter iteration as
\begin{equation*}
    \omega_{t+1} = \omega_t +\beta_t\big[-\nabla_{\omega}L_{\omega}^{\pi} +M_{t+1} +\eta(\epsilon)\big],
\end{equation*}
with $L_{\omega}^{\pi}:= \mathbb{E}_{\substack{x\sim \mu^{\pi}\\ u\sim\pi_{\theta}(x)}}\left[\frac{1}{2}\left({S}^{\pi}(x,u)-S_{\omega}(x,u) \right)^2\right]$ and the term $M_{t+1}:=\left(\hat{S}^{\pi}(x,u)-S^{\pi}(x,u) \right)\frac{\partial S_{\omega}(x,u)}{\partial \omega}$ is a Martingale with bounded variance (since $s$ is bounded).

Therefore, by Theorem 6 in \citep{borkar2009stochastic}, the iterates converge to some point $\omega_t\to\Omega^*_{\delta}(\pi_{\theta})$ \emph{almost surely} as $t\to\infty$, with $\Omega^*_{\delta}(\pi_{\theta})$ being the $O(\delta)$ neighbourhood of the stationary points satisfying $\nabla_{\omega}L_{\omega}^{\pi}=0$.
\end{proof}

\section{Experimental Results and Methodology}

We present here the extended experimental results, training curves and additional details corresponding to the experimental framework. All experiments were run in a single CPU, running Ubuntu 20.04, and all libraries and requirements are properly listed in the paper code.

\begin{figure}
    \centering
\includegraphics[clip, trim=3cm 0cm 1cm 0cm, width=5.5cm, width=.4\linewidth]{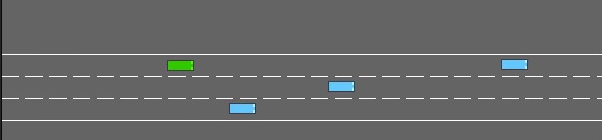}
\includegraphics[width=.4\linewidth]{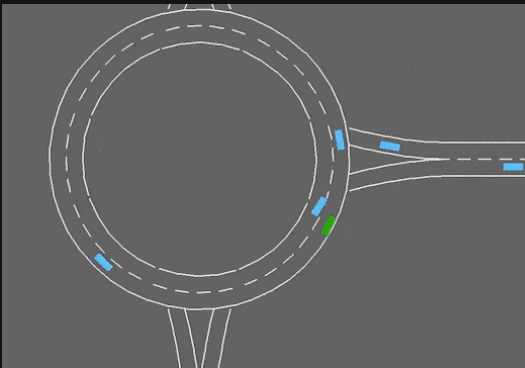}
    \caption{Self-Driving Environments.}
    \label{fig:driving}
\end{figure}
\subsection{Soft Actor-Critic Experiments}\label{apx:sac}
As discussed in Section \ref{sec:implement}, we implemented a version of PARL based on a Soft Actor-Critic implementation \citep{haarnoja2018soft}. For this, we simply learn in parallel an average reward Q function to optimize the entropy rate ($S_\omega$), and combine the Q functions to compute the actor objective as:
\begin{equation}\begin{aligned}
    J_{sac}(\pi)=\mathbb{E}_{x\sim \mathcal{D}}\left[\mathbb{E}_{u\sim\pi}\big[\alpha \log\pi(u\mid x)-Q_{\xi}(x,u)+\right.\\
    +\left. kS_{\omega}(x,u)\big]\right].
    \end{aligned}
\end{equation}
As a baseline for our SAC implementation, we use RPC \citep{eysenbach2021robust}, which is another SAC based algorithm aiming to maximize rewards while compressing policies to a maximum complexity, to achieve more simple, robust and predictable behaviours. Please note, the comparison is merely \emph{qualitative}: RPC does not optimize for entropy rate in the agent's behaviour, however simpler policies do induce smaller entropy rates (as seen in the reported results) and thus it is still useful as a baseline to evaluate the impact of entropy rate objectives in SAC agents.
We trained all agents with the same parameters, 5 agents per parameter combination and evaluated over 50 independent episodes. For RPC, we trained agents with different policy compression rates (2 bits and 5 bits), and for PASAC with two different trade-off parameters ($k=1$ and $k=0.5$). We train both PASAC and RPC with the same architecture for the prediction models (\emph{decoder} in RPC), and we use an identity encoder for RPC to make the prediction models equivalent, and the entropy to be estimated in task space (not in a latent space). The results are presented in table \ref{tab:rpc}. We also include a DDPG \citep{} baseline\footnote{The Ant-v4 DDPG obtain very high variance rewards and entropies. To get a more conservative baseline, we trained 10 agents and picked the best one in terms of mean reward obtained.} which is an inherently deterministic policy.
\begin{table}[ht]
\tiny
    \centering
    \caption{Results for MuJoCo environments, Off Policy algorithms.}
        \begin{tabular}{lccc}
        \toprule
        \cmidrule(lr){2-4}
        Ant & Rewards & Ep. Length  & Entropy Rate \\
        \midrule
        $\text{DDPG}$ & \textbf{1371.23 $\pm$ 757.25} & 899.60 $\pm$ 170.73 & {4.14 $\pm$ 0.91}\\
        $\text{PASAC}_{k=1}$ & \textbf{6173.43 $\pm$ 442.87} & 997.12 $\pm$ 48.45 & \textbf{-0.70 $\pm$ 0.12}\\
        $\text{PASAC}_{k=0.5}$ & 5235.99 $\pm$ 1268.48 & 957.72 $\pm$ 168.55 & -0.27 $\pm$ 0.44 \\
        $\text{RPC}_{2bit}$ & 2084.51 $\pm$ 674.39 & 983.44 $\pm$ 108.75 & 1.81 $\pm$ 0.56 \\
        $\text{RPC}_{5bit}$ & 5640.18 $\pm$ 496.67 & 996.15 $\pm$ 60.78 & {1.92 $\pm$ 0.12} \\
        \bottomrule
    \end{tabular} \\
        \begin{tabular}{lccc}
        \toprule
        \cmidrule(lr){2-4}
        HalfCheetah & Rewards & Ep. Length  & Entropy Rate \\
        \midrule
        $\text{DDPG}$ & \textbf{10829.94 $\pm$ 1316.95} & 1000.0 $\pm$ 0.0 & {3.84 $\pm$ 0.10}\\
        $\text{PASAC}_{k=1}$ & \textbf{11134.25 $\pm$ 1398.77} & 1000.00 $\pm$ 0.0 & \textbf{-0.02 $\pm$ 0.21} \\
        $\text{PASAC}_{k=0.5}$ & 11014.73 $\pm$ 816.36 & 1000.00 $\pm$ 0.0 & 0.07 $\pm$ 0.22 \\
        $\text{RPC}_{2bit}$ & 5105.49 $\pm$ 470.98 & 1000.00 $\pm$ 0.0 & 2.94 $\pm$ 0.12 \\
        $\text{RPC}_{5bit}$ & 6003.90 $\pm$ 666.42 & 1000.00 $\pm$ 0.0 & \textbf{2.55 $\pm$ 0.17} \\
        \bottomrule
    \end{tabular}\\
    \begin{tabular}{lccc}
        \toprule
        \cmidrule(lr){2-4}
        Hopper & Rewards & Ep. Length  & Entropy Rate \\
        \midrule
        $\text{DDPG}$ & \textbf{1080.03 $\pm$ 640.22} & 343.40 $\pm$ 193.73 & {0.89 $\pm$ 0.23}\\
        $\text{PASAC}_{k=1}$ & 1556.92 $\pm$ 946.95 & 479.17 $\pm$ 300.50 & \textbf{-4.28 $\pm$ 0.30} \\
        $\text{PASAC}_{k=0.5}$ & 2645.24 $\pm$ 1140.11 & 739.70 $\pm$ 332.04 & -3.69 $\pm$ 0.24 \\
        $\text{RPC}_{2bit}$ & \textbf{2667.34 $\pm$ 1091.18} & 732.44 $\pm$ 319.22 & 0.98 $\pm$ 0.15 \\
        $\text{RPC}_{5bit}$ & 2456.70 $\pm$ 1236.72 & 655.78 $\pm$ 339.75 & {1.19 $\pm$ 0.09} \\
        \bottomrule
    \end{tabular}
    \label{tab:rpc}
\end{table}
\subsection{Interactive Robot Tasks}
We created two tasks inspired by real human-robot use-cases, where it is beneficial for agents to avoid high entropy state-space regions. These are based on Minigrid environments \cite{MinigridMiniworld23}, see Figure \ref{fig:2}. The agents can execute actions $\mathcal{U}=\{\text{forward},\text{turn-left},\text{turn-right},\text{toggle}\}$, and the observation space is $\mathcal{X}=\mathbb{R}^{n}$ such that the observation includes the robot's position and orientation, the position of obstacles and the state of environment features (\emph{e.g.} the switch). In both tasks, the agent gets a reward of $1$ for reaching the goal, or a negative reward of $-1$ for colliding or falling in the lava. Both grid environments were wrapped in a normalizing vectorized wrapper, to normalize observations, but the model data was kept un-normalized. 
\begin{figure}[t]
    \centering
\includegraphics[width=.25\linewidth]{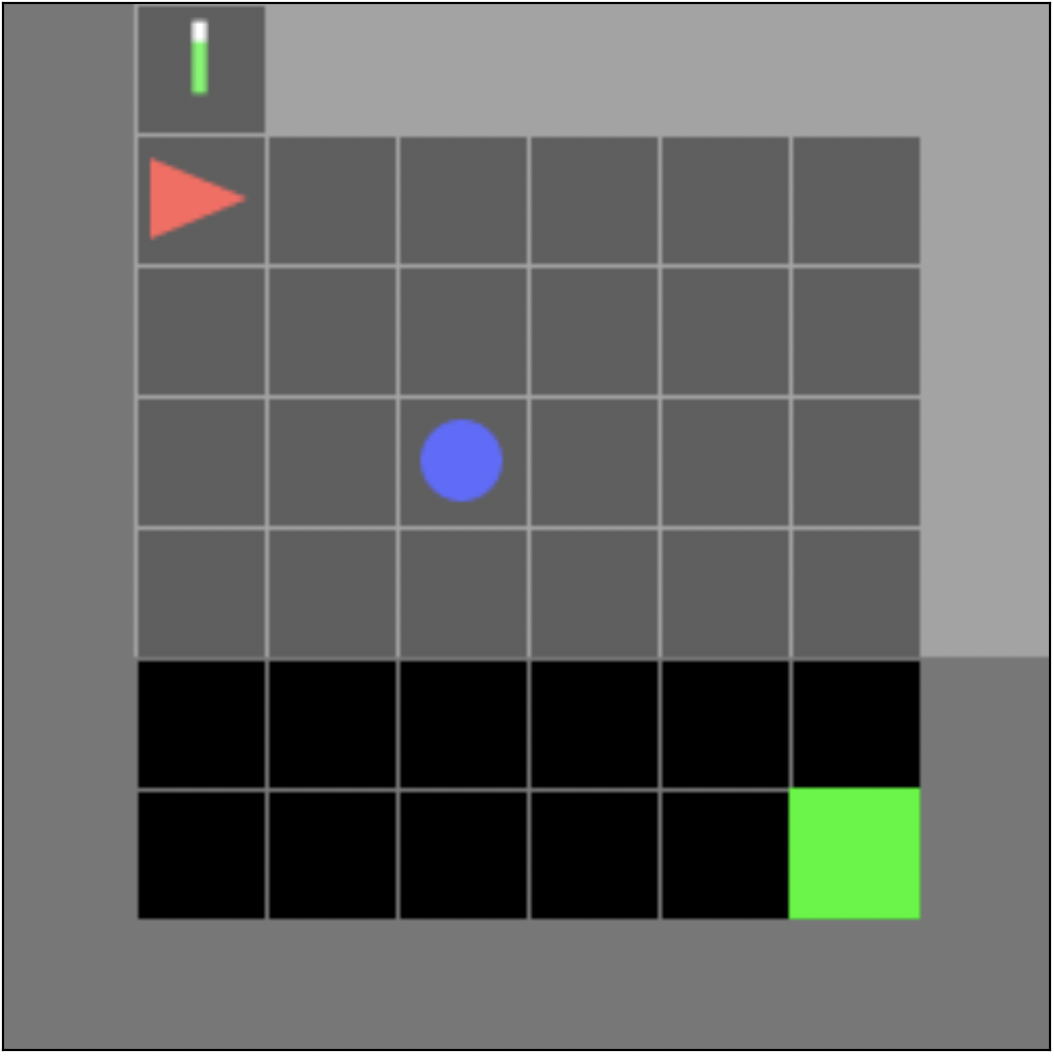}
\includegraphics[width=.35\linewidth]{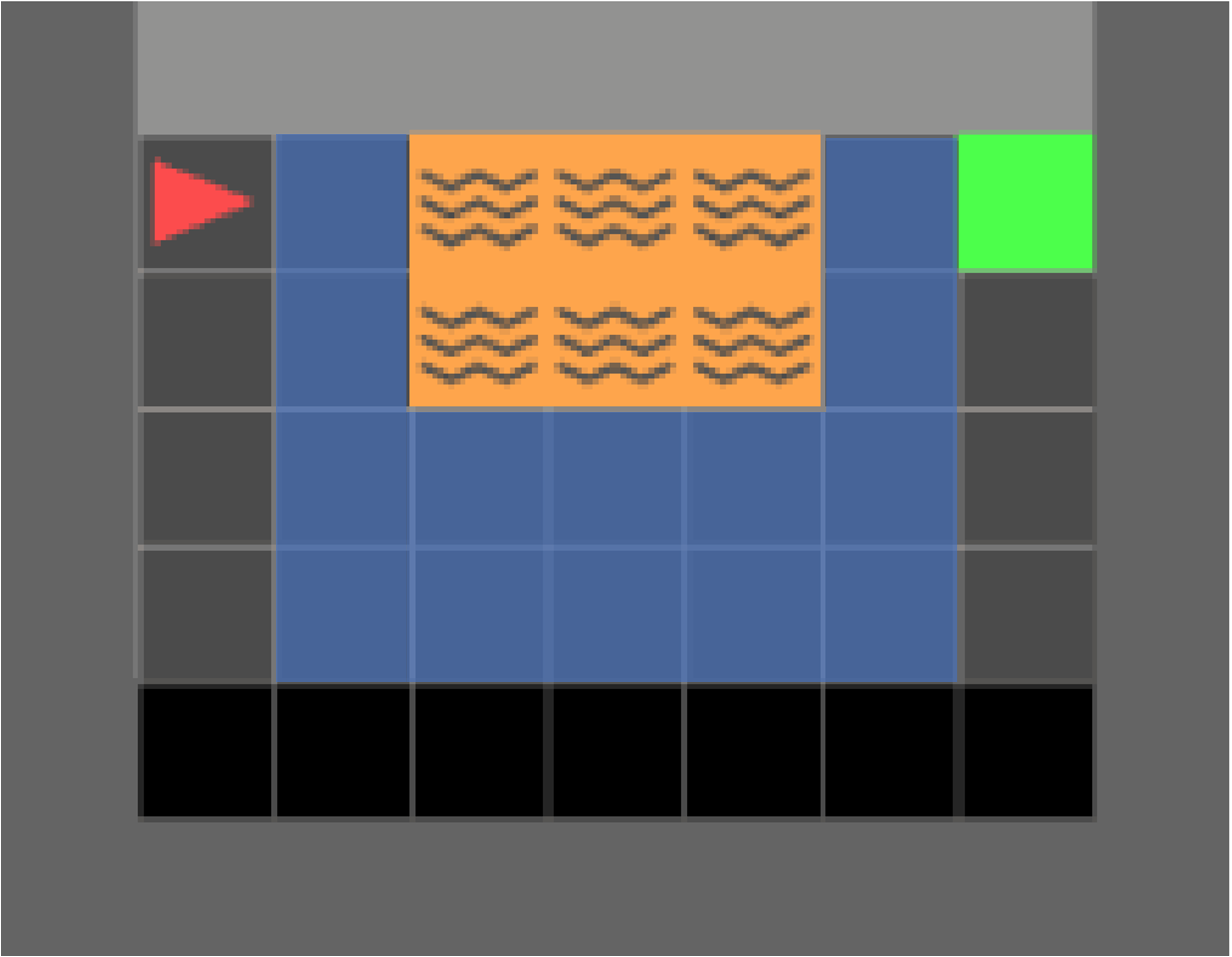}
    \caption{Interactive Tasks. Left is a moving obstacle navigation task where the agent has the option to de-activate the obstacles. Right is a navigation problem where the ground is slippery, resulting in random motion.}
    \label{fig:2}
\end{figure}
\paragraph{Task 1: Turning Off Obstacles}
This task is designed as a dynamic obstacle navigation task, where the motion of the obstacles can be stopped (the obstacles can be switched off) by the agent toggling  a switch at a small cost of rewards. The environment is depicted in Figure \ref{fig:2} (left). The switch is to the left of the agent, shown as a green bar (orange if off). The agent gets a reward of $r=0.95$ if it turns the obstacle off. The intuition behind this task is that agents do not learn to turn off the obstacles, and attempt to navigate the environment. This, however, induces less predictable dynamics since the obstacle keeps adding noise to the observation, and the agent is forced to take high variance trajectories to avoid it. Our Predictability-Aware algorithm converges to policies that consistently disable the stochasticity of the obstacles, navigating the environment freely afterwards, while staying near-optimal.

\paragraph{Task 2: Slippery Navigation}
The second task is inspired by a cliff navigation environment, where a large portion of the ground is slippery, but a path around it is not. In this problem, the slippery part has uncertain transitions (i.e. a given action does not yield always the same result, because the robot might slip), but the non-slippery path induces deterministic behaviour (the robot follows the direction dictated by the action). The agent needs to navigate to the green square avoiding the lava. If it enters the \emph{slippery} region, it has a probability of $p=0.35$ of spinning and changing direction randomly. The intuition behind this environment is that PPO agents do not learn to avoid the slippery regions, resulting in higher entropy rates and less predictable behaviours. On the contrary, PAPPO agents consistently avoid the slippery regions. This can be seen in Figure \ref{fig:representations_robotic}.
\begin{figure}[h]
\centering
\begin{subfigure}{0.22\textwidth}
\centering
   \includegraphics[width=\linewidth]{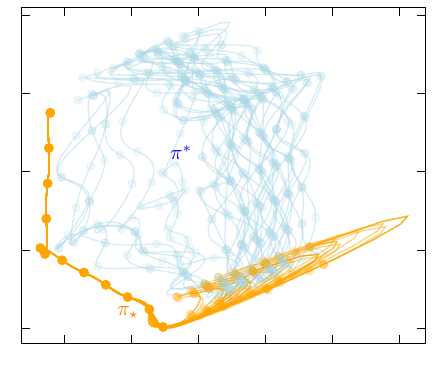}
     \caption{Dyn Obstacle Env.}
\end{subfigure}
\begin{subfigure}{0.22\textwidth}
\centering
   \includegraphics[width=\linewidth]{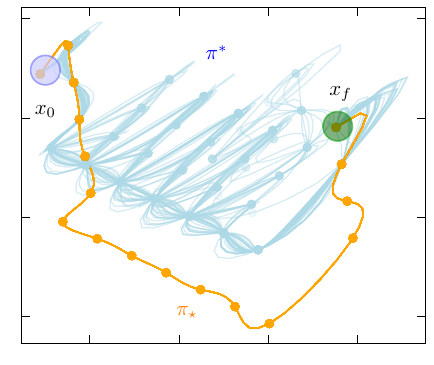}
     \caption{Slippery Nav. Env.}
\end{subfigure}
\caption{2D trajectory projections. Blue is a PPO policy, orange is a PAPPO policy. }
\label{fig:representations_robotic}
\end{figure}
\subsection{MuJoCo Experiments}
We include here the full set of trajectory plots in task space, both for $z$ position of the front tip versus angle of the front tip, and $x$-$y$ velocities of the front tip (See Figures \ref{fig:3}, \ref{fig:4}, \ref{fig:anttraj}, \ref{fig:antvels}, \ref{fig:cheetahtrajs} and \ref{fig:cheetahvels}). Additionally, we include in tables \ref{tab:mujoco} and \ref{tab:mujocostoc} the full numerical results for the simulated agents. Each result is reported computed for 10 independently trained agents, and each agent evaluated over 50 independent trajectories.
\begin{table}[ht]
\tiny
    \centering
    \caption{Results for MuJoCo environments, Stochastic Policies.}
    \begin{tabular}{lccc}
        \toprule
        \cmidrule(lr){2-4}
        Walker & Rewards & Ep. Length  & Entropy Rate \\
        \midrule
        PPO & \textbf{2466.27 $\pm$ 1329.71} & 638.29 $\pm$ 296.82 &1.19 $\pm$ 0.40 \\
        $\text{PAPPO}_{k=0.05}$ & 2430.12 $\pm$ 1199.45 & 626.77 $\pm$ 263.34 & 0.36 $\pm$ 0.74 \\
        $\text{PAPPO}_{k=0.1}$ & 2047.30 $\pm$ 1106.09 & 551.40 $\pm$ 252.01 & 0.06 $\pm$ 0.61 \\
        $\text{PAPPO}_{k=0.25}$ & 1315.47 $\pm$ 910.03 & 550.18 $\pm$ 296.42 & -1.26 $\pm$ 1.79 \\
        $\text{PAPPO}_{k=0.5}$ & 1072.88 $\pm$ 980.41 & 648.38 $\pm$ 358.74 & \textbf{-1.90 $\pm$ 1.16} \\
        \bottomrule
    \end{tabular} \\
        \begin{tabular}{lccc}
        \toprule
        \cmidrule(lr){2-4}
        Ant & Rewards & Ep. Length  & Entropy Rate \\
        \midrule
        PPO & 2459.29 $\pm$ 1411.75 & 734.79 $\pm$ 342.21 & 0.80 $\pm$ 0.27 \\
        $\text{PAPPO}_{k=0.05}$ & \textbf{2510.29 $\pm$ 1504.11} & 774.54 $\pm$ 343.79 & 0.66 $\pm$ 0.24 \\
        $\text{PAPPO}_{k=0.1}$ & 2107.21 $\pm$ 1195.23 & 786.88 $\pm$ 330.96 & -0.04 $\pm$ 1.64 \\
        $\text{PAPPO}_{k=0.25}$& 1497.67 $\pm$ 854.79 & 950.96 $\pm$ 186.39 & -3.54 $\pm$ 3.04 \\
        $\text{PAPPO}_{k=0.5}$ & 1204.47 $\pm$ 1076.30 & 987.53 $\pm$ 87.62 & \textbf{-5.00 $\pm$ 1.98} \\
        \bottomrule
    \end{tabular} \\
        \begin{tabular}{lccc}
        \toprule
        \cmidrule(lr){2-4}
        HalfCheetah & Rewards & Ep. Length  & Entropy Rate \\
        \midrule
        PPO & 4150.53 $\pm$ 1645.16 & 1000.0 $\pm$ 0.0 & 1.01 $\pm$ 0.34 \\
        $\text{PAPPO}_{k=0.05}$ & 3575.56 $\pm$ 1743.91 & 1000.0 $\pm$ 0.0 & 0.46 $\pm$ 0.45 \\
        $\text{PAPPO}_{k=0.1}$ & \textbf{4482.48 $\pm$ 1679.52} & 1000.0 $\pm$ 0.0 & 0.31 $\pm$ 0.25 \\
        $\text{PAPPO}_{k=0.25}$ & 4427.07 $\pm$ 1930.60 & 1000.0 $\pm$ 0.0 & -0.27 $\pm$ 0.74 \\
        $\text{PAPPO}_{k=0.5}$ & 3962.70 $\pm$ 1822.80 & 1000.0 $\pm$ 0.0 & \textbf{-0.54 $\pm$ 0.99} \\
        \bottomrule
    \end{tabular}
    \label{tab:mujocostoc}
\end{table}
 \begin{table}[ht]
\tiny
    \centering
    \caption{Results for MuJoCo environments, Deterministic policies.}
    \begin{tabular}{lccc}
        \toprule
        \cmidrule(lr){2-4}
        Walker & Rewards & Ep. Length  & Entropy Rate \\
        \midrule
        PPO & {3144.66 $\pm$ 1428.29} & 769.80 $\pm$ 297.99 &0.74 $\pm$ 0.43 \\
        $\text{PAPPO}_{k=0.05}$ & \textbf{3413.70 $\pm$ 1062.22} & 848.21 $\pm$ 218.93 & -0.16 $\pm$ 0.79 \\
        $\text{PAPPO}_{k=0.1}$ & 2278.67 $\pm$ 1352.92 & 598.13 $\pm$ 301.35 & -0.20 $\pm$ 0.58 \\
        $\text{PAPPO}_{k=0.25}$ & 1794.17 $\pm$ 1331.01 & 662.40 $\pm$ 319.09 & -1.51 $\pm$ 1.84 \\
        $\text{PAPPO}_{k=0.5}$ & 1212.20 $\pm$ 1201.21 & 672.42 $\pm$ 368.88 & \textbf{-1.98 $\pm$ 1.12} \\
        \bottomrule
    \end{tabular} \\
        \begin{tabular}{lccc}
        \toprule
        \cmidrule(lr){2-4}
        Ant & Rewards & Ep. Length  & Entropy Rate \\
        \midrule
        PPO & 3013.54 $\pm$ 1524.71 & 808.31 $\pm$ 313.55 & 0.52 $\pm$ 0.38 \\
        $\text{PAPPO}_{k=0.05}$ & \textbf{3185.06 $\pm$ 1490.56} & 883.68 $\pm$ 260.07 & 0.27 $\pm$ 0.54 \\
        $\text{PAPPO}_{k=0.1}$ & 2633.50 $\pm$ 1336.18 & 829.38 $\pm$ 301.95 & -0.41 $\pm$ 2.17 \\
        $\text{PAPPO}_{k=0.25}$& 1611.57 $\pm$ 963.54 & 964.98 $\pm$ 155.04 & -4.77 $\pm$ 3.83 \\
        $\text{PAPPO}_{k=0.5}$ & 1218.27 $\pm$ 1129.12 & 986.36 $\pm$ 99.72 & \textbf{-6.44 $\pm$ 2.53} \\
        \bottomrule
    \end{tabular} \\
        \begin{tabular}{lccc}
        \toprule
        \cmidrule(lr){2-4}
        HalfCheetah & Rewards & Ep. Length  & Entropy Rate \\
        \midrule
        PPO & 5192.49 $\pm$ 1181.34 & 1000.0 $\pm$ 0.0 & 0.82 $\pm$ 0.31 \\
        $\text{PAPPO}_{k=0.05}$ & {5115.65 $\pm$ 1178.23} & 1000.0 $\pm$ 0.0 & 0.27 $\pm$ 0.54 \\
        $\text{PAPPO}_{k=0.1}$ & {5342.42 $\pm$ 1215.20} & 1000.0 $\pm$ 0.0 & 0.11 $\pm$ 0.26 \\
        $\text{PAPPO}_{k=0.25}$ & \textbf{5686.88 $\pm$ 1096.06} & 1000.0 $\pm$ 0.0 & -0.37 $\pm$ 0.40 \\
        $\text{PAPPO}_{k=0.5}$ & 4462.63 $\pm$ 1740.05 & 1000.0 $\pm$ 0.0 & \textbf{-0.97 $\pm$ 0.96} \\
        \bottomrule
    \end{tabular}
    \label{tab:mujoco}
\end{table}

\paragraph{Trajectory Representations}
\begin{figure*}
    \centering
    \begin{subfigure}[b]{0.17\textwidth}
        \includegraphics[width=\textwidth]{Walker2d-v4_ppo_paper_0_pos.pdf}
    \end{subfigure}
    \hspace{-0.3cm}
    \begin{subfigure}[b]{0.17\textwidth}
        \includegraphics[width=\textwidth]{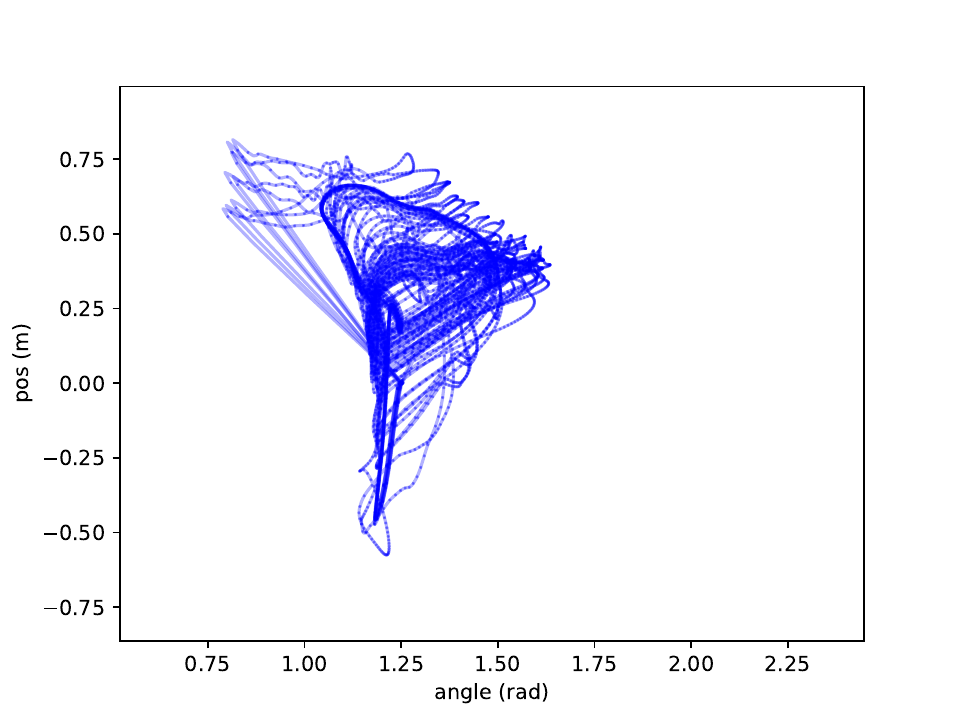}
    \end{subfigure}
    \hspace{-0.3cm}
    \begin{subfigure}[b]{0.17\textwidth}
        \includegraphics[width=\textwidth]{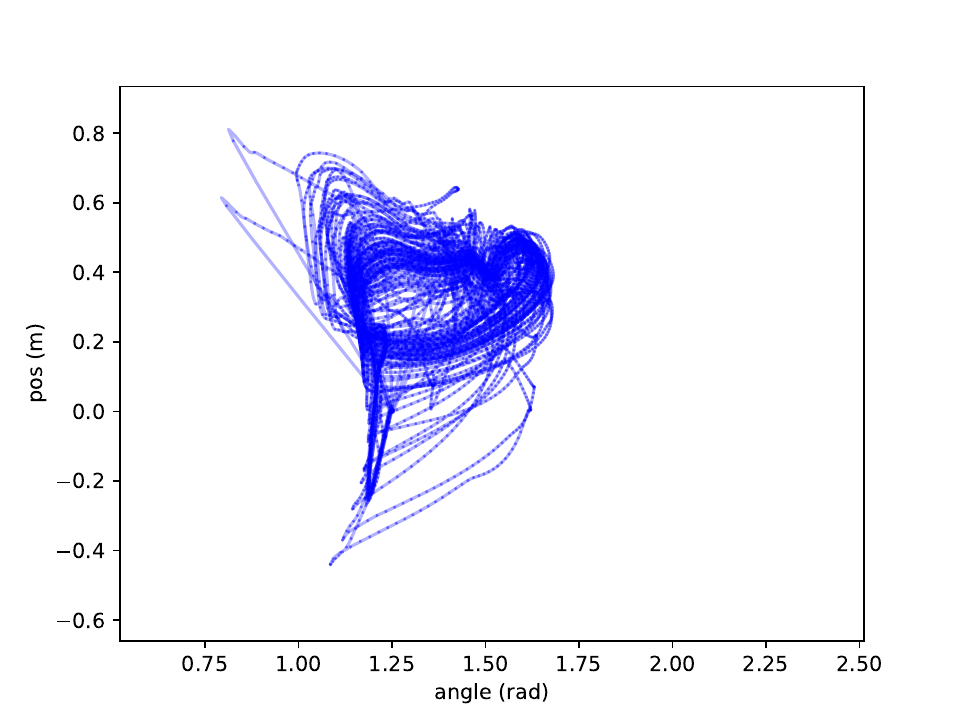}
    \end{subfigure}
    \hspace{-0.3cm}
    \begin{subfigure}[b]{0.17\textwidth}
        \includegraphics[width=\textwidth]{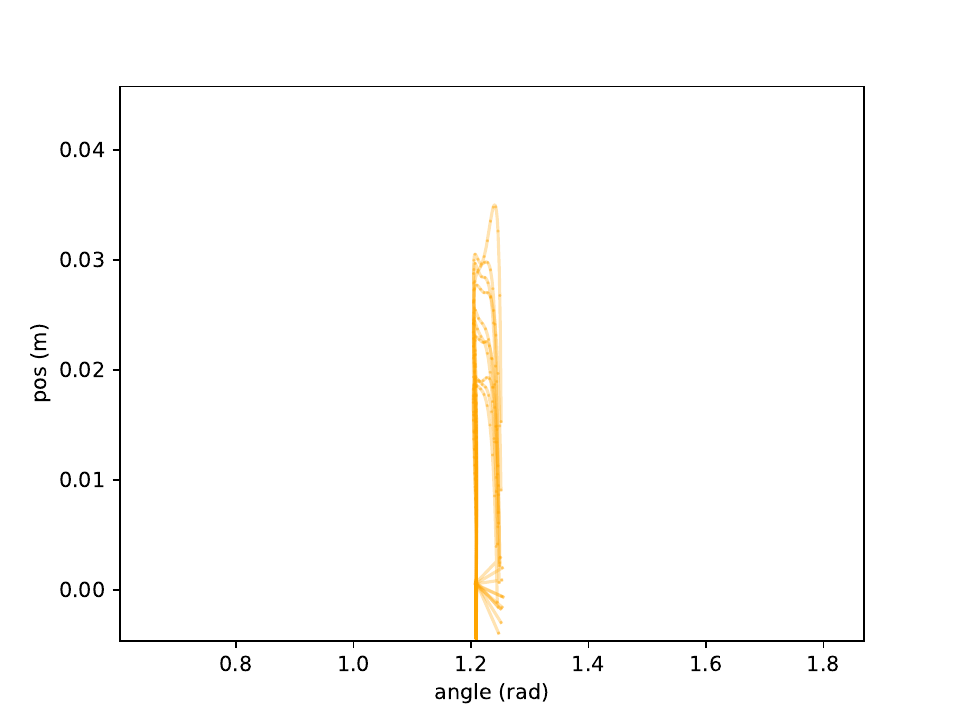}
    \end{subfigure}
    \hspace{-0.3cm}
    \begin{subfigure}[b]{0.17\textwidth}
        \includegraphics[width=\textwidth]{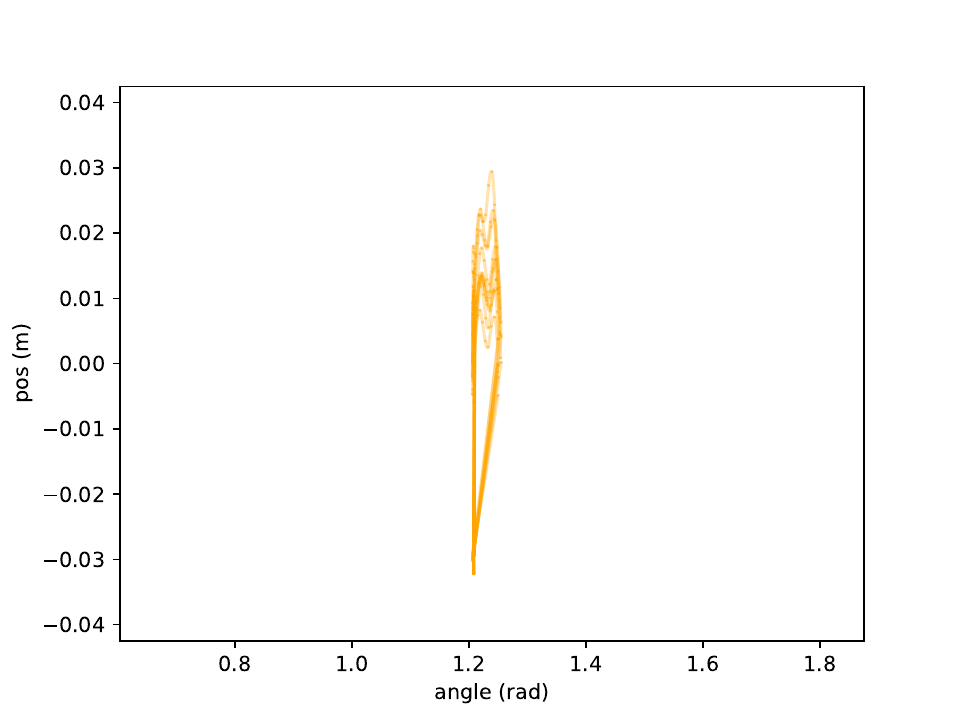}
    \end{subfigure}  
    \hspace{-0.3cm}
    \begin{subfigure}[b]{0.17\textwidth}
        \includegraphics[width=\textwidth]{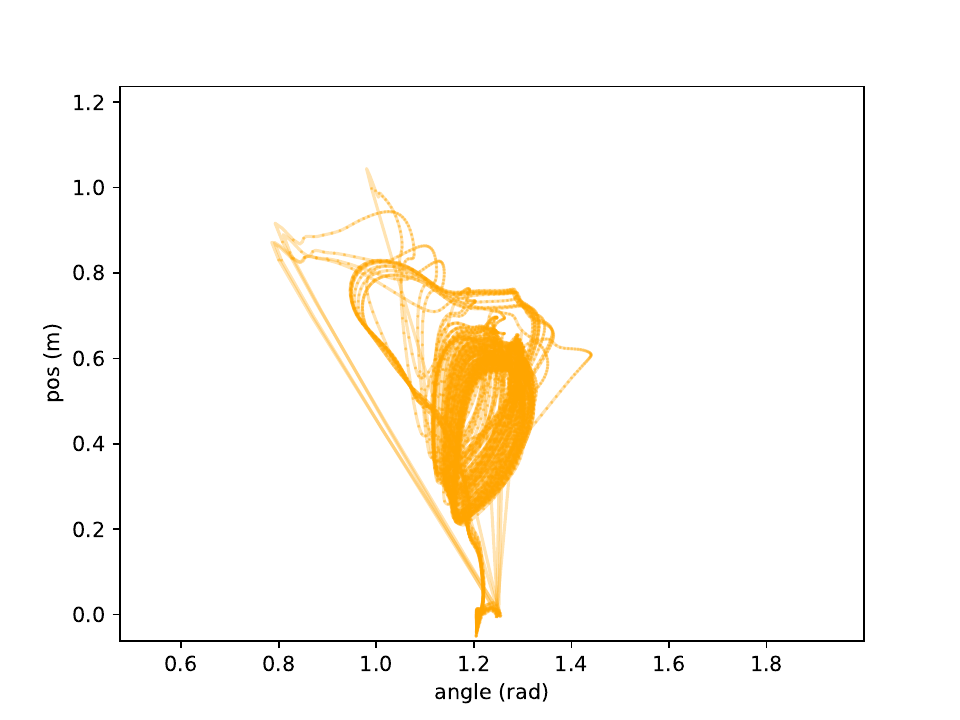}
    \end{subfigure}
    \\
    \begin{subfigure}[b]{0.17\textwidth}
        \includegraphics[width=\textwidth]{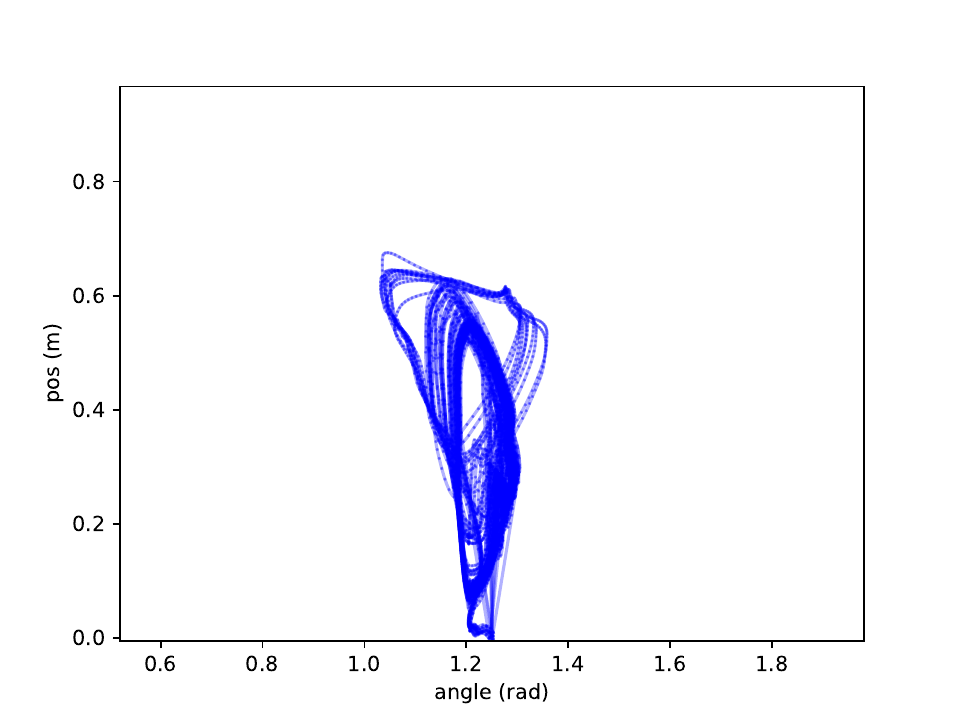}
    \end{subfigure}
    \hspace{-0.3cm}  
    \begin{subfigure}[b]{0.17\textwidth}
        \includegraphics[width=\textwidth]{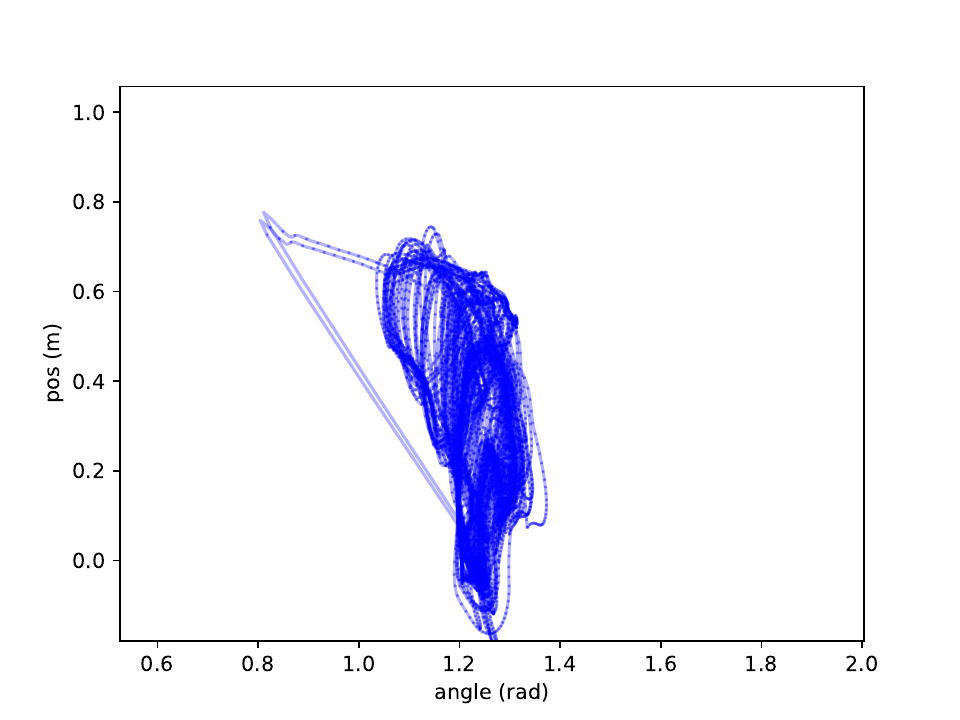}
    \end{subfigure}
    \hspace{-0.3cm}
    \begin{subfigure}[b]{0.17\textwidth}
        \includegraphics[width=\textwidth]{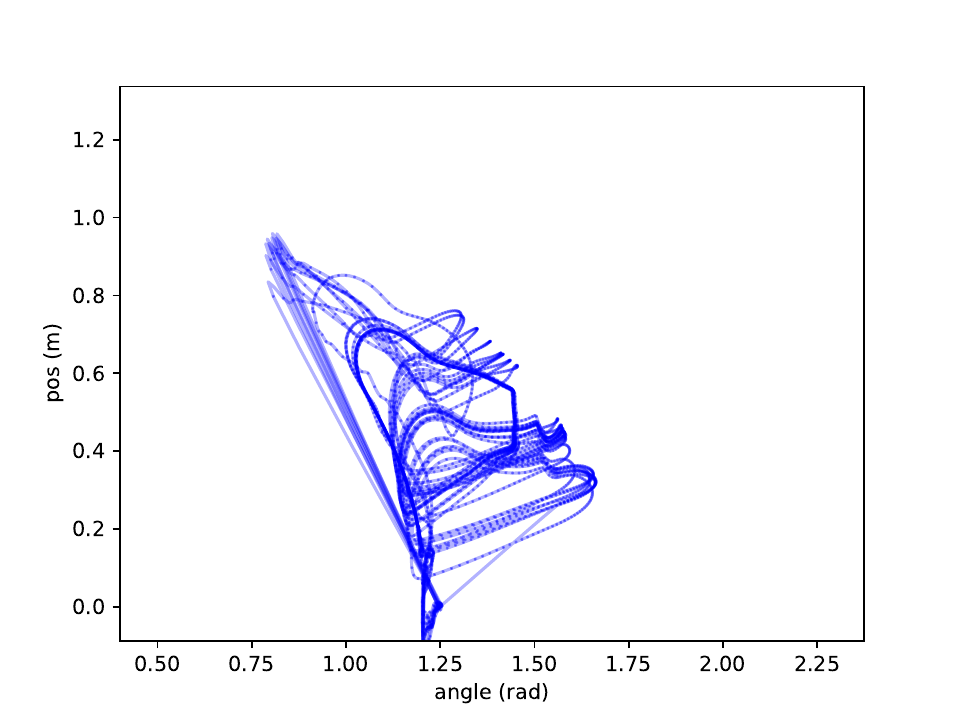}
    \end{subfigure}
    \hspace{-0.3cm}  
    \begin{subfigure}[b]{0.17\textwidth}
        \includegraphics[width=\textwidth]{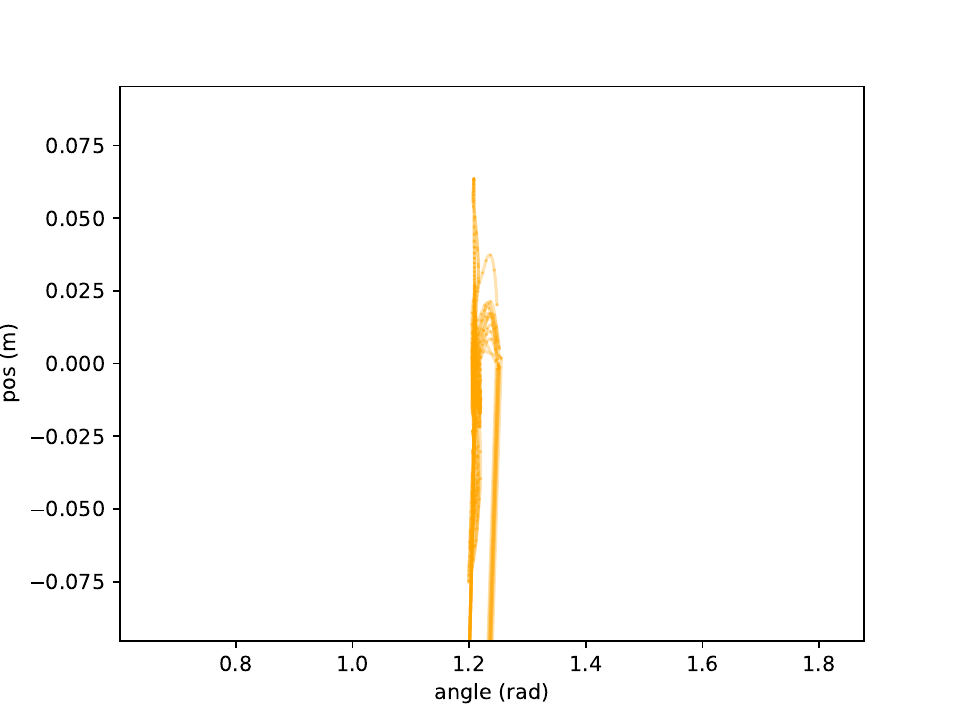}
    \end{subfigure}
    \hspace{-0.3cm}
    \begin{subfigure}[b]{0.17\textwidth}
        \includegraphics[width=\textwidth]{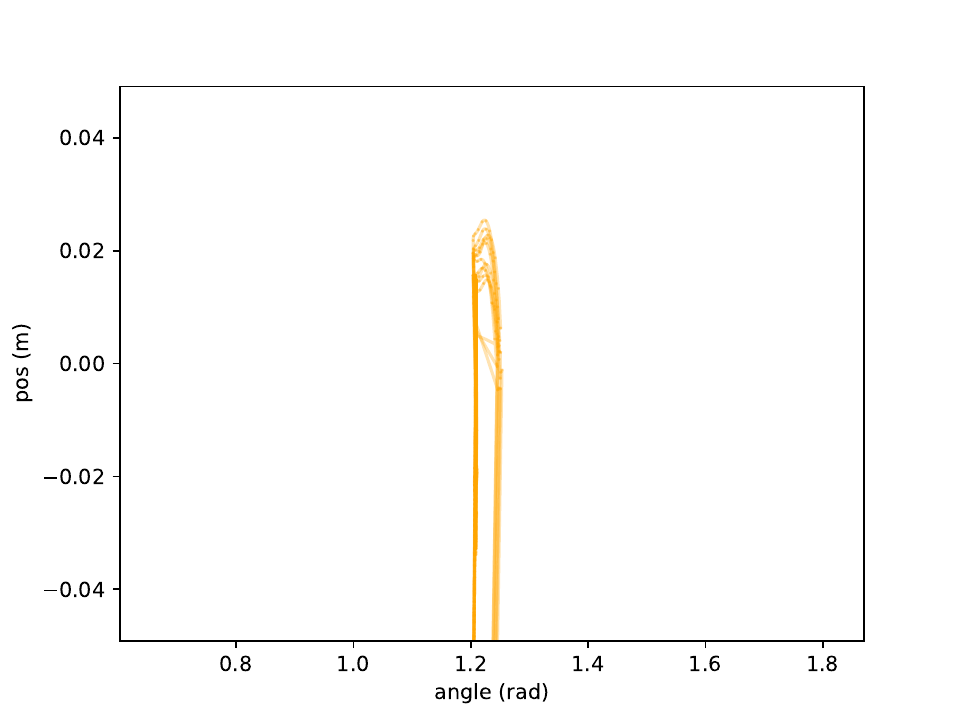}
    \end{subfigure}  
    \hspace{-0.3cm}
    \begin{subfigure}[b]{0.17\textwidth}
        \includegraphics[width=\textwidth]{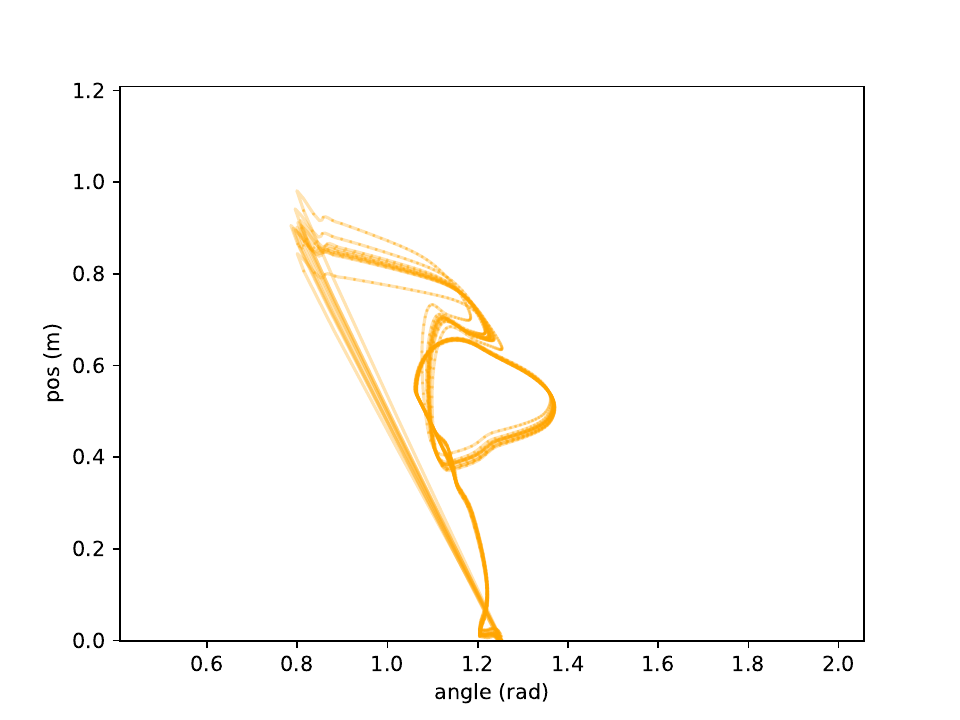}
    \end{subfigure}
        \\  
    \begin{subfigure}[b]{0.17\textwidth}
        \includegraphics[width=\textwidth]{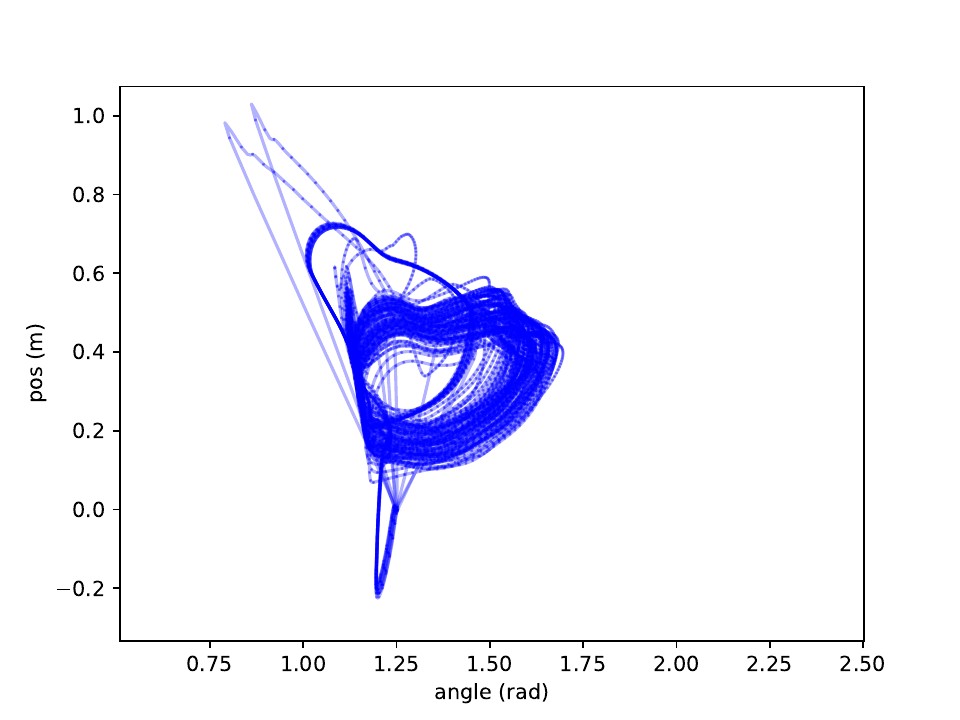}
    \end{subfigure}
        \hspace{-0.3cm}  
    \begin{subfigure}[b]{0.17\textwidth}
        \includegraphics[width=\textwidth]{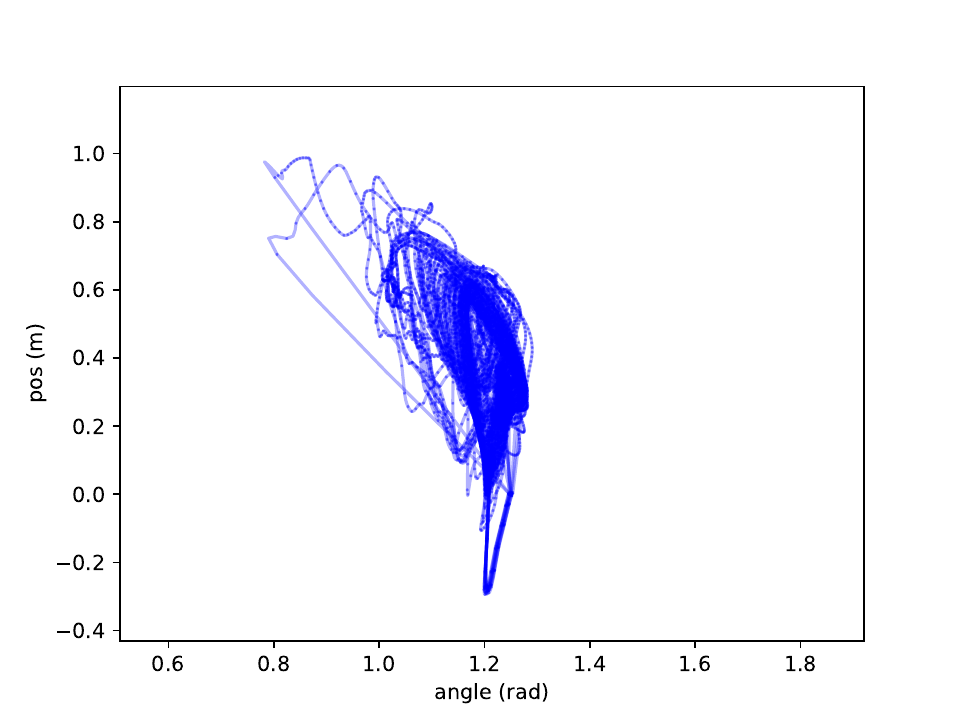}
    \end{subfigure}
        \hspace{-0.3cm}
    \begin{subfigure}[b]{0.17\textwidth}
        \includegraphics[width=\textwidth]{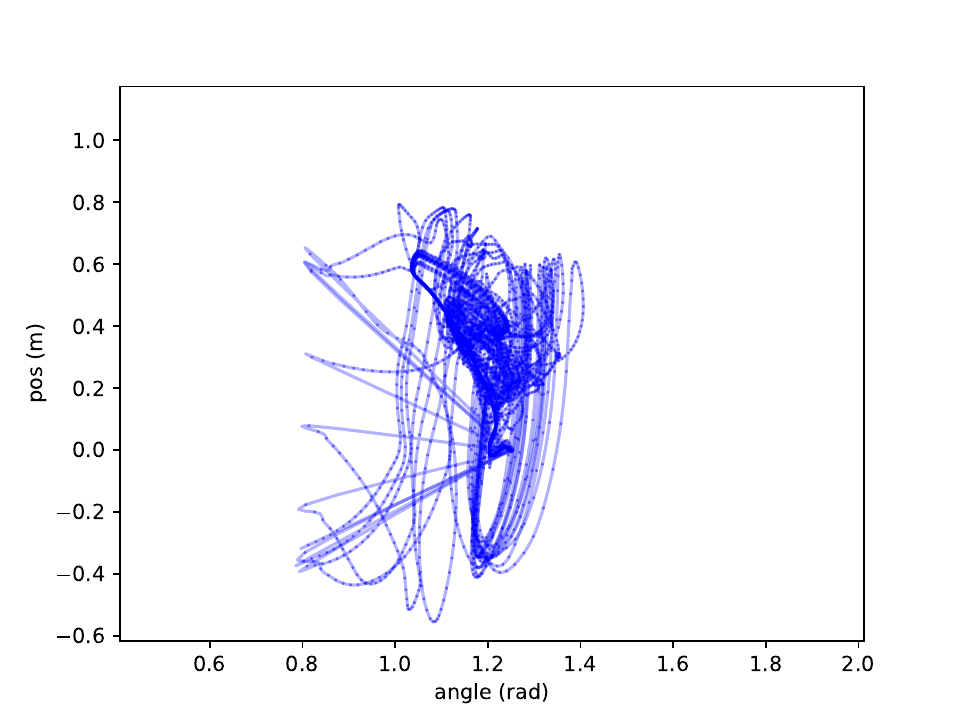}
    \end{subfigure}
    \hspace{-0.3cm}
    \begin{subfigure}[b]{0.17\textwidth}
        \includegraphics[width=\textwidth]{Walker2d-v4_pappo_paper_c05_0_pos.pdf}
    \end{subfigure}
    \hspace{-0.3cm}
    \begin{subfigure}[b]{0.17\textwidth}
        \includegraphics[width=\textwidth]{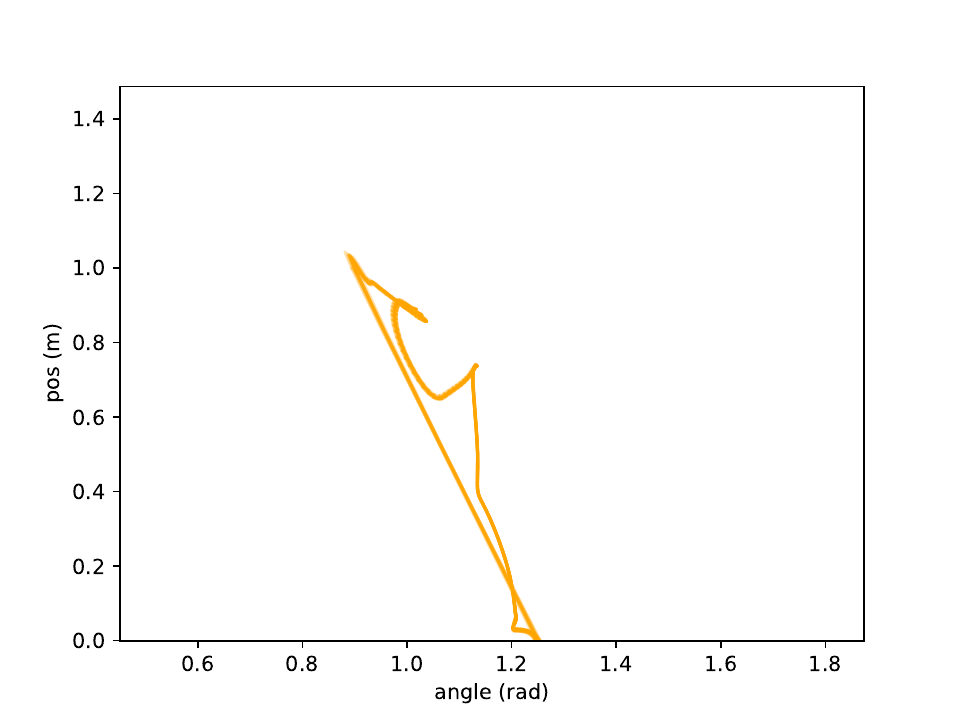}
    \end{subfigure}  
    \hspace{-0.3cm}
    \begin{subfigure}[b]{0.17\textwidth}
        \includegraphics[width=\textwidth]{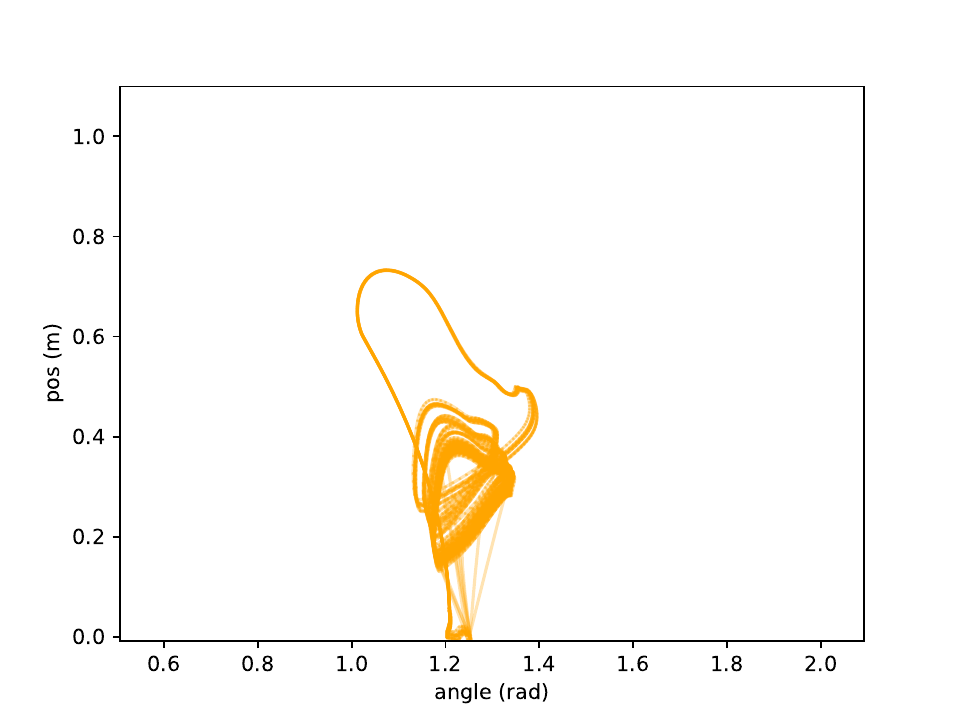}
    \end{subfigure}
    \\
    {\centering
        \begin{subfigure}[b]{0.17\textwidth}
        \includegraphics[width=\textwidth]{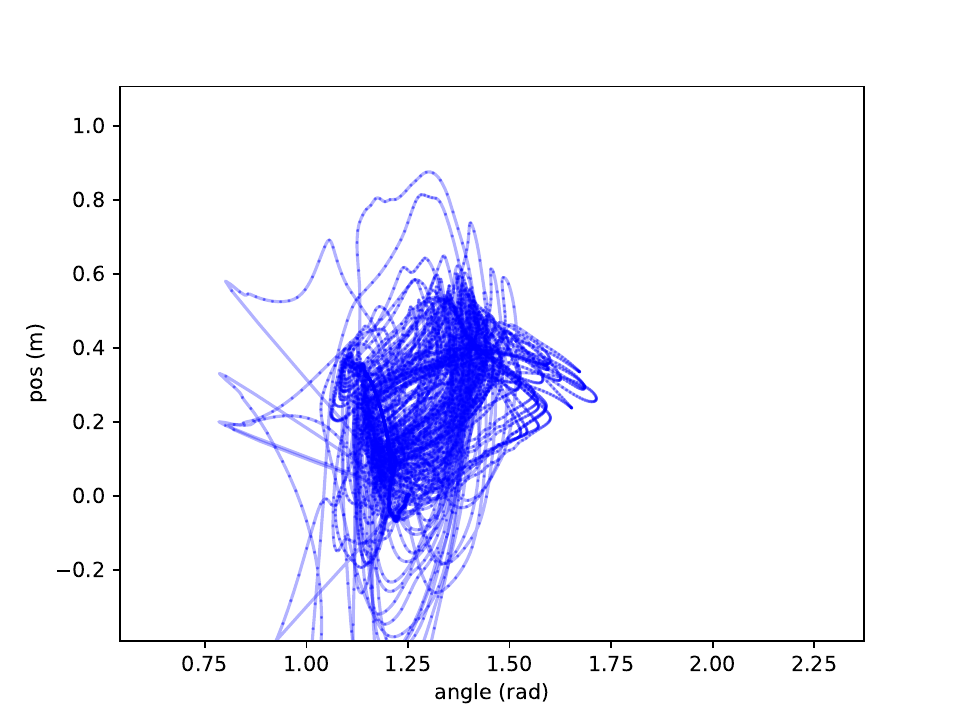}
    \end{subfigure}
    \hspace{-0.3cm}
    \begin{subfigure}[b]{0.17\textwidth}
        \includegraphics[width=\textwidth]{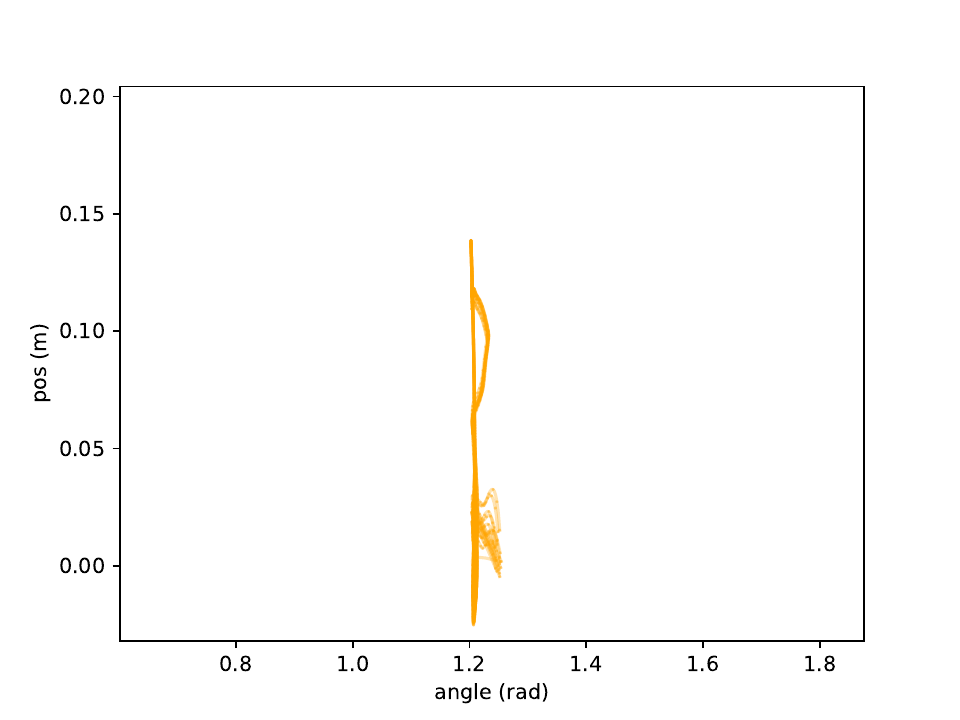}
    \end{subfigure}
    }
    \caption{Walker Trajectory Plots, $x$-axis is torso angle in radians, $y$-axis is $z$ coordinate position of the torso. Blue are PPO agents, orange are PARL agents.}  
    \label{fig:3}
\end{figure*}           

    \begin{figure*}
    \centering
    \begin{subfigure}[b]{0.17\textwidth}
        \includegraphics[width=\textwidth]{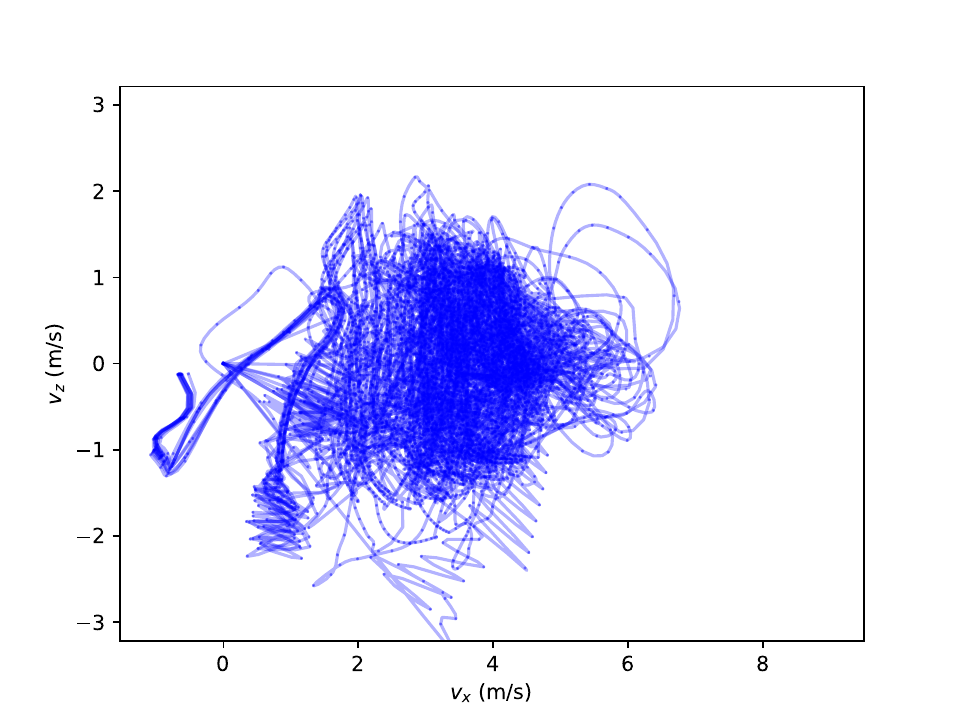}
    \end{subfigure}
    \hspace{-0.3cm}
    \begin{subfigure}[b]{0.17\textwidth}
        \includegraphics[width=\textwidth]{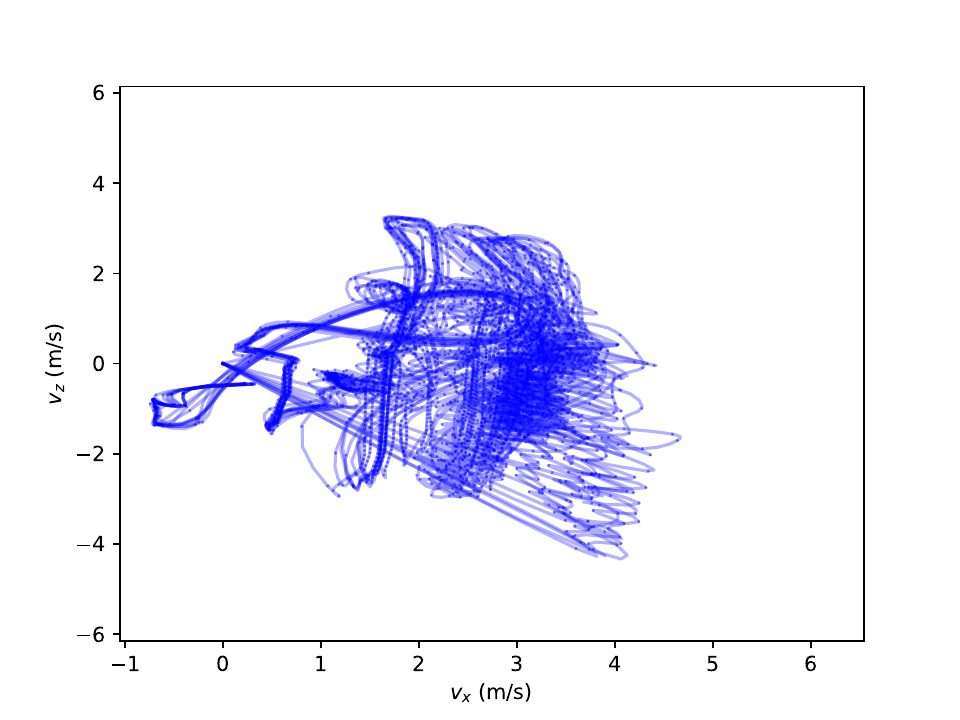}
    \end{subfigure}
    \hspace{-0.3cm}
    \begin{subfigure}[b]{0.17\textwidth}
        \includegraphics[width=\textwidth]{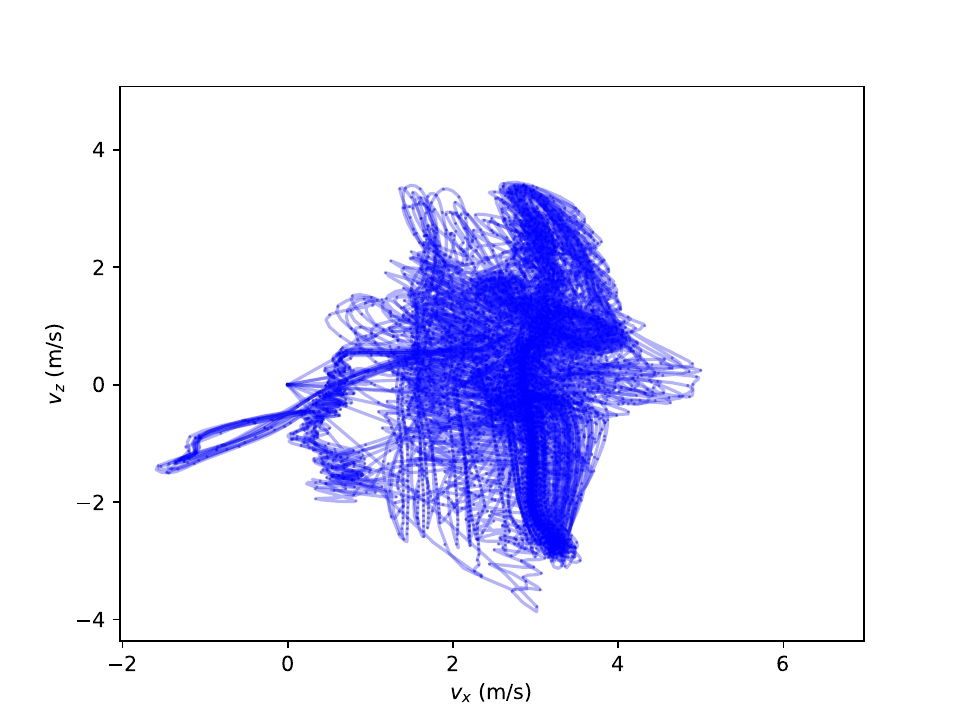}
    \end{subfigure}
    \hspace{-0.3cm}
    \begin{subfigure}[b]{0.17\textwidth}
        \includegraphics[width=\textwidth]{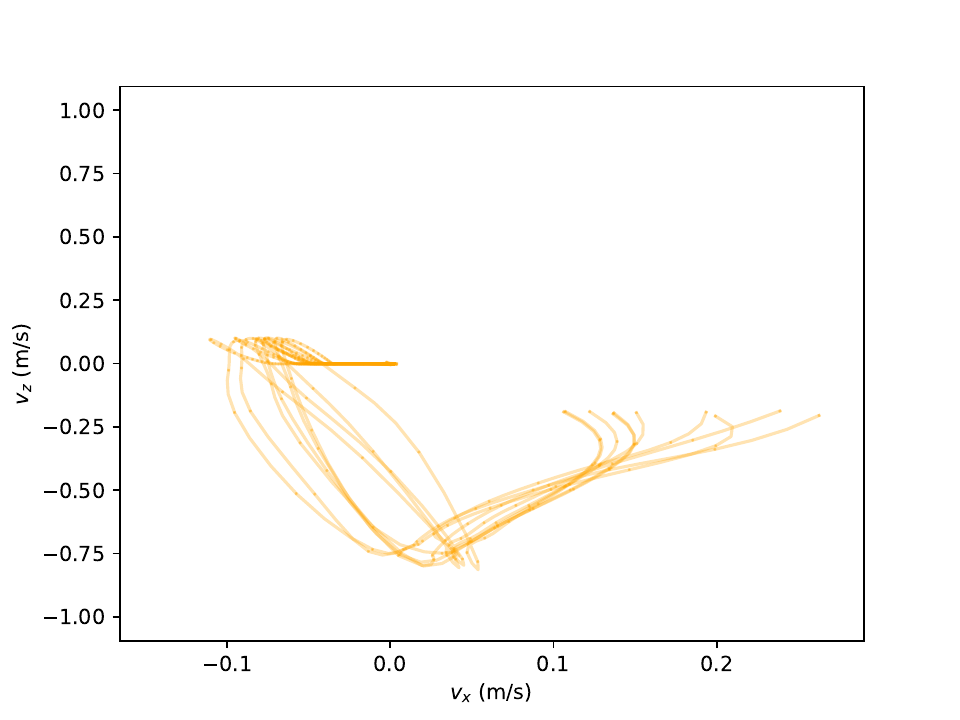}
    \end{subfigure}
    \hspace{-0.3cm}
    \begin{subfigure}[b]{0.17\textwidth}
        \includegraphics[width=\textwidth]{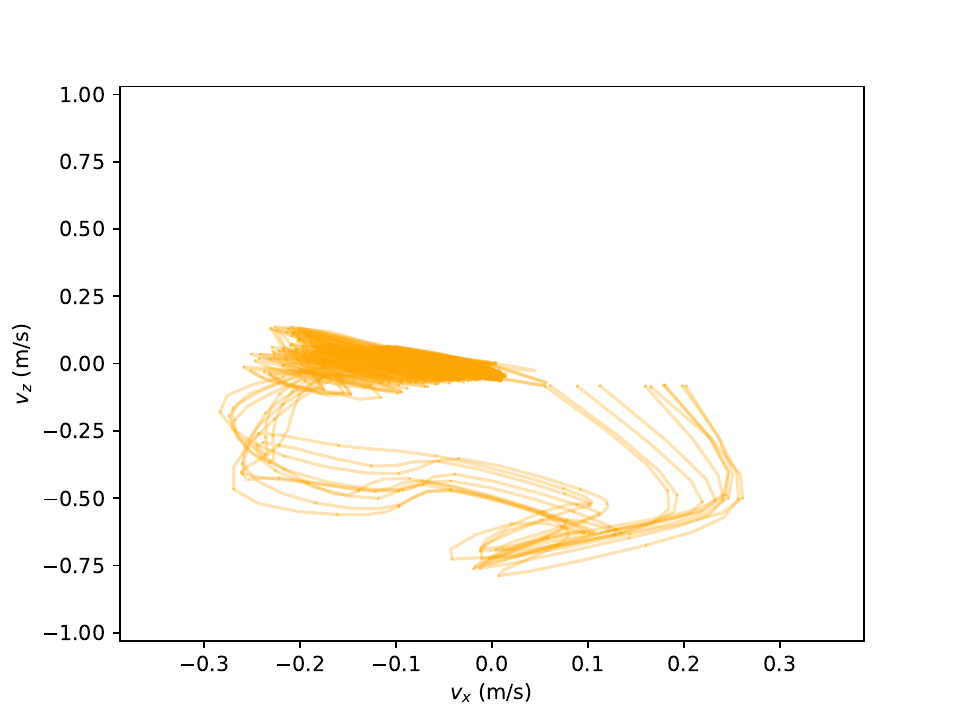}
    \end{subfigure}  
    \hspace{-0.3cm}
    \begin{subfigure}[b]{0.17\textwidth}
        \includegraphics[width=\textwidth]{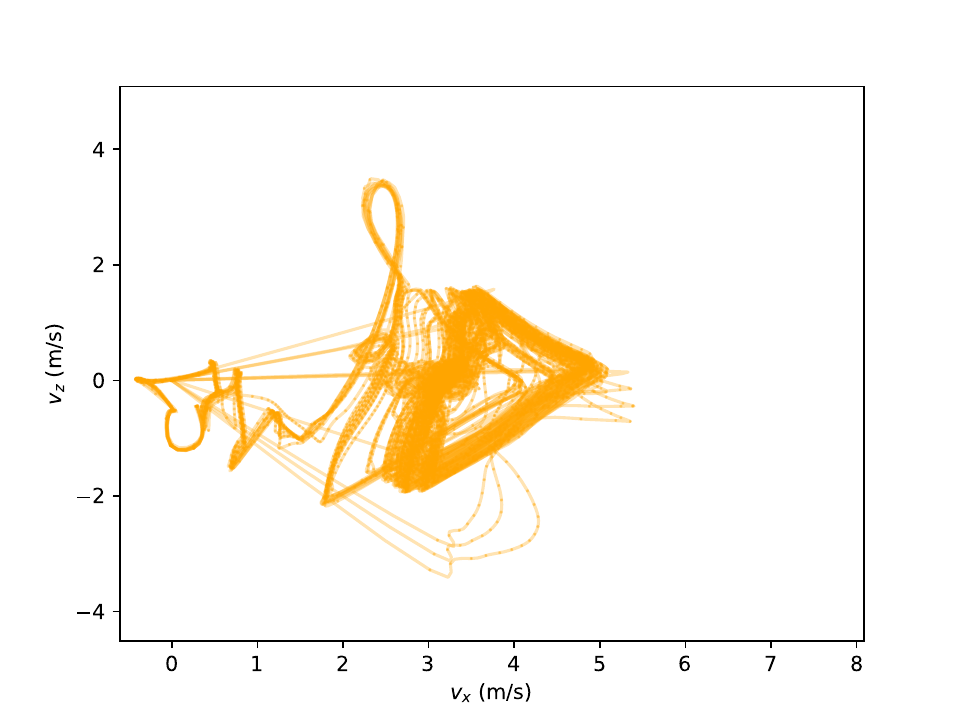}
    \end{subfigure}
    \\
    \begin{subfigure}[b]{0.17\textwidth}
        \includegraphics[width=\textwidth]{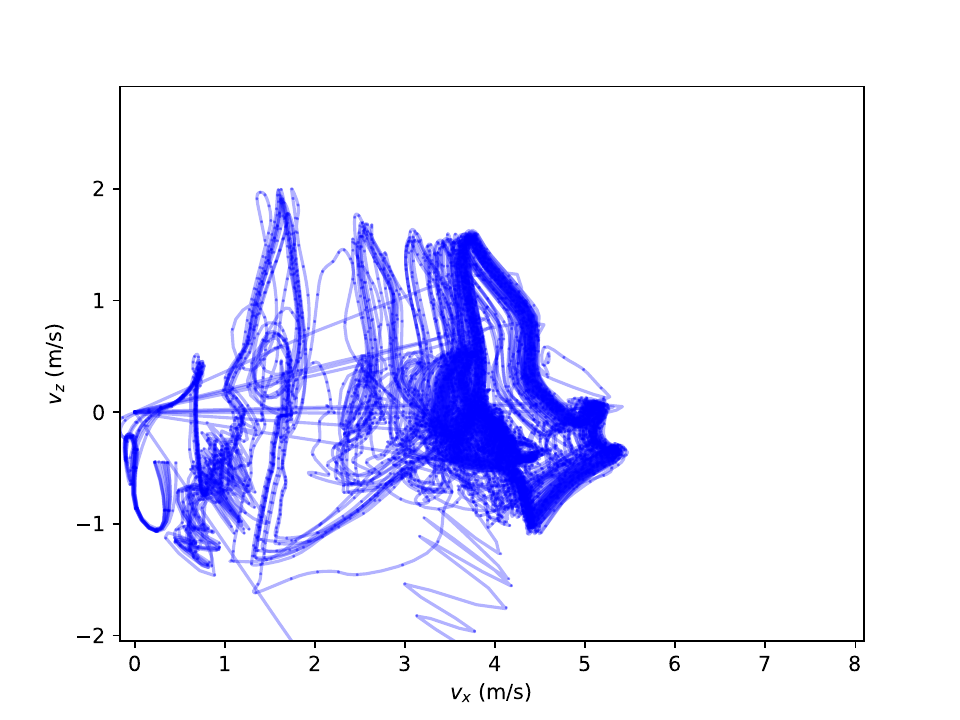}
    \end{subfigure}
    \hspace{-0.3cm}  
    \begin{subfigure}[b]{0.17\textwidth}
        \includegraphics[width=\textwidth]{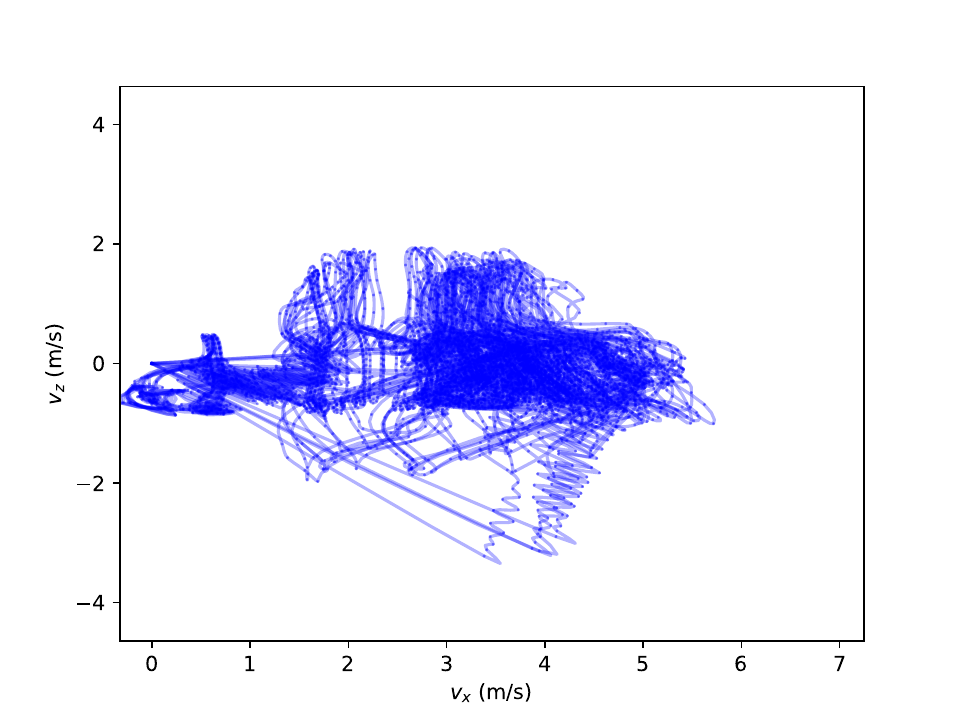}
    \end{subfigure}
    \hspace{-0.3cm}
    \begin{subfigure}[b]{0.17\textwidth}
        \includegraphics[width=\textwidth]{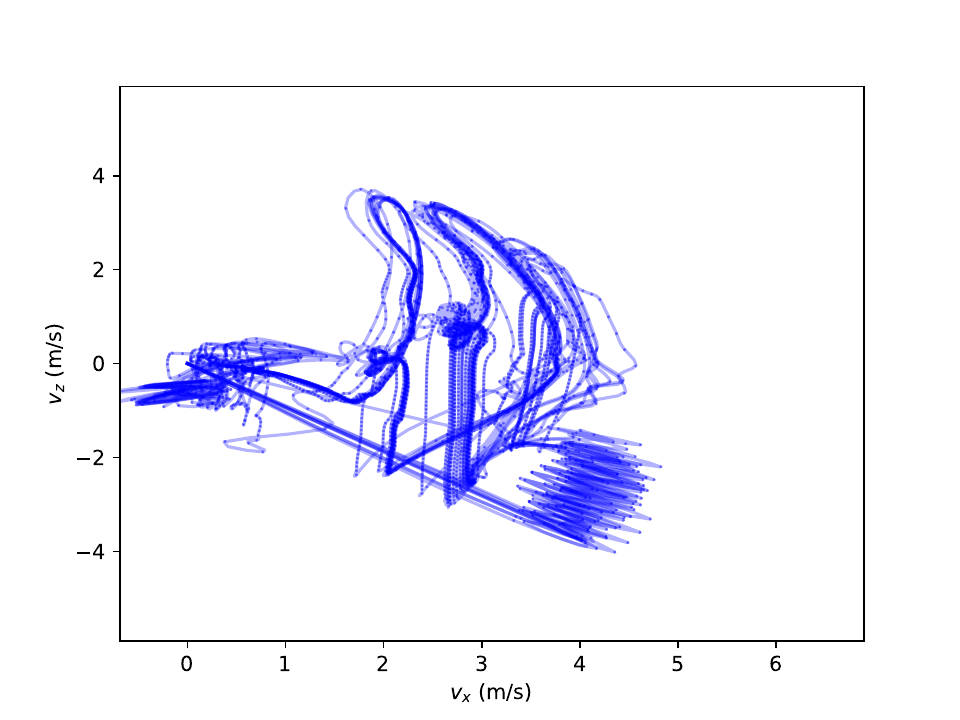}
    \end{subfigure}
    \hspace{-0.3cm}  
    \begin{subfigure}[b]{0.17\textwidth}
        \includegraphics[width=\textwidth]{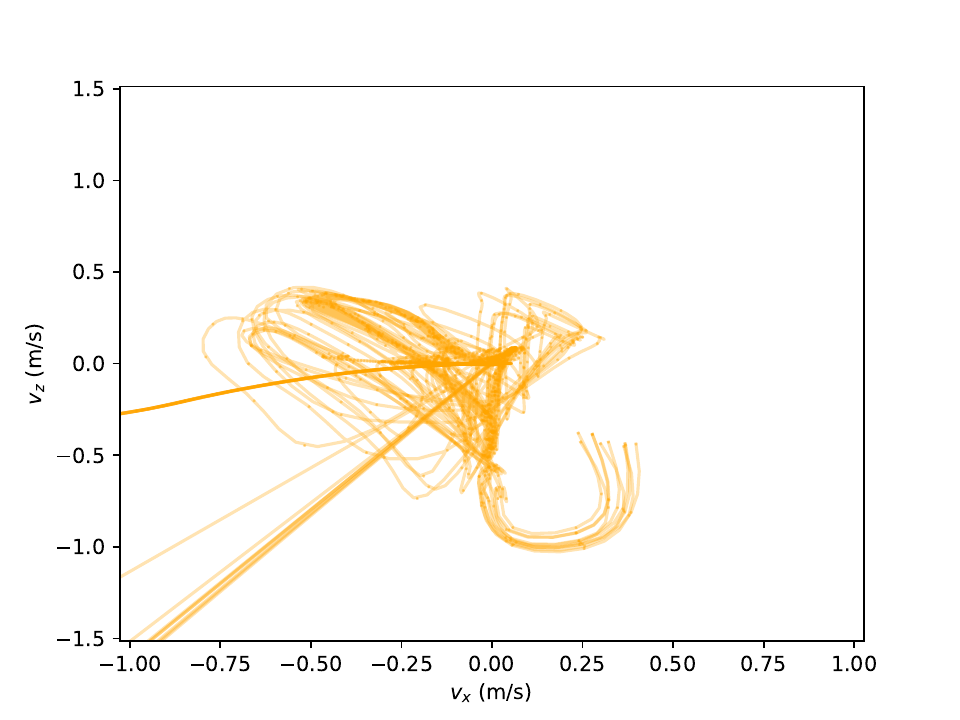}
    \end{subfigure}
    \hspace{-0.3cm}
    \begin{subfigure}[b]{0.17\textwidth}
        \includegraphics[width=\textwidth]{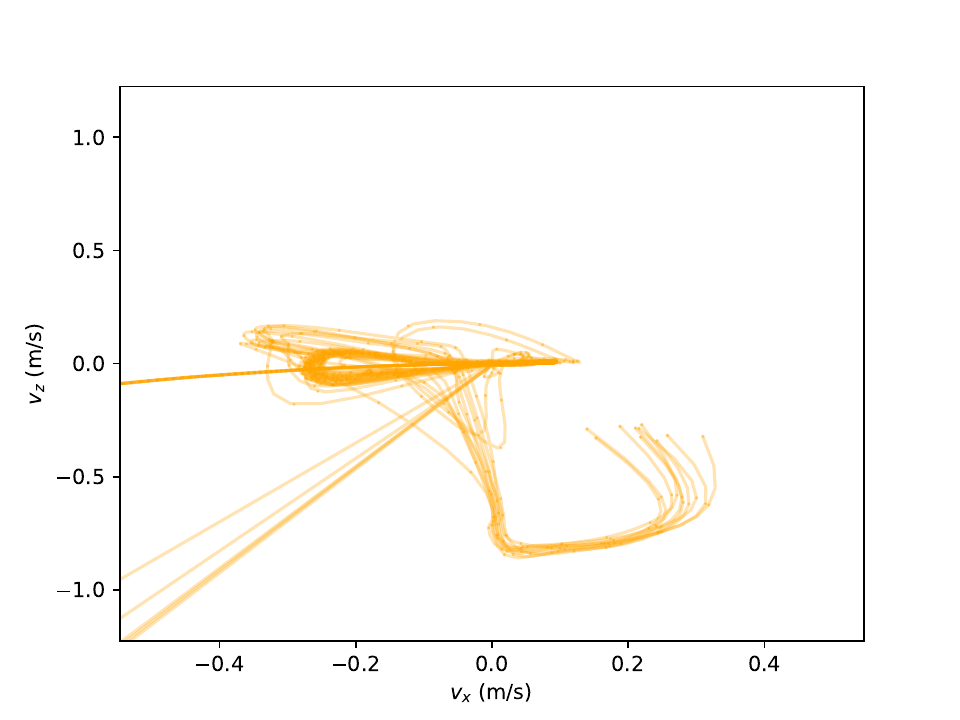}
    \end{subfigure}  
    \hspace{-0.3cm}
    \begin{subfigure}[b]{0.17\textwidth}
        \includegraphics[width=\textwidth]{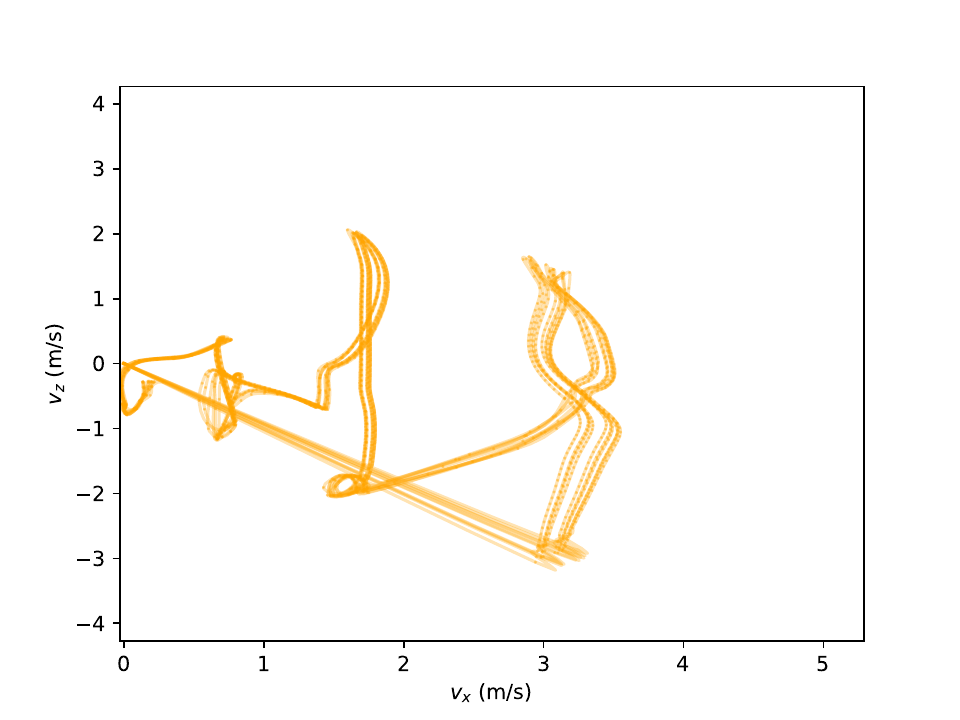}
    \end{subfigure}
        \\  
    \begin{subfigure}[b]{0.17\textwidth}
        \includegraphics[width=\textwidth]{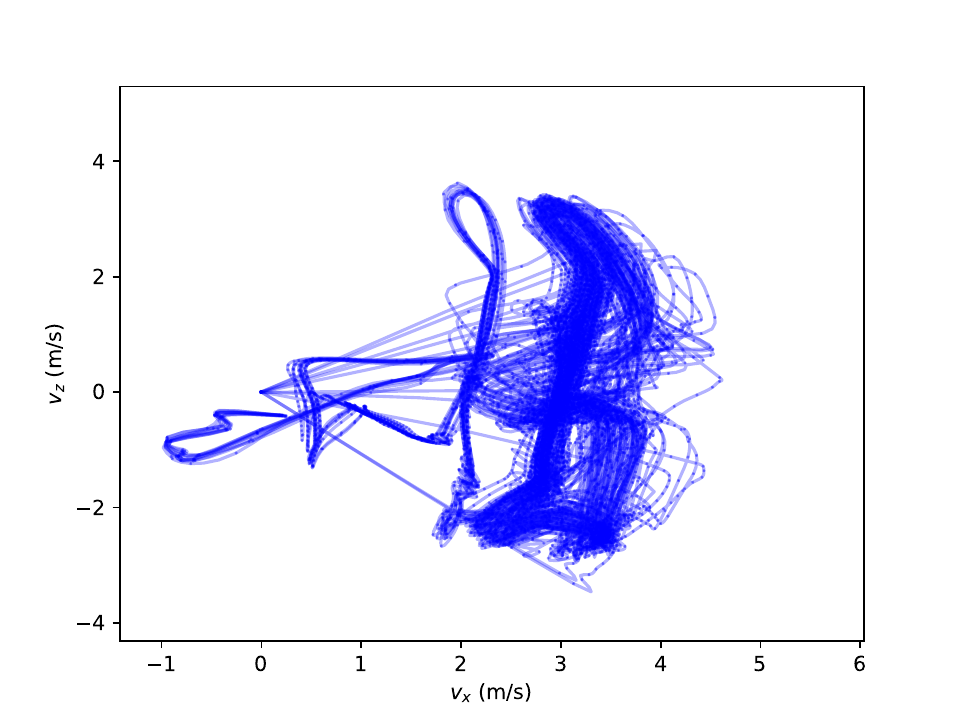}
    \end{subfigure}
        \hspace{-0.3cm}  
    \begin{subfigure}[b]{0.17\textwidth}
        \includegraphics[width=\textwidth]{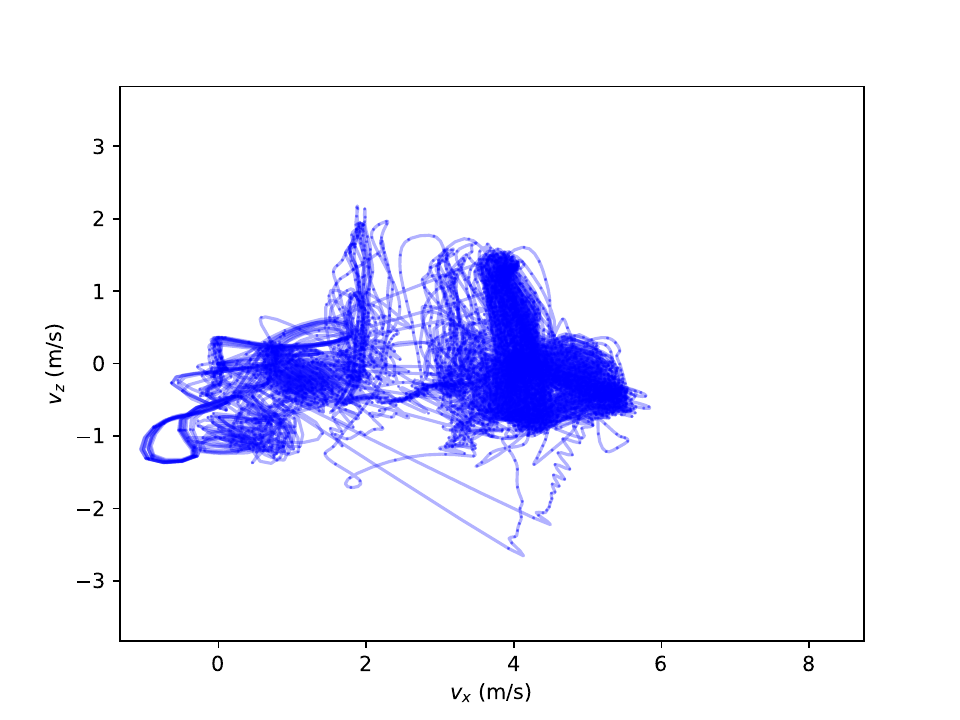}
    \end{subfigure}
        \hspace{-0.3cm}
    \begin{subfigure}[b]{0.17\textwidth}
        \includegraphics[width=\textwidth]{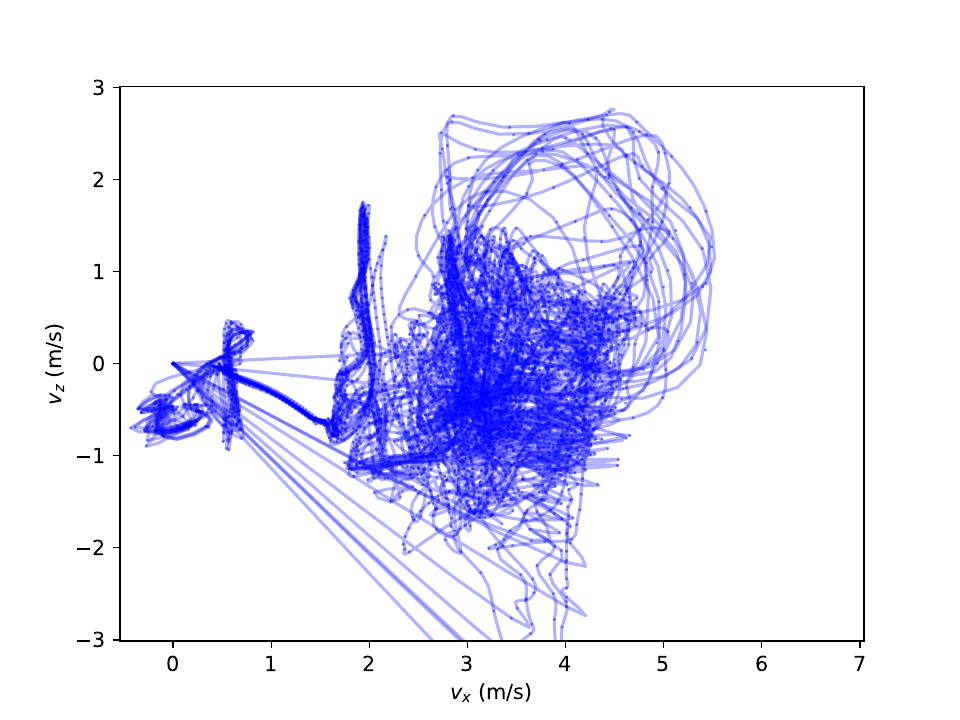}
    \end{subfigure}
    \hspace{-0.3cm}
    \begin{subfigure}[b]{0.17\textwidth}
        \includegraphics[width=\textwidth]{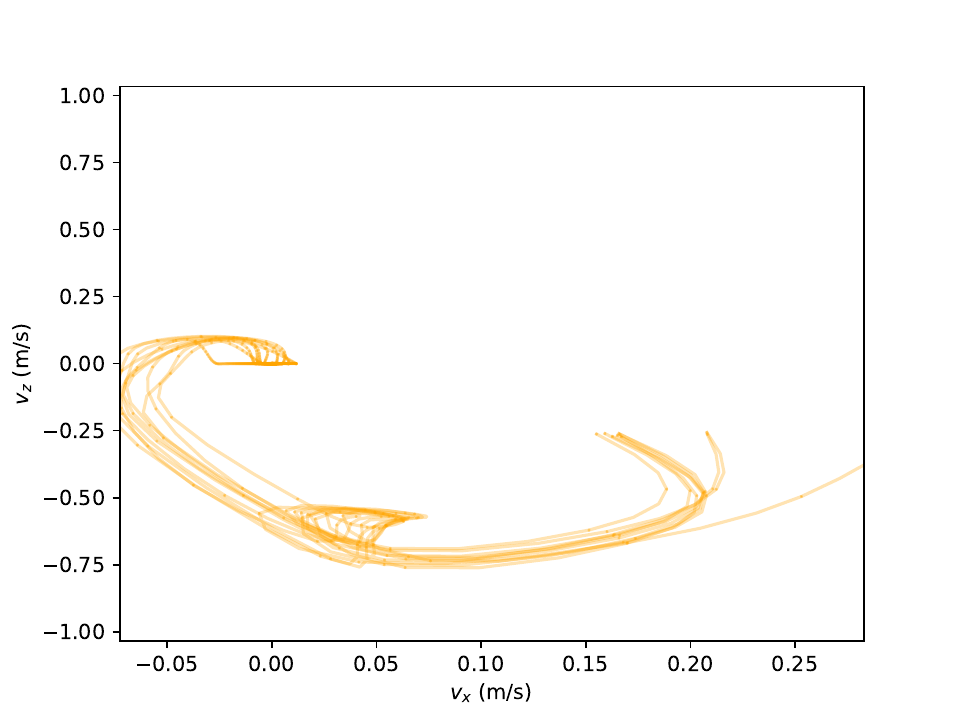}
    \end{subfigure}
    \hspace{-0.3cm}
    \begin{subfigure}[b]{0.17\textwidth}
        \includegraphics[width=\textwidth]{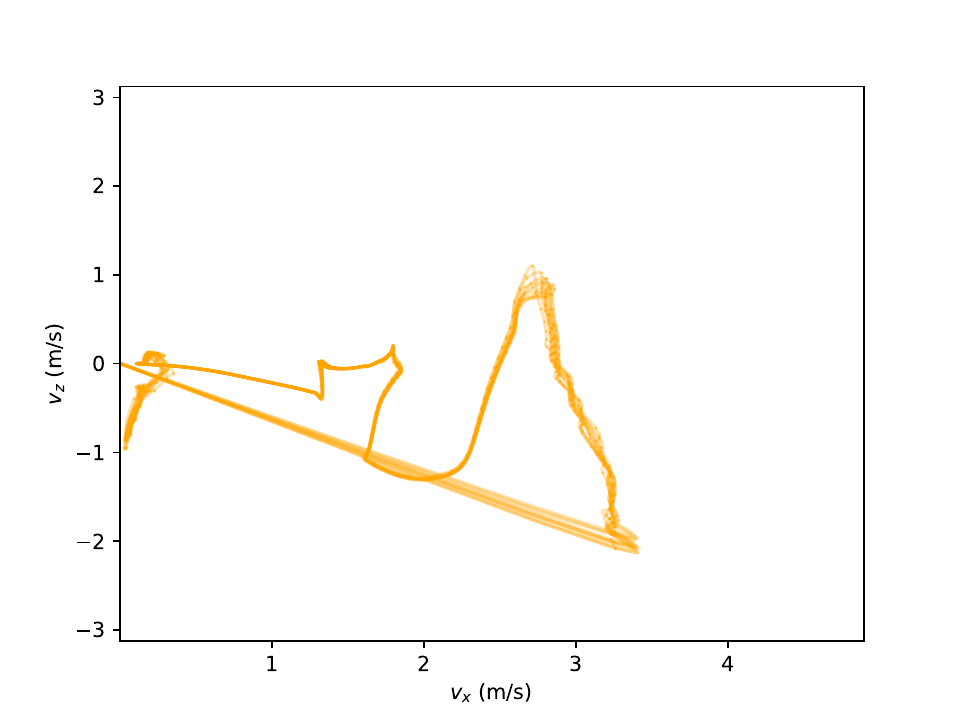}
    \end{subfigure}  
    \hspace{-0.3cm}
    \begin{subfigure}[b]{0.17\textwidth}
        \includegraphics[width=\textwidth]{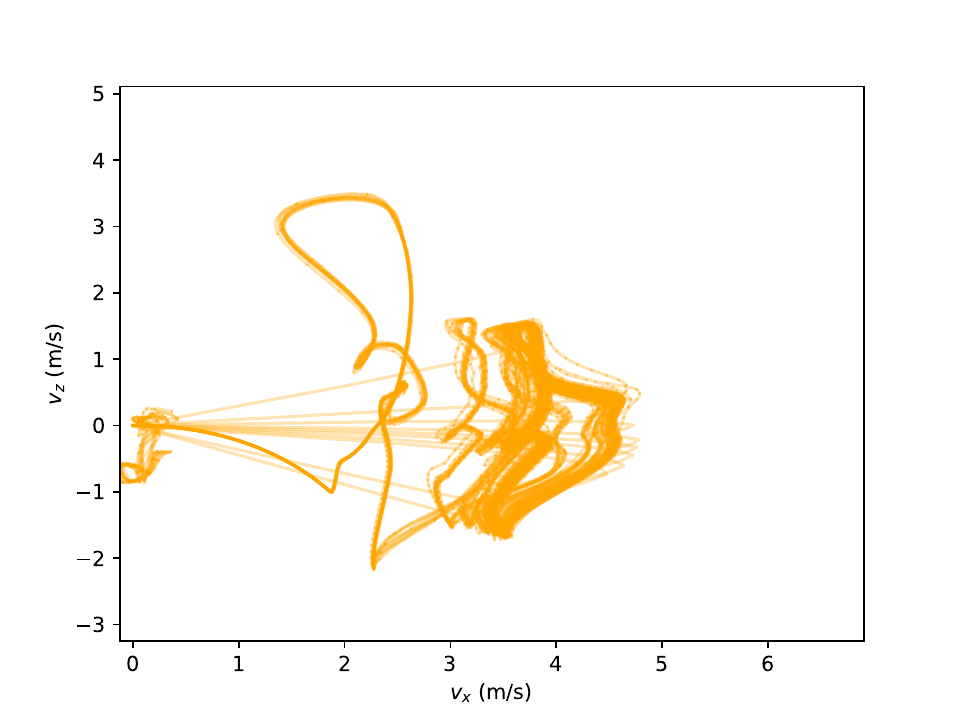}
    \end{subfigure}
    \\
    {\centering
        \begin{subfigure}[b]{0.17\textwidth}
        \includegraphics[width=\textwidth]{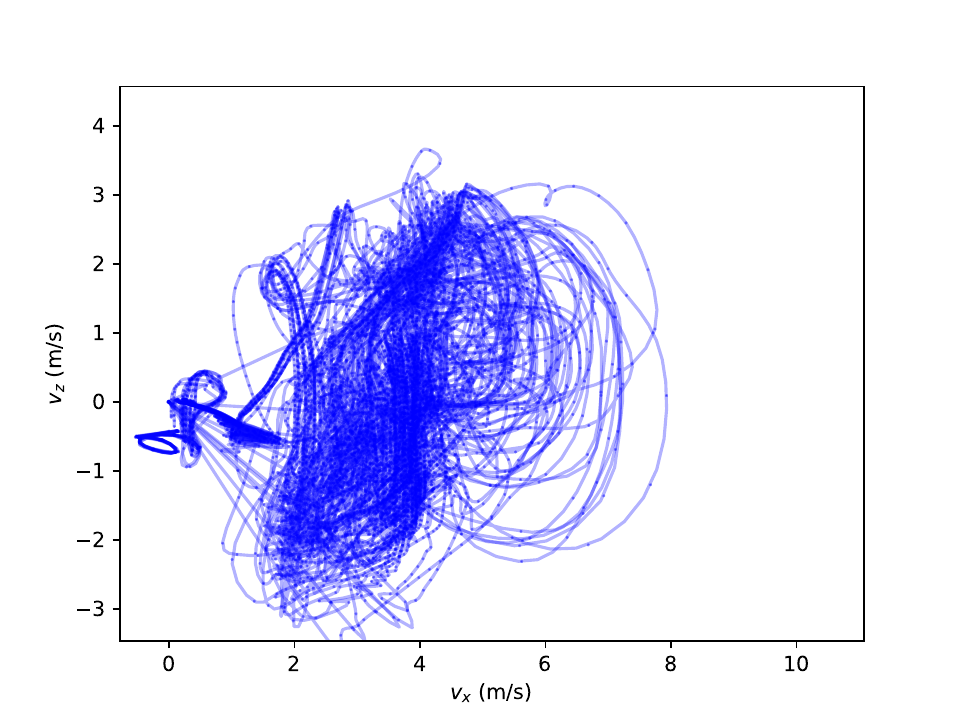}
    \end{subfigure}
    \hspace{-0.3cm}
    \begin{subfigure}[b]{0.17\textwidth}
        \includegraphics[width=\textwidth]{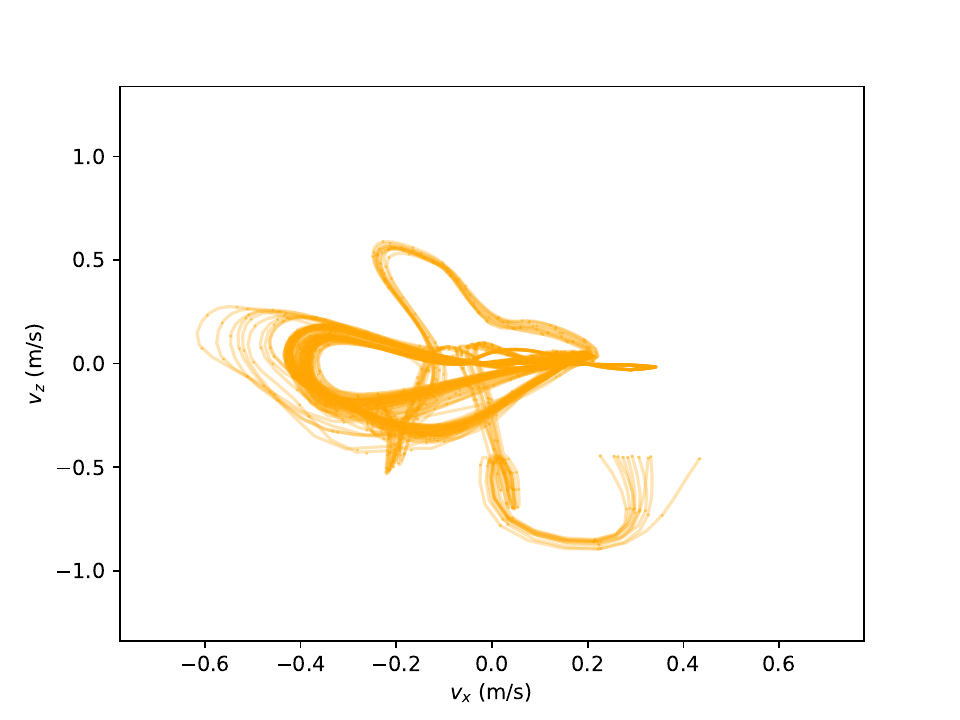}
    \end{subfigure}
    }
    \caption{Walker Trajectory Plots, $x$ axis is $x$ velocity, $y$ axis is $y$ velocity. Blue are PPO agents, orange are PARL agents.}  
    \label{fig:4}
\end{figure*}

    \begin{figure*}
    \centering
    \begin{subfigure}[b]{0.17\textwidth}
        \includegraphics[width=\textwidth]{Ant-v4_ppo_paper_0_pos.pdf}
    \end{subfigure}
    \hspace{-0.3cm}
    \begin{subfigure}[b]{0.17\textwidth}
        \includegraphics[width=\textwidth]{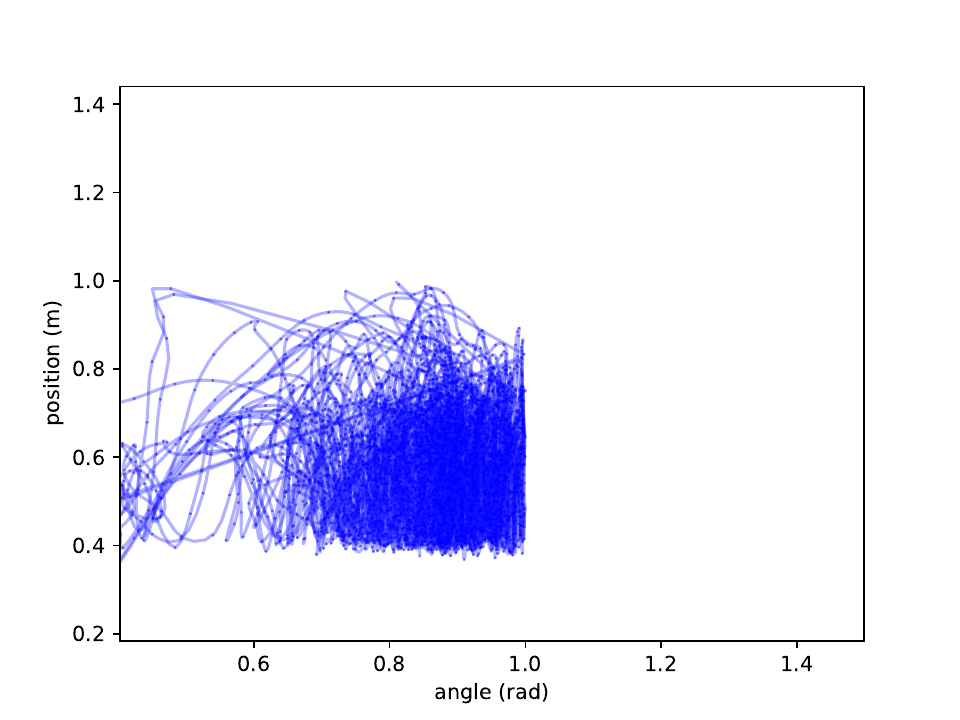}
    \end{subfigure}
    \hspace{-0.3cm}
    \begin{subfigure}[b]{0.17\textwidth}
        \includegraphics[width=\textwidth]{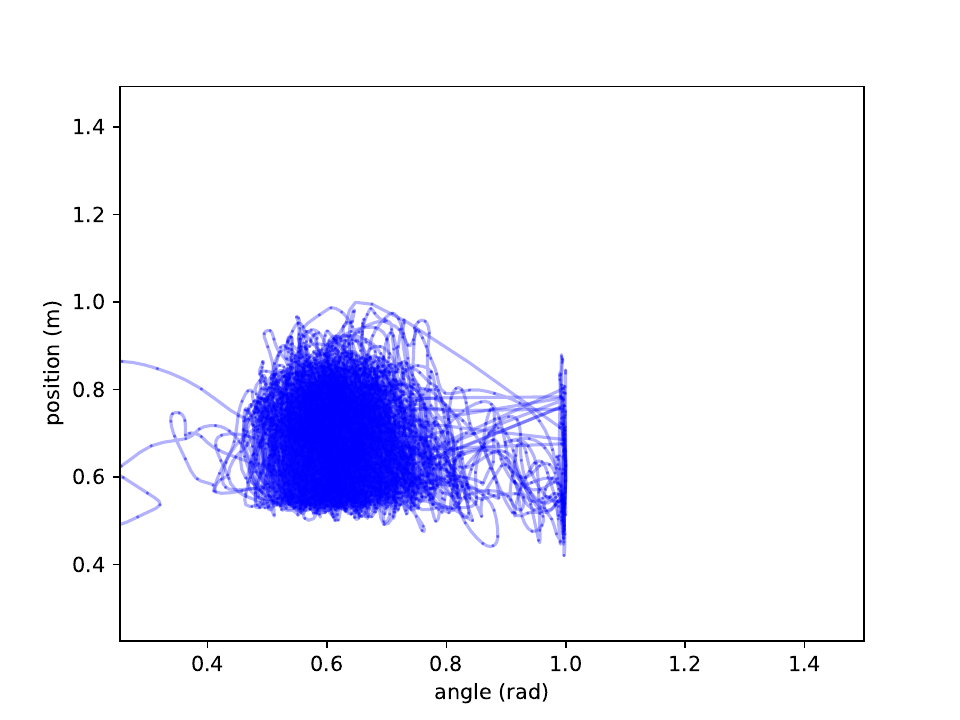}
    \end{subfigure}
    \hspace{-0.3cm}
    \begin{subfigure}[b]{0.17\textwidth}
        \includegraphics[width=\textwidth]{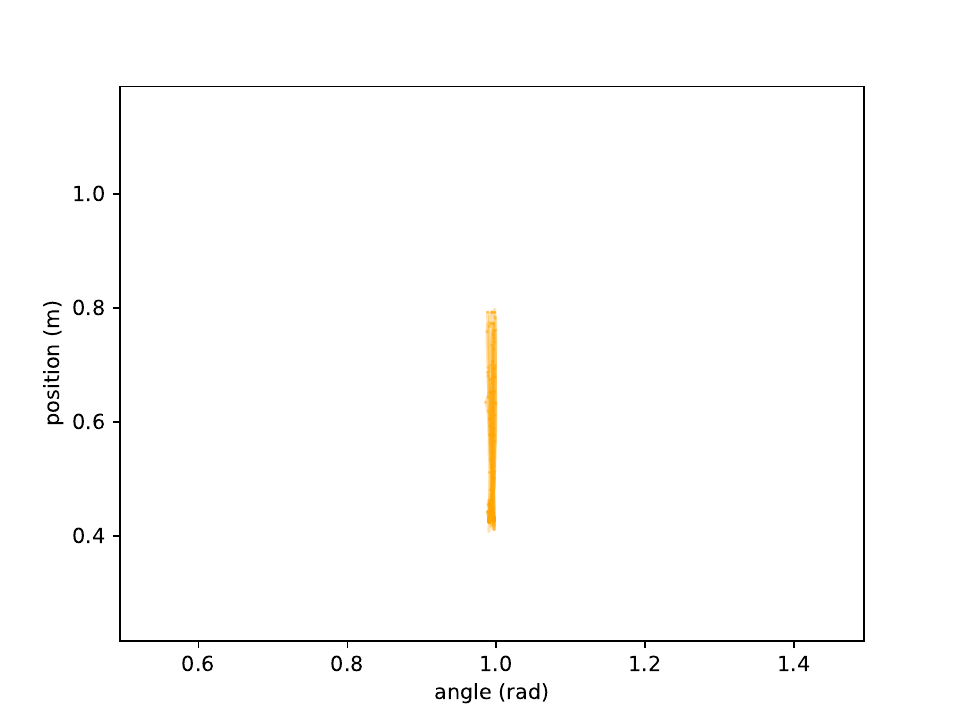}
    \end{subfigure}
    \hspace{-0.3cm}
    \begin{subfigure}[b]{0.17\textwidth}
        \includegraphics[width=\textwidth]{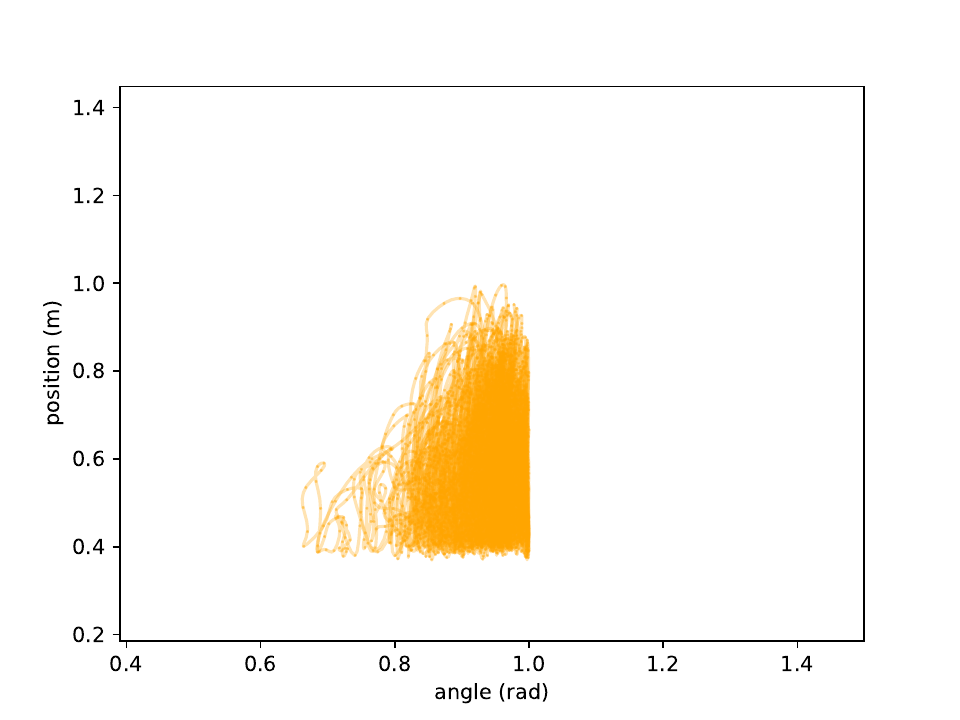}
    \end{subfigure}  
    \hspace{-0.3cm}
    \begin{subfigure}[b]{0.17\textwidth}
        \includegraphics[width=\textwidth]{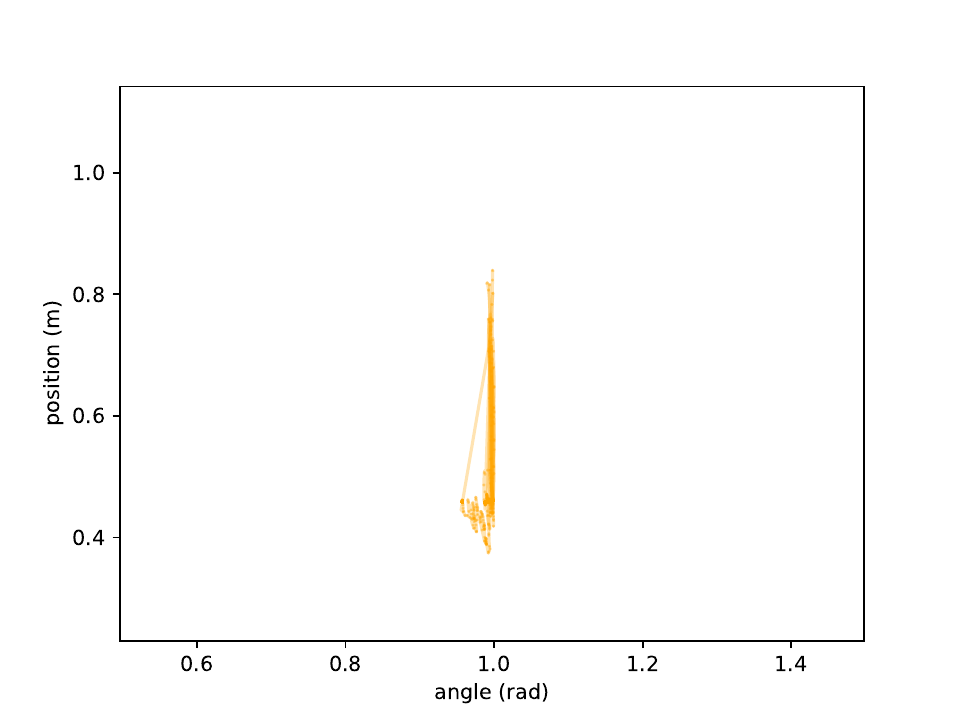}
    \end{subfigure}
    \\
    \begin{subfigure}[b]{0.17\textwidth}
        \includegraphics[width=\textwidth]{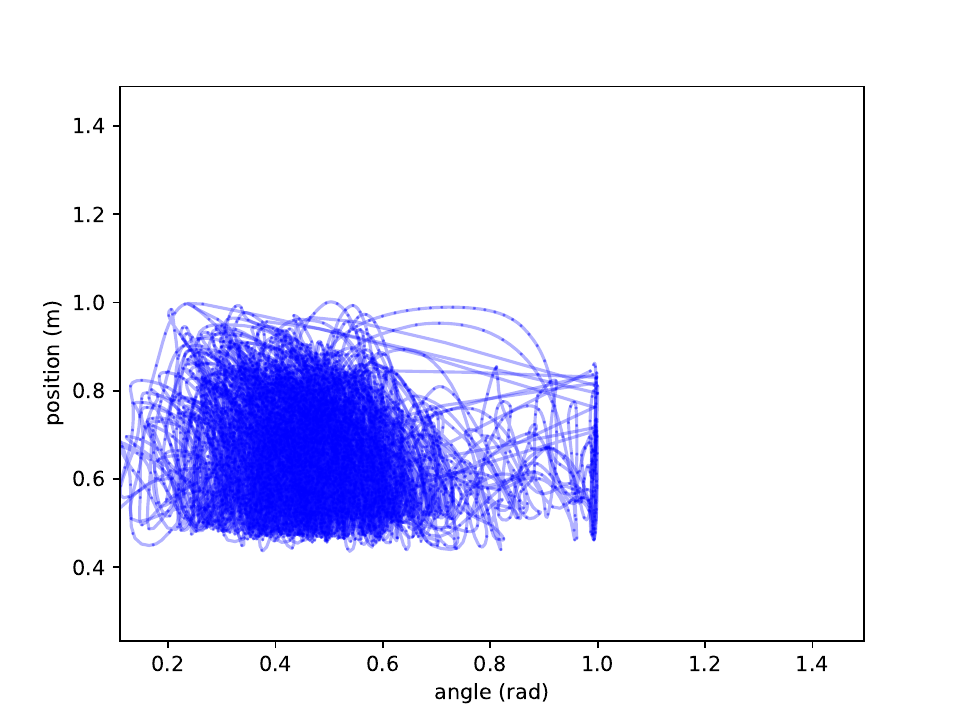}
    \end{subfigure}
    \hspace{-0.3cm}  
    \begin{subfigure}[b]{0.17\textwidth}
        \includegraphics[width=\textwidth]{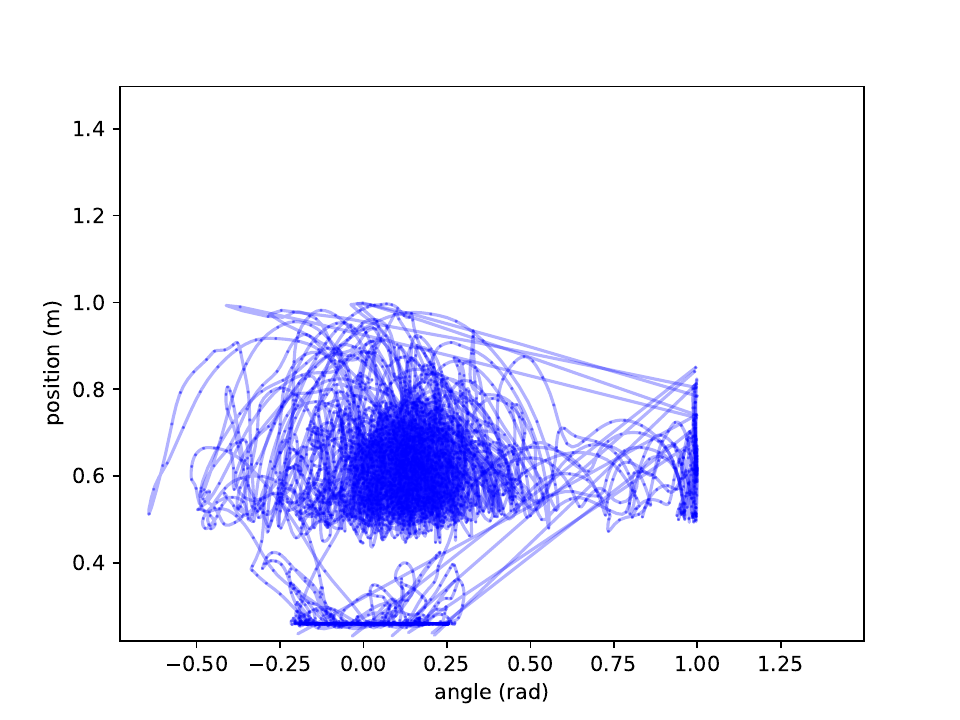}
    \end{subfigure}
    \hspace{-0.3cm}
    \begin{subfigure}[b]{0.17\textwidth}
        \includegraphics[width=\textwidth]{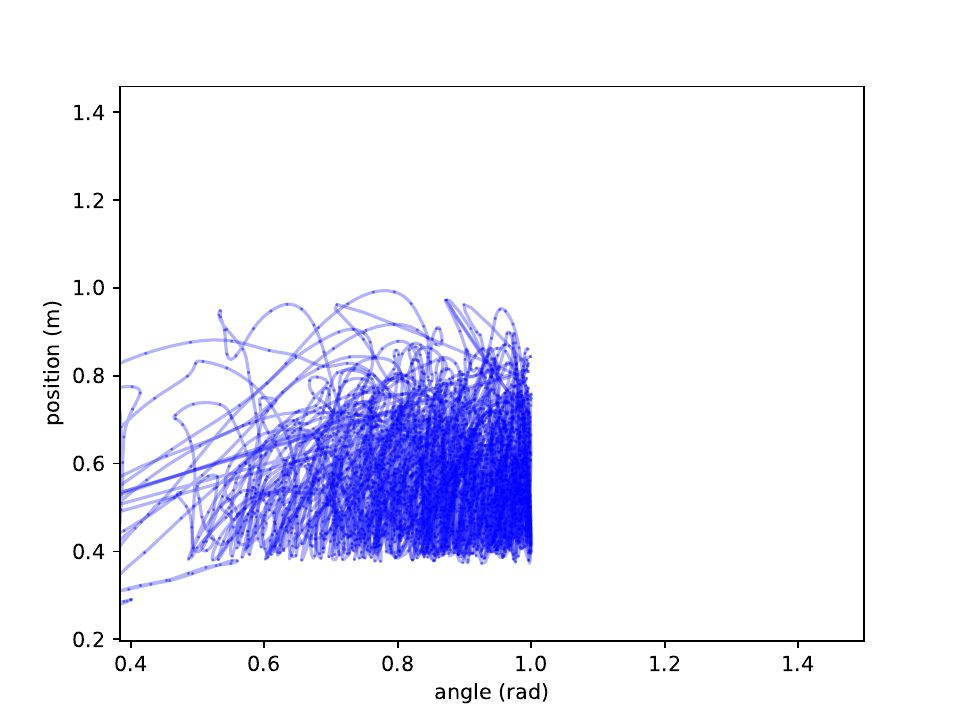}
    \end{subfigure}
    \hspace{-0.3cm}  
    \begin{subfigure}[b]{0.17\textwidth}
        \includegraphics[width=\textwidth]{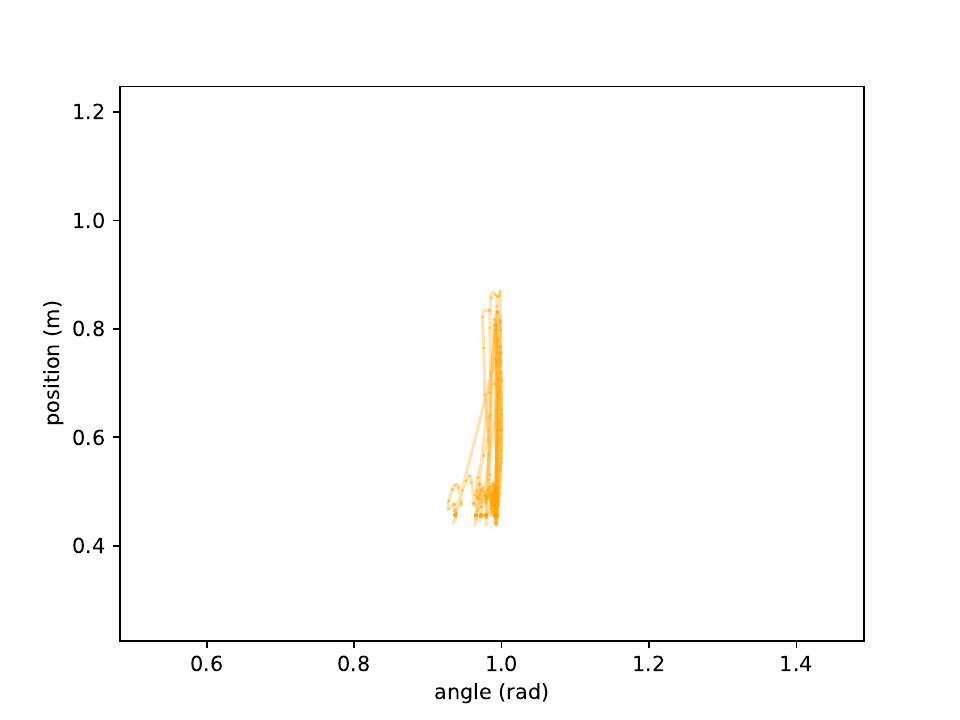}
    \end{subfigure}
    \hspace{-0.3cm}
    \begin{subfigure}[b]{0.17\textwidth}
        \includegraphics[width=\textwidth]{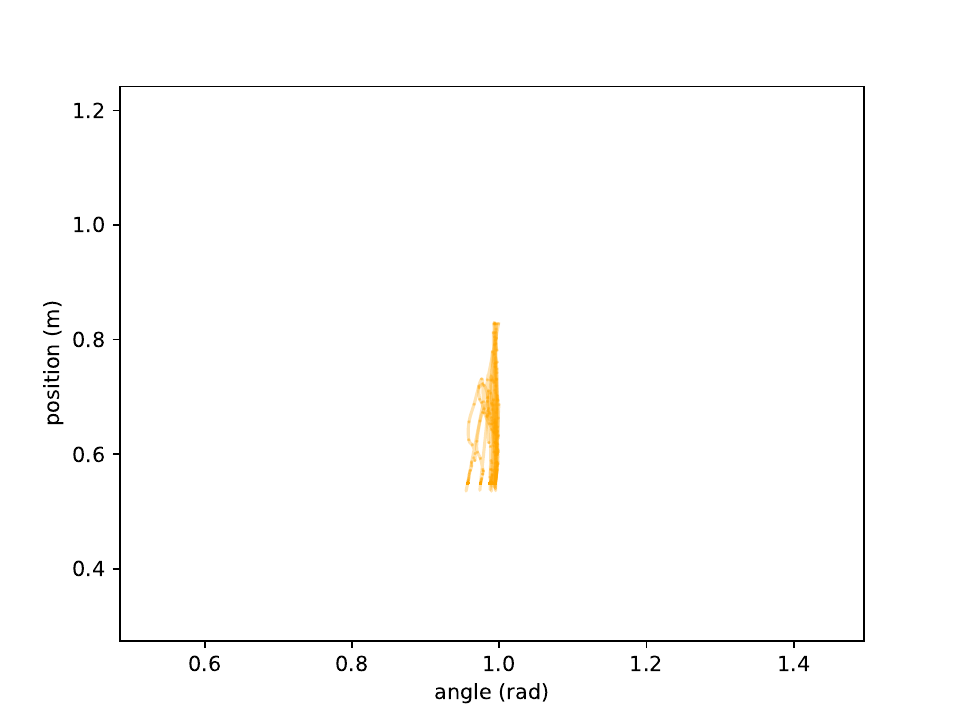}
    \end{subfigure}  
    \hspace{-0.3cm}
    \begin{subfigure}[b]{0.17\textwidth}
        \includegraphics[width=\textwidth]{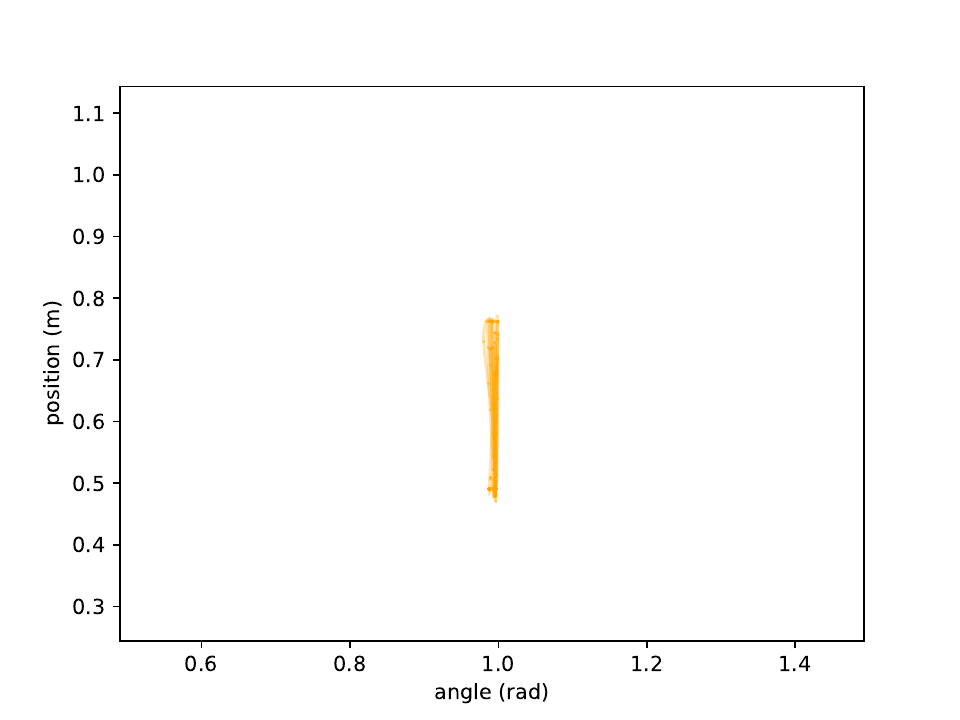}
    \end{subfigure}
        \\  
    \begin{subfigure}[b]{0.17\textwidth}
        \includegraphics[width=\textwidth]{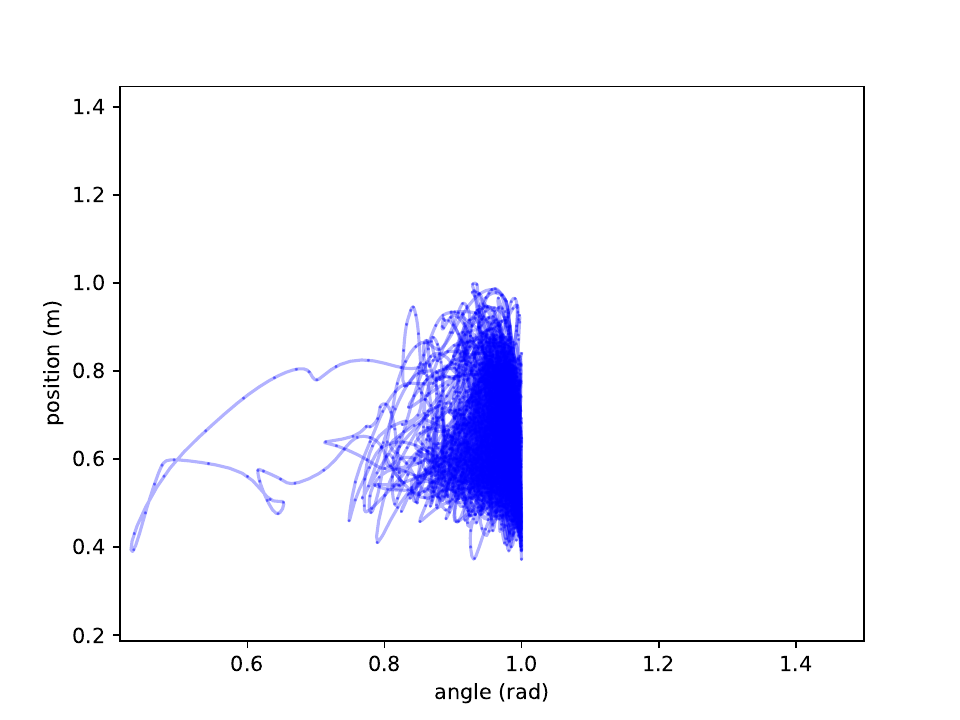}
    \end{subfigure}
        \hspace{-0.3cm}  
    \begin{subfigure}[b]{0.17\textwidth}
        \includegraphics[width=\textwidth]{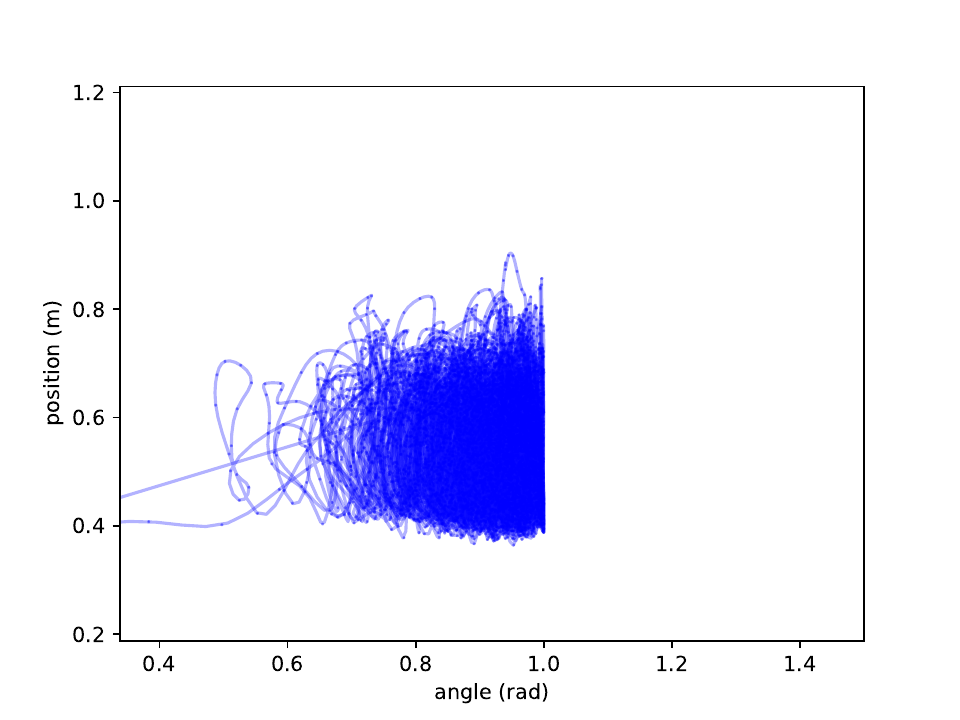}
    \end{subfigure}
        \hspace{-0.3cm}
    \begin{subfigure}[b]{0.17\textwidth}
        \includegraphics[width=\textwidth]{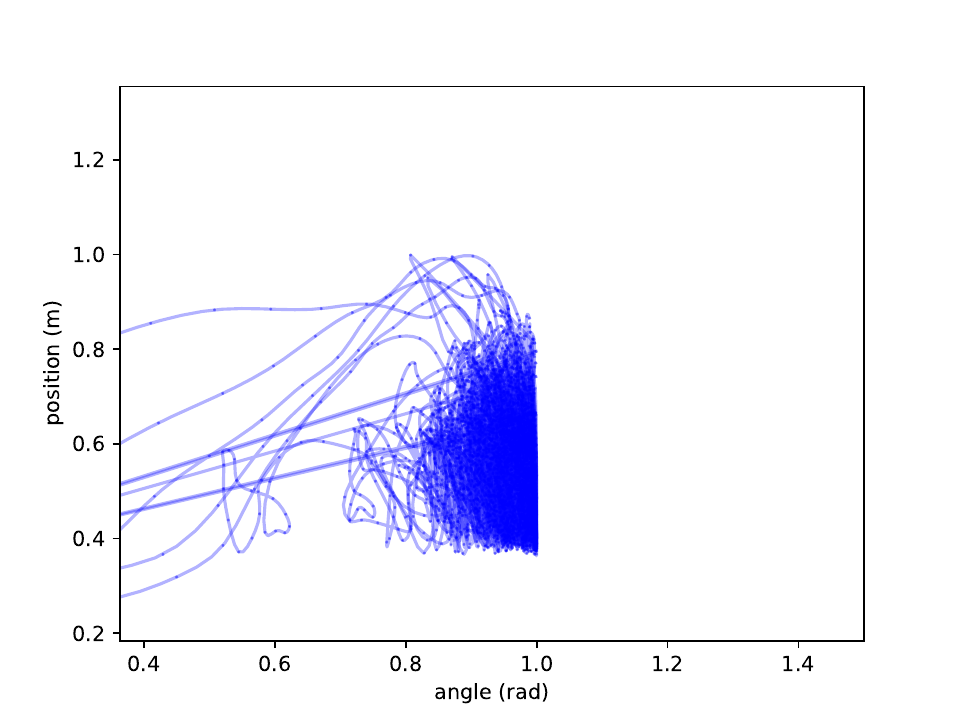}
    \end{subfigure}
    \hspace{-0.3cm}
    \begin{subfigure}[b]{0.17\textwidth}
        \includegraphics[width=\textwidth]{Ant-v4_pappo_paper_c05_0_pos.pdf}
    \end{subfigure}
    \hspace{-0.3cm}
    \begin{subfigure}[b]{0.17\textwidth}
        \includegraphics[width=\textwidth]{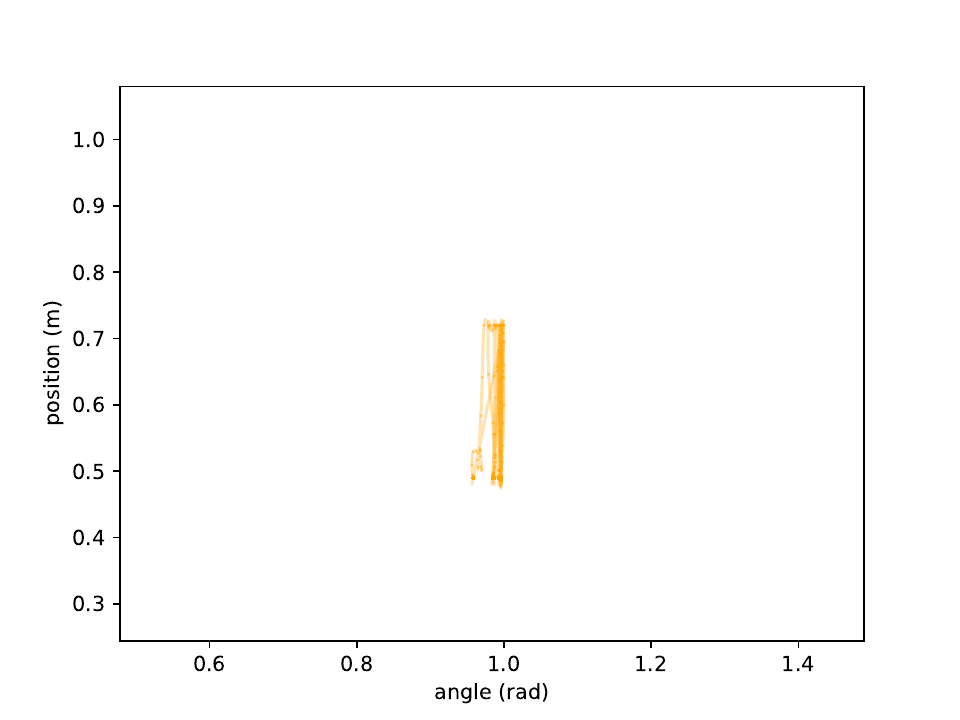}
    \end{subfigure}  
    \hspace{-0.3cm}
    \begin{subfigure}[b]{0.17\textwidth}
        \includegraphics[width=\textwidth]{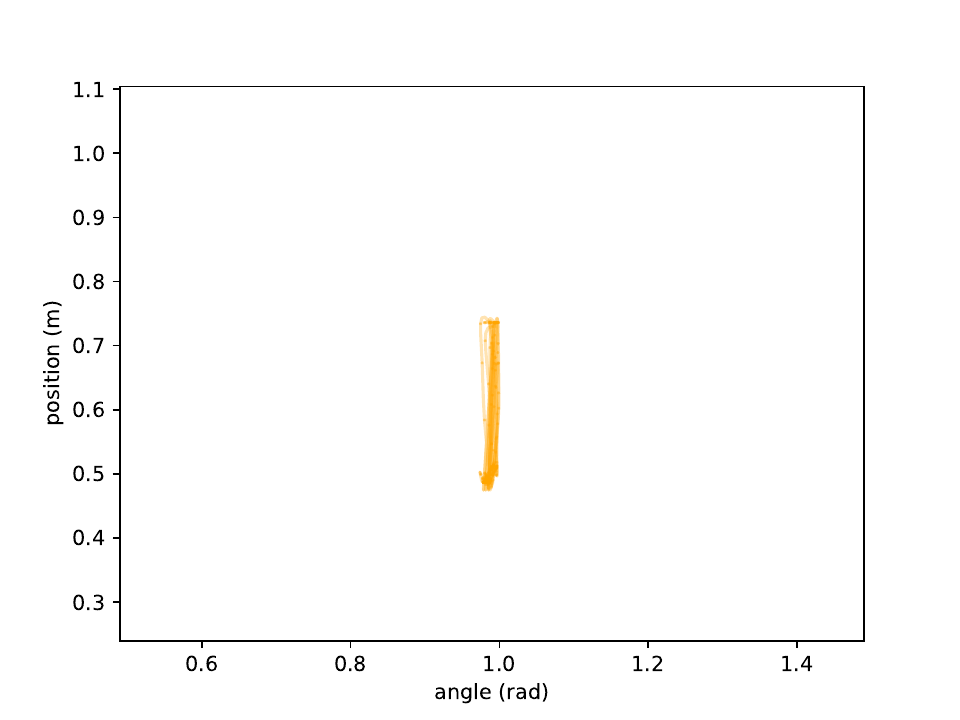}
    \end{subfigure}
    \\
    {\centering
        \begin{subfigure}[b]{0.17\textwidth}
        \includegraphics[width=\textwidth]{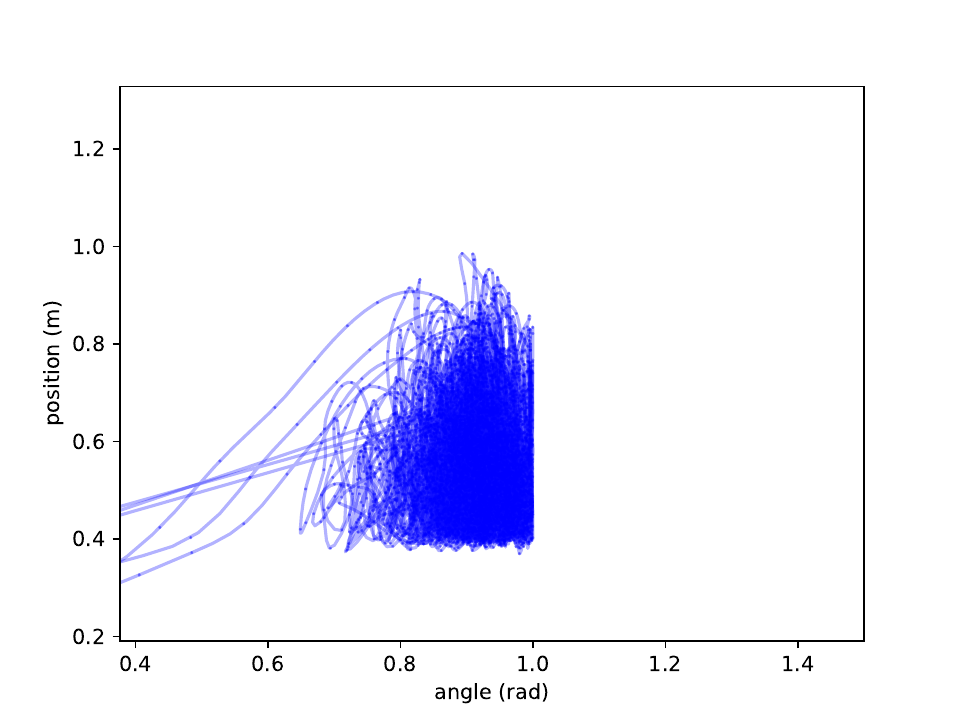}
    \end{subfigure}
    \hspace{-0.3cm}
    \begin{subfigure}[b]{0.17\textwidth}
        \includegraphics[width=\textwidth]{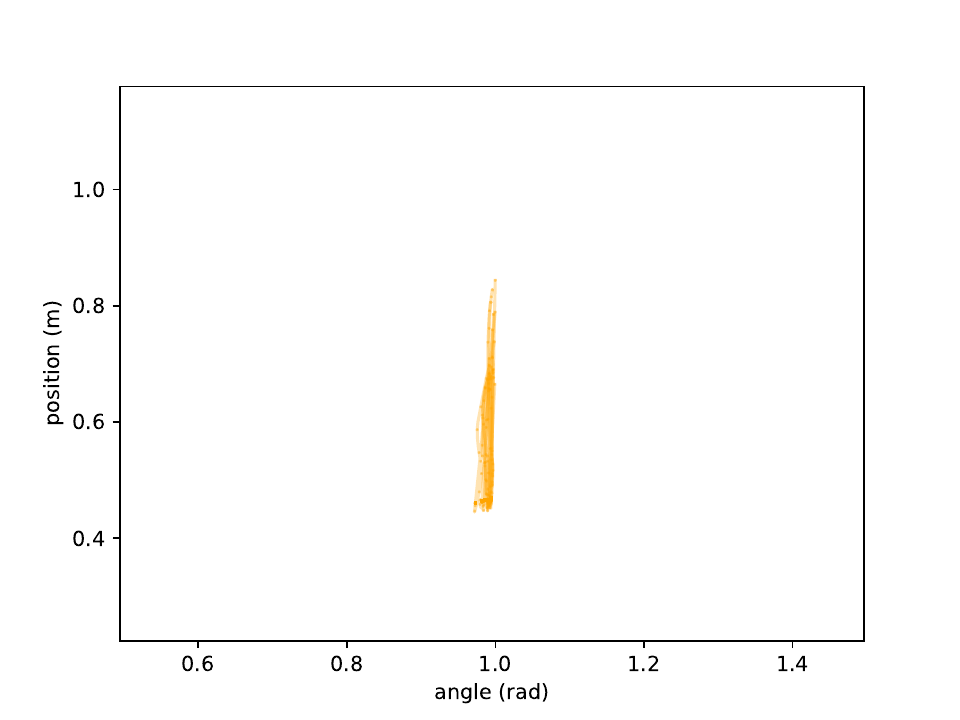}
    \end{subfigure}
    }
    \caption{Ant Trajectory Plots, $x$-axis is torso angle in radians, $y$-axis is $z$ coordinate position of the torso. Blue are PPO agents, orange are PARL agents.}  
    \label{fig:anttraj}
\end{figure*}           

    \begin{figure*}
    \centering
    \begin{subfigure}[b]{0.17\textwidth}
        \includegraphics[width=\textwidth]{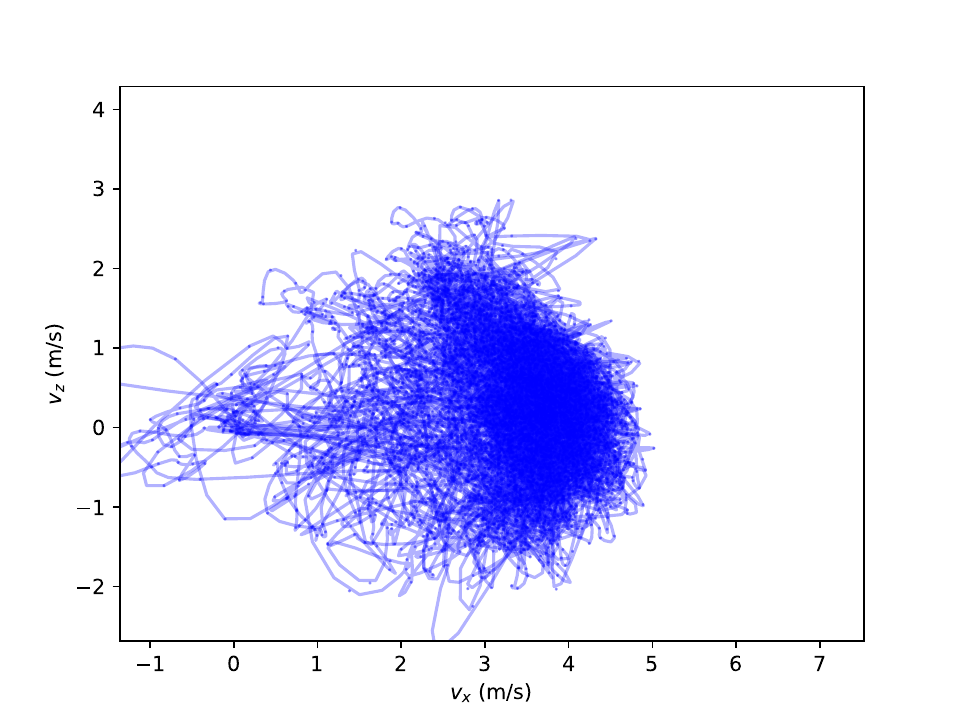}
    \end{subfigure}
    \hspace{-0.3cm}
    \begin{subfigure}[b]{0.17\textwidth}
        \includegraphics[width=\textwidth]{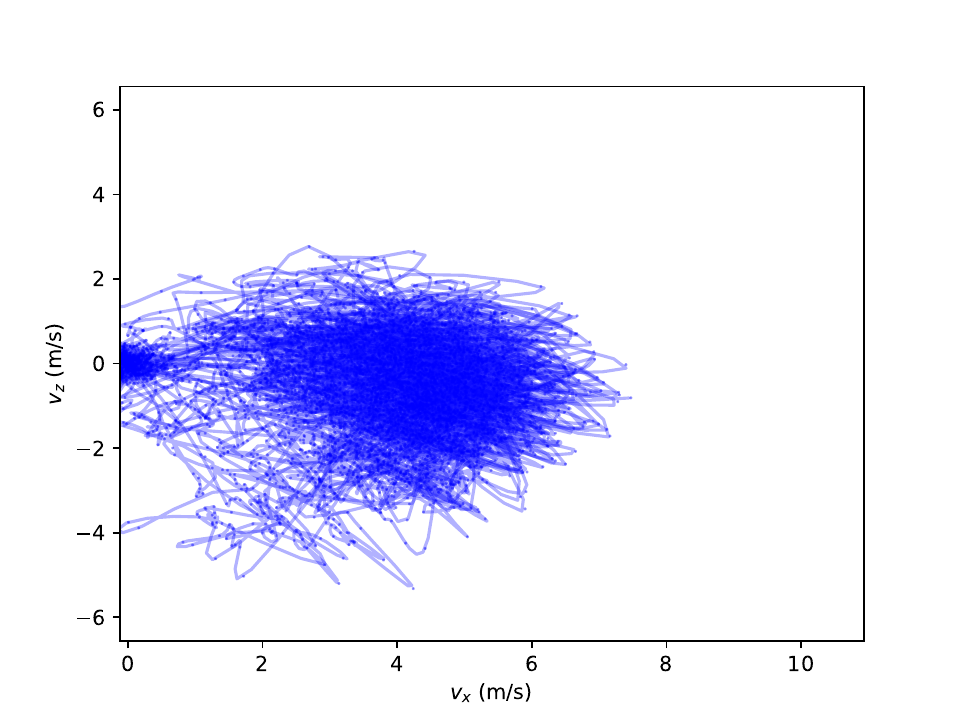}
    \end{subfigure}
    \hspace{-0.3cm}
    \begin{subfigure}[b]{0.17\textwidth}
        \includegraphics[width=\textwidth]{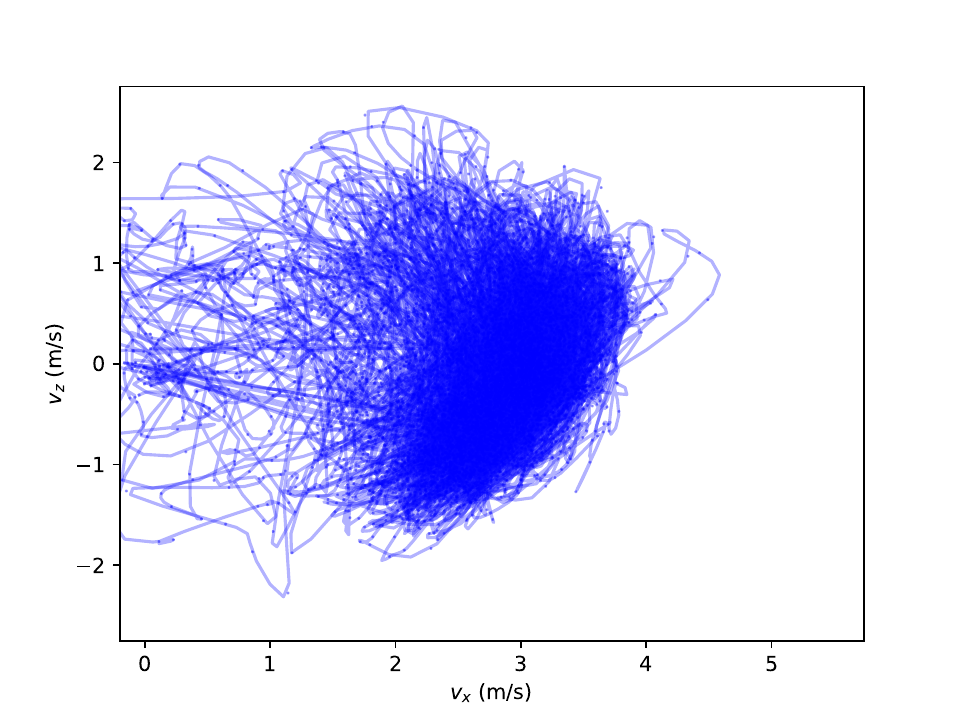}
    \end{subfigure}
    \hspace{-0.3cm}
    \begin{subfigure}[b]{0.17\textwidth}
        \includegraphics[width=\textwidth]{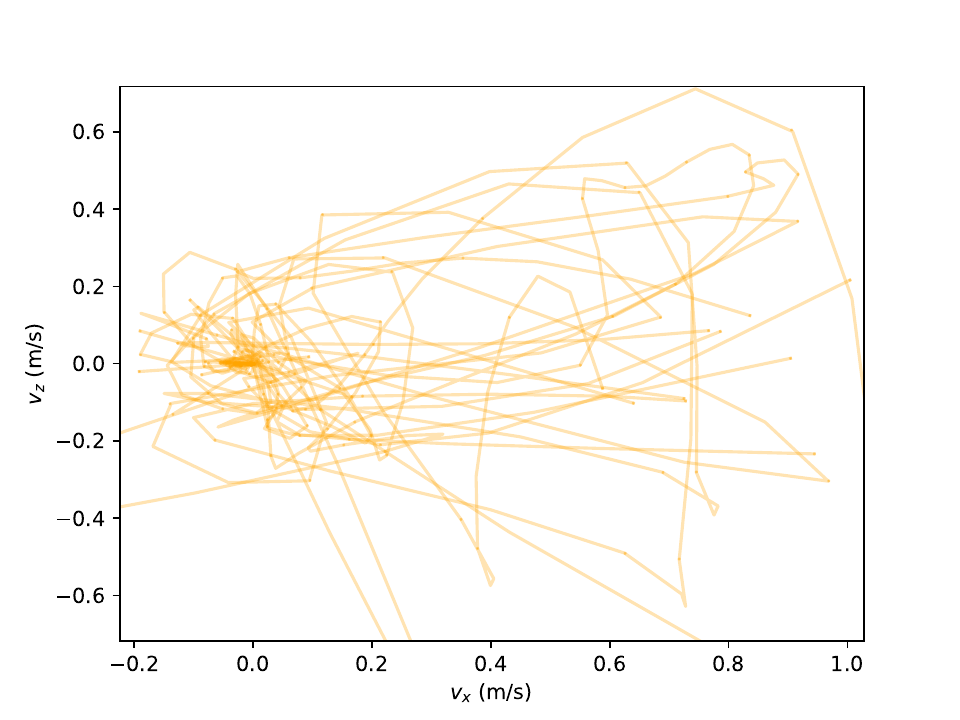}
    \end{subfigure}
    \hspace{-0.3cm}
    \begin{subfigure}[b]{0.17\textwidth}
        \includegraphics[width=\textwidth]{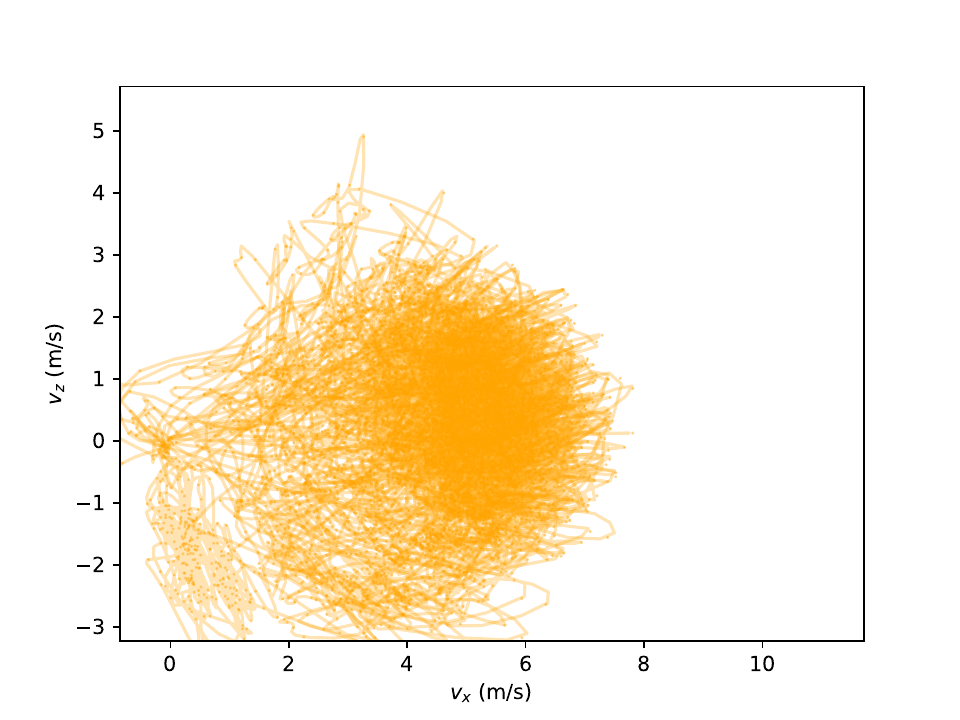}
    \end{subfigure}  
    \hspace{-0.3cm}
    \begin{subfigure}[b]{0.17\textwidth}
        \includegraphics[width=\textwidth]{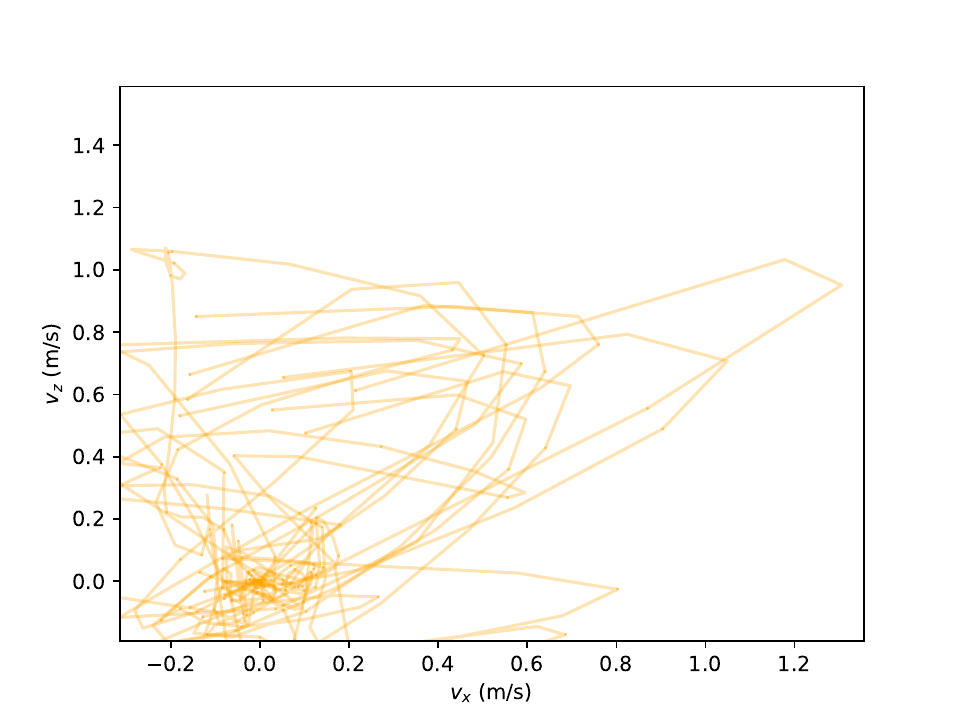}
    \end{subfigure}
    \\
    \begin{subfigure}[b]{0.17\textwidth}
        \includegraphics[width=\textwidth]{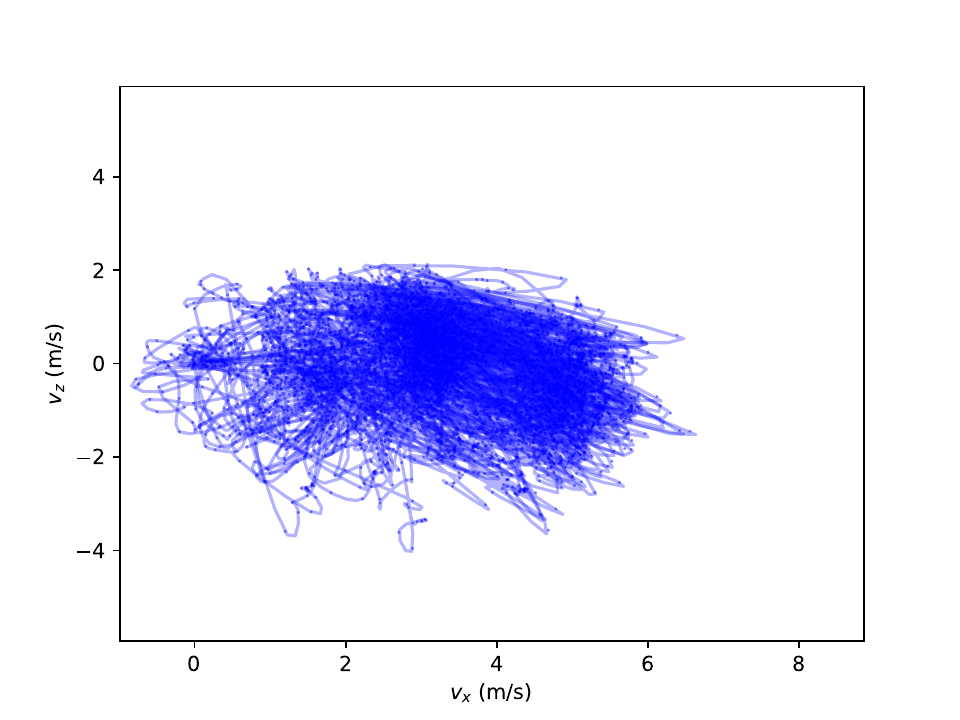}
    \end{subfigure}
    \hspace{-0.3cm}  
    \begin{subfigure}[b]{0.17\textwidth}
        \includegraphics[width=\textwidth]{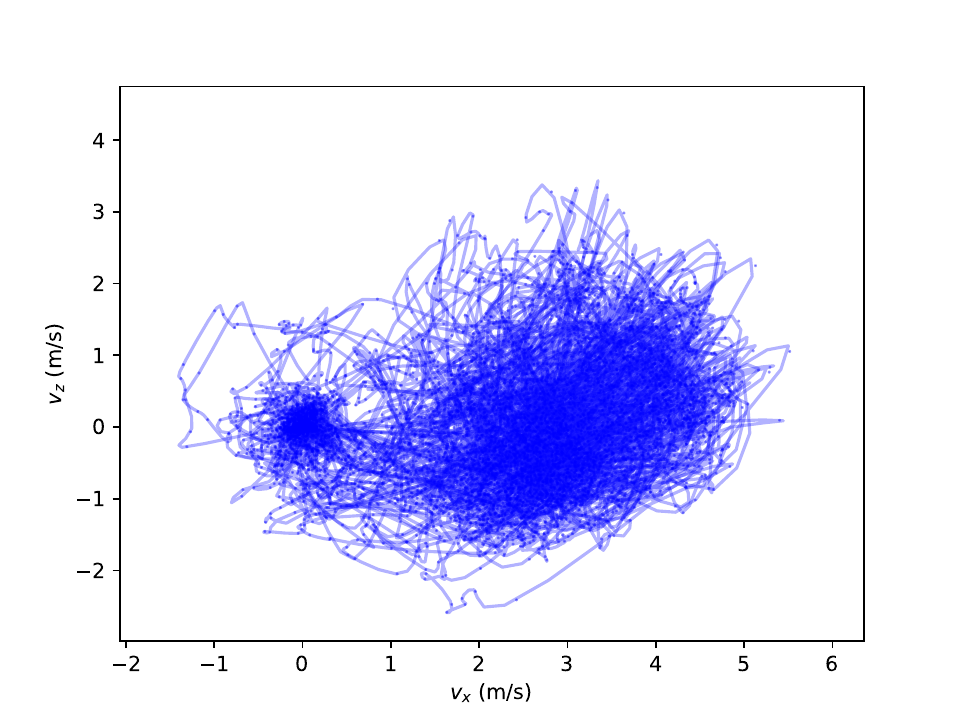}
    \end{subfigure}
    \hspace{-0.3cm}
    \begin{subfigure}[b]{0.17\textwidth}
        \includegraphics[width=\textwidth]{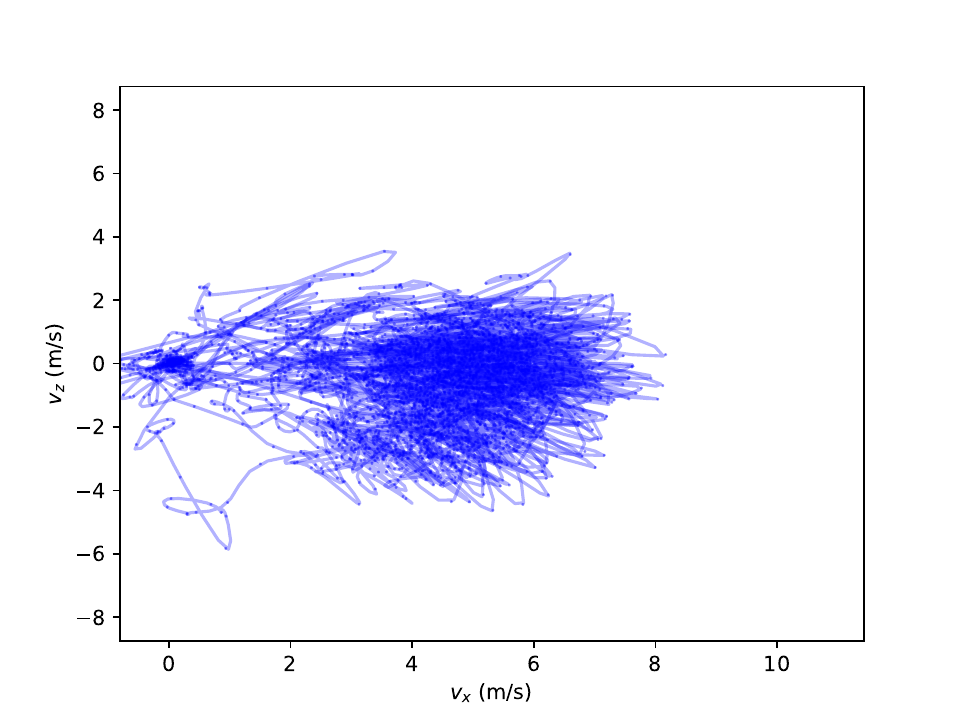}
    \end{subfigure}
    \hspace{-0.3cm}  
    \begin{subfigure}[b]{0.17\textwidth}
        \includegraphics[width=\textwidth]{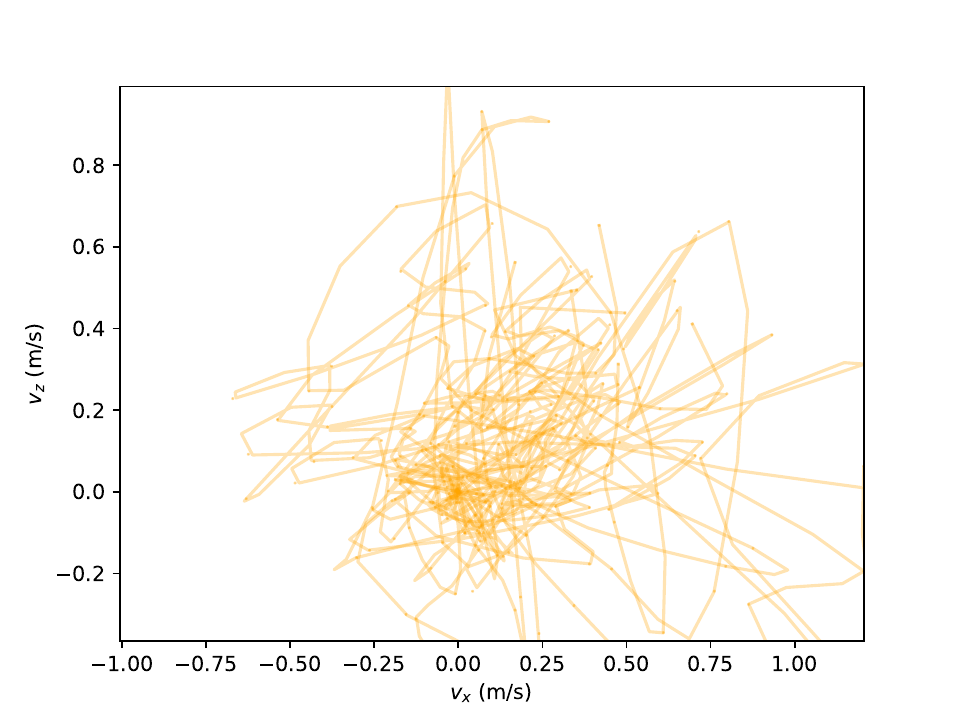}
    \end{subfigure}
    \hspace{-0.3cm}
    \begin{subfigure}[b]{0.17\textwidth}
        \includegraphics[width=\textwidth]{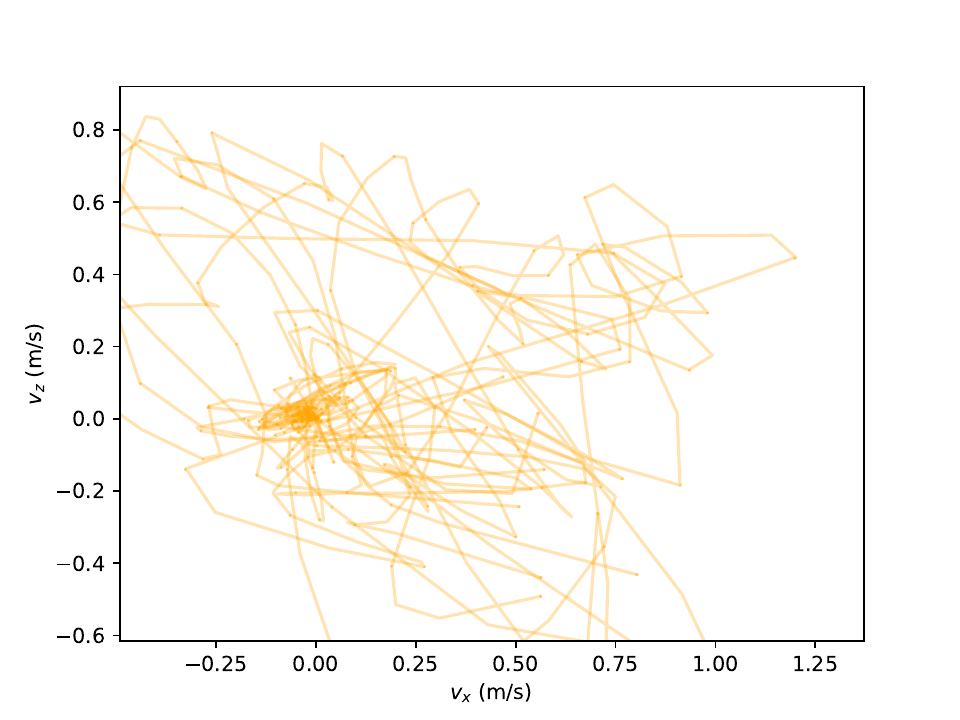}
    \end{subfigure}  
    \hspace{-0.3cm}
    \begin{subfigure}[b]{0.17\textwidth}
        \includegraphics[width=\textwidth]{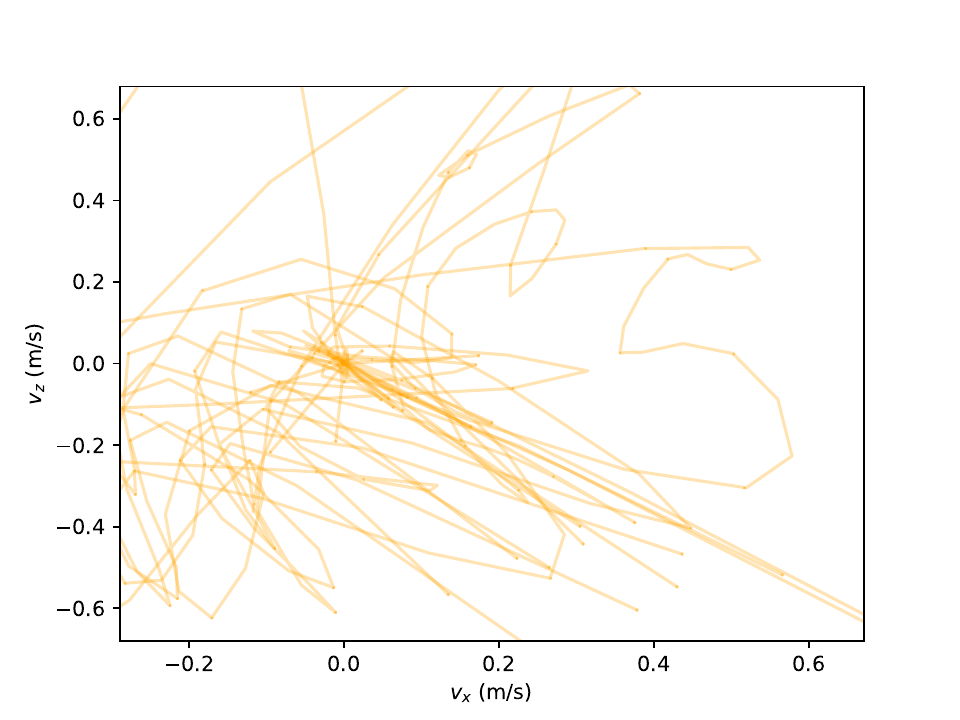}
    \end{subfigure}
        \\  
    \begin{subfigure}[b]{0.17\textwidth}
        \includegraphics[width=\textwidth]{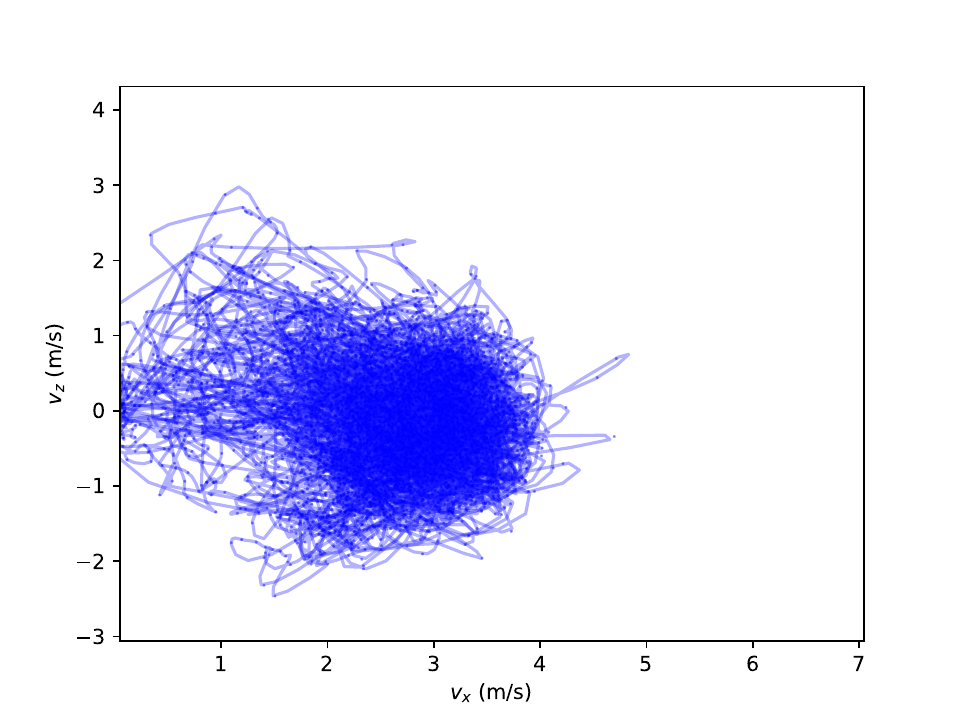}
    \end{subfigure}
        \hspace{-0.3cm}  
    \begin{subfigure}[b]{0.17\textwidth}
        \includegraphics[width=\textwidth]{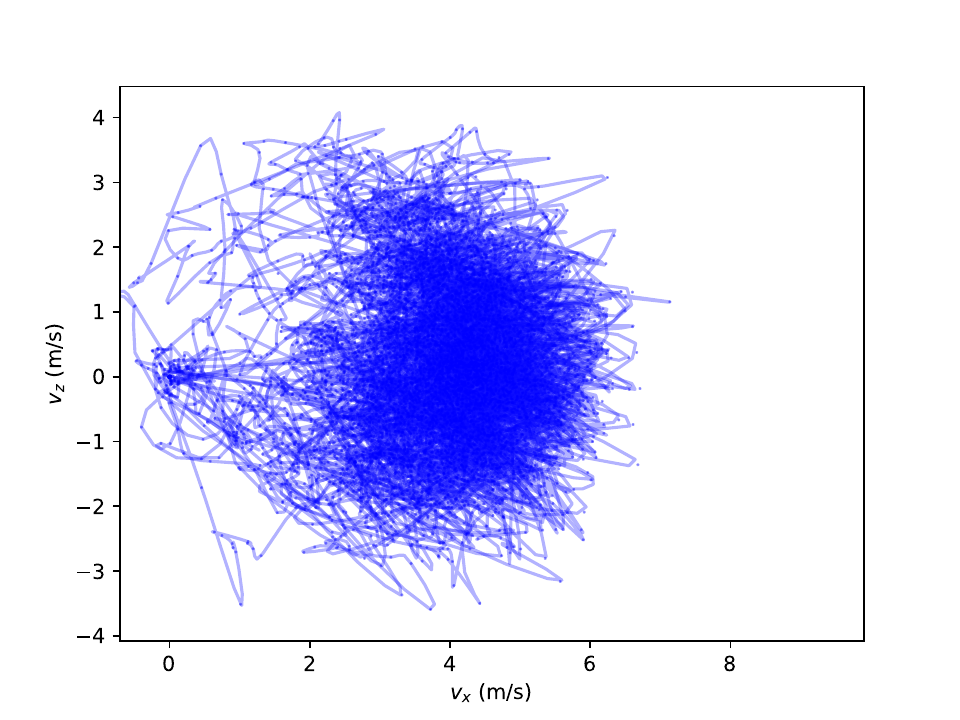}
    \end{subfigure}
        \hspace{-0.3cm}
    \begin{subfigure}[b]{0.17\textwidth}
        \includegraphics[width=\textwidth]{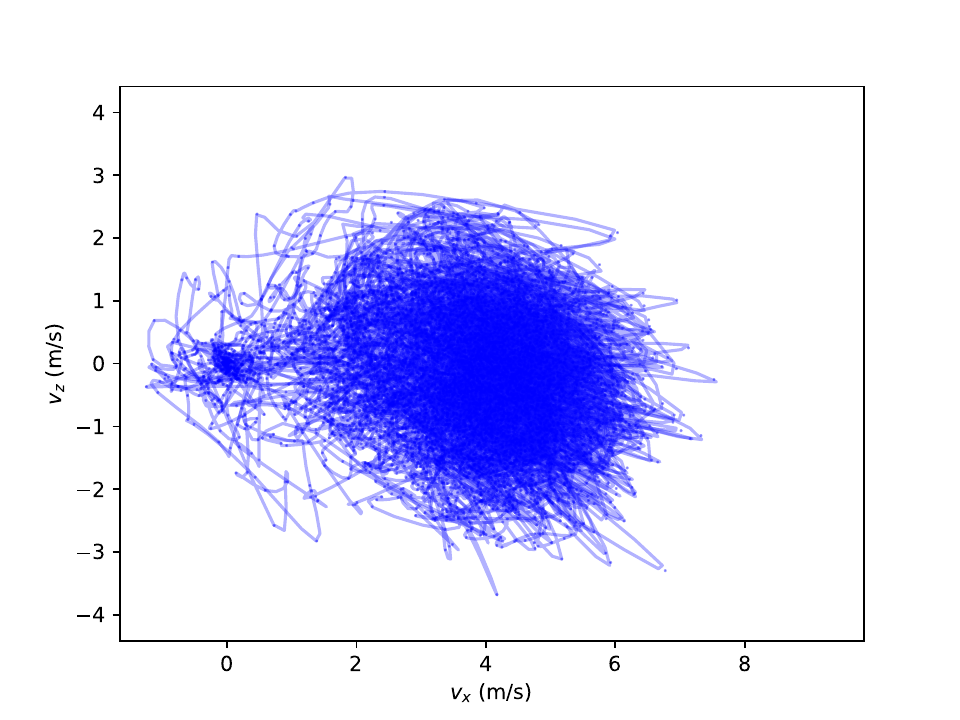}
    \end{subfigure}
    \hspace{-0.3cm}
    \begin{subfigure}[b]{0.17\textwidth}
        \includegraphics[width=\textwidth]{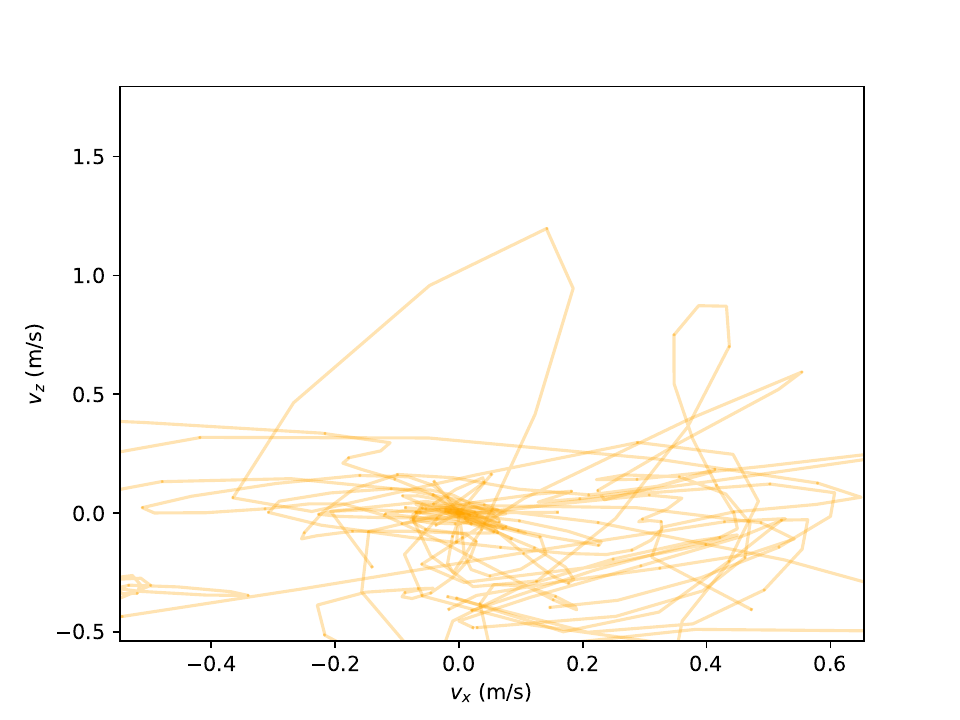}
    \end{subfigure}
    \hspace{-0.3cm}
    \begin{subfigure}[b]{0.17\textwidth}
        \includegraphics[width=\textwidth]{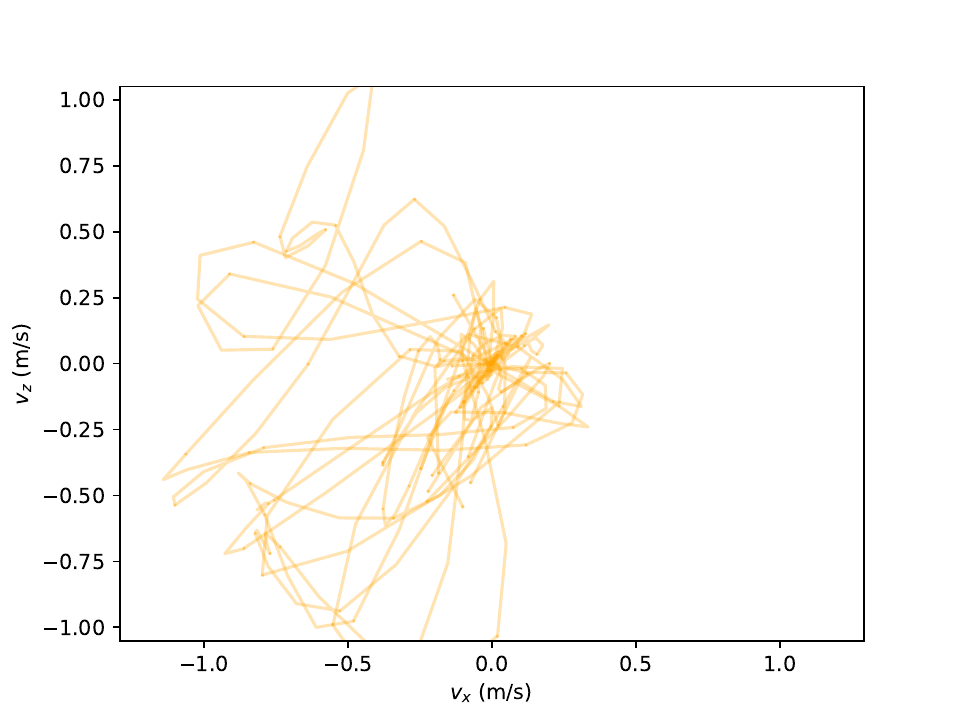}
    \end{subfigure}  
    \hspace{-0.3cm}
    \begin{subfigure}[b]{0.17\textwidth}
        \includegraphics[width=\textwidth]{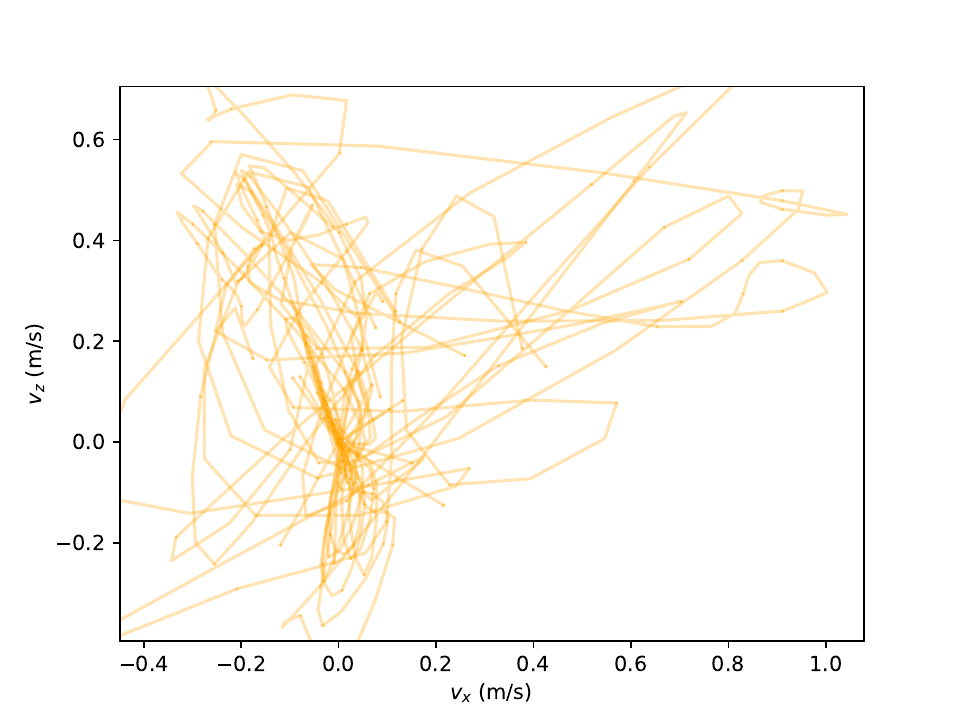}
    \end{subfigure}
    \\
    {\centering
        \begin{subfigure}[b]{0.17\textwidth}
        \includegraphics[width=\textwidth]{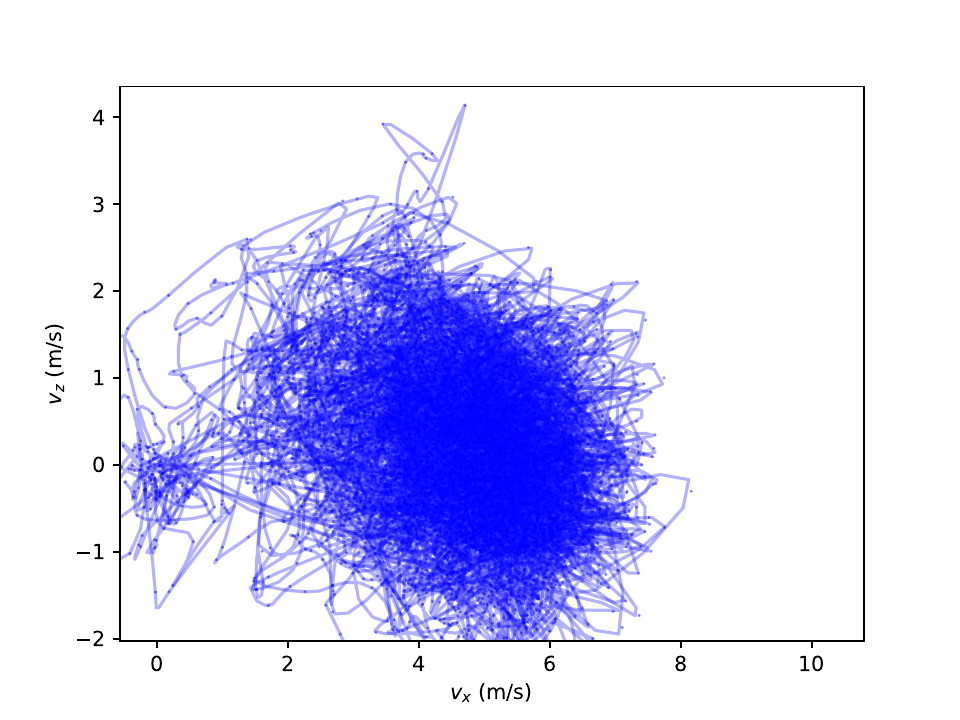}
    \end{subfigure}
    \hspace{-0.3cm}
    \begin{subfigure}[b]{0.17\textwidth}
        \includegraphics[width=\textwidth]{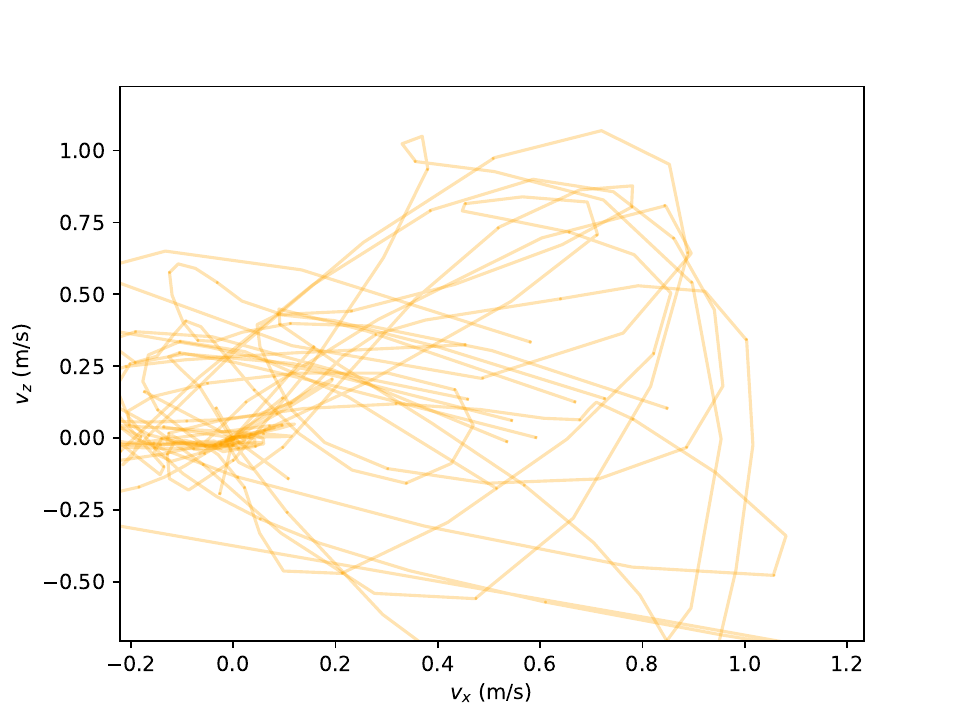}
    \end{subfigure}
    }
    \caption{Ant Trajectory Plots, $x$ axis is $x$ velocity, $y$ axis is $y$ velocity. Blue are PPO agents, orange are PARL agents.}  
    \label{fig:antvels}
\end{figure*}

    \begin{figure*}
    \centering
    \begin{subfigure}[b]{0.17\textwidth}
        \includegraphics[width=\textwidth]{HalfCheetah-v4_ppo_paper_0_pos.pdf}
    \end{subfigure}
    \hspace{-0.3cm}
    \begin{subfigure}[b]{0.17\textwidth}
        \includegraphics[width=\textwidth]{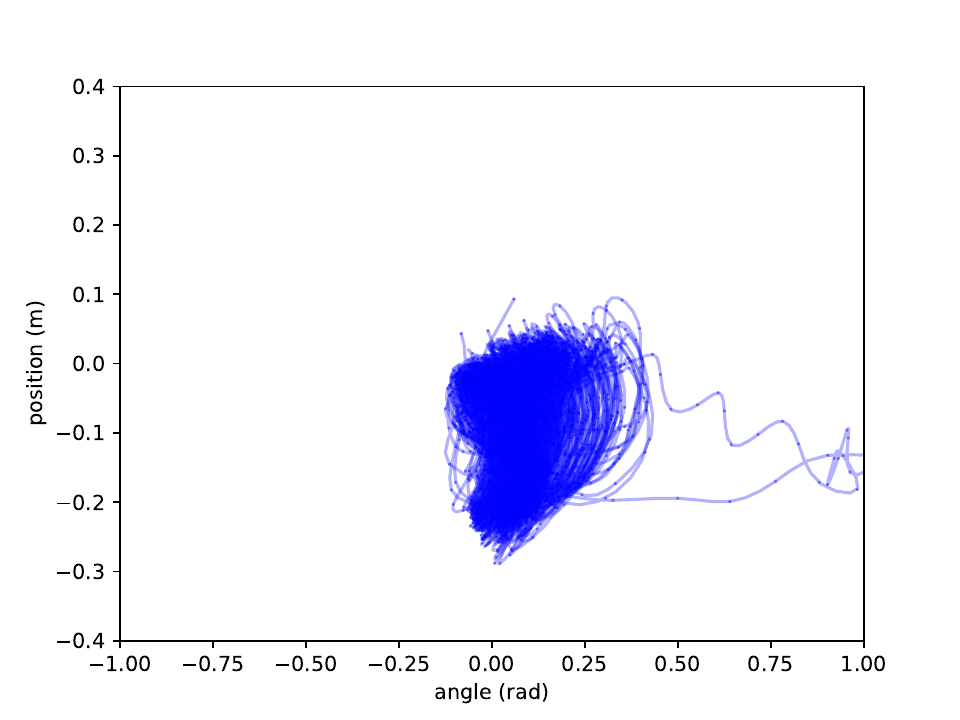}
    \end{subfigure}
    \hspace{-0.3cm}
    \begin{subfigure}[b]{0.17\textwidth}
        \includegraphics[width=\textwidth]{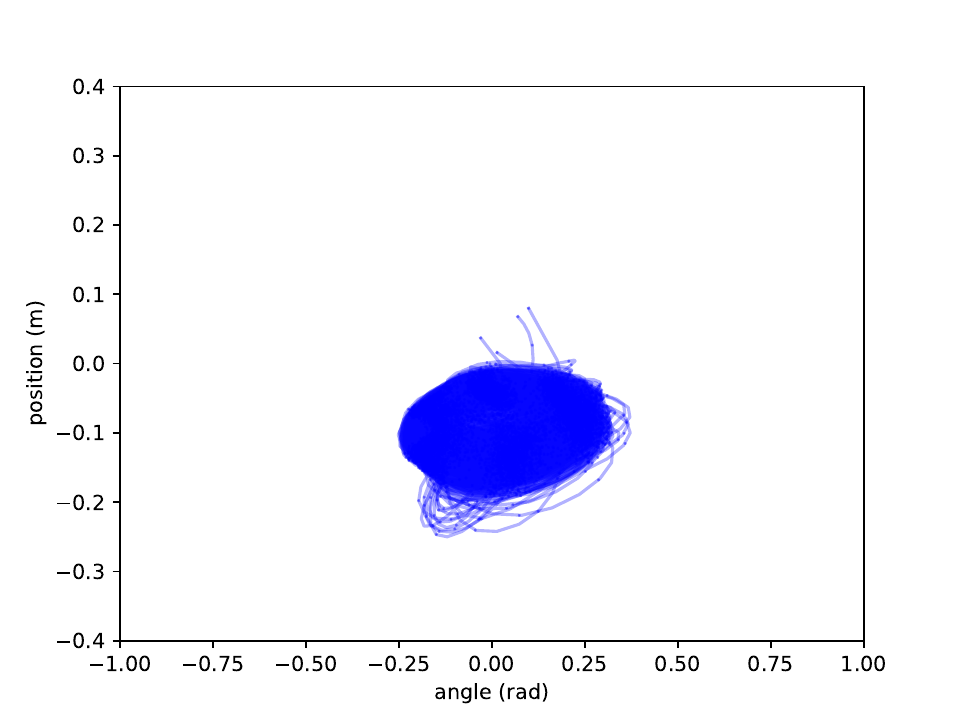}
    \end{subfigure}
    \hspace{-0.3cm}
    \begin{subfigure}[b]{0.17\textwidth}
        \includegraphics[width=\textwidth]{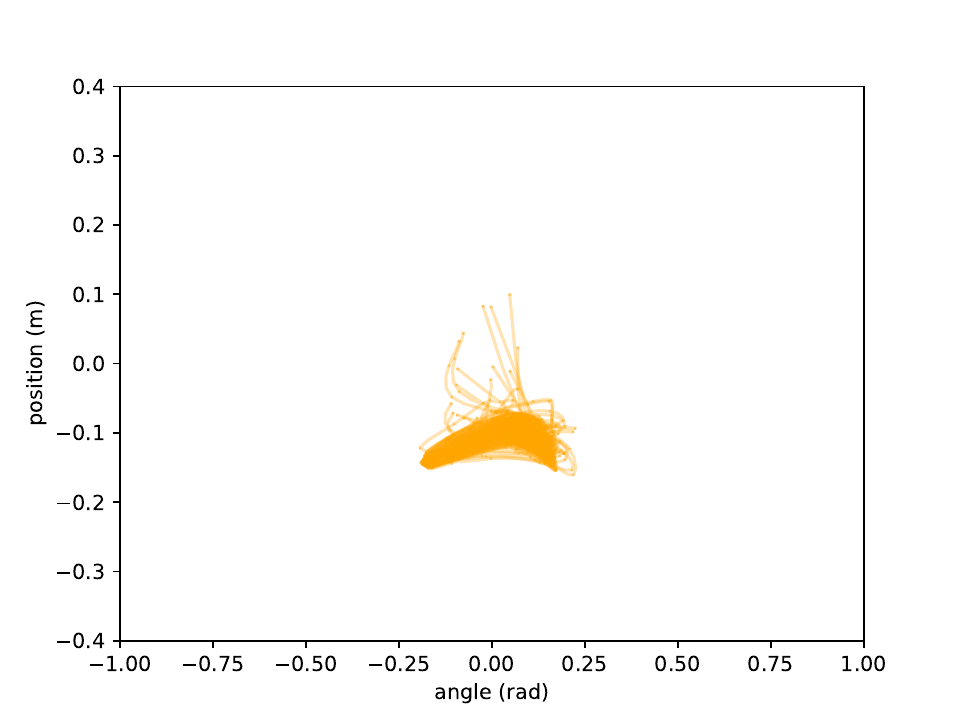}
    \end{subfigure}
    \hspace{-0.3cm}
    \begin{subfigure}[b]{0.17\textwidth}
        \includegraphics[width=\textwidth]{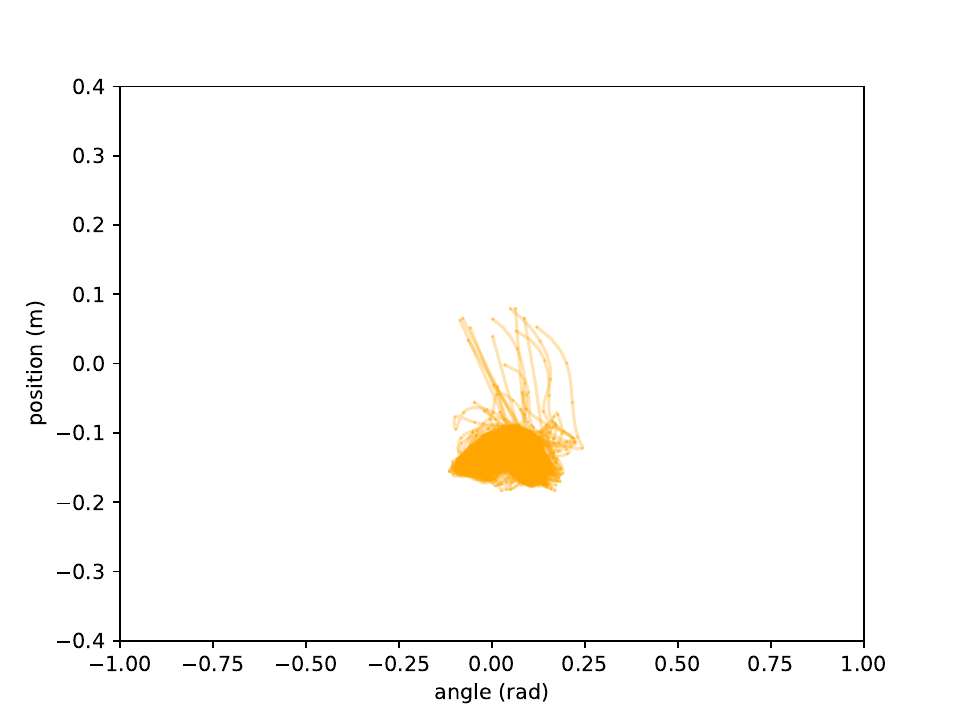}
    \end{subfigure}  
    \hspace{-0.3cm}
    \begin{subfigure}[b]{0.17\textwidth}
        \includegraphics[width=\textwidth]{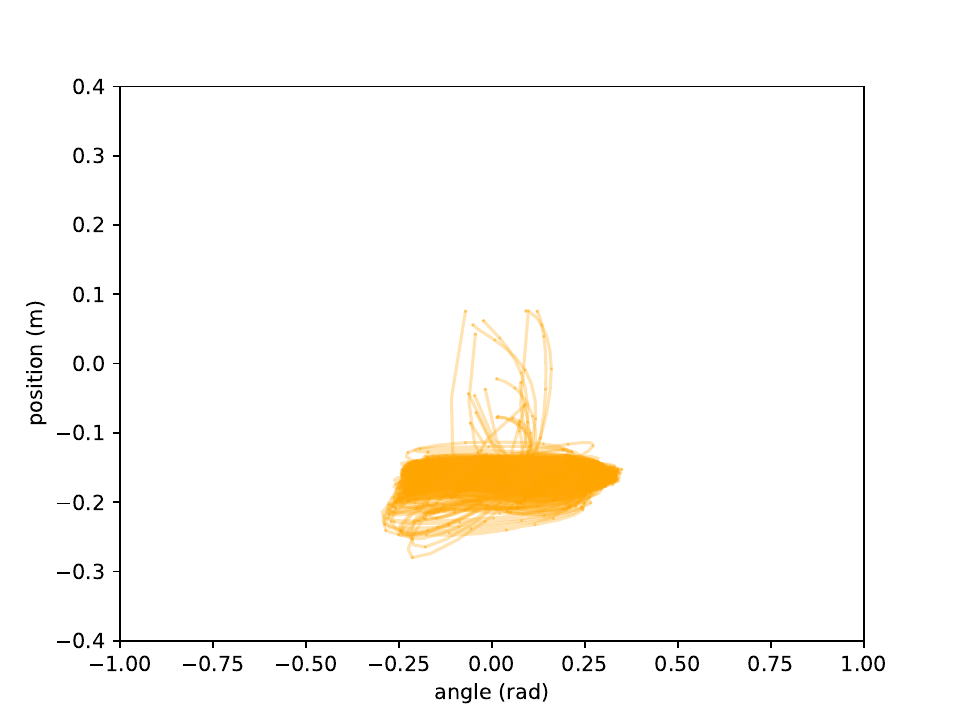}
    \end{subfigure}
    \\
    \begin{subfigure}[b]{0.17\textwidth}
        \includegraphics[width=\textwidth]{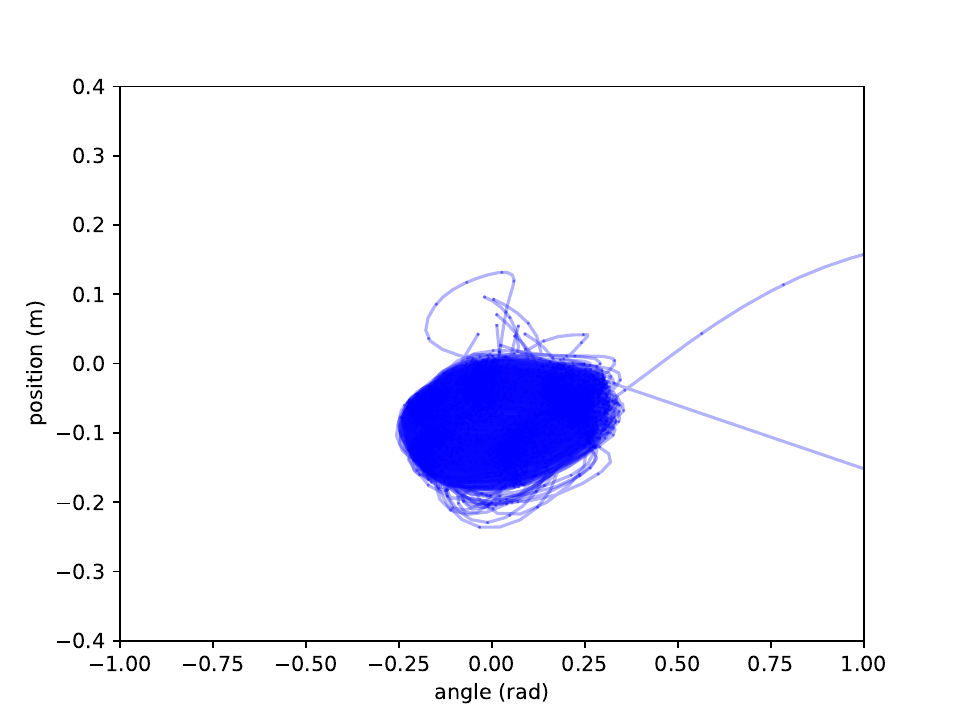}
    \end{subfigure}
    \hspace{-0.3cm}  
    \begin{subfigure}[b]{0.17\textwidth}
        \includegraphics[width=\textwidth]{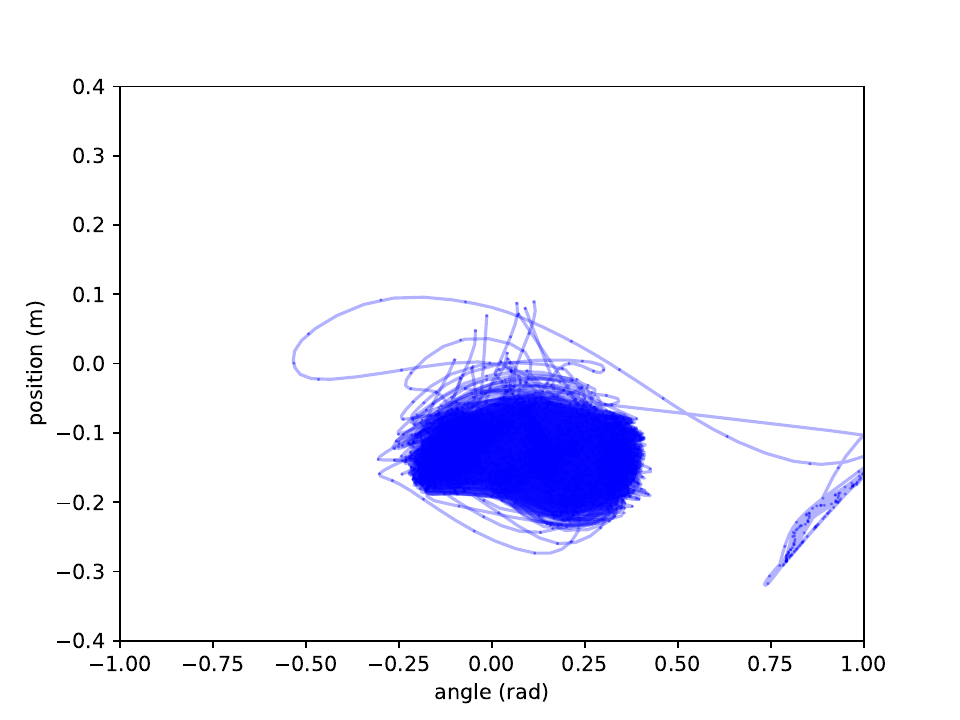}
    \end{subfigure}
    \hspace{-0.3cm}
    \begin{subfigure}[b]{0.17\textwidth}
        \includegraphics[width=\textwidth]{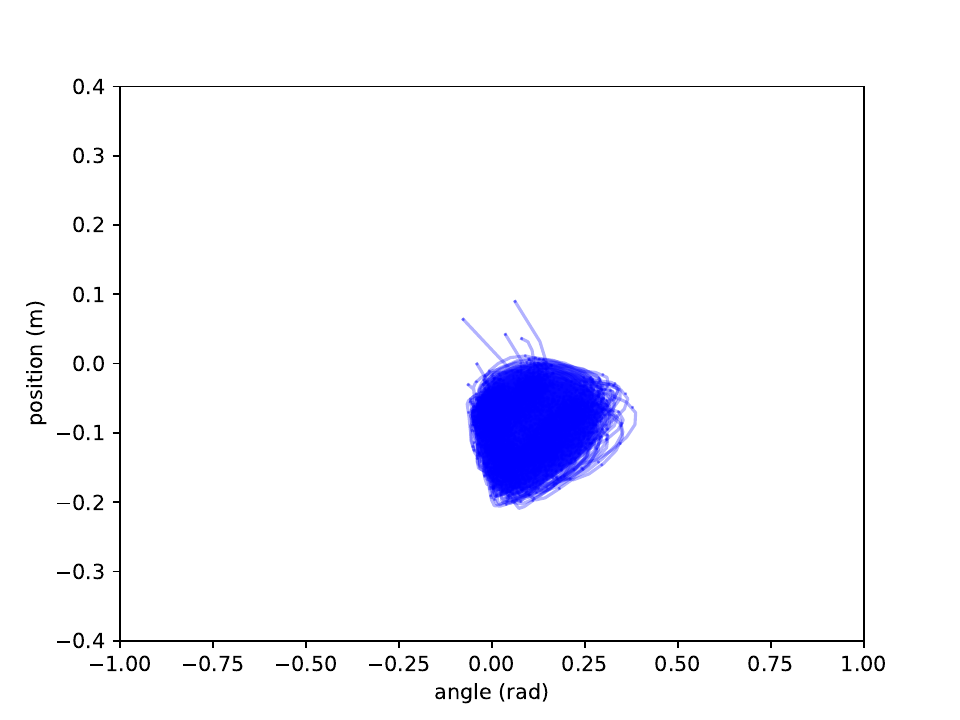}
    \end{subfigure}
    \hspace{-0.3cm}  
    \begin{subfigure}[b]{0.17\textwidth}
        \includegraphics[width=\textwidth]{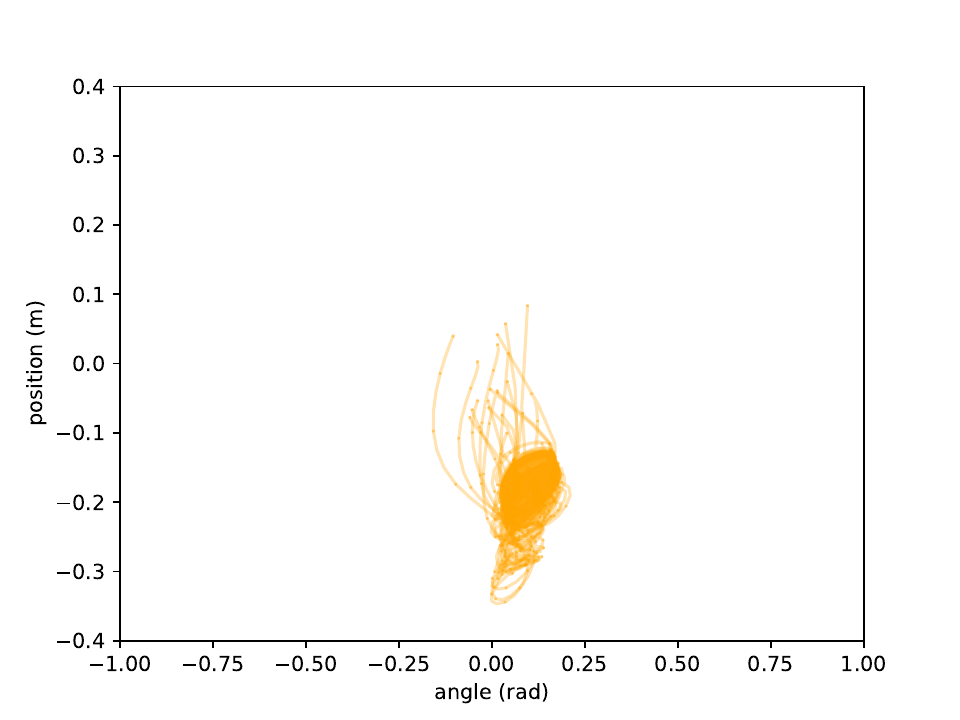}
    \end{subfigure}
    \hspace{-0.3cm}
    \begin{subfigure}[b]{0.17\textwidth}
        \includegraphics[width=\textwidth]{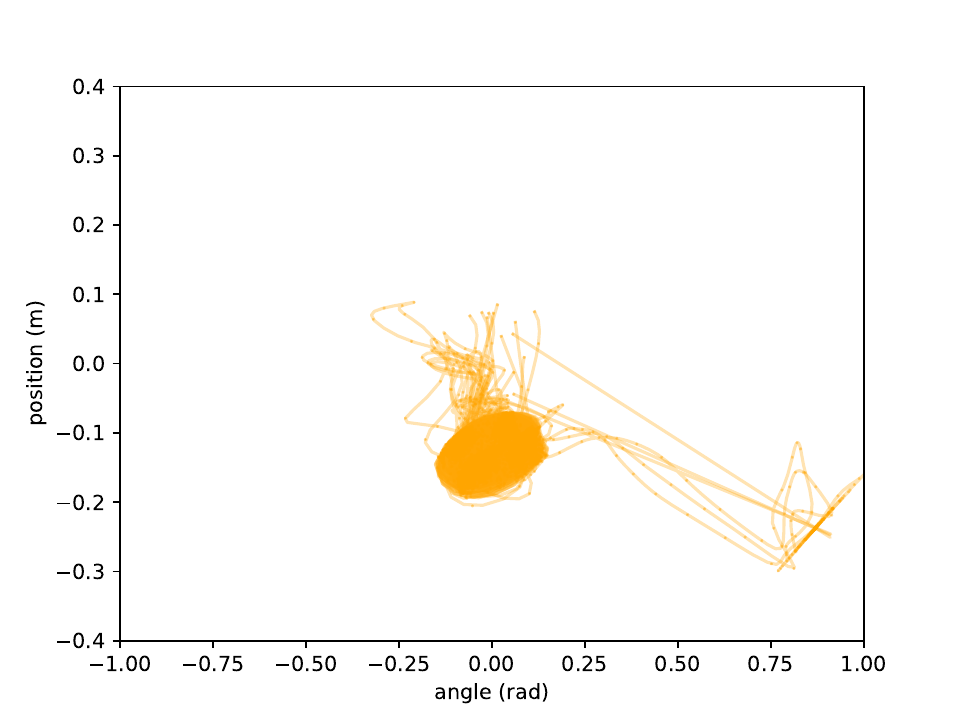}
    \end{subfigure}  
    \hspace{-0.3cm}
    \begin{subfigure}[b]{0.17\textwidth}
        \includegraphics[width=\textwidth]{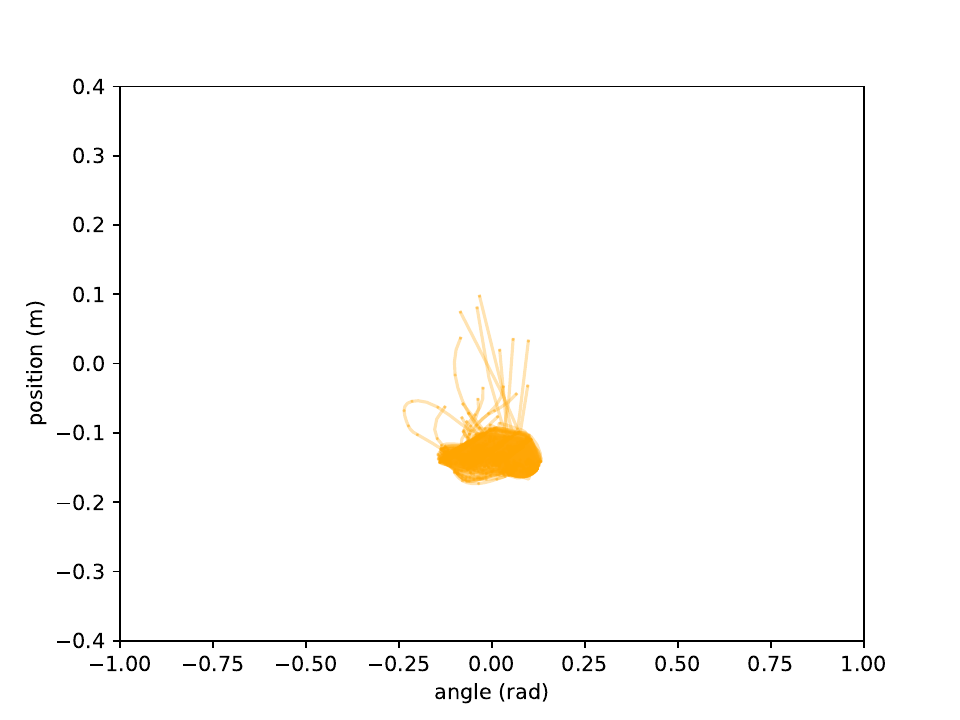}
    \end{subfigure}
        \\  
    \begin{subfigure}[b]{0.17\textwidth}
        \includegraphics[width=\textwidth]{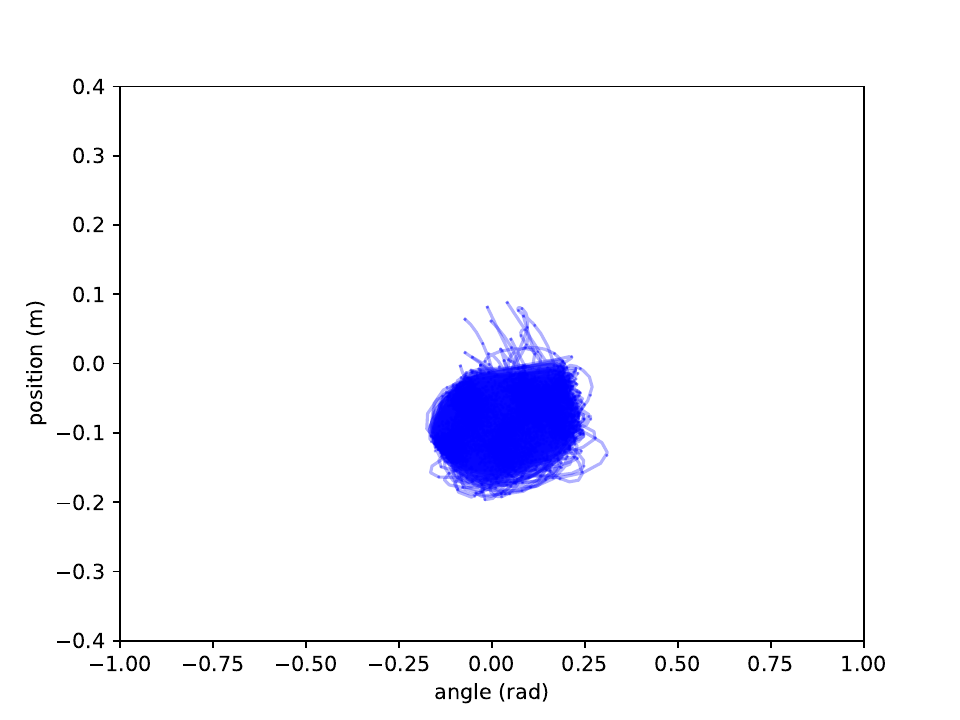}
    \end{subfigure}
        \hspace{-0.3cm}  
    \begin{subfigure}[b]{0.17\textwidth}
        \includegraphics[width=\textwidth]{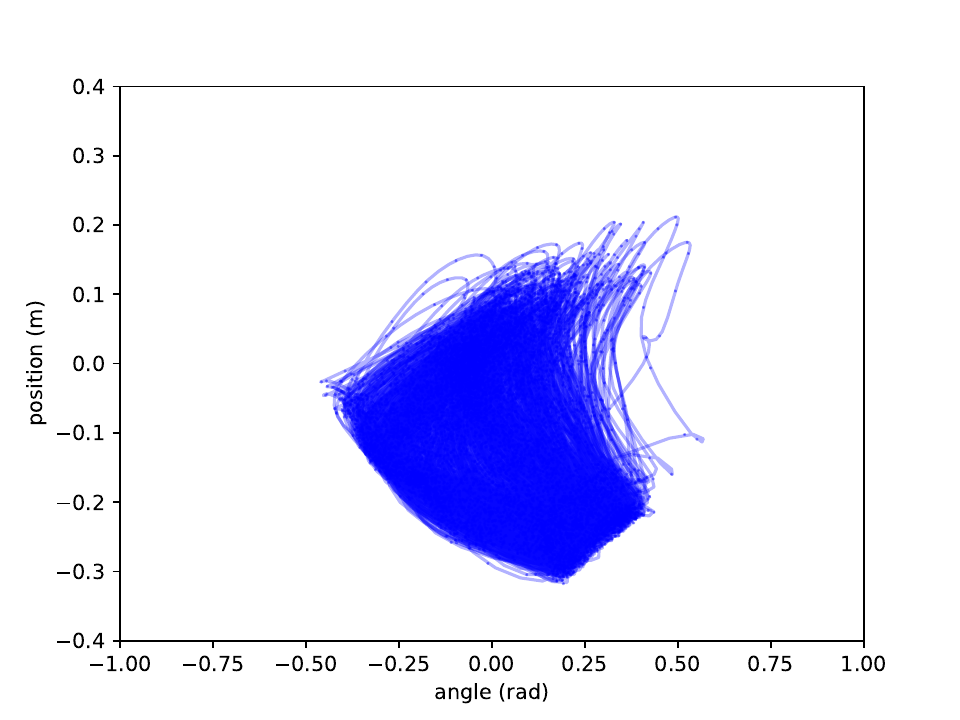}
    \end{subfigure}
        \hspace{-0.3cm}
    \begin{subfigure}[b]{0.17\textwidth}
        \includegraphics[width=\textwidth]{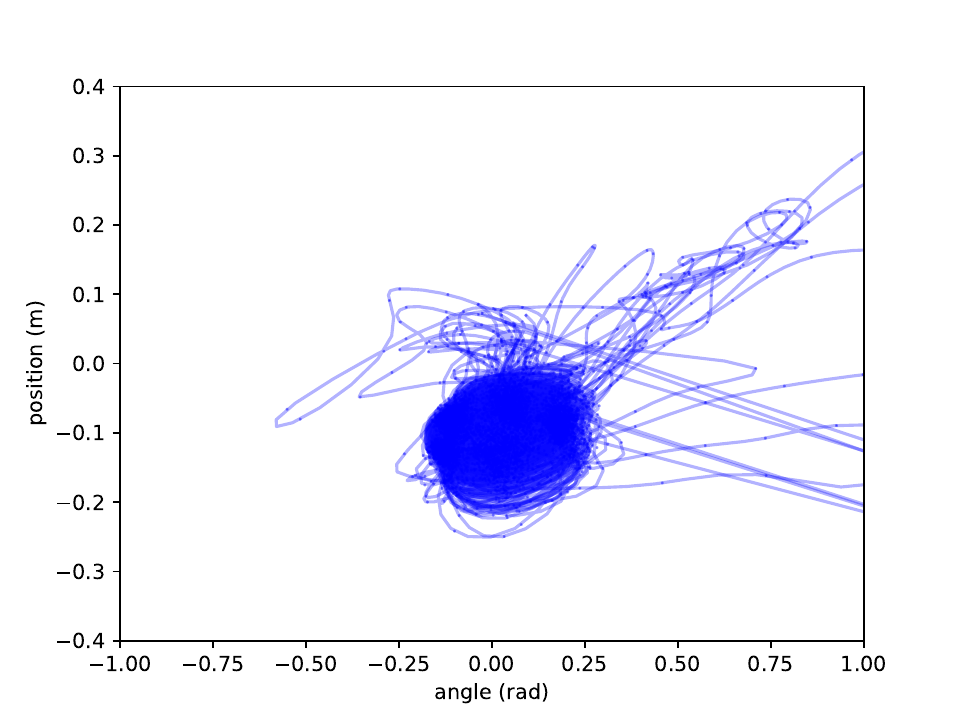}
    \end{subfigure}
    \hspace{-0.3cm}
    \begin{subfigure}[b]{0.17\textwidth}
        \includegraphics[width=\textwidth]{HalfCheetah-v4_pappo_paper_c05_0_pos.pdf}
    \end{subfigure}
    \hspace{-0.3cm}
    \begin{subfigure}[b]{0.17\textwidth}
        \includegraphics[width=\textwidth]{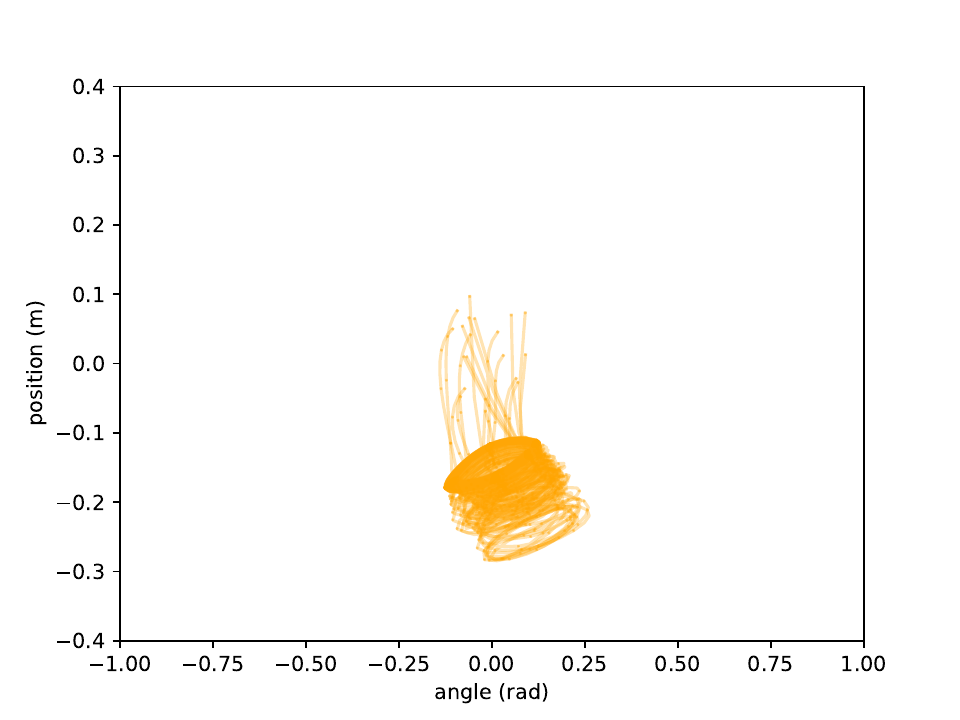}
    \end{subfigure}  
    \hspace{-0.3cm}
    \begin{subfigure}[b]{0.17\textwidth}
        \includegraphics[width=\textwidth]{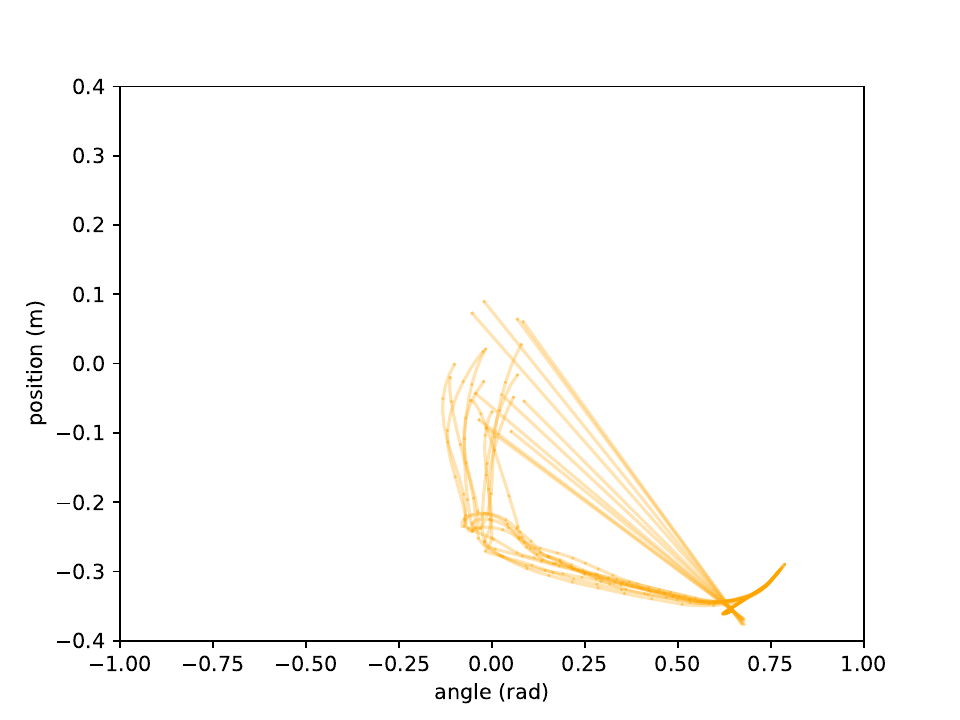}
    \end{subfigure}
    \\
    {\centering
        \begin{subfigure}[b]{0.17\textwidth}
        \includegraphics[width=\textwidth]{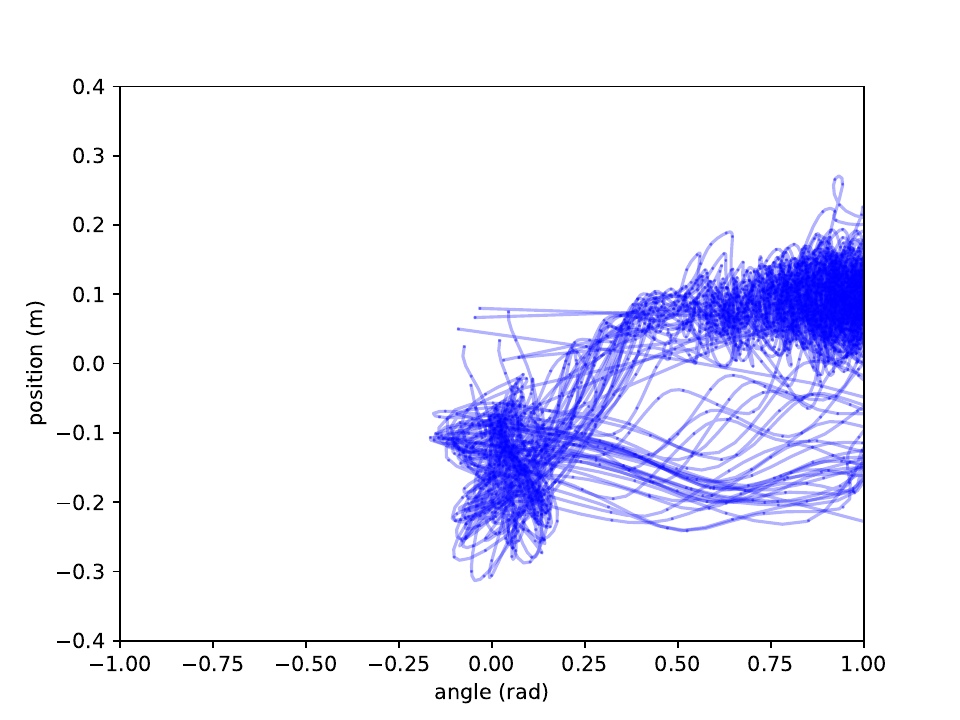}
    \end{subfigure}
    \hspace{-0.3cm}
    \begin{subfigure}[b]{0.17\textwidth}
        \includegraphics[width=\textwidth]{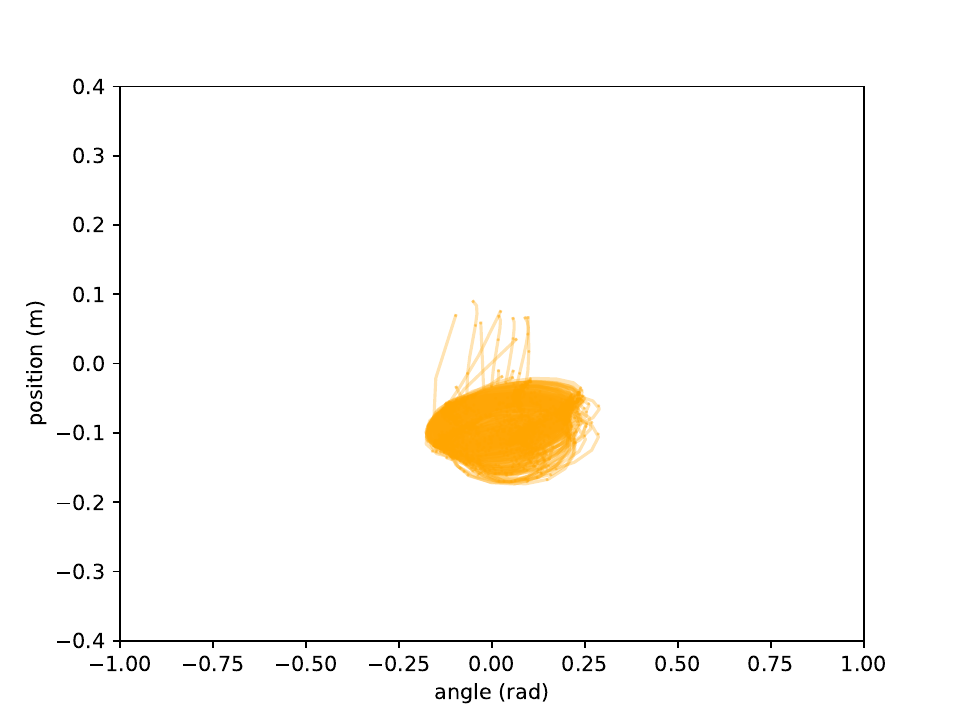}
    \end{subfigure}
    }
    \caption{HalfCheetah Trajectory Plots, $x$-axis is torso angle in radians, $y$-axis is $z$ coordinate position of the torso. Blue are PPO agents, orange are PARL agents.}  
    \label{fig:cheetahtrajs}
\end{figure*}           

    \begin{figure*}
    \centering
    \begin{subfigure}[b]{0.17\textwidth}
        \includegraphics[width=\textwidth]{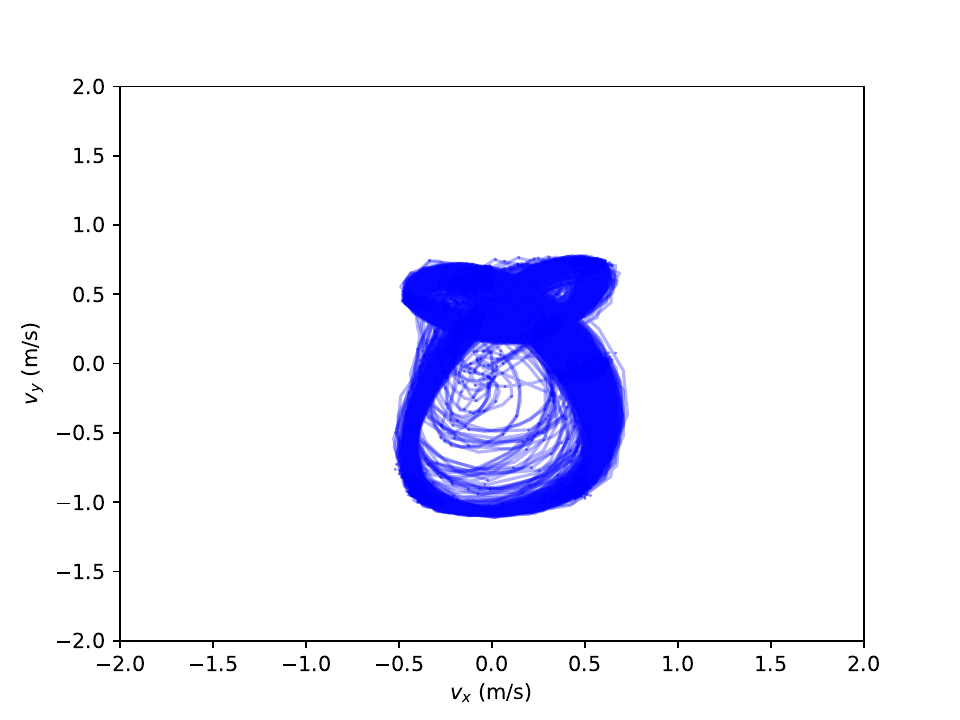}
    \end{subfigure}
    \hspace{-0.3cm}
    \begin{subfigure}[b]{0.17\textwidth}
        \includegraphics[width=\textwidth]{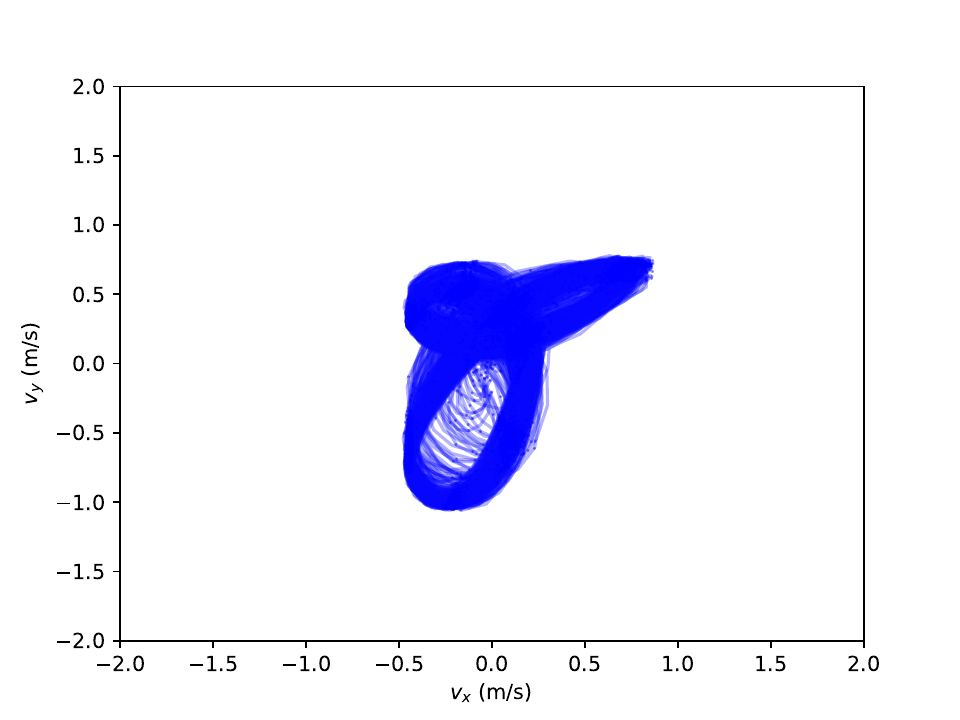}
    \end{subfigure}
    \hspace{-0.3cm}
    \begin{subfigure}[b]{0.17\textwidth}
        \includegraphics[width=\textwidth]{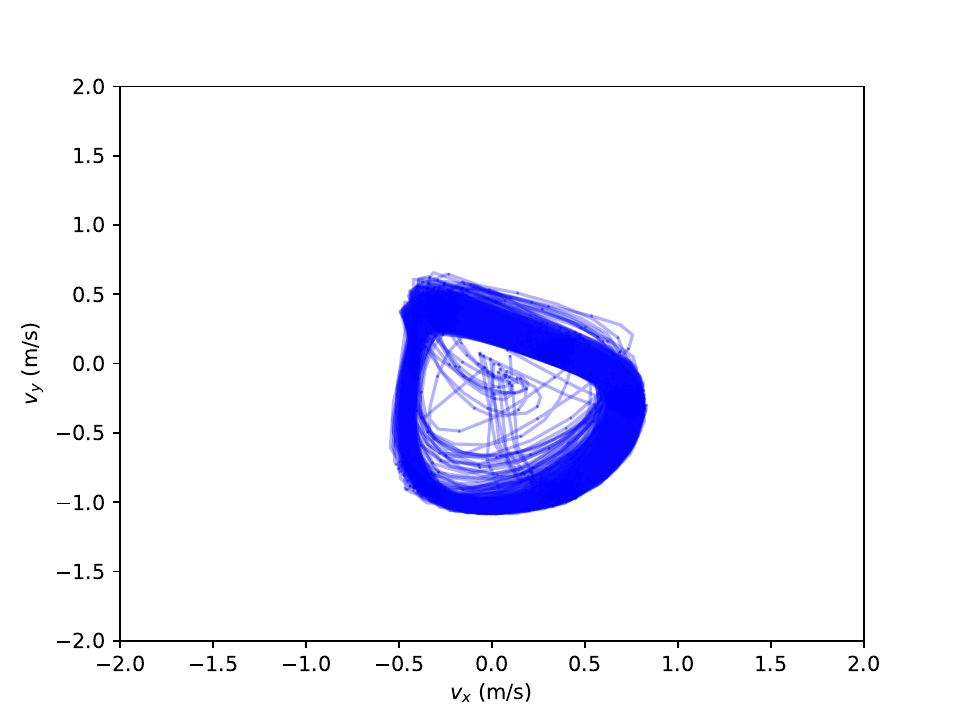}
    \end{subfigure}
    \hspace{-0.3cm}
    \begin{subfigure}[b]{0.17\textwidth}
        \includegraphics[width=\textwidth]{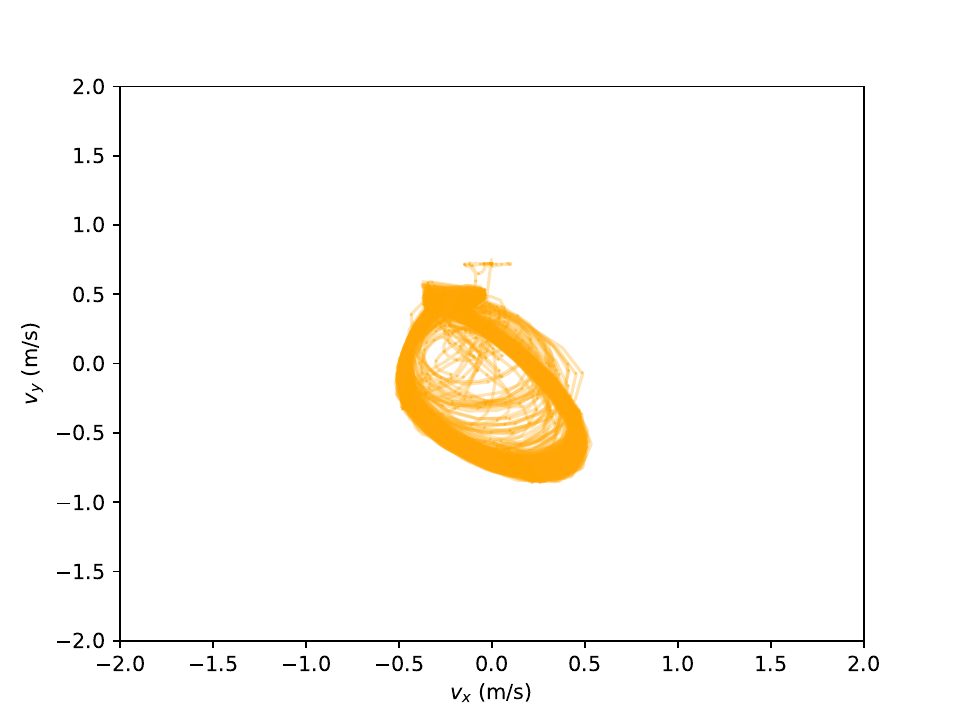}
    \end{subfigure}
    \hspace{-0.3cm}
    \begin{subfigure}[b]{0.17\textwidth}
        \includegraphics[width=\textwidth]{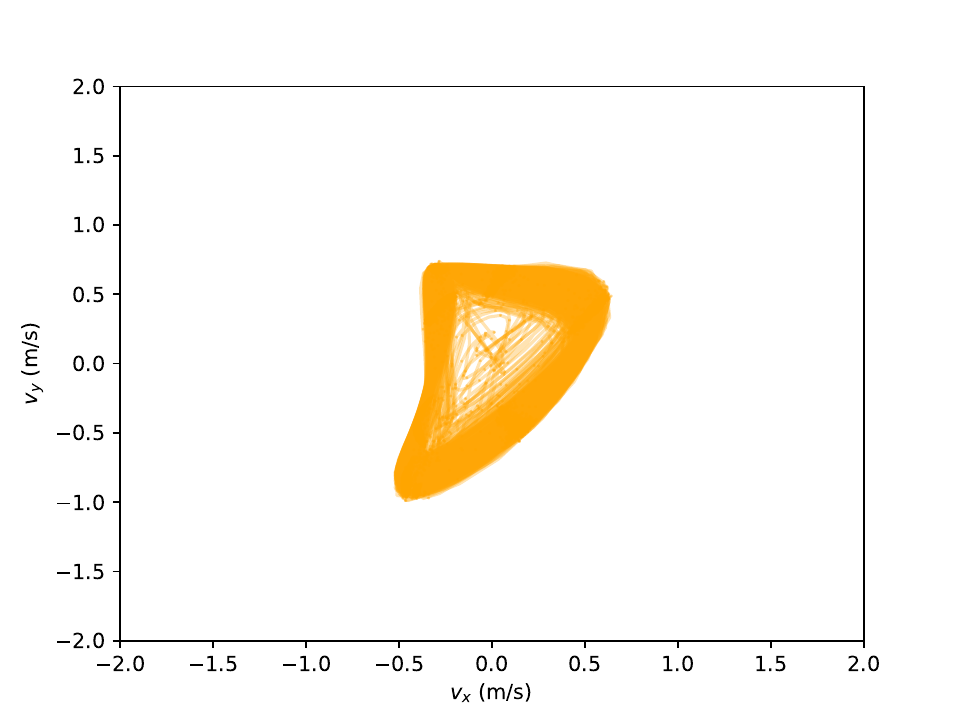}
    \end{subfigure}  
    \hspace{-0.3cm}
    \begin{subfigure}[b]{0.17\textwidth}
        \includegraphics[width=\textwidth]{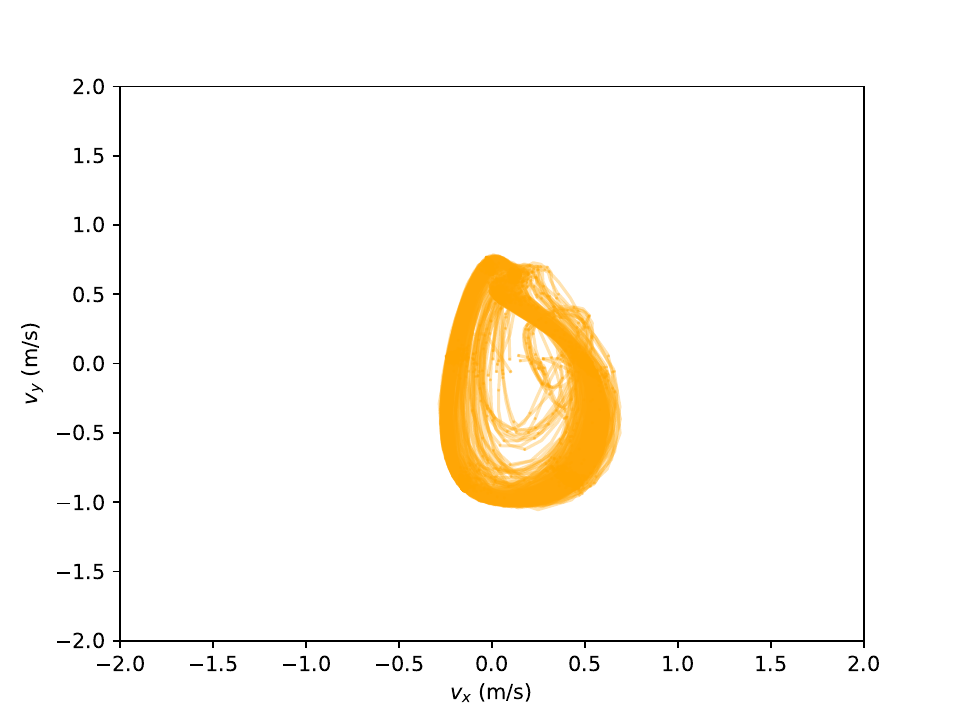}
    \end{subfigure}
    \\
    \begin{subfigure}[b]{0.17\textwidth}
        \includegraphics[width=\textwidth]{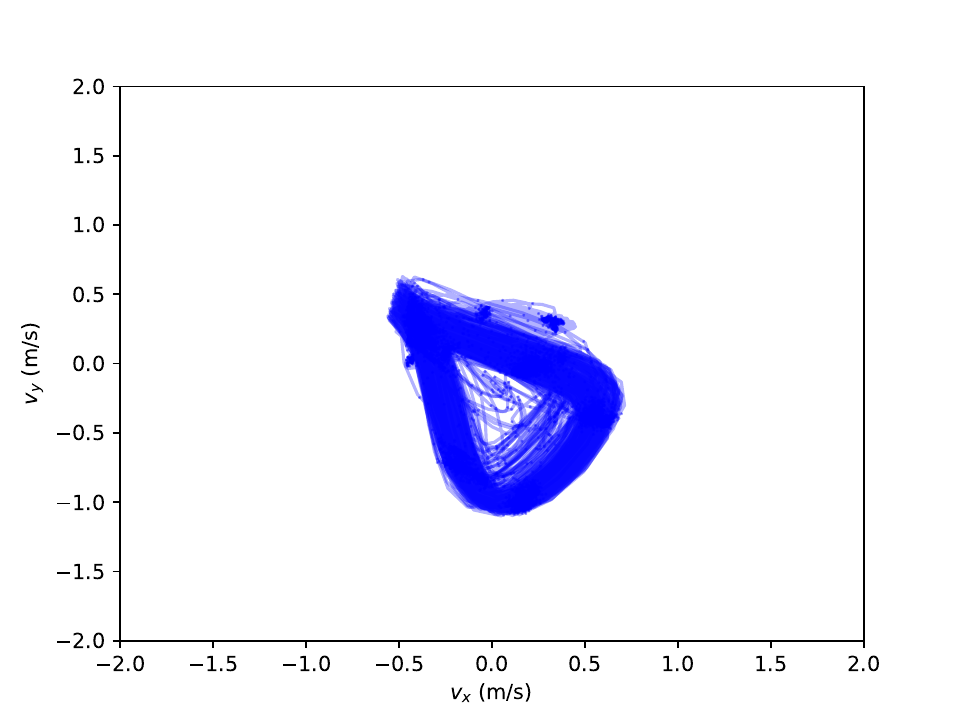}
    \end{subfigure}
    \hspace{-0.3cm}  
    \begin{subfigure}[b]{0.17\textwidth}
        \includegraphics[width=\textwidth]{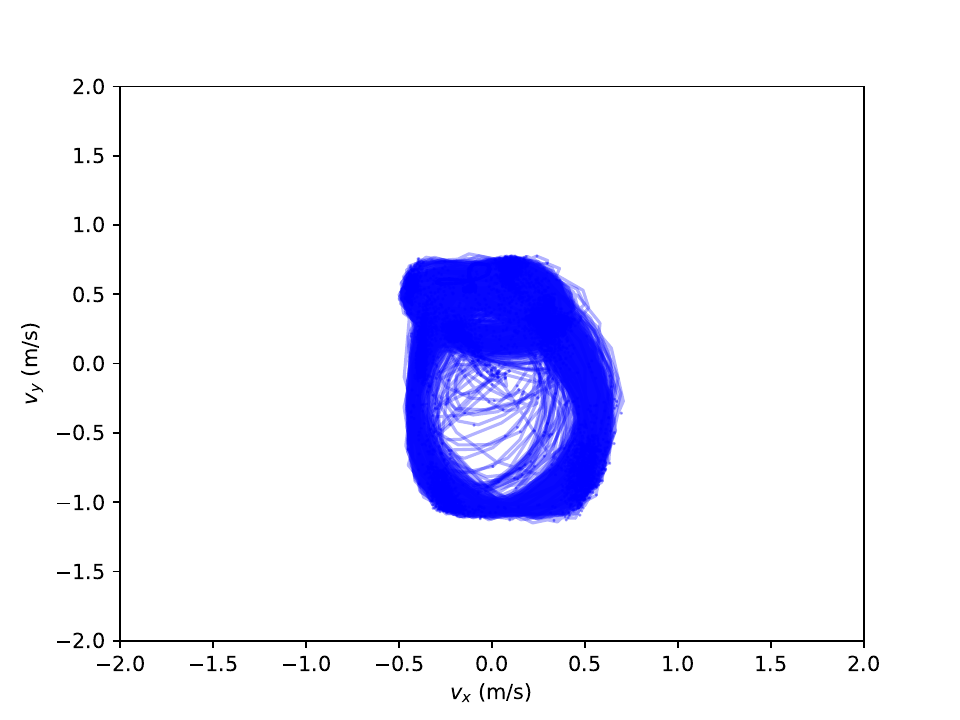}
    \end{subfigure}
    \hspace{-0.3cm}
    \begin{subfigure}[b]{0.17\textwidth}
        \includegraphics[width=\textwidth]{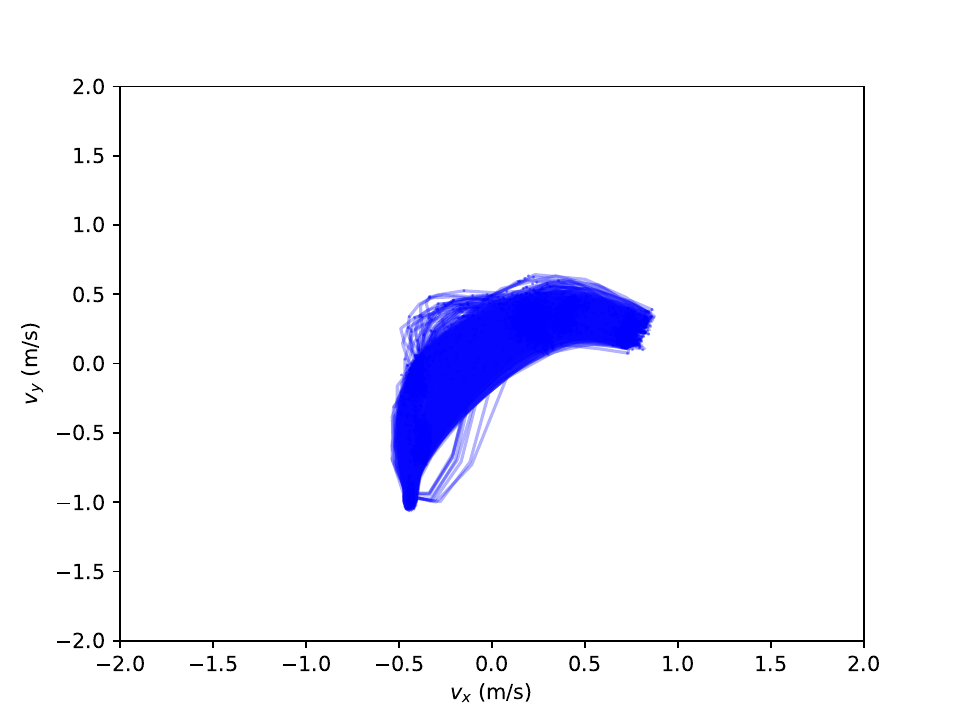}
    \end{subfigure}
    \hspace{-0.3cm}  
    \begin{subfigure}[b]{0.17\textwidth}
        \includegraphics[width=\textwidth]{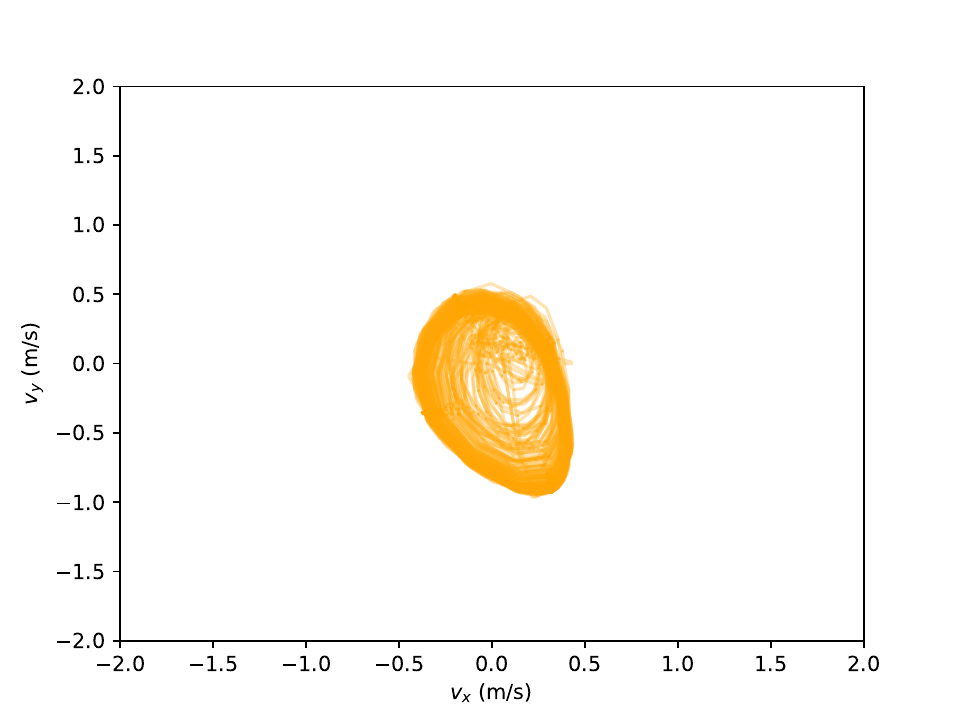}
    \end{subfigure}
    \hspace{-0.3cm}
    \begin{subfigure}[b]{0.17\textwidth}
        \includegraphics[width=\textwidth]{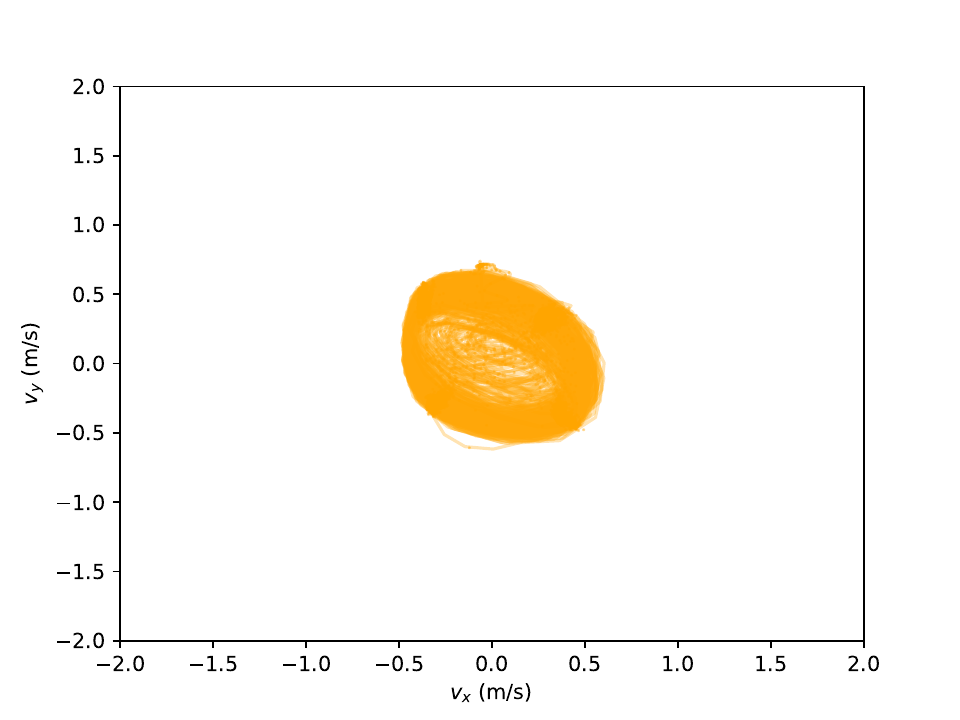}
    \end{subfigure}  
    \hspace{-0.3cm}
    \begin{subfigure}[b]{0.17\textwidth}
        \includegraphics[width=\textwidth]{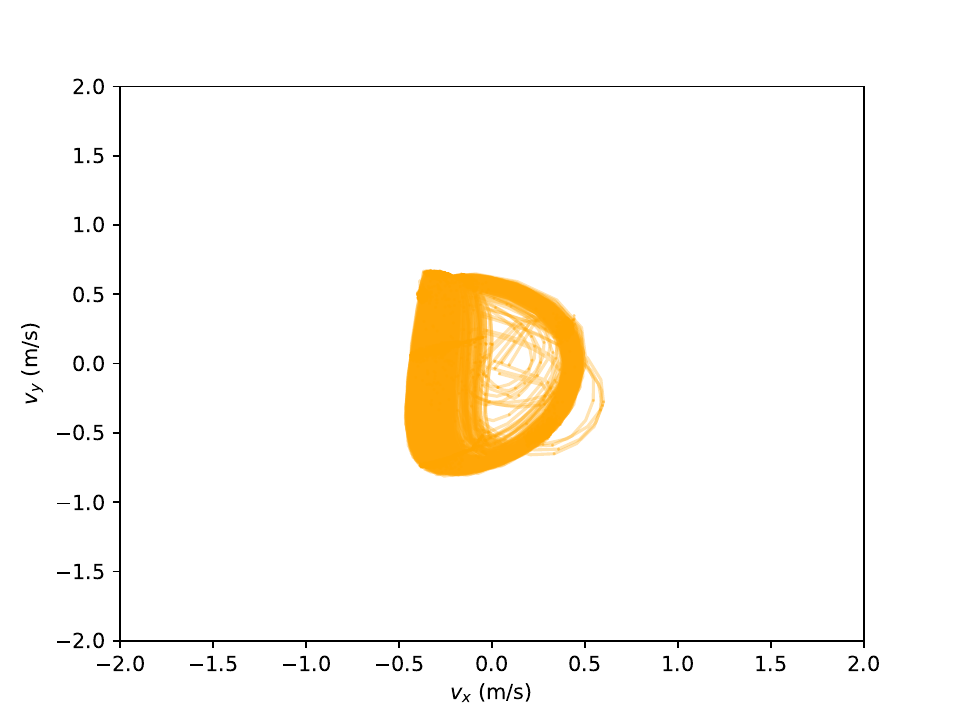}
    \end{subfigure}
        \\  
    \begin{subfigure}[b]{0.17\textwidth}
        \includegraphics[width=\textwidth]{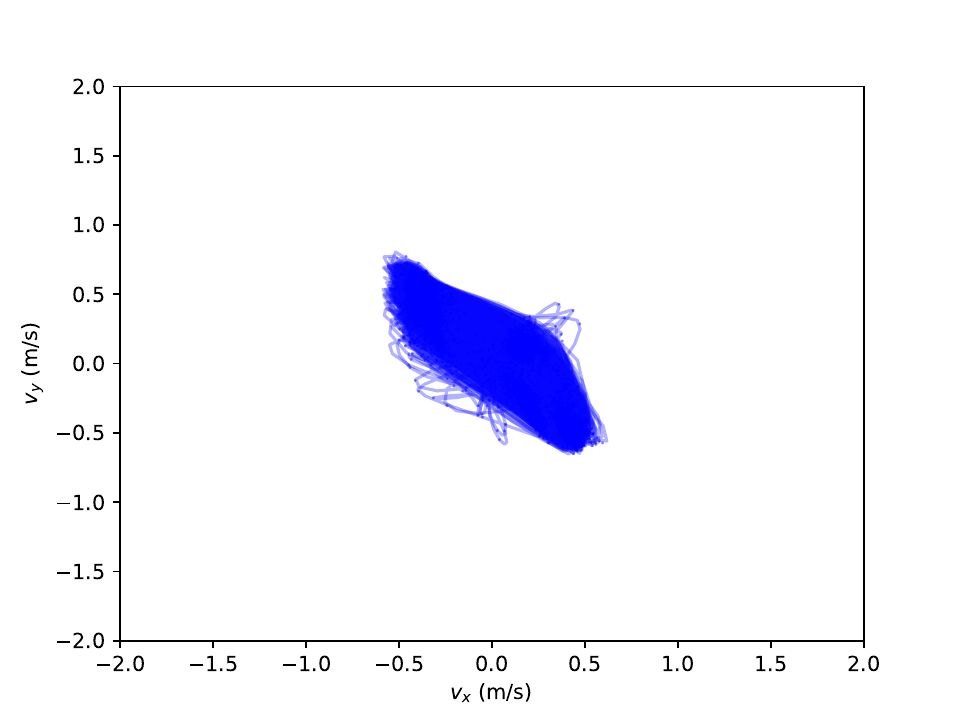}
    \end{subfigure}
        \hspace{-0.3cm}  
    \begin{subfigure}[b]{0.17\textwidth}
        \includegraphics[width=\textwidth]{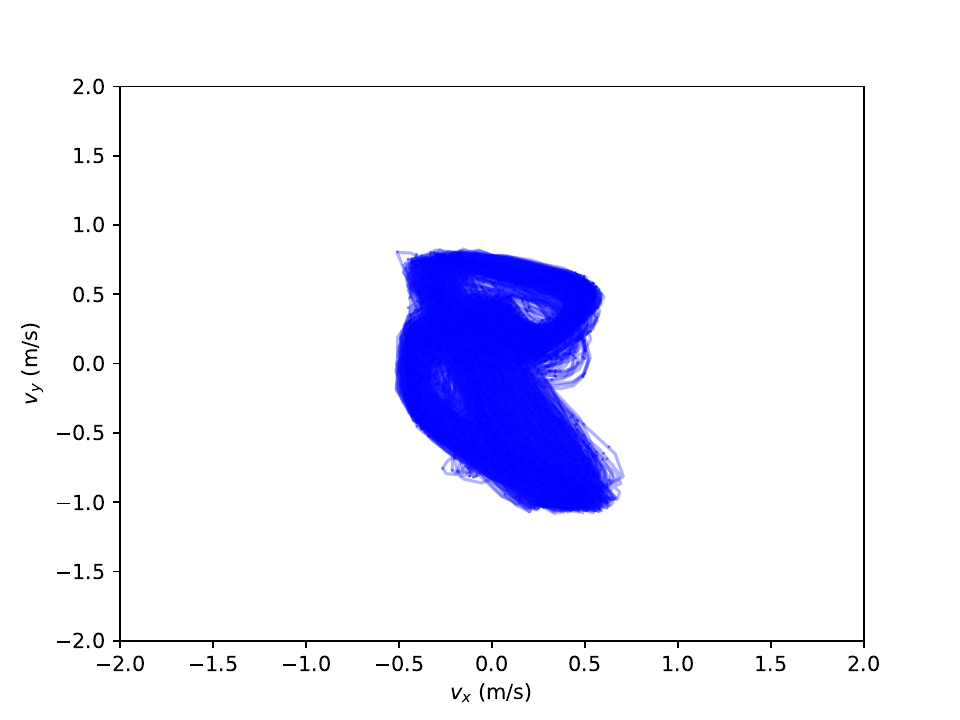}
    \end{subfigure}
        \hspace{-0.3cm}
    \begin{subfigure}[b]{0.17\textwidth}
        \includegraphics[width=\textwidth]{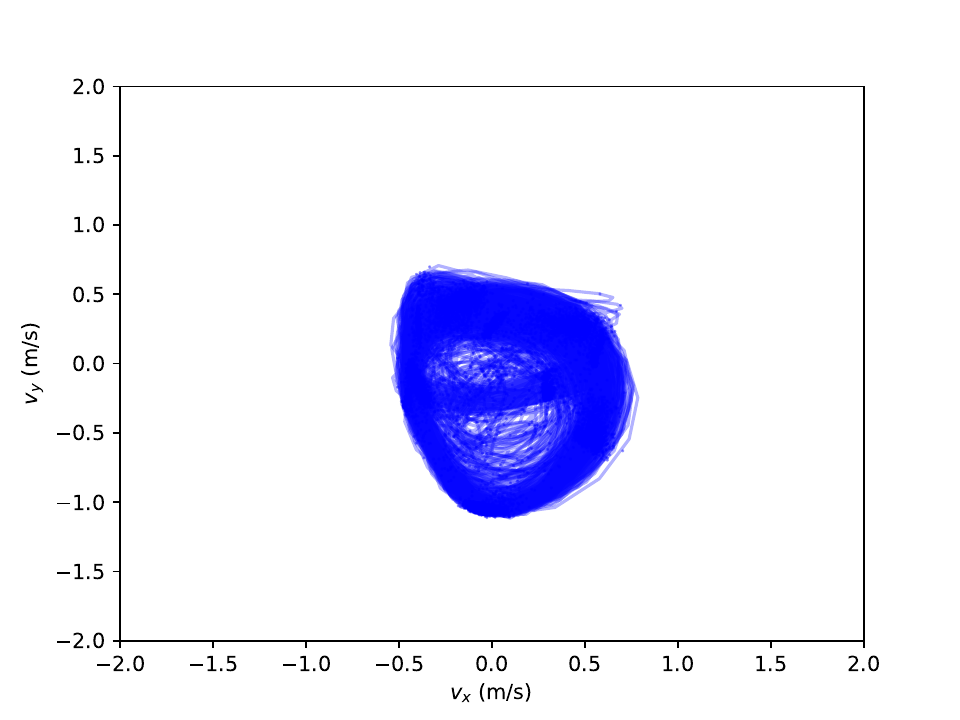}
    \end{subfigure}
    \hspace{-0.3cm}
    \begin{subfigure}[b]{0.17\textwidth}
        \includegraphics[width=\textwidth]{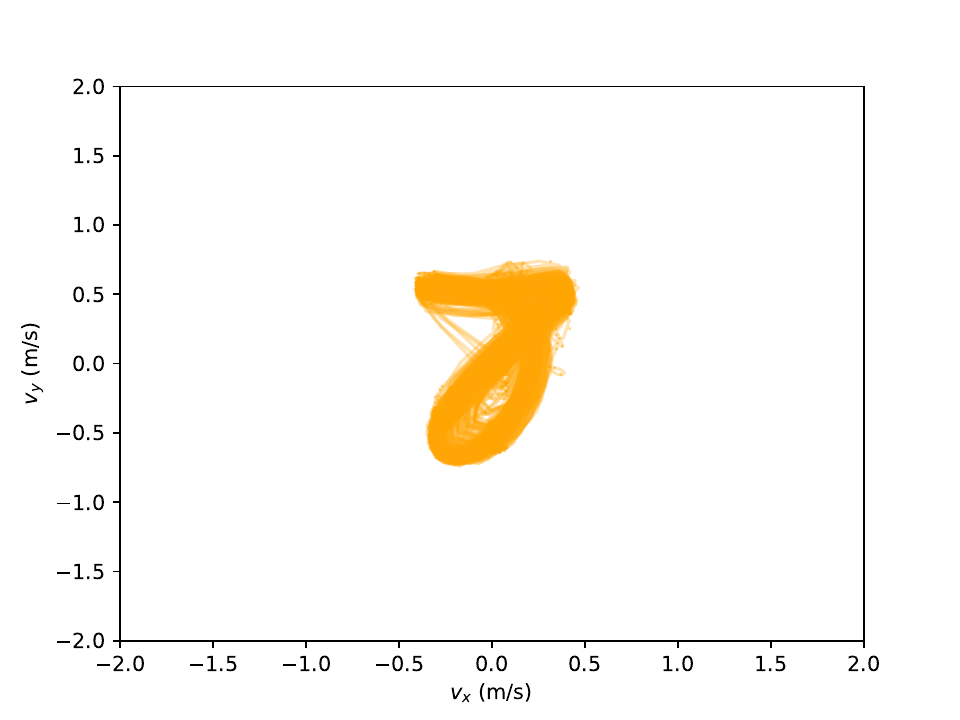}
    \end{subfigure}
    \hspace{-0.3cm}
    \begin{subfigure}[b]{0.17\textwidth}
        \includegraphics[width=\textwidth]{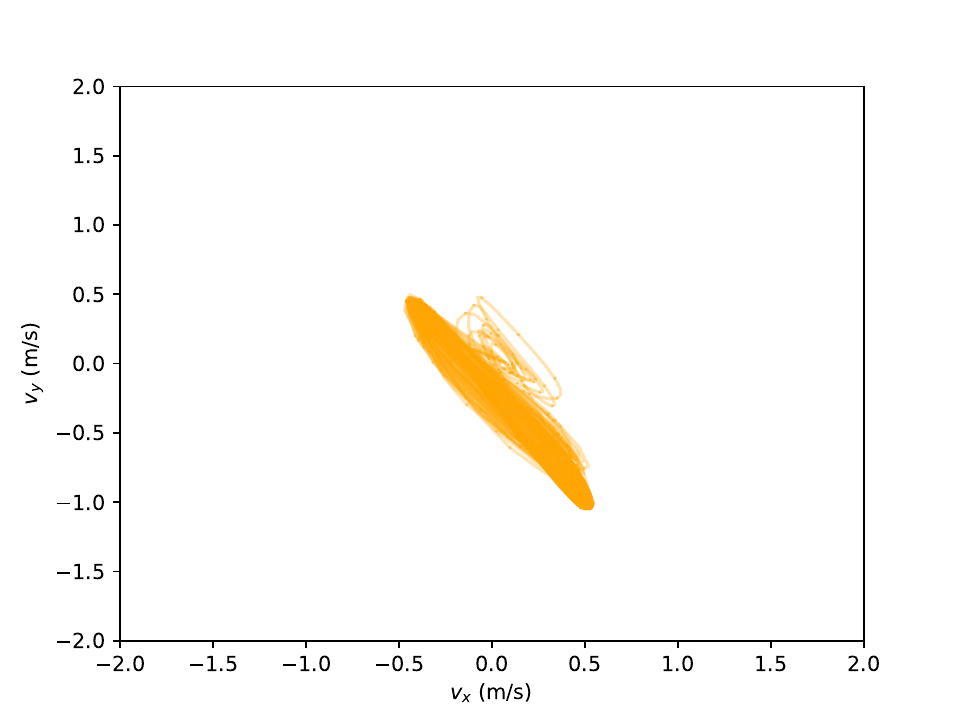}
    \end{subfigure}  
    \hspace{-0.3cm}
    \begin{subfigure}[b]{0.17\textwidth}
        \includegraphics[width=\textwidth]{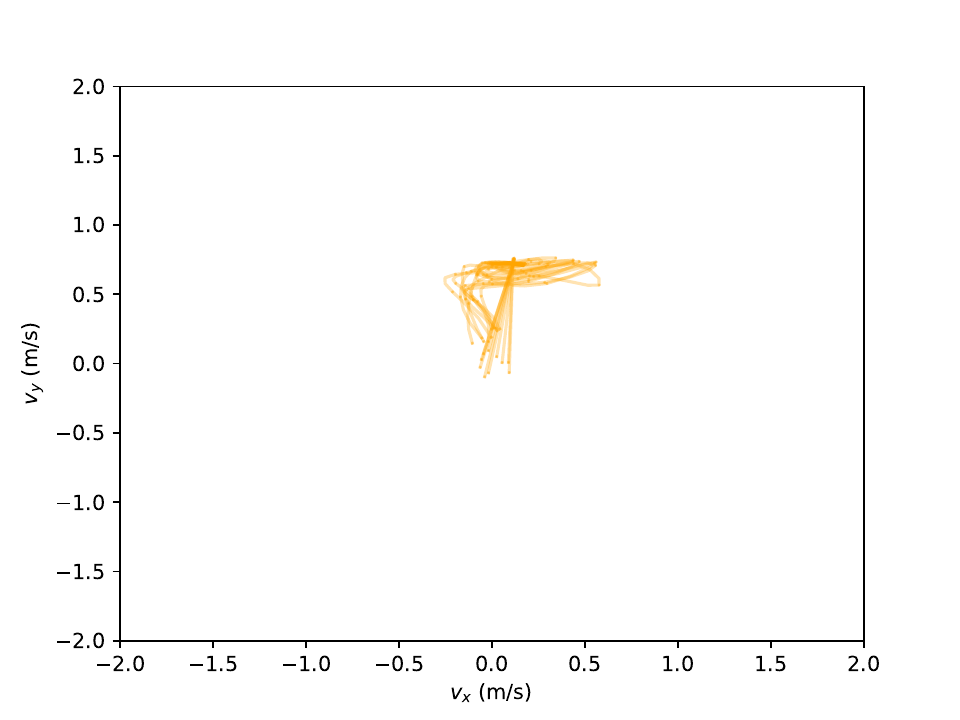}
    \end{subfigure}
    \\
    {\centering
        \begin{subfigure}[b]{0.17\textwidth}
        \includegraphics[width=\textwidth]{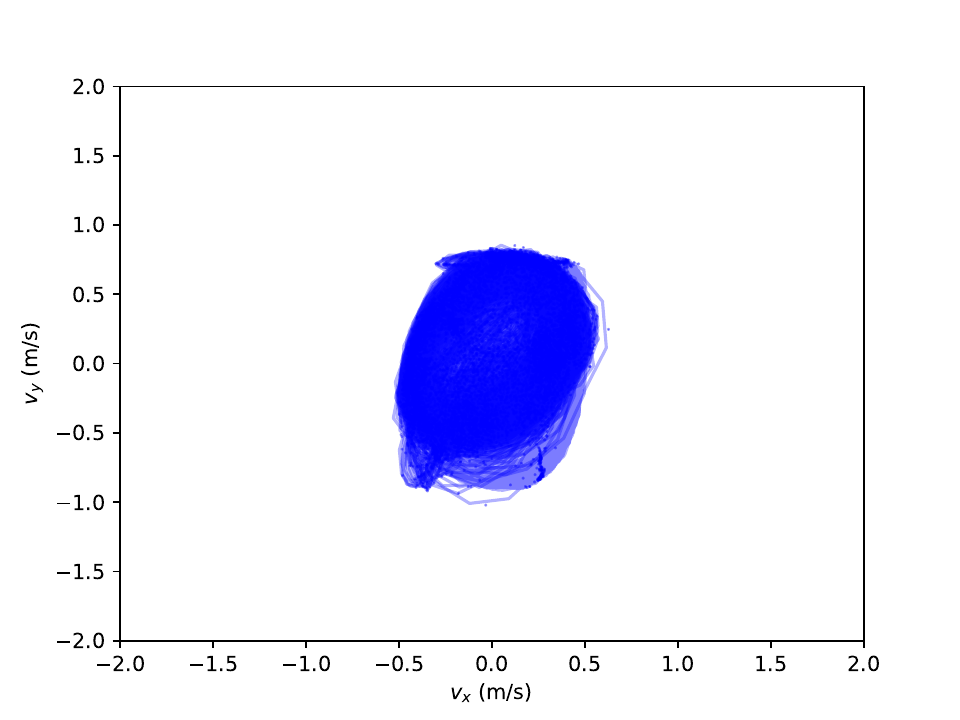}
    \end{subfigure}
    \hspace{-0.3cm}
    \begin{subfigure}[b]{0.17\textwidth}
        \includegraphics[width=\textwidth]{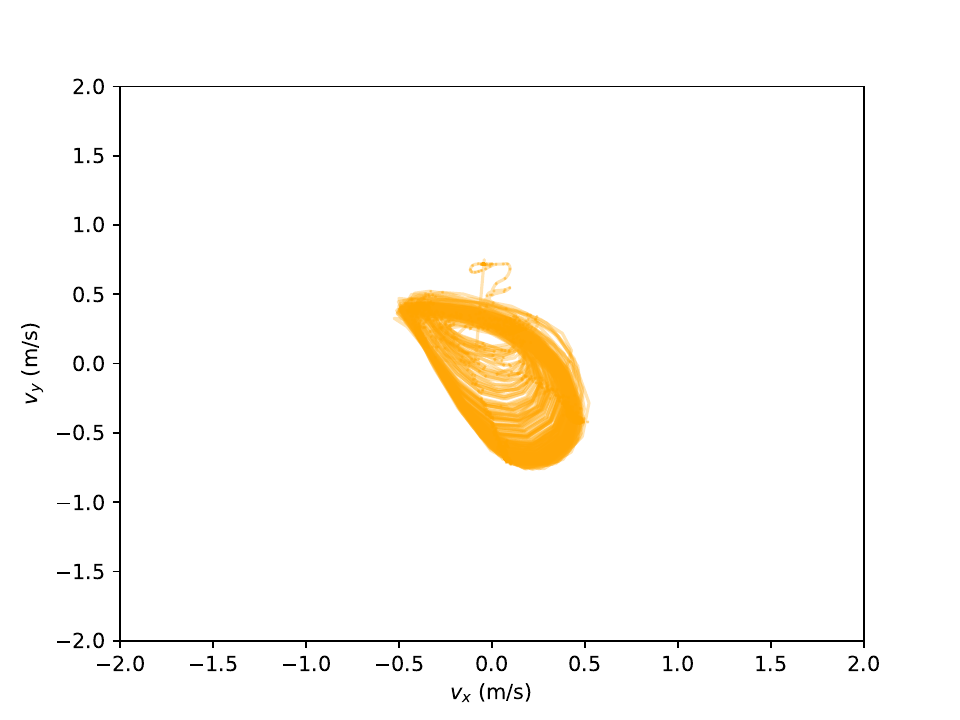}
    \end{subfigure}
    }
    \caption{HalfCheetah Trajectory Plots, $x$ axis is $x$ velocity, $y$ axis is $y$ velocity. Blue are PPO agents, orange are PARL agents.}  
    \label{fig:cheetahvels}
\end{figure*}      
As expected, the observed trajectories for the case of PARL agents present a much less complex (lower entropy) distribution. In particular, for the slippery navigation task where agents have the choice of taking fully deterministic paths, it is even more obvious that the PARL agent chooses to execute the same trajectory over and over, where PPO agents result in a more complex distribution due to the traversing of the stochastic regions. 
\subsection{Learning Results}
We include the learning curves for the trained agents on all the environments included in the paper.
\begin{figure*}[t]
    \centering
\includegraphics[clip, trim=0cm 20cm 9cm 0cm, width=4.5cm]{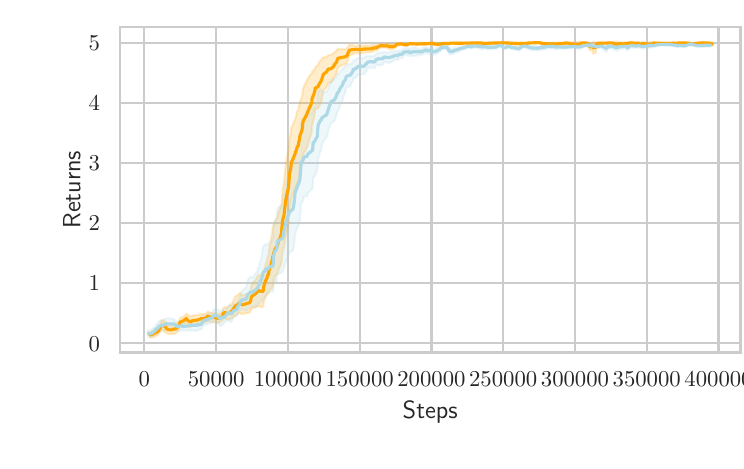}
\includegraphics[clip, trim=0cm 20cm 9cm 0cm, width=4.5cm]{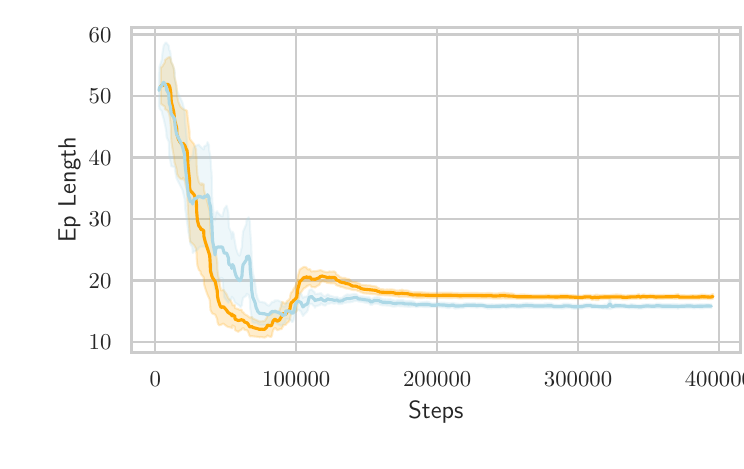}
\includegraphics[clip, trim=0cm 20cm 9cm 0cm, width=4.5cm]{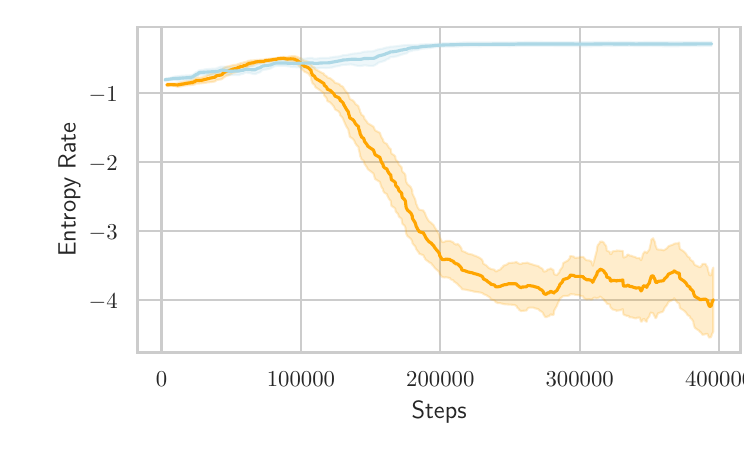}\\
\includegraphics[width=4cm]{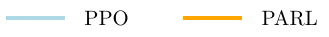}
    \caption{Training results for Slippery Navigation task.}
    \label{fig:training_slippery}
\end{figure*}
\begin{figure*}[t]
    \centering
\includegraphics[clip, trim=0cm 20cm 9cm 0cm, width=4.5cm]{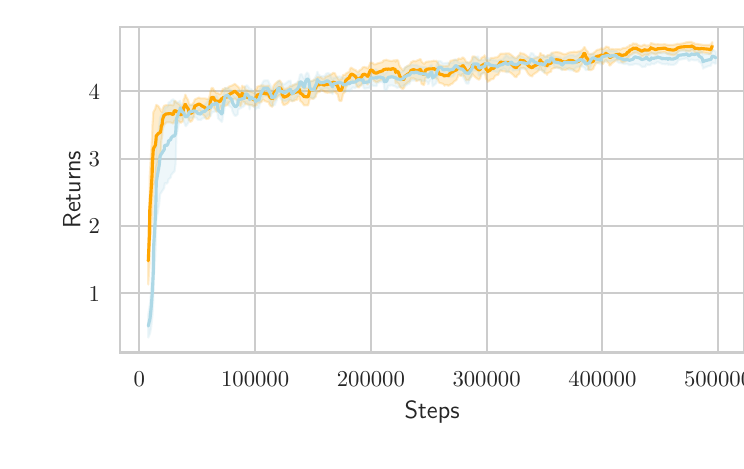}
\includegraphics[clip, trim=0cm 20cm 9cm 0cm, width=4.5cm]{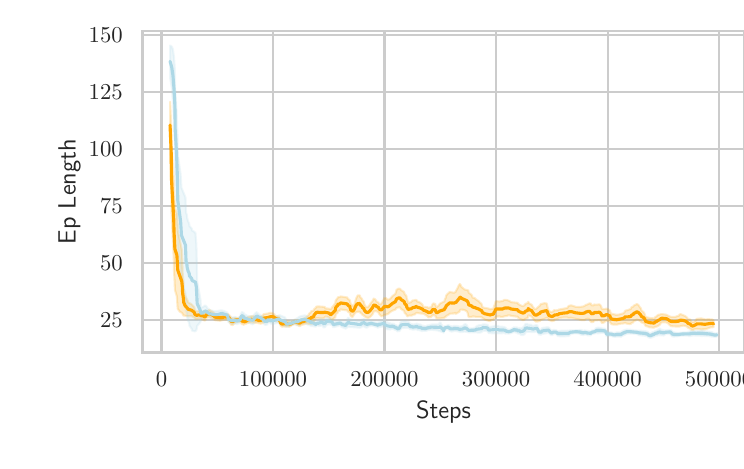}
\includegraphics[clip, trim=0cm 20cm 9cm 0cm, width=4.5cm]{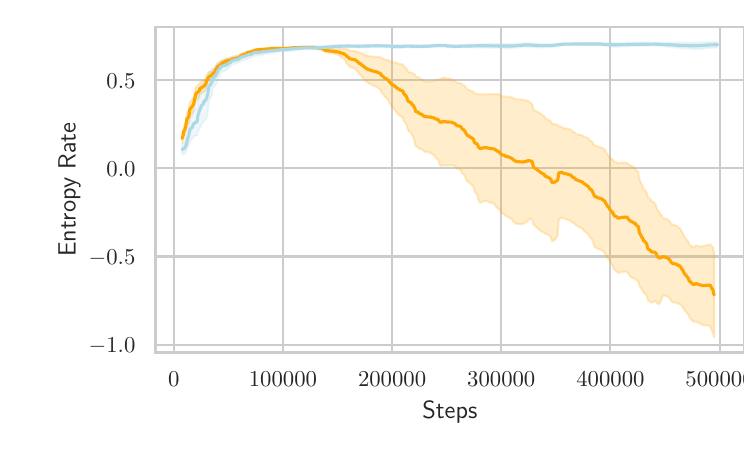}\\
\includegraphics[width=4cm]{legend.pdf}
    \caption{Training results for Obstacle Navigation task.}
    \label{fig:training_switch}
\end{figure*}
\begin{figure*}[t]
    \centering
\includegraphics[clip, trim=0cm 20cm 9cm 0cm, width=4.5cm]{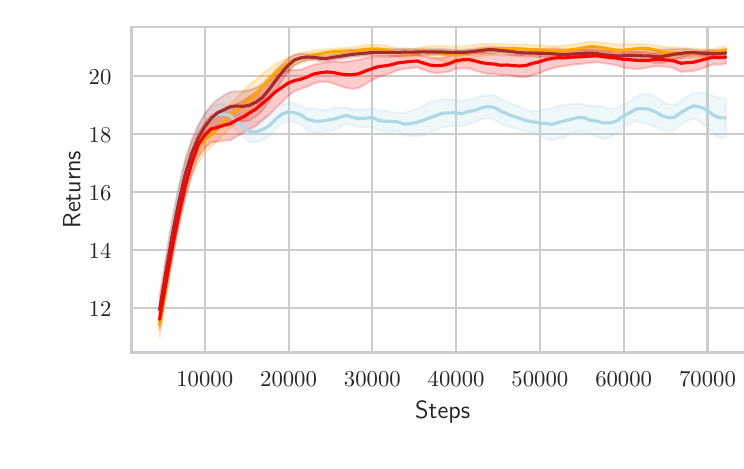}
\includegraphics[clip, trim=0cm 20cm 9cm 0cm, width=4.5cm]{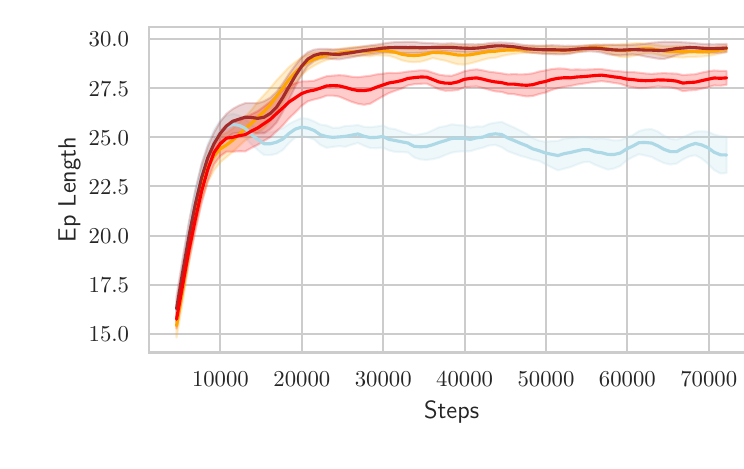}
\includegraphics[clip, trim=0cm 20cm 9cm 0cm, width=4.5cm]{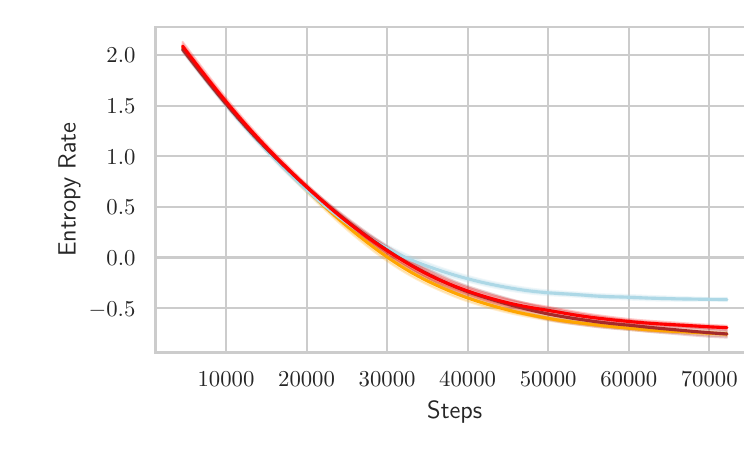}\\
\includegraphics[width=10cm]{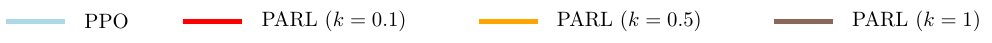}
    \caption{Training results for Highway environment.}
    \label{fig:training_highway}
\end{figure*}
\begin{figure*}[t]
    \centering
\includegraphics[clip, trim=0cm 20cm 9cm 0cm, width=4.5cm]{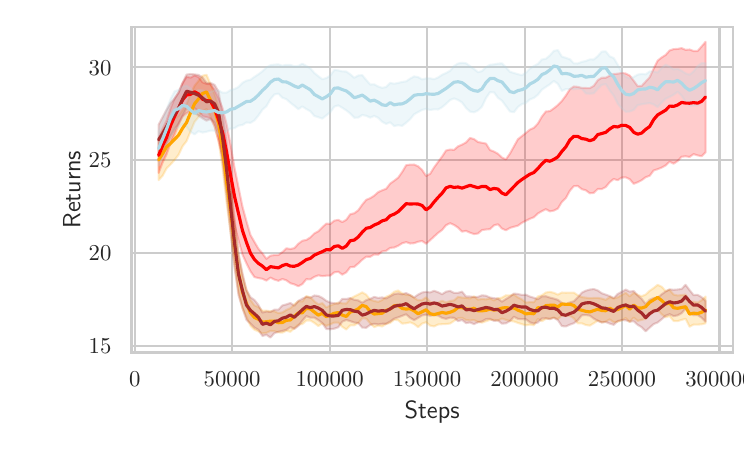}
\includegraphics[clip, trim=0cm 20cm 9cm 0cm, width=4.5cm]{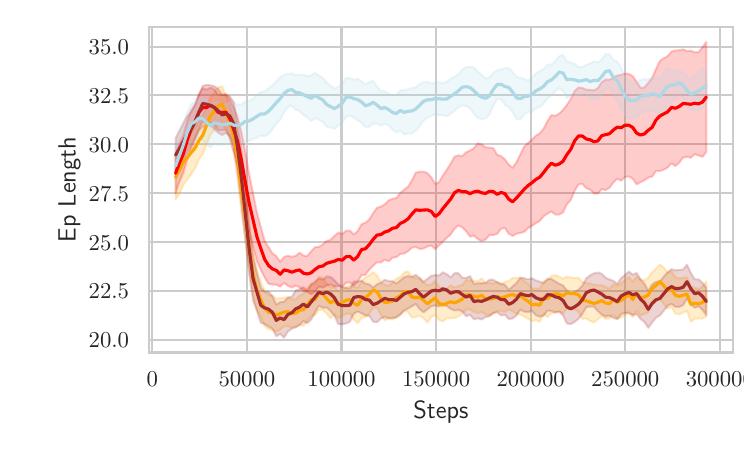}
\includegraphics[clip, trim=0cm 20cm 9cm 0cm, width=4.5cm]{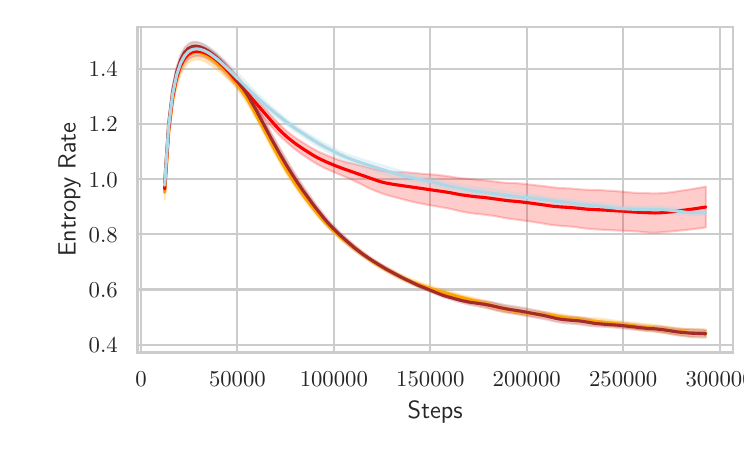}\\
\includegraphics[width=10cm]{legend_mid.pdf}
    \caption{Training results for Roundabout environment.}
    \label{fig:training_roundabout}
\end{figure*}
\begin{figure*}[t]
    \centering
\includegraphics[clip, trim=0cm 20cm 9cm 0cm, width=4.5cm]{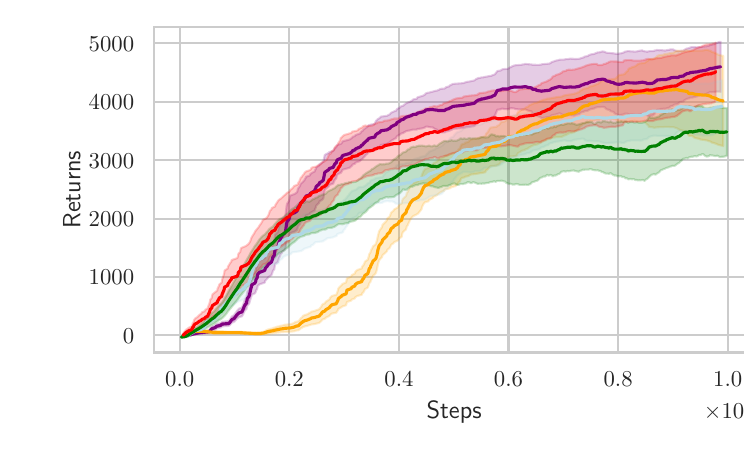}
\includegraphics[clip, trim=0cm 20cm 9cm 0cm, width=4.5cm]{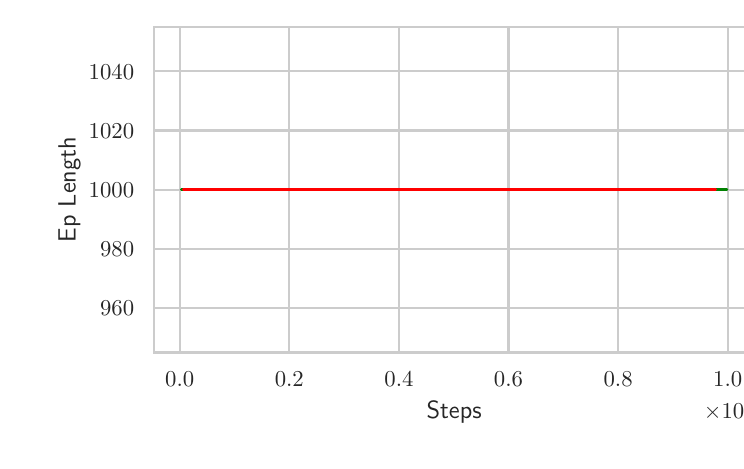}
\includegraphics[clip, trim=0cm 20cm 9cm 0cm, width=4.5cm]{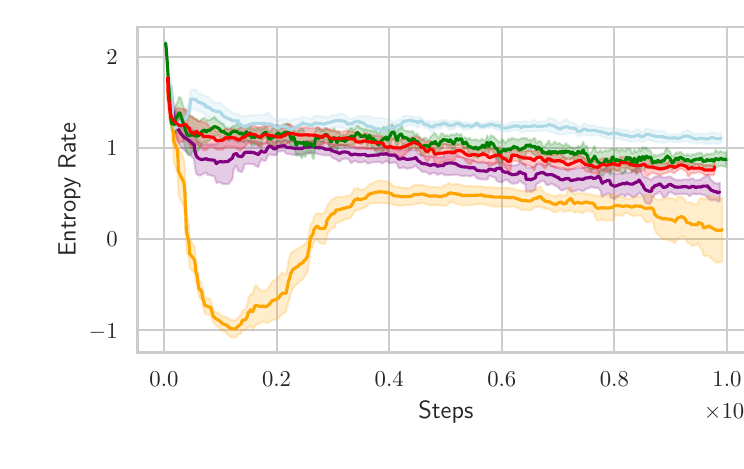}\\
\includegraphics[width=15cm]{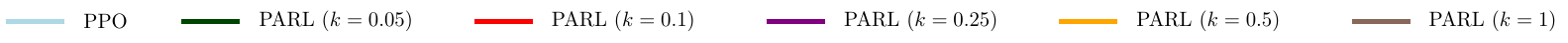}
    \caption{Training results for HalfCheetah-v4 environment.}
    \label{fig:training_halfcheetah}
\end{figure*}
\begin{figure*}[t]
    \centering
\includegraphics[clip, trim=0cm 20cm 9cm 0cm, width=4.5cm]{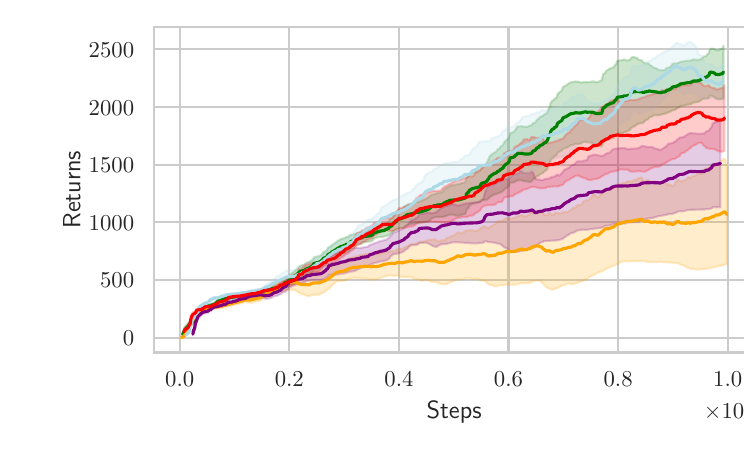}
\includegraphics[clip, trim=0cm 20cm 9cm 0cm, width=4.5cm]{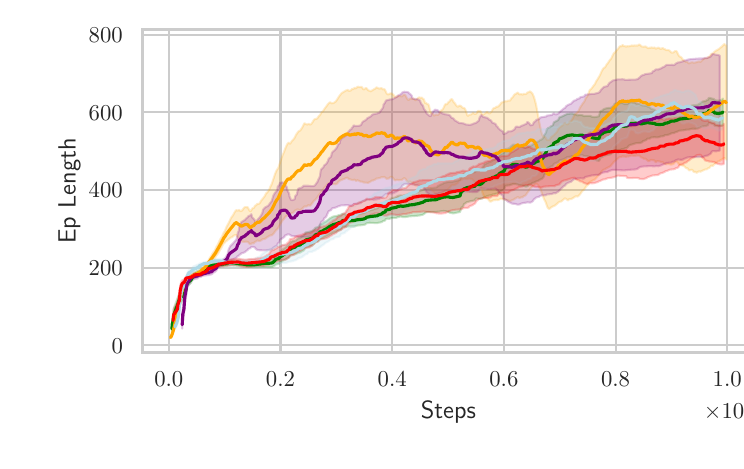}
\includegraphics[clip, trim=0cm 20cm 9cm 0cm, width=4.5cm]{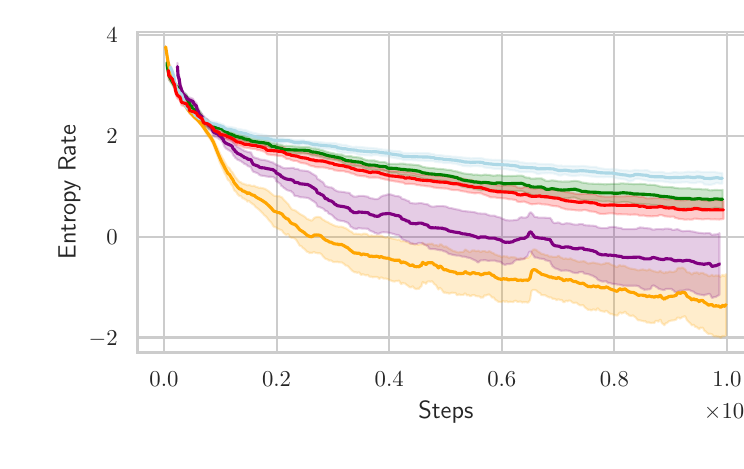}\\
\includegraphics[width=15cm]{legend_long.pdf}
    \caption{Training results for Walker2d-v4 environment.}
    \label{fig:training_walker}
\end{figure*}
\begin{figure*}[t]
    \centering
\includegraphics[clip, trim=0cm 20cm 9cm 0cm, width=4.5cm]{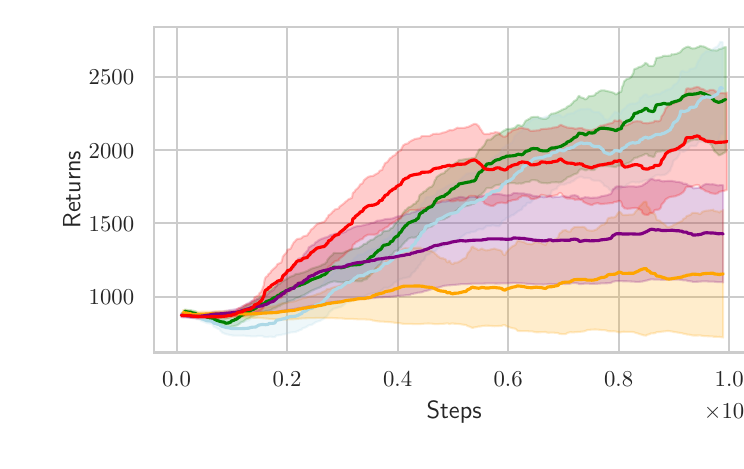}
\includegraphics[clip, trim=0cm 20cm 9cm 0cm, width=4.5cm]{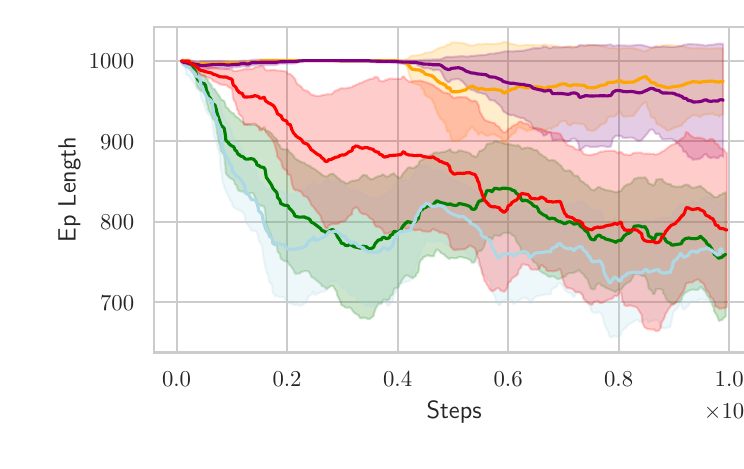}
\includegraphics[clip, trim=0cm 20cm 9cm 0cm, width=4.5cm]{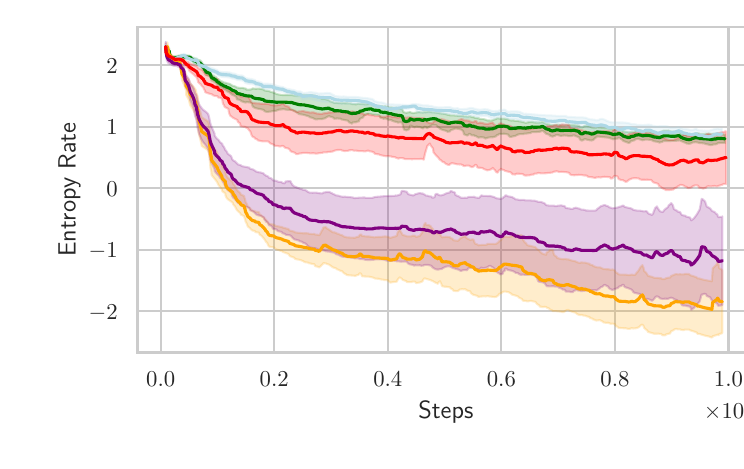}\\
\includegraphics[width=15cm]{legend_long.pdf}
    \caption{Training results for Ant-v4 environment.}
    \label{fig:training_ant}
\end{figure*}
\subsection{Model Learning}
Our proposed predictable RL scheme consists of a model-based architecture where the agent learns simultaneously a model $P_\phi$ for the transition function and a policy $\pi_\theta$ and value functions $V_{\xi}$, $W_{\omega}$ for the discounted rewards and entropy rates. Simultaneously to a policy and a value function, we learn a model $P_\phi$ to approximate the transitions (means) in the environment. For this, we train a neural network with inputs $(x,u)\in \mathcal{X}\times\mathcal{U}$ and outputs the mean next state $\bar{y}_{xu}$. The model is trained using the MSE loss for stored data $\mathcal{D}=\{(x,u,y)\}$:
$$ \mathcal{L}_{y}=\frac{1}{2|\mathcal{D}|}\sum_{\mathcal{D}}\big(\bar{y}_{xu}-y\big)^2.$$
We do this by considering $\mathcal{D}$ to be a replay buffer (to reduce bias towards current policy parameters), and at each iteration we perform $K$ mini-batch updates of the model sampling uniformly from the buffer. Additionally, we pre-train the model a set number of steps before beginning to update the agents, by running a fixed number of environment steps with a randomly initialised policy, and training the model on this preliminary data. All models are implemented as feed-forward networks with ReLU activations.
\paragraph{Entropy estimation} We found that it is more numerically stable to use the variance estimations as the surrogate entropy (we do this since the $\log$ function is monotonically increasing, and thus maximizing the variance maximizes the logarithm of the variance). This prevented entropy values to explode for environments where some of the transitions are deterministic, thus yielding very large (negative) entropies.
\subsection{Tuning and Hyperparamters}
The tuning of PARL, due to its modular structure, can be done through the following steps:
\begin{enumerate}
    \item Tune (adequate) parameters for vanilla RL algorithm used (e.g. PPO).
    \item Without the predictable objectives, tune the model learning parameters using the vanilla hyperparameters.
    \item Freezing both agent and model parameters, tune the trade-off parameter $k $ and specific PARL parameters (e.g. entropy value function updates) to desired behaviours.
\end{enumerate}
For our experiments, we took PPO and SAC parameters tuned from Stable-Baselines3 \cite{raffin2019stable} and \citet{haarnoja2018soft}, and used automatic hyperparameter tuning \cite{optuna_2019} for model and predictability parameters. In the implementation we introduced a delay parameter to allow agents to start optimizing the policy for some steps without minimizing the entropy rate. For all hyperparameters used in every environment and implementation details we refer the reader to the \href{https://github.com/tud-amr/parl}{project repository}.

\end{document}